\newcommand{\PreserveBackslash}[1]{\let\temp=\\#1\let\\=\temp}
\newcolumntype{C}[1]{>{\PreserveBackslash\centering}p{#1}}
\newcolumntype{R}[1]{>{\PreserveBackslash\raggedleft}p{#1}}
\newcolumntype{L}[1]{>{\PreserveBackslash\raggedright}p{#1}}
\newcommand{\cmark}{\ding{51}}%
\newcommand{\xmark}{\ding{55}}%
\newtheorem{definition}{Definition}
\newtheorem{proposition}{Proposition}
\newtheorem{prop}{Proposition}
\begin{document}

\title{Towards Assessing and Characterizing the Semantic\\Robustness of Face Recognition}

\author{
\normalsize Juan~C. Pérez$^{1,2}$, 
Motasem Alfarra$^{1}$,
\normalsize Ali Thabet$^{3}$, Pablo Arbeláez$^{2}$, Bernard Ghanem$^{1}$ \\ 
\small $^{1}$King Abdullah University of Science and Technology (KAUST),\\ 
\small $^{2}$Center for Research and Formation in Artificial Intelligence, Universidad de los Andes,\\
\small $^{3}$Facebook Reality Labs\\
}

\maketitle
\begin{abstract}
Deep Neural Networks (DNNs) lack robustness against imperceptible perturbations to their input. 
Face Recognition Models (FRMs) based on DNNs inherit this vulnerability. 
We propose a methodology for assessing and characterizing the robustness of FRMs against semantic perturbations to their input. 
Our methodology causes FRMs to malfunction by designing adversarial attacks that search for identity-preserving modifications to faces. 
In particular, given a face, our attacks find identity-preserving variants of the face such that an FRM fails to recognize the images belonging to the same identity. 
We model these identity-preserving semantic modifications via direction- and magnitude-constrained perturbations in the latent space of StyleGAN. 
We further propose to characterize the semantic robustness of an FRM by statistically describing the perturbations that induce the FRM to malfunction. 
Finally, we combine our methodology with a certification technique, thus providing (i) theoretical guarantees on the performance of an FRM, and (ii) a formal description of how an FRM may model the notion of face identity.
\end{abstract}
\vspace{-.25cm}
\section{Introduction}
Deep Neural Networks (DNNs) have achieved impressive performance across fields such as computer vision~\cite{he2015delving}, natural language processing~\cite{mikolov2013efficient}, and reinforcement learning~\cite{mnih2013playing}.
Despite their remarkable success, DNNs are particularly vulnerable against imperceptible perturbations to their input, known as adversarial attacks~\cite{szegedy2014intriguing,goodfellow2015explaining}.
The unexpected vulnerability of DNNs against adversarial attacks highlights our narrow understanding of these models and their limitations~\cite{geirhos2020shortcut,zhang2021understanding}.

This vulnerability further poses potentially negative ramifications in the real-world.
Specifically, the deployment of DNNs for security-critical applications may be hampered, since ``why'' or ``how'' these systems fail is largely unknown.
A case of utmost importance in security-critical applications is that of Face Recognition Models (FRMs).
These systems have been the central subject of large amounts of research and engineering~\cite{kortli2020face}, and their use is widespread in everyday life, ranging from unlocking phones or personal computers to entering buildings or passing through airport security.
Thus, understanding FRMs and their failure modes can constrain how and when to trust FRMs in the real-world.
More importantly, interpreting FRMs can provide guides towards a more responsible and ethical use.

The pervasive vulnerability of DNNs against adversarial attacks calls for a unified methodology to study the robustness of FRMs in realistic settings. 
Specifically, we argue for studying the \textit{semantic} robustness of FRMs, concurring with other works~\cite{Bhattad2020Unrestricted,joshi2019semantic}, which account for semantic considerations in robustness settings.
Towards this objective, some works studied adversarial perturbations to attack~\cite{dong2019efficient,yang2021attacks,kakizaki2019adversarial} and diagnose~\cite{goswami2018unravelling,ruiz2021simulated} FRMs.

Other works criticized the physical and/or semantic realism of traditional adversarial perturbations, and developed sophisticated frameworks to introduce physical~\cite{liu2018beyond,sharif2016accessorize,sharif2019general} or semantic~\cite{kakizaki2019adversarial,qiu2020semanticadv} considerations.
Despite such progress in exploring the vulnerability of FRMs against perturbations, there is still no consensus regarding a methodology for studying the semantic robustness of FRMs.

\begin{figure*}
    \centering
    \includegraphics[width=\textwidth]{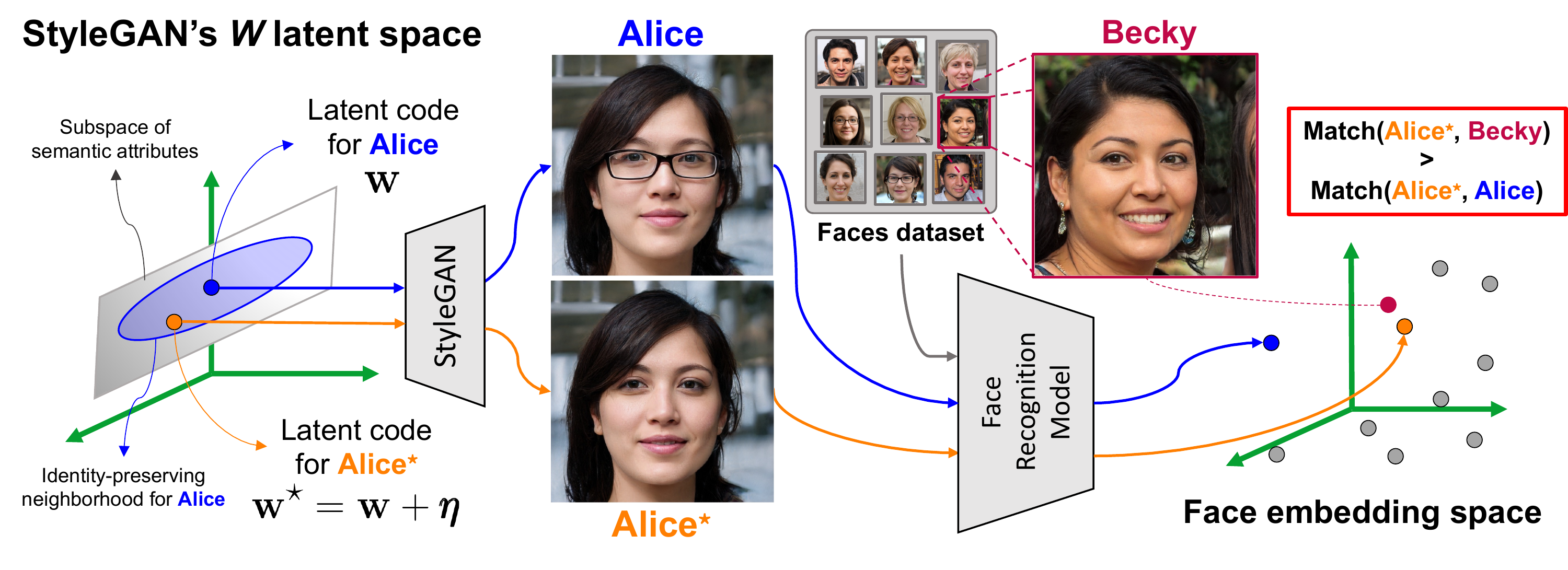}
    \vspace{-0.85cm}
    \caption{\textbf{Searching for identity-preserving modifications via StyleGAN's latent space.} 
    Given \textcolor{blue}{\fontfamily{cmss}\selectfont\textbf{Alice}}'s latent code ($\mathbf{w}$) and a subspace of semantic attributes, we draw an identity-preserving neighborhood around \textcolor{blue}{\fontfamily{cmss}\selectfont\textbf{Alice}}.
    We search for \textcolor{orange}{\fontfamily{cmss}\selectfont\textbf{Alice$^\star$}}, a variant of \textcolor{blue}{\fontfamily{cmss}\selectfont\textbf{Alice}}'s face, whose latent code $\mathbf{w}^\star = \mathbf{w} + \pmb{\eta}$ lies in this neighborhood.
    We generate the images corresponding to both $\mathbf{w}$ and $\mathbf{w}^\star$ via StyleGAN and find that, despite remarkable similarities between the faces, a Face Recognition Model's embedding space may suggest to match \textcolor{orange}{\fontfamily{cmss}\selectfont\textbf{Alice$^\star$}} with \textcolor{purple}{\fontfamily{cmss}\selectfont\textbf{Becky}} rather than with \textcolor{blue}{\fontfamily{cmss}\selectfont\textbf{Alice}}.
    Best viewed in color.
    }
    \label{fig:pull}
    \vspace{-.45cm}
\end{figure*}

In this work, we propose and deploy a methodology for systematically assessing and characterizing the semantic robustness of Face Recognition Models.
Our methodology achieves this objective by modeling identity-preserving semantic modifications via constrained perturbations in the latent space of Generative Adversarial Networks (GANs)~\cite{goodfellow2014generative}, specifically the popular StyleGAN~\cite{karras2019style}.
Please refer to Figure~\ref{fig:pull} for a visual guide through our methodology.
Under this model of identity-preserving modifications, our methodology then connects such modifications with the domain of adversarial robustness~\cite{szegedy2014intriguing,cohen2019certified} to study the semantic robustness of FRMs.

Our methodology models identity-preserving modifications of semantic attributes by introducing constrained perturbations in the latent space of StyleGAN~\cite{karras2019style}.
In particular, we leverage InterFaceGAN~\cite{shen2020interpreting,shen2020interfacegan}, a recent method for interpreting the latent space of StyleGAN for synthetic face generation.
Identity-preserving perturbations are constrained both in direction and magnitude: only the subspace spanned by certain attributes is allowed, and different attributes can be perturbed to different extents.
We adapt adversarial attacks to this model of identity-preserving modifications, and then search for semantic adversarial examples for FRMs by employing constrained- and minimum-perturbation adversarial attacks~\cite{madry2018towards,croce2020minimally}.
We then characterize the semantic robustness of an individual FRM through a statistical procedure that describes the adversarial examples that fool the FRM.
Finally, we show how our methodology can leverage an approach for certified robustness.
Certifying an FRM provides us with \textit{(i)} theoretical guarantees on the FRM's performance and \textit{(ii)} insights into how the FRM may model the notion of face identity, as delivered by a formal description of the extent to which a face's attributes can vary while the FRM's output remains constant. 

\vspace{2pt}\noindent \textbf{Contributions}. Our contributions are three-fold. 
    \textbf{(1)} We propose a methodology for studying the robustness of Face Recognition Models (FRMs) against semantic perturbations.
    For that purpose, we extend widely-used paradigms of adversarial attacks
    to our methodology to search for semantic adversarial examples.
    \textbf{(2)} We propose a procedure for characterizing the semantic robustness of FRMs by statistically describing the semantic adversarial examples we find.
    \textbf{(3)} We show how our methodology can be combined with certification techniques, granting formal guarantees on the performance of an FRM against semantic perturbations and insights regarding how the FRM models identity.
    We provide our PyTorch~\cite{paszke2019pytorch} implementation online\footnote{Available at \url{https://github.com/juancprzs/certifyingFaceRecognition}}.

\section{Related Work}
\vspace{2pt}\noindent\textbf{Adversarial attacks.}
Previous works~\cite{szegedy2014intriguing, goodfellow2015explaining} showed that adversarial examples, \ie images modified by small maliciously-crafted additive perturbations, could deteriorate the impressive recognition performance of DNNs.
This observation led to research on designing procedures, or ``attacks'', to find adversarial examples for DNNs.
Attacks can be dichotomously categorized into two paradigms~\cite{dong2020benchmarking} according to how the underlying optimization problem accounts for the perturbation's magnitude: either as a constraint~\cite{madry2018towards}, known as \textit{constrained-perturbation attacks}, or as the objective itself~\cite{moosavi2016deepfool}, known as \textit{minimum-perturbation attacks}.
In this work, we find semantic adversarial examples for FRMs by adapting adversarial attacks from both paradigms to our methodology and searching in the latent space of StyleGAN.
For constrained-perturbation attacks, we adopt Projected Gradient Descent (PGD) attacks~\cite{madry2018towards}, while for minimum-perturbation attacks, we adopt Fast Adaptive Boundary (FAB) attacks~\cite{croce2020minimally}.
Moreover, we characterize the semantic robustness of a target FRM by proposing a statistical procedure to describe the adversarial examples found by each attack in terms of semantic attributes.

\vspace{2pt}\noindent\textbf{Certified robustness.}
Adversarial attacks can be used to empirically assess the robustness of DNNs~\cite{croce2020reliable,carlini2017towards,carlini2019evaluating}.
However, an attack's inability to find adversarial examples for a DNN does not imply the nonexistence of adversarial examples for this DNN~\cite{carlini2019evaluating,athalye2018obfuscated}.
To address this shortcoming, a line of works studied ``certifiable robustness''~\cite{lecuyer2019certified,li2018second,wong2018provable}.
This field studies models that are provably robust against additive input perturbations of restricted magnitude, thus guaranteeing the nonexistence of adversarial examples at such magnitude.
Randomized smoothing~\cite{cohen2019certified} is one such approach and one of the main certification frameworks that scales to large DNNs and datasets. 
In this work, we extend randomized smoothing to combine it with our methodology.
By certifying FRMs against semantic perturbations, we provide performance guarantees and insights into how FRMs recognize faces and, thus, model the notion of identity.

\vspace{2pt}\noindent\textbf{Adversarial examples for Face Recognition Models.}
Face Recognition Models (FRMs) are computer vision models, whose objective is recognizing human faces.
Modern FRMs leverage DNNs to achieve impressive performance~\cite{schroff2015facenet,deng2019arcface,deng2020sub}.
The discovery of adversarial examples led to a stream of works attacking FRMs.
Some works perturbed the FRM's input in pixel space~\cite{dong2019efficient,goswami2018unravelling}, while others proposed sophisticated attacks~\cite{yang2021attacks,song2018constructing,deb2020advfaces,dabouei2019fast} that accounted for physical~\cite{liu2018beyond,Bhattad2020Unrestricted,sharif2016accessorize,sharif2019general} and semantic~\cite{kakizaki2019adversarial,joshi2019semantic,qiu2020semanticadv} considerations in attacking FRMs in the real-world.
These works showcased the vulnerability of FRMs against adversarial examples, both in pixel space and in more semantically-inclined spaces.
Sharing spirit with our work, Song~\etal~\cite{song2018constructing} trained a class-conditional GAN and conducted attacks in its latent space.
Similarly, Qiu~\etal~\cite{qiu2020semanticadv} interpolated in the latent space of an image-conditional GAN to search for semantic adversarial examples.
Joshi~\etal~\cite{joshi2019semantic} optimized over a Fader~\cite{lample2017fader} network's latent space to fool facial attribute classifiers.
Ruiz~\etal~\cite{ruiz2021simulated} searched for adversarial examples in a simulator's parametric space to detect weaknesses in FRMs.
Most recently, Li~\etal~\cite{li2021exploring} fooled deepfake-detection by searching StyleGAN's latent space for adversarial examples.
While earlier works address FRMs' vulnerability against semantic perturbations, a standard assessment of semantic robustness is still missing. 
Our work fills this gap in the literature, proposing a methodology to assess and characterize an FRM's semantic robustness by searching for identity-preserving examples that fool the FRM.
We search for such examples by modeling semantic (and interpretable) manipulations of facial attributes via direction- and magnitude-constrained perturbations in StyleGAN's latent space.

\vspace{2pt}\noindent\textbf{GANs and interpretation methods.}
The advent of GANs~\cite{goodfellow2014generative} propelled works on generating images of remarkable visual quality~\cite{brock2018large,karras2018progressive}.
The impressive perceptual quality achieved by GANs~\cite{karras2019style,karras2020analyzing} suggested that the representations learnt by these models inherently captured concepts of our visual world.
This observation stimulated research on interpreting the internal features learnt by GANs~\cite{bau2018gan} and the GANs' latent space~\cite{yang2019semantic}.
Recent works showed that this latent space not only encodes semantic concepts, but that such concepts can also be discovered~\cite{harkonen2020ganspace,Tzelepis_2021_ICCV,voynov2020unsupervised,shen2021closedform} and ``controlled''~\cite{shen2020interfacegan,shen2020interpreting}.
Our methodology leverages identity-preserving modifications by \textit{(i)}~building upon StyleGAN's capacity for generating human faces, and \textit{(ii)} controlling facial attributes in StyleGAN's latent space via InterFaceGAN~\cite{shen2020interfacegan,shen2020interpreting}.

\section{Semantic Adversarial Attacks}\label{sec:method}
Adversarial attacks usually fool a recognition model by imperceptibly modifying the pixels of an input image with an additive perturbation.
These attacks find such perturbation by searching for incorrectly-classified images within a set of imperceptible perturbations.
This set is often defined in pixel space as an $\ell_p$-ball with a small radius $\epsilon$, aiming at preserving the image's semantics.
Thus, these attacks leave both the image \textit{and} its semantics mostly unchanged.
While analyzing these perturbations is of interest, here we aim for a more practical class of perturbations that could fool FRMs in the real-world. 
Thus, in this work, we aim to assess the robustness of FRMs against semantic perturbations.

\subsection{Problem Formulation}
Let $f: \mathcal{I} \rightarrow \mathcal P(\mathcal Y)$ be an FRM that maps image~$I \in \mathcal{I}$ into the probability simplex over the set of identities $\mathcal{Y}$. 
Given an image $I$ of identity $y$, an attack aims at constructing $I^\star$, a perturbed version of $I$, considering two goals: \textit{(i)}~image similarity, \ie the distance between the two images $d_{\mathcal{I}}\left(I, I^\star\right)$ is small for some notion of $d_{\mathcal{I}}$, and \textit{(ii)}~fooling the FRM, \ie $I^\star$ is \textit{not} recognized as $y$ such that $\arg\max_i f^i\left(I^\star\right) \neq y$. 
These two goals may be misaligned, affecting the attack's formulation via constrained optimization.
In particular, formulations differ in whether the goal of similarity is used as a constraint---and so the fooling goal is the objective---or vice versa.
These two alternatives give rise to the paradigms of \textit{constrained-perturbation} and \textit{minimum-perturbation} attacks, respectively~\cite{dong2020benchmarking}.

In this work, we find identity-preserving modifications by proposing attacks from both paradigms that model image similarity via distances in StyleGAN's latent space.

\subsection{Identity-preserving Modifications}\label{sec:semantic_perturbs}
A StyleGAN model $G: \mathcal{W} \to \mathcal{I}$ generates images by mapping from latent space to image space. 
We consider a latent code $\mathbf{w} \in \mathcal{W} \subseteq \mathbb{R}^d$, which produces image $I = G(\mathbf{w})$.
We can generate $I^\star$, a perturbed variant of $I$, by injecting a perturbation $\pmb{\eta} \in \mathbb{R}^d$ on $\mathbf{w}$, that is $I^\star = G(\mathbf{w}^\star) = G(\mathbf{w} + \pmb{\eta})$.
However, we are not interested in introducing \textit{any} perturbation, but rather perturbations that produce identity-preserving modifications on $I$.

We remark two observations for these modifications: \textit{(i)}~InterFaceGAN~\cite{shen2020interfacegan} finds directions along which latent codes can be modified to inject semantically-viable modifications, \eg smile or pose directions, and \textit{(ii)} constrained modifications along these directions should not modify the image's identity.
Hence, we model identity-preserving modifications on $I$ by constraining $\pmb{\eta}$'s direction and magnitude.
We next describe how we model each constraint.

\vspace{2pt}\noindent\textbf{Direction constraints.}
InterFaceGAN provides a set of $N$ directions $\{\mathbf{v}_i\}_{i=1}^N$ in StyleGAN's latent space.
Each unit-norm vector $\mathbf{v}_i \in \mathbb{R}^d$ specifies a direction along which a semantic face attribute changes.
If these vectors are stacked into matrix $V \in \mathbb{R}^{N \times d}$, then constraining $\pmb{\eta}$'s direction amounts to constraining $\pmb{\eta}$ to lie in the subspace spanned by $V$'s rows.
We enforce this constraint by substituting $\pmb{\eta} = V^\top\pmb{\delta}$.
The substitution accomplishes our goal while changing the attack's search space from $\mathbb{R}^d \ni \pmb{\eta}$ to $\mathbb{R}^N \ni \pmb{\delta}$. 
This change in search space benefits the attack's efficiency, since most likely $N \ll d = 512$.
\underline{\textit{In practice,}} we derive $V$ by drawing upon the $N = 5$ interpretable directions provided by InterFaceGAN.
Thus, we build matrix $V$ from the directions corresponding to attributes: ``Pose'', ``Age'', ``Gender'', ``Smile'' and ``Eyeglasses''.

\vspace{2pt}\noindent\textbf{Magnitude constraints.}
Given how we enforce the direction constraints, we constrain $\pmb{\eta}$'s magnitude by constraining $\pmb{\delta}$'s magnitude.
While most works in robustness constrain with an $\ell_p$ norm, we argue this scheme is ill-suited for our purposes, since the scale in which semantic attributes vary may be incomparable across attributes. 
We thus introduce a symmetric and Positive-Definite (PD) matrix $M \in \mathbb{R}^{N \times N}$ to induce ``comparability'' across attributes.
Given this matrix, we model $\pmb{\delta}$'s magnitude as the norm induced by $M$.
Formally, we constrain $\sqrt{\pmb{\delta}^\top M\:\pmb{\delta}} = \|\pmb{\delta}\|_{M, 2} \leq 1$\footnote{This formulation still allows bounding $\|\pmb{\delta}\|_{M, 2}$ by any $\epsilon > 0$, as common in adversarial robustness, by redefining $M$ as $M \coloneqq \nicefrac{1}{\epsilon^2}\:M$.}
and so, the $\pmb{\eta}$'s magnitude is controlled solely by $M$.
\underline{\textit{In practice,}} we define $M$ by noting each entry of $\pmb{\delta} \in \mathbb{R}^N$ is associated with one direction from $\{\mathbf{v}_i\}_{i=1}^N$, in turn corresponding to a semantic attribute.
Defining $M$ is thus linked with the \textit{maximum} allowable perturbation along each individual $\mathbf{v}_i$.
Let the scalar $\epsilon_i$ define the maximum perturbation allowed along $\mathbf{v}_i$, then we have the condition $|\pmb{\delta}_i| \leq \epsilon_i$.
However, this condition still leaves $M$'s definition ill-posed.
We resolve this ambiguity by requiring $M$ to enclose the minimum volume possible.
With this requirement, we find that $M$ must be the diagonal matrix $M = \text{diag}(\epsilon_1^{-2},\:\dots, \:\epsilon_N^{-2})$.
We leave the details of this derivation to the \textbf{Appendix}.
\begin{figure}
    \centering
    \includegraphics[width=\columnwidth]{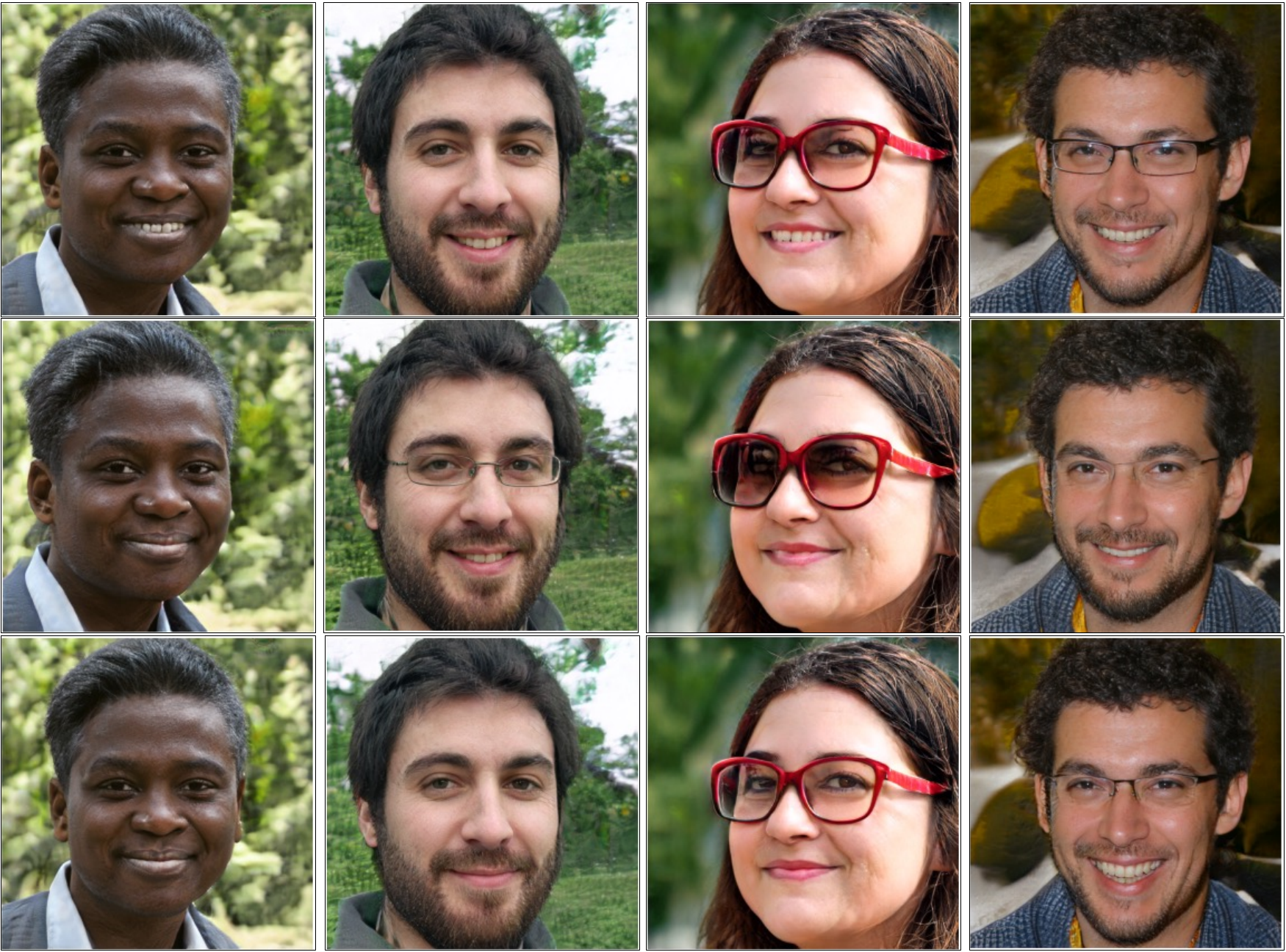}
    \vspace{-.6cm}
    \caption{\textbf{Identity-preserving modifications.} 
    The row column is the original image, and the other rows are random variants within the respective identity-preserving neighborhood.
    Notice simultaneous changes in pose and smile, while eyeglasses change color or appear/disappear.}
    \vspace{-.2cm}
    \label{fig:identity-preserving-perts}
\end{figure}
\vspace{-0.5cm}
\paragraph{\textit{Summary:}} With the direction and magnitude constraints, we define the set of identity-preserving modifications as $\mathcal{S}(V, M) = \{V^\top\pmb{\delta} \: : \: \|\pmb{\delta}\|_{M, 2} \leq 1\}$.
We show examples of these modifications in Figure~\ref{fig:identity-preserving-perts}.

\subsection{Constrained-perturbation Attacks}\label{sec:constrained-attacks}
Based on our formulation of identity-preserving modifications, we outline a constrained-perturbation attack under our framework. 
In particular, for the composition $F(\mathbf{w}) = f(G(\mathbf{w})) : \mathcal{W} \rightarrow \mathcal P(\mathcal{Y})$, an attack constructs an identity-preserving modification $\pmb\delta$ that fools the FRM $f$ by solving:
\begin{equation*}
    \max_{\pmb\delta}~ \mathcal{L}\left(F(\mathbf{w} + V^\top \pmb\delta),\, y\right) \quad \text{s.t.} \quad \|\pmb{\delta}\|_{M, 2} \leq 1,
\end{equation*}
where $\mathcal L$ is a suitable loss function between probability distributions.
This problem can be tackled with Projected Gradient Descent (PGD)~\cite{madry2018towards}, whose steps take the form:
\begin{equation*}
    \pmb{\delta}^{k+1} = \prod_{\|\pmb{\delta}\|_{M, 2}\leq1} \left(\pmb{\delta}^k + \alpha\: \nabla_{\pmb{\delta}}\mathcal L\left(F(\mathbf{w} + V^\top \pmb{\delta}),\, y \right) \Big|_{\pmb{\delta}=\pmb{\delta}^k}\right),
\end{equation*}
where $\alpha$ is the step size and $\prod$ is the projection operator.
While this formulation is similar to the classical PGD, we highlight a key difference: the set onto which updates are projected, that is $\|\pmb{\delta}\|_{M, 2}\leq1$, is no longer an isotropic $\ell_p$-ball, but rather an ellipsoid.
Hence, we derive next an efficient projection procedure on ellipsoids, which is critical for the computational tractability of our iterative attacks.

\paragraph{Projecting to an ellipsoid.} 
Formally, projecting a point $\pmb{\delta}$ to the region defined by $\|\pmb{\delta}\|_{M, 2} \leq 1$ is defined as solving
\begin{equation}\label{eq:project}
\underset{\pmb{\delta}^\star}{\arg\min}~ \frac12\:\left\|\pmb{\delta} - \pmb{\delta}^\star\right\|_2^2,\qquad\text{s.t. } ~~\pmb{\delta}^{\star\top} M\:\pmb{\delta}^\star\leq1.
\end{equation}
If $\pmb{\delta}$ is inside the ellipsoid, then $\pmb{\delta}^\star = \pmb{\delta}$.
Otherwise, we need to solve a variant of Problem~\eqref{eq:project}, where the inequality constraint is replaced by an equality, \ie search for $\pmb{\delta}^\star$ on the ellipsoid's surface. 
Problem~\eqref{eq:project} is convex in $\pmb\delta$ since $M$ is positive definite, so we find $\pmb{\delta}^\star$ with the Lagrangian:
\[
    L(\pmb{\delta}^\star, \lambda) = \frac12\:\|\pmb{\delta}^\star - \pmb{\delta}\|_2^2 + \lambda\:\left(\pmb{\delta}^{\star\top} M\:\pmb{\delta}^{\star\top} - 1\right).
\]
Deriving the KKT conditions yields:
\begin{equation}\label{eq:delta_star}
    \left(\pmb I + \lambda^\star\:M\right)\pmb{\delta}^\star = \pmb{\delta}^{\star},
\end{equation}
where $\pmb I$ is the identity and $\lambda^\star \in \mathbb{R}$ is the root of the function
\begin{equation*}
    h(\lambda) = \pmb{\delta}^{\top}\left(\pmb I + \lambda\:M\right)^{-1}M\left(\pmb I + \lambda\:M\right)^{-1}\pmb{\delta} - 1.
\end{equation*}

Thus, to find $\pmb{\delta}^\star$, we efficiently find $\lambda^\star$ via the bisection method, substitute into Eq.~\eqref{eq:delta_star}, and solve the linear system.

In practice, we define $M$ as a diagonal matrix (Section~\ref{sec:semantic_perturbs}).
This structure implies that $h$ can be evaluated without matrix multiplications nor inversions, and that Eq.~\eqref{eq:delta_star} is a diagonal system that can be efficiently solved.
Thus, our projection step is an inexpensive procedure that makes our attacks computationally tractable.

\subsection{Minimum-perturbation Attacks}
Analogous to constrained-perturbation attacks, we also outline a minimum-perturbation attack under our framework. 
In this paradigm, the attack aims to find the perturbation with the smallest magnitude that fools the FRM. 
Thus, based on our formulation of identity-preserving modifications, an attack that minimally modifies identity seeks to solve the following optimization problem:
\vspace{-0.05cm}
\begin{equation} \label{eq:minimum-norm-semantic-attack}
    \min_{\pmb{\delta}} ~\|\pmb{\delta}\|_{M, 2}  \quad \text{s.t.} \quad \underset{i}{\arg\max}~F^i(\mathbf{w} + V^\top\pmb\delta) \neq y.
\end{equation}
\vspace{-0.05cm}
We adopt the state-of-the-art FAB attack~\cite{croce2020minimally} to solve Problem~\eqref{eq:minimum-norm-semantic-attack}, as detailed in the \textbf{Appendix}.

\subsection{Interpreting Adversarial Examples}\label{sec:interpreting}
Once we attack and find adversarial perturbations, we are interested in interpreting them.
Each perturbation $\pmb{\delta} \in \mathbb{R}^N$ has an associated energy $\|\pmb{\delta}\|_{M, 2}$, and entry $\pmb{\delta}_i$ is related to modifying the $i^{\text{th}}$ attribute.
Hence, we discover trends in how an FRM weighs attributes to recognize faces by finding trends in how $\pmb{\delta}$'s energy is distributed among the attributes.

We thus propose to describe these trends via a ranking of the energy spent by $\pmb{\delta}$ on modifying each attribute.
Therefore, we first compose a candidate ranking by collecting ``votes'' from the $\pmb{\delta}$s that were found, and then validate the ranking by conducting statistical tests.

\vspace{2pt}\noindent\textbf{Composing a candidate ranking.}
The quantities being ranked must consider \textit{(i)} the likely anisotropy of the attribute space and \textit{(ii)} the energy of each perturbation found.
Thus, we consider the \textit{normalized} entries $\hat{\pmb{\delta}}_i = \nicefrac{\pmb{\delta}_i^2}{(\epsilon_i^2\:\|\pmb{\delta}\|_{M, 2})}$.
Based on these entries, each $\pmb{\delta}$ casts weighed votes, which we sort to find a ``winner'' attribute.
Each time a winner is found, we append the attribute to the ranking, and so we complete the ranking by iterating $(N-1)$ times.
We evaluate for significant differences among the remaining attributes with Friedman's test before deciding each winner.

\vspace{2pt}\noindent\textbf{Validating the ranking.}
Once we have a candidate ranking, we validate it with a statistical test.
In particular, we model a ranking of $N$ attributes as $(N-1)$ pair-wise comparisons of adjacent items in the ranking.
Thus, for each such pair of items we run a Wilcoxon signed-rank test.
Hence, for each ranking, we obtain $(N-1)$ \textit{p}-values testing for the local validity of the candidate ranking we propose.


\subsection{Certifying Against Semantic Perturbations}\label{sec:certification}
We also outline a certified robustness approach under our framework.
Hence, we consider composition $F$ from Section~\ref{sec:constrained-attacks} and adopt a certification formulation based on randomized smoothing.
In particular, we specialize the definition of domain-smoothed classifiers~\cite{deformrs} to anisotropically-smooth~\cite{ancer} semantic directions defined by matrix $V$.
\vspace{-0.095cm}
\begin{definition}\label{def:smooth-classifier}
Given a classifier $F(\mathbf{w}): \mathcal{W} \rightarrow \mathcal P(\mathcal{Y})$, we define a semantically-smoothed classifier as:
\[    
g(\mathbf{w}, \mathbf{p}) = \mathbb{E}_{\pmb{\epsilon}\sim \mathcal N(0, \Sigma)}\left[F\left(\mathbf{w} + V^T(\mathbf{p}+\pmb{\epsilon})\right) \right].
\]
\end{definition}

\vspace{-0.095cm}
In a nutshell, $g$'s prediction for the image generated from latent code $\mathbf{w}$ is the expected value of $F$'s predictions for semantic variants of the image, where such variants originate from perturbing $\mathbf{w}$. 
Moreover, $\mathbf{p}$ represents a canonical semantic perturbation of the original image.
The following proposition shows that our smooth classifier $g$ is certifiably robust against semantic perturbations along the directions defined by $V$.
We leave the proof for the \textbf{Appendix}.

\begin{proposition}\label{prop:semantic-perturbation-certification}
Let $g$ assign class $c_A$ for the input pair $(\mathbf{w}, \mathbf{p})$, \ie $\arg\max_c g^c(\mathbf{w}, \mathbf{p}) = c_A$ with:
\[
p_A = g^{c_A}(\mathbf{w}, \mathbf{p}) \quad\text{and}\quad p_B = \max_{c \neq c_A} g^c(\mathbf{w}, \mathbf{p})
\]
then $\arg\max_c~ g^c(\mathbf{w}, \mathbf{p}+\pmb\delta) = c_A \,\, \forall\:\pmb\delta$ such that:
\begin{equation}\label{eq:certification}
    \sqrt{\pmb{\delta}^T \Sigma^{-1} \pmb{\delta}} \leq \frac{1}{2}\left(\Phi^{-1}(p_A) - \Phi^{-1}(p_B) \right).
\end{equation}
\end{proposition}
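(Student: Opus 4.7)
The plan is to reduce Proposition 1 to the standard isotropic Gaussian randomized smoothing guarantee of Cohen \emph{et al.}\ (2019) by two successive changes of variables. The core observation is that the semantic perturbation $\pmb\delta$ enters the smoothed classifier purely as a translation of the noise mean in $\mathbb{R}^N$, so after folding the fixed quantities $\mathbf{w}$ and $V^\top$ into the base classifier, Proposition~\ref{prop:semantic-perturbation-certification} is really a statement about smoothing an $\mathbb{R}^N$-valued classifier with an anisotropic Gaussian of covariance $\Sigma$.

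Concretely, I would first fix $\mathbf{w}$ and define the auxiliary classifier $h:\mathbb{R}^N \to \mathcal P(\mathcal Y)$ by $h(\mathbf{q}) = F(\mathbf{w} + V^\top \mathbf{q})$. Then Definition~\ref{def:smooth-classifier} becomes
\[
g(\mathbf{w},\mathbf{p}) \;=\; \mathbb{E}_{\pmb\epsilon\sim\mathcal N(0,\Sigma)}\bigl[h(\mathbf{p}+\pmb\epsilon)\bigr],
\qquad
g(\mathbf{w},\mathbf{p}+\pmb\delta) \;=\; \mathbb{E}_{\pmb\epsilon\sim\mathcal N(0,\Sigma)}\bigl[h(\mathbf{p}+\pmb\delta+\pmb\epsilon)\bigr].
\]
Next, I would whiten the noise: write $\pmb\epsilon = \Sigma^{1/2}\pmb\epsilon'$ with $\pmb\epsilon'\sim\mathcal N(0,\mathbf I)$, and set $\tilde h(\mathbf r) = h(\Sigma^{1/2}\mathbf r)$, $\mathbf r_0 = \Sigma^{-1/2}\mathbf p$, $\pmb\delta' = \Sigma^{-1/2}\pmb\delta$. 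Substituting into the expectation gives
\[
g(\mathbf{w},\mathbf{p}+\pmb\delta) \;=\; \mathbb{E}_{\pmb\epsilon'\sim\mathcal N(0,\mathbf I)}\bigl[\tilde h(\mathbf r_0 + \pmb\delta' + \pmb\epsilon')\bigr],
\]
which is exactly the isotropic Gaussian smoothing of the classifier $\tilde h$ evaluated at $\mathbf r_0 + \pmb\delta'$, with the same top class $c_A$ and the same probabilities $p_A, p_B$ at $\pmb\delta' = 0$.

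At this point I would invoke Cohen \emph{et al.}'s isotropic Gaussian certification result directly, which guarantees that the smoothed prediction remains $c_A$ for every $\pmb\delta'$ with $\|\pmb\delta'\|_2 \leq \tfrac12\bigl(\Phi^{-1}(p_A) - \Phi^{-1}(p_B)\bigr)$. Finally I would translate the radius back to the original coordinates: since $\|\pmb\delta'\|_2 = \|\Sigma^{-1/2}\pmb\delta\|_2 = \sqrt{\pmb\delta^\top \Sigma^{-1}\pmb\delta}$, this condition is precisely Eq.~\eqref{eq:certification}, completing the proof.

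The proof is essentially mechanical once the right reductions are in place, so I do not expect a serious technical obstacle. The only subtlety worth being explicit about is the anisotropic-to-isotropic reparametrization, since $V$ is rectangular ($N\times d$ with $N\ll d$) and one must be careful that the smoothing and the certified radius live in the $N$-dimensional semantic coordinate space, not in $\mathbb{R}^d$; after the $\Sigma^{-1/2}$ change of variables the dimensionality issue disappears and the standard theorem applies verbatim. A brief remark justifying measurability/integrability of the composition $h$ and the validity of the change of variables under the Gaussian measure would round out the argument in the appendix.
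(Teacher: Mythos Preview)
Your proof is correct. The paper's own proof, however, is a one-line citation: it simply invokes Corollary~1 of the ANCER paper (anisotropic randomized smoothing) together with Theorem~1 of DeformRS (certification of domain/parametric-smoothed classifiers) with $\mathcal D = \mathcal N(0,\Sigma)$, and declares that the result follows. Your argument is the self-contained unpacking of exactly that combination: absorbing $\mathbf{w}$ and $V^\top$ into the auxiliary classifier $h$ is the DeformRS reduction from parametric smoothing to ordinary input-space smoothing in the $N$-dimensional semantic coordinate, and the $\Sigma^{-1/2}$ whitening step is precisely how ANCER's anisotropic certificate is obtained from Cohen \emph{et al.}'s isotropic one. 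So the two routes coincide at the level of ideas; the only difference is that you spell the reductions out explicitly rather than citing, which makes your version more informative to a reader without those two references at hand, at the cost of a few extra lines.
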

Here, $\Sigma$ is the Gaussian covariance matrix and $\Phi$ is the Gaussian CDF. 
Proposition~\ref{prop:semantic-perturbation-certification} guarantees the smooth classifier's prediction will be constant for all perturbations within the ellipsoid defined by Eq.~\eqref{eq:certification}.
Note our result is not constrained to directions in a GAN's latent space: the smooth classifier is certifiable w.r.t. directions characterized by any matrix $V$.
When $\Sigma = \sigma \pmb I$, Eq.~\eqref{eq:certification} reduces to isotropic certification as introduced in Randomized Smoothing~\cite{cohen2019certified}; consequently, other choices of $\Sigma$ yield anisotropic certification.

\section{Experiments}
In this section, we assess and characterize the semantic robustness of off-the-shelf FRMs with our methodology.
We first study robustness under a constrained-perturbation attack, \ie PGD.
Then, we study robustness under a minimum-perturbation attack, \ie FAB.
Finally, we run isotropic and anisotropic certifications on the FRMs.

\subsection{Experimental details}
\vspace{2pt}\noindent\textbf{FRMs.} 
We target three renowned off-the-shelf FRMs: \textit{(i)} ArcFace~\cite{deng2019arcface}, \textit{(ii)} a FaceNet~\cite{schroff2015facenet} model trained on CASIA-Webface~\cite{yi2014learning} that we refer to as ``FaceNet$^C$'', and \textit{(iii)} a FaceNet model trained on VGGFace2~\cite{cao2018vggface2} that we refer to as ``FaceNet$^V$''. All models were retrieved from the public implementations \textit{InsightFace} and \textit{facenet-pytorch}.

\vspace{2pt}\noindent\textbf{Attributes' budget.} 
Table~\ref{tab:attribute-budgets} reports the budgets we assign to each attribute, \ie the $\epsilon_i$ defining the maximum extent to which latent codes can be perturbed in the direction of each attribute without changing the identity.
We establish these values by qualitatively and extensively exploring StyleGAN's output.
In particular, we set $\epsilon_i$ values that allowed StyleGAN to generate high-quality faces which, arguably, belong to the original identity.
Figure~\ref{fig:identity-preserving-perts} shows examples of faces following these $\epsilon_i$ attribute budgets.
\begin{table}[]
    \centering
    \caption{\textbf{Budget per attribute.} 
    We report the budget assigned for each attribute, \ie the maximum extent to which a latent code is allowed to vary in each direction while preserving identity.
    \label{tab:attribute-budgets}}
    \vspace{-0.25cm}
    \centering
    \begin{tabular}{ccccc}
    \toprule 
    \multicolumn{5}{c}{$\epsilon_i$ for attribute:} \\
    Pose        & Age       & Gender        & Smile         & Eyeglasses    \\ \hline 
    0.5         & 0.5       & 0.2           & 0.8           & 0.5          \\
    \bottomrule
    \end{tabular}
    \vspace{-0.4cm}
\end{table}

\vspace{2pt}\noindent\textbf{Attacks.}
Unless stated otherwise, we always experiment with a StyleGAN-generated dataset of \texttt{100k} identities (of comparable size to FFHQ~\cite{karras2019style}), from which we extract \texttt{5k} identities to attack.
We consider one image per identity.
\underline{\textit{PGD.}}
We use PGD with 10 iterations and 10 restarts.
\underline{\textit{FAB.}}
This attack has un-targeted and targeted versions.
FAB's un-targeted version is impractical, since its computational cost scales with the number of identities in the dataset.
Thus, in practice, we use FAB's \textit{targeted} version, and refer to it simply as ``FAB''.
We use FAB with 10 iterations, 10 restarts, and 10 target classes.
We ablate PGD's and FAB's hyper-parameters in the \textbf{Appendix}.

\vspace{2pt}\noindent\textbf{Certification.}
Randomized Smoothing (RS) uses Monte Carlo sampling and a statistical test on the predicted class probability.
We use $100$ and $10,000$ samples to determine $c_A$ and $p_a$, respectively, and a significance of $\alpha = 10^{-3}$ for the statistical test.
Due to the computational cost of RS, we follow common practice~\cite{cohen2019certified} and certify 500 identities.

\subsection{Attacks with PGD}\label{sec:pgd_attacks}
Attacking each FRM with PGD reveals the model's semantic robust accuracy, \ie the accuracy achieved by the model when under semantic attacks.
We find the following robust accuracies: 84.9 for ArcFace, 76.9 for FaceNet$^C$, and 71.0 for FaceNet$^V$. 
That is, PGD attacks suggest ArcFace is more robust than FaceNet$^C$, which is, in turn, more robust than FaceNet$^V$. 
We show some of the adversarial examples that fooled ArcFace in Figure~\ref{fig:pgd_results}. 
We note that subtle changes in smiling, pose and, most notably, eyeglasses, cause the FRM to malfunction.
Next, we take a closer look at the adversarial examples found by PGD by conducting the statistical procedure described in Section~\ref{sec:interpreting}.

\begin{figure*}
    \centering
    \includegraphics[width=\textwidth]{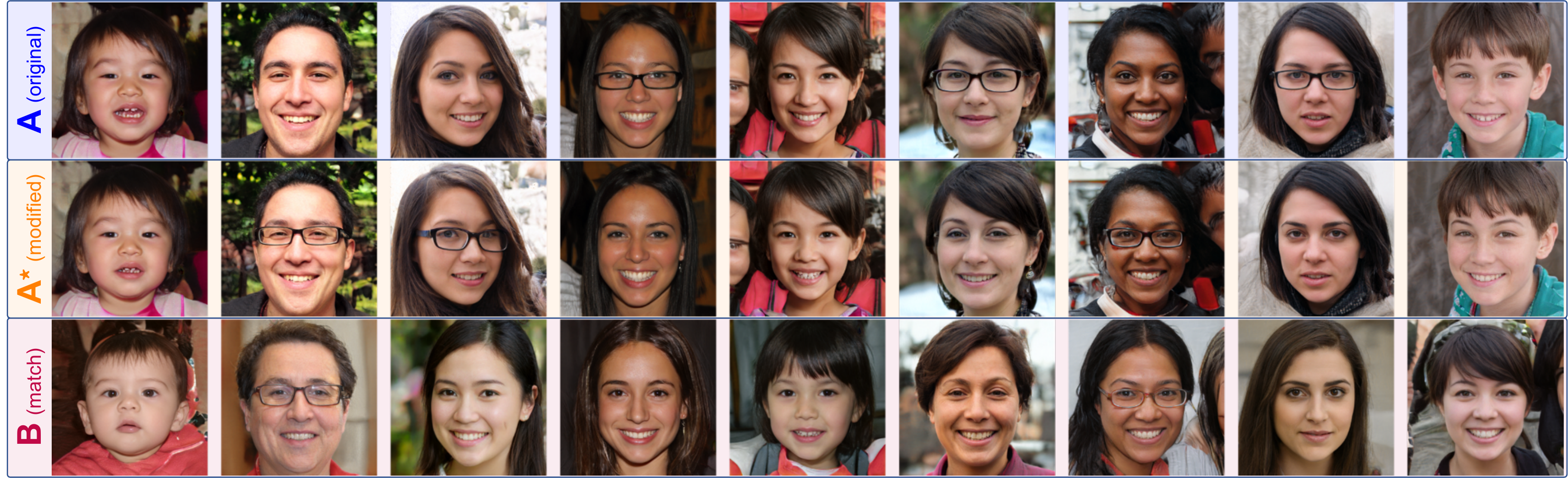}
    \vspace{-0.6cm}
    \caption{\textbf{Attacking Face Recognition Models (FRMs) via PGD.}
    Given face \textcolor{blue}{\fontfamily{cmss}\selectfont\textbf{A}} of an identity, we attack an FRM (ArcFace) and find an identity-preserving modified version \textcolor{orange}{\fontfamily{cmss}\selectfont\textbf{A$^\star$}}, such that the FRM prefers to match \textcolor{orange}{\fontfamily{cmss}\selectfont\textbf{A$^\star$}} with \textcolor{purple}{\fontfamily{cmss}\selectfont\textbf{B}} rather than with \textcolor{blue}{\fontfamily{cmss}\selectfont\textbf{A}}.
    }
    \vspace{-0.4cm}
    \label{fig:pgd_results}
\end{figure*}

\begin{table}[t]
    \centering
    \caption{\textbf{Ranking of PGD's per-attribute energy spent.}
    The rankings suggest each FRM's disproportionate sensitivity against modifications to an attribute (relative to other attributes).
    We denote statistically-significant comparisons with ``$>^\star$'', and the rest with ``$\geq$'' (significance of $0.01$).
    \label{tab:deltas_pgd}
    }
    \vspace{-.22cm}
    \begin{tabular}{c|C{0.1cm}cC{0.1cm}cC{0.1cm}cC{0.1cm}cC{0.1cm}}
        \toprule
        \multirow{2}{*}{Method} & \multicolumn{9}{c}{Ranking}  \\
                                & 1$^{\text{st}}$   &           & 2$^{\text{nd}}$   &           & 3$^{\text{rd}}$   &               & 4$^{\text{th}}$   &           & 5$^{\text{th}}$   \\ 
        \hline 
        ArcFace                 & E                 & $>^\star$ & P                 & $>^\star$ & A                 & $\,\,\,>^\star$     & S                 & $\,\,\,>^\star$ & G                 \\
        FaceNet$^C$             & E                 & $>^\star$ & A                 & $>^\star$ & P                 & $\geq$        & G                 & $\geq$    & S                 \\
        FaceNet$^V$             & E                 & $>^\star$ & A                 & $>^\star$ & P                 & $\geq$        & G                 & $\geq$    & S                 \\
        \bottomrule
        \multicolumn{10}{l}{\footnotesize{Convention: Eyeglasses (E), Pose (P), Age (A), Smile (S), Gender (G)}}
    \end{tabular}
    \vspace{-0.4cm}
\end{table}

\vspace{2pt}\noindent\textbf{Interpreting adversarial PGD examples.} 
We analyze how PGD spends its budget when constructing adversarial examples. 
Since PGD is a constrained-perturbation attack, we argue that the relative energy spent on modifying an attribute is related to \textit{``the FRM's disproportionate sensitivity to modifications on such attribute''}.
We characterize each FRM's semantic robustness by applying the procedure described in Section~\ref{sec:interpreting} on the semantic adversarial examples found by PGD, and report the ranking we obtain\footnote{We leave implementation details to the \textbf{Appendix}.} in Table~\ref{tab:deltas_pgd}.
We make two main observations about the extrema of the rankings, which hold for all FRMs: \textit{(i)} the ``Eyeglasses'' attribute leads the ranking in 1$^{\text{st}}$ position, and \textit{(ii)} the ``Smile'' and ``Gender'' attributes take the last two positions (4$^{\text{th}}$ and 5$^{\text{th}}$).
Next, we discuss these observations.

First, we find of high interest that statistical validation can suggest how the presence/absence of eyeglasses is a strong cue on which FRMs rely, somewhat disproportionately, to recognize faces. 
This finding can be related to previous works~\cite{geirhos2020shortcut,geirhos2018imagenettrained} that observe how DNNs learn ``shortcuts'' to solve tasks, thus hindering generalization.
Moreover, we note that reliance on eyeglasses is not strange to the human visual system: humans also have difficulty recognizing people when glasses are added/removed.
Additionally, our methodology's computation of the position in which eyeglasses rank may prove useful to improve the robustness of FRMs against addition and removal of eyeglasses.

Second, we observe that the smile and gender attributes fall last in the ranking.
Thus, compared to other attributes, \textit{neither} smile nor gender are attributes to which FRMs are disproportionately sensitive.
That is, under the attack's constrained budget, modifying either smile or gender is largely ineffective: altering either \textit{such that} the FRM is fooled would require an expense that exceeds the budget that was given to PGD.
This observation can be read as a pleasant finding: we do not find evidence that FRMs can be fooled by \textit{constrained} changes in smile nor gender.
Lastly, we leave more detailed discussion with a brute-force approach for characterizing semantic robustness to the \textbf{Appendix}.

\vspace{2pt}\noindent\textbf{Robustness \textit{vs.} dataset size.}
An FRM's chances of confusing individuals varies as the dataset size changes.
We thus experiment with this factor and vary the number of identities in the dataset from \texttt{5k} to \texttt{1M} and conduct PGD attacks on the same \texttt{5k} identities as before.
Figure~\ref{fig:all_pgd_exps}\textcolor{red}{a} reports the robust accuracies for each dataset size we considered.
As expected, the robust accuracies of all FRMs drop rapidly as the number of identities increases.
Specifically, performances drop from around 85\% when there are \texttt{5k} identities to around 70\% when there are \texttt{1M} identities.
Our experiments show that an FRM's semantic robustness largely depends on the number of identities it is required to recognize.
Hence, depending on the deployment setting, semantic robustness concerns may vary from negligible to problematic.

\vspace{2pt}\noindent\textbf{Attacking more identities.}
For computational feasibility, we considered a sample of \texttt{5k} out of the \texttt{100k} identities for our attacks.
Here, we test whether this set of identities is a representative sample of the population.
We thus fix the \texttt{100k} identities in the dataset and vary the amount of samples we attack from \texttt{1k} to \texttt{20k} and report the results in Figure~\ref{fig:all_pgd_exps}\textcolor{red}{b}.
We observe that there is virtually no variation in the semantic robustness of any FRM.
These results suggest that our design choice of experimenting with \texttt{5k} samples provides a reasonable sample of the population for assessing the semantic adversarial robustness of FRMs.

\vspace{2pt}\noindent\textbf{Perturbation budget.}
In previous experiments, we searched for adversarial examples within the set of identity-preserving modifications by constraining $\| \pmb{\delta}\|_{M, 2} \leq \epsilon = 1$.
Since our analysis relied on an empirical estimate of the identity-preserving region (\ie $M$), this region might not be the tightest. 
Thus, we test how FRMs behave when this constraint is relaxed/tightened by varying $\epsilon$ from $\nicefrac{1}{4}$ to $8$.
We report results in Figure~\ref{fig:all_pgd_exps}\textcolor{red}{c}.
As expected, the robustness of all FRMs drops rapidly when the semantic perturbation budget increases: ArcFace: $98.3\to4.1$, FaceNet$^C$: $95.4\to12.6$, and FaceNet$^V$: $93.3\to7.3$. 
It is worthwhile to note that allowing semantic perturbation budgets of $\epsilon > 1$ could lead to changing the generated face's identity.

\subsection{FAB attack}\label{sec:fab_attacks}
We also assess each FRM's semantic robustness with FAB attacks.
FAB searches over the subspace of semantic attributes, however, FAB does not guarantee that the adversarial examples it finds fall in the identity-preserving neighborhood.
That is, while FAB may successfully find adversarial examples for \textit{all} the instances it attacks, a human observer may no longer judge the discovered examples as belonging to the same identity.

\begin{figure*}
    \centering
    \includegraphics[width=\textwidth]{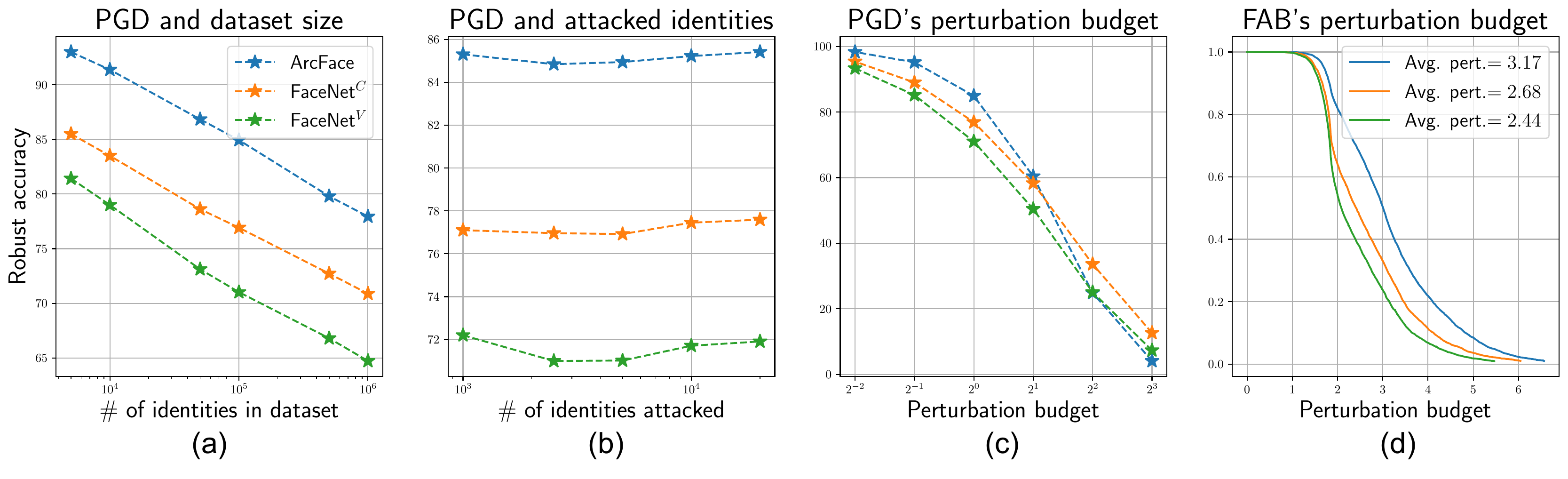}
    \vspace{-0.7cm}
    \caption{\textbf{Assessing semantic robustness via attacks.} 
    We use PGD and FAB to assess the semantic robustness of three Face Recognition Models.
    For PGD, we report how robustness varies with the number of (a) identities in the dataset and (b) identities attacked.
    For PGD (c) and FAB (d), we show how robustness changes w.r.t. perturbation budget. 
    }
    \label{fig:all_pgd_exps}
    \vspace{-0.4cm}
\end{figure*}

We run FAB on each FRM, and find semantic adversarial examples for all the \texttt{5k} images we attack.
The latent code $\mathbf{w}^\star = \mathbf{w} + V^\top\pmb{\delta}$ of each adversarial example found has a perturbation budget $\|\pmb{\delta}\|_{M, 2}$.
FAB finds few adversarial examples with $\| \pmb{\delta}\|_{M, 2} \leq 1$, that is, within the identity-preserving neighborhood; in particular: $3$ for ArcFace, $10$ for FaceNet$^C$ and $13$ for FaceNet$^V$.
Given the uncertainty on $M$'s tightness (due to its empirical estimation), and following common practice in robustness~\cite{dong2020benchmarking},
We plot accuracy \textit{vs.} perturbation budget curves for all FRMs in Figure~\ref{fig:all_pgd_exps}\textcolor{red}{d}.
Adversarial robustness is judged by how rapidly each curve drops as the perturbation budget increases.
Thus, FAB's assessment suggests ArcFace is more robust than FaceNet$^C$, which is more robust than FaceNet$^V$, agreeing with PGD's ranking. 
We leave the interpretation of FAB's adversarial examples (via the procedure from Section~\ref{sec:interpreting} and a brute-force approach) to the \textbf{Appendix}.

\subsection{FRM Certification}
\begin{figure}
    \centering
    \includegraphics[width=1\columnwidth]{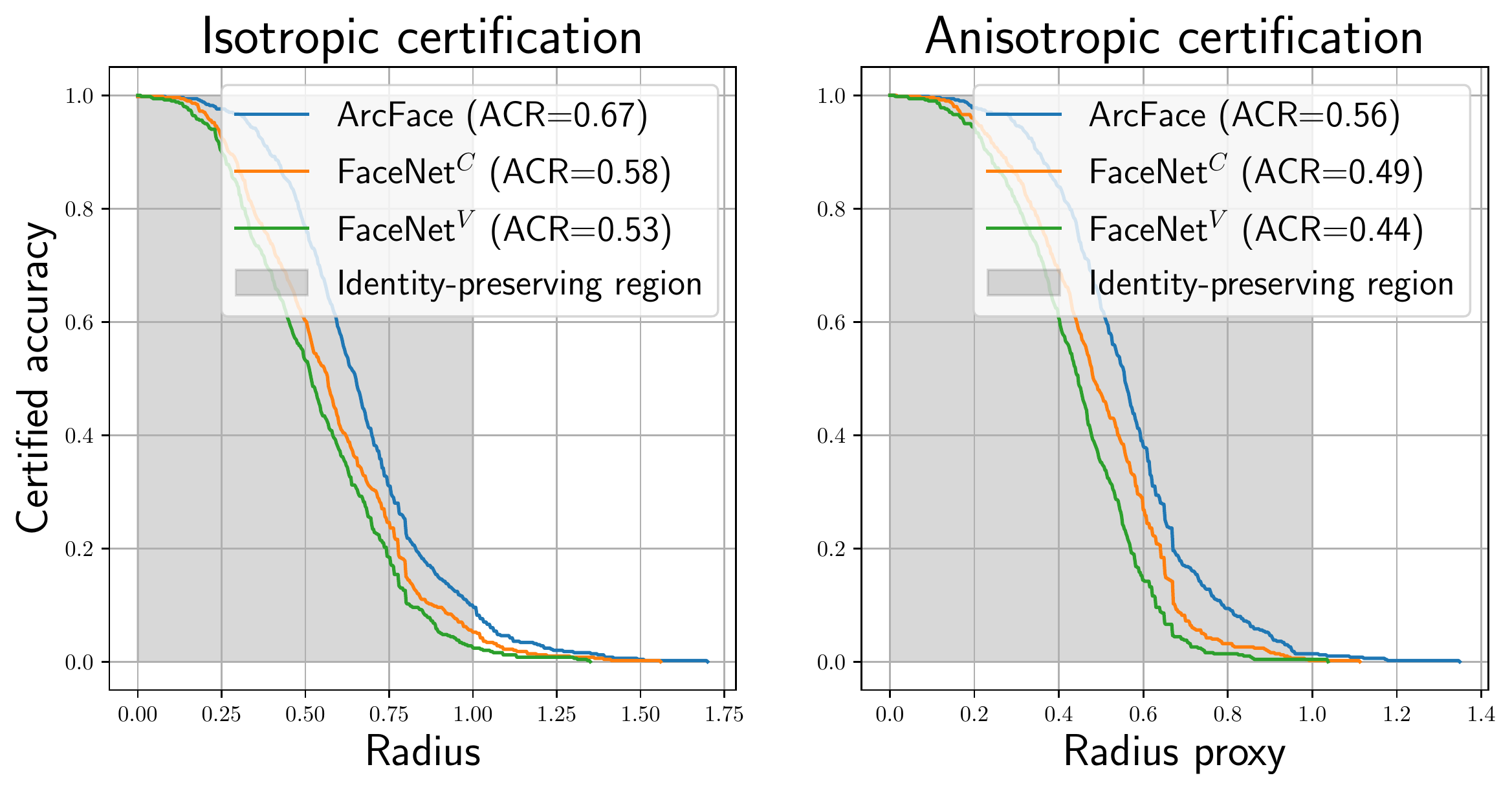}
    \vspace{-0.6cm}
    \caption{\textbf{Certifying Face Recognition Models (FRMs) via Randomized Smoothing.} 
    Envelope curves of all FRMs both for isotropic (left) and anisotropic (right) certification.
    }
    \vspace{-0.4cm}
    \label{fig:certification}
\end{figure}

\vspace{2pt}\noindent\textbf{Isotropic certification.} 
Following the methodology introduced in Section~\ref{sec:certification}, we certify all FRMs with covariance $\Sigma = \sigma^2\pmb I$, and set $\sigma \in \{0.1, 0.25, 0.5, 0.75, 1\}$. 
In this setup, the certified region in Proposition~\ref{prop:semantic-perturbation-certification} is a ball with radius $\|\pmb\delta\|_2 \leq \frac{\sigma}{2}\left(\Phi^{-1}(p_A) - \Phi^{-1}(p_B)\right):=R$, as derived in~\cite{deformrs}.
We denote this quantity as the \textit{certified radius}. 

\vspace{2pt}\noindent\textbf{Anisotropic certification.}
Following our consideration of anisotropic regions for preserving identity, we explore anisotropic certification by drawing upon recent work~\cite{ancer} that extends RS to anisotropic settings.
Thus, we require a sensible candidate for an anisotropic $\Sigma$ in Proposition~\ref{prop:semantic-perturbation-certification}, encoding \textit{a priori} knowledge on the subspace of semantic attributes.
Hence, we set $\Sigma = M^{-1}$, where $M$ is the matrix encoding the magnitude constraints in our approach.
The rationale behind this choice is that the Mahalanobis distance to the distribution $\mathcal{N}(0,M^{-1})$ draws precisely the ellipsoid described by $M$.
For the experiments, we consider $\Sigma = \sigma M^{-1}$ and set $\sigma \in \{0.25, 0.5, 0.75, 1, 2, 2.5\}$.
Since anisotropic regions lack a notion of radius, we follow~\cite{ancer} and compute a \textit{radius proxy}: the radius of a ball whose volume is equivalent to that of the certified region.

\vspace{2pt}\noindent\textbf{Results.} 
We compute the best certificates for each FRM across all $\sigma$ values, and report certified accuracy curves in Figure~\ref{fig:certification} for isotropic (left) and anisotropic (right) certification.
Each point $(x, y)$ in a curve implies that percentage $y\%$ of the dataset is both predicted correctly and has a certified radius of at least $x$. 
Moreover, we adopt common practice~\cite{zhai2020macer} and report the Average Certified Radius (ACR) for each FRM.
We draw the following observations: 
\textit{(i)} The certified accuracy within the identity-preserving region is remarkably low.
That is, while all FRMs displayed substantial robustness against our attacks, certification demonstrates these models \textit{can} be fooled by stronger attacks. 
Hence, we find FRMs are also extremely vulnerable to simple semantic perturbations. 
We argue this vulnerability is expected, as regular DNN training is not designed to resist against adversarial attacks.
\textit{(ii)} The ACRs under the anisotropic setting are smaller than those under the isotropic one. 
This can be a result of a sub-optimal choice of the matrix $\Sigma$.

\section{Conclusions}
We propose a methodology for assessing and characterizing the semantic robustness of Face Recognition Models (FRMs).
Our methodology induces malfunction in FRMs by conducting direction- and magnitude-constrained search in StyleGAN's latent space, such that faces are modified but their identity is preserved.
Under this framework, we attack FRMs, find adversarial examples, and then characterize the semantic robustness of FRMs by statistically describing the examples that lead them to fail.
Finally, we demonstrate how our methodology can leverage a certification technique, allowing us to construct a formal description of what an FRM may conceive as a face's identity.

\section{Limitations}
The main focus of our study is the semantic robustness of a standalone FRM.
However, in practice, we are unable to directly study the FRM, as we introduce a StyleGAN \textit{before} the FRM.
We model semantic directions in StyleGAN's latent space via InterFaceGAN.
Thus, the conclusions we reach are limited by the weaknesses of StyleGAN and InterFaceGAN.
Specifically, we underscore the following weaknesses: \textit{(i)} there are no guarantees for StyleGAN's output, while impressive, to be clean of artifacts, \textit{(ii)} StyleGAN's training data is presumably biased, thus affecting the diversity of generated faces, and \textit{(iii)} the semantic directions found by InterFaceGAN still display some entanglement.


{\small
\bibliographystyle{ieee_fullname}
\bibliography{references}
}

\newpage
\appendix

\onecolumn
{\centering
\Large
\textbf{Towards Assessing and Characterizing the Semantic Robustness of Face Recognition} \\
\vspace{0.5em}Appendix \\
\vspace{1.0em}
}
\appendix

\section{Magnitude Constraints}
Matrix $M$ characterizes the ellipsoid of valid perturbations $\pmb{\delta}$ by controlling the perturbation's norm via $\|\pmb{\delta}\|_{M, 2} \leq 1$.
Each entry of $\pmb{\delta} \in \mathbb{R}^N$ is associated with one of the $N$ directions $\{\mathbf{v}_i\}_{i=1}^N$, which, in turn, corresponds to a semantic attribute.
Defining $M$'s entries, thus, can be associated with the amount of perturbation we allow for each of $\pmb{\delta}$'s entries. 
Our definition of $M$ is thus based on the \textit{maximum} amount of perturbation allowed along each individual direction $\mathbf{v}_i$.
In particular, let the scalar $\epsilon_i$ state the maximum extent to which latent codes can be perturbed in the direction of $\mathbf{v}_i$.
This $\epsilon_i$ is ``maximal'' in the sense that semantically-invalid (and, more importantly, identity-changing) modifications would be introduced if the perturbation was larger than $\epsilon_i$.
Thus, we require $M$ to account for the fact that the magnitude of each entry of $\pmb{\delta}$ must not exceed its corresponding $\epsilon$ or, formally, $|\pmb{\delta}_i| \leq \epsilon_i$.

Furthermore, since $M$ must, by construction, define an ellipsoid, we relate the above requirement with a geometrical reasoning.
Specifically, we note that our requirements for $M$ imply that the associated ellipse must pass through the $N$ points $\mathbf{p}_1 = [\epsilon_1, 0,\:\dots,\:0]^\top$, $\mathbf{p}_2 = [0, \:\epsilon_2, \:\dots,\:0]^\top$, \dots, $\mathbf{p}_N = [0, \:\dots,\:\epsilon_N]^\top$, where $\mathbf{p}_i \in \mathbb{R}^N$.
To account for the absolute value in our requirement, the ellipse must also pass through $-\mathbf{p}_1$, $-\mathbf{p}_2$, \dots, $-\mathbf{p}_N$.
Thus, this $N$-dimensional ellipse must pass through the set of $2N$ points $\mathcal{P} = \{\mathbf{p}_i\}_{i=1}^N \cup \{-\mathbf{p}_i\}_{i=1}^N$, which are a function of $\{\epsilon_i\}_{i=1}^N$.

However, this constraint is insufficient to uniquely define the ellipse: there is an infinite set of ellipses that comply with the constraint.
We note that our requirement of $|\pmb{\delta}_i| \leq \epsilon_i$ also implies that the ellipse must not simply pass through the points in $\mathcal{P}$, but also that these points should be the ellipse's extrema.
Hence, $M$ is defined as a matrix parameterizing the ellipse whose extrema are the points in $\mathcal{P}$.
Equivalently, we are searching for the $M$ that parameterizes an ellipse that passes through these points and encloses the minimum volume, which is equivalent to the $M$ whose determinant is maximal~\cite{boyd2004convex}.

Given a set of points, such $M$ can be found with the algorithm introduced in~\cite{moshtagh2005minimum}.
However, we note that the points in $\mathcal{P}$ enjoy properties that can simplify the search for $M$.
In particular, the set of points $\mathcal{P}$ is composed of scaled versions of the canonical basis for $\mathbb{R}^N$.
This fact implies that the ellipse we are searching for has its semi-axis aligned with the canonical axis of $\mathbb{R}^N$.
Thus, we can simply define $M$ as a diagonal matrix whose elements are the reciprocal of the squares of the $\epsilon$, that is $M = \text{diag}(\epsilon_1^{-2},\:\dots, \:\epsilon_N^{-2})$.

\paragraph{Impact of $M$ on computation.}
Here underscore how we can exploit our definition of matrix $M$ to reduce computation in our methodology.
Note that $M$ is used in the procedure for projecting perturbations to the ellipsoid.
In particular, as stated in Section~\ref{sec:constrained-attacks}, projections are performed by first solving for $h$'s root, \ie $\lambda^\star$, and then using such solution to solve for $\pmb{\delta}^\star$ in Equation~\eqref{eq:delta_star}.
The definition of function $h$ involves $M$ for one inversion and two matrix multiplications.
However, since we defined $M$ to be a diagonal matrix, we can simplify the definition of $h$ to
\[
    h(\lambda) = \sum_{i=1}^N \frac{\pmb{\delta}_i^2\epsilon_i^{-2}}{\left(1 + \lambda\epsilon_i^{-2}\right)^2} - 1,
\]
which does not require matrix multiplication nor inversion.
Moreover, the system in Equation~\eqref{eq:delta_star} can be efficiently solved via inversion since the associated matrix is diagonal.
Thus, in our implementation, we exploit these observations to lower the computational expense of projecting the perturbations at each iteration of PGD.

Similarly, the diagonal property of $M$ is exploited across our implementation when $M$ is the matrix involved in a bilinear form.
Such implementation trick reduces the operations associated with these matrix multiplications.

\section{Adversary Initialization}\label{sec:adv_init}
Previous works on adversarial robustness have shown that the initialization of the adversarial perturbation $\pmb{\delta}$ can have sizable impact in the performance of the attack~\cite{Wong2020Fast}.
Interestingly, Wong \textit{et al.}~\cite{Wong2020Fast} found that initializing the perturbation to random noise within the space of perturbations dramatically increases performance.
By analogy, together with our methodology, we report the procedure for randomly initializing the perturbation, with the modifications induced by our formalization of the space of perturbations.

In particular, we recall that our methodology models the space of perturbations as the volume enclosed by the ellipsoid parameterized by the matrix $M$.
Thus, we are interested in randomly initializing $\pmb{\delta}$ uniformly within such volume.
To achieve this, we first initialize $\pmb{\delta}$ uniformly within the unit ball and then deform the ball to the ellipsoid parameterized by $M$ by performing Cholesky decomposition.

\section{FAB attack Implementation}

\textbf{Main algorithmic modifications.}
FAB's algorithm aims at minimizing the $\ell_p$ norm of a perturbation $\pmb{\delta}$.
There are versions of FAB for $p \in \{1,2,\infty\}$.
We extend FAB to our methodology by modifying the algorithm to consider the norm induced by our $M$ matrix, \ie $\|\pmb{\delta}\|_{M, 2}$.

Such modification amounts to adjusting the $p = 2$ version of the attack for our purposes.
Specifically, we \textit{(i)} replace inner products in the $p$ norm, \ie $\mathbf{u}^\top\mathbf{v}$, by $\mathbf{u}^\top M\mathbf{v}$, \textit{(ii)} replace inner products in the dual norm $q$ by $\mathbf{u}^\top M^{-1}\mathbf{v}$ (where $\nicefrac{1}{p} + \nicefrac{1}{q} = 1$), and \textit{(iii)} re-implement the projection operations required inside FAB's algorithm.
Further, for initializing the adversarial perturbation, we perform random initialization by uniform sampling inside the ellipsoid parameterized by $M$, through the procedure we report in Appendix~\ref{sec:adv_init}.

The code we provide also includes these implementation modifications to FAB.

\section{Proof of Proposition \ref{prop:semantic-perturbation-certification}}
\begin{prop}(restatement)
Let $g$ assign class $c_A$ for the input pair $(\mathbf{w}, \mathbf{p})$, \ie $\arg\max_c g^c(\mathbf{w}, \mathbf{p}) = c_A$ with:
\[
p_A = g^{c_A}(\mathbf{w}, \mathbf{p}) \quad\text{and}\quad p_B = \max_{c \neq c_A} g^c(\mathbf{w}, \mathbf{p})
\]
then $\arg\max_c~ g^c(\mathbf{w}, \mathbf{p}+\pmb\delta) = c_A \,\, \forall\:\pmb\delta$ such that:
\begin{equation}\label{eq:certification}
    \sqrt{\pmb{\delta}^T \Sigma^{-1} \pmb{\delta}} \leq \frac{1}{2}\left(\Phi^{-1}(p_A) - \Phi^{-1}(p_B) \right).
\end{equation}
\end{prop}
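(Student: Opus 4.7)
\medskip

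\textbf{Proof plan for Proposition~\ref{prop:semantic-perturbation-certification}.} The plan is to reduce the statement to the standard Cohen \etal~\cite{cohen2019certified} certified-radius result for isotropic randomized smoothing, by absorbing the semantic-direction matrix $V$ and the anisotropic covariance $\Sigma$ into a single change of variables. Concretely, for a fixed latent code $\mathbf{w}$, I would first fold the dependence on $V$ into a new base classifier $\tilde F(\mathbf{x}) \coloneqq F(\mathbf{w} + V^{\top}\mathbf{x})$. With this definition the smoothed classifier in Definition~\ref{def:smooth-classifier} collapses to $g(\mathbf{w},\mathbf{p}) = \mathbb{E}_{\pmb{\epsilon}\sim\mathcal N(0,\Sigma)}[\tilde F(\mathbf{p}+\pmb{\epsilon})]$, and a perturbation $\pmb{\delta}$ of the semantic coordinate $\mathbf{p}$ corresponds to an additive perturbation of the argument of $\tilde F$'s smoothing. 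So the semantic-robustness question becomes the usual question: when does the smoothed version of $\tilde F$ keep the same top class under an additive shift $\pmb{\delta}$?

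Next I would remove the anisotropy. Let $\Sigma = \Sigma^{1/2}\Sigma^{1/2}$ and substitute $\pmb{\epsilon} = \Sigma^{1/2}\mathbf{u}$ with $\mathbf{u}\sim\mathcal N(0,I)$; define $\bar F(\mathbf{u}) \coloneqq \tilde F(\Sigma^{1/2}\mathbf{u})$. Then, rewriting $\tilde F(\mathbf{p}+\Sigma^{1/2}\mathbf{u}) = \bar F(\Sigma^{-1/2}\mathbf{p} + \mathbf{u})$, the smoothed classifier becomes $g(\mathbf{w},\mathbf{p}) = \mathbb{E}_{\mathbf{u}\sim\mathcal N(0,I)}[\bar F(\Sigma^{-1/2}\mathbf{p}+\mathbf{u})]$, i.e.\ the isotropic Gaussian smoothing of $\bar F$ evaluated at $\Sigma^{-1/2}\mathbf{p}$. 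A perturbation $\mathbf{p}\mapsto\mathbf{p}+\pmb{\delta}$ then maps to the isotropic perturbation $\Sigma^{-1/2}\mathbf{p}\mapsto \Sigma^{-1/2}\mathbf{p}+\Sigma^{-1/2}\pmb{\delta}$, whose Euclidean norm is exactly $\lVert\Sigma^{-1/2}\pmb{\delta}\rVert_2 = \sqrt{\pmb{\delta}^{\top}\Sigma^{-1}\pmb{\delta}}$, i.e.\ the Mahalanobis distance appearing in~\eqref{eq:certification}.

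At this point the statement reduces to applying the Cohen \etal\ theorem verbatim to the smoothed classifier associated with $\bar F$: if under standard Gaussian smoothing the top class $c_A$ has probability at least $p_A$ and the runner-up at most $p_B$, then the argmax is unchanged under any additive shift whose Euclidean norm is at most $\tfrac12(\Phi^{-1}(p_A)-\Phi^{-1}(p_B))$. Substituting the norm computed above gives precisely~\eqref{eq:certification}. If a self-contained derivation is desired rather than a black-box citation, one would invoke the Neyman--Pearson lemma: among all measurable $\bar F$ with the prescribed probabilities on $c_A$ and on $c\neq c_A$ at the clean point, the worst case for preserving the argmax at the shifted point is a halfspace indicator under $\mathcal N(0,I)$, and the halfspace calculation yields the $\Phi^{-1}$-gap bound.

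The main obstacle, I expect, is essentially bookkeeping rather than a new idea: one must verify that the change-of-variables commutes with taking expectations (trivial, since $\Sigma$ is a fixed invertible linear map applied to a Gaussian) and that $\Sigma$ is assumed invertible so that $\Sigma^{-1/2}$ and $\Sigma^{-1}$ are well defined; strictly, the statement should be understood with $\Sigma\succ 0$, which matches its use as a covariance in Definition~\ref{def:smooth-classifier}. The semantic direction matrix $V$ plays no active role in the argument beyond defining $\tilde F$, which is consistent with the remark in the main text that the result is not tied to GAN latents and works for any $V$.
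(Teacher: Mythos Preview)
Your proposal is correct and follows essentially the same route as the paper: both reduce the semantic, anisotropic smoothing to a known randomized-smoothing certificate. The paper's one-line proof simply cites the packaged extensions (Theorem~1 of~\cite{deformrs} for parameter-space smoothing and Corollary~1 of~\cite{ancer} for anisotropic $\Sigma$), while you inline that reduction explicitly---absorbing $V$ into a base classifier $\tilde F$ and whitening $\Sigma$ to land on isotropic Cohen~\etal~\cite{cohen2019certified}; the mechanism is the same.
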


\begin{proof}
 This follows from combining Corollary 1 from \cite{ancer} with the certification of domain smooth classifier in Theorem 1 in \cite{deformrs} with setting $\mathcal{D} = \mathcal N(0, \Sigma)$.
\end{proof}

We note that one can alternatively define the semantically-smooth classifier in the following way.
\begin{definition}
Given a classifier $F(\mathbf{w}): \mathbb{R}^d \rightarrow \mathcal P(\mathcal{Y})$ and the semantic direction matrix $V \in \mathbb{R}^{d\times N}$, we define a semantically-smoothed classifier as:
\[    
\hat g(\mathbf{w}) = \mathbb{E}_{\pmb{\epsilon}\sim \mathcal N(0, \Sigma)}\left[F\left(\mathbf{w} + V^T\pmb{\epsilon}\right) \right].
\]
\end{definition}
Thus, one can deploy Corollary 1 from \cite{ancer} directly to obtain the following equivalent result to proposition \ref{prop:semantic-perturbation-certification} for when $V$ is a full rank and invertible matrix.
\begin{prop}
Let $\hat g$ assign class $c_A$ for the input  $\mathbf{w}$, \ie $\arg\max_c \hat g^c(\mathbf{w}) = c_A$ with:
\[
p_A = \hat g^{c_A}(\mathbf{w}) \quad\text{and}\quad p_B = \max_{c \neq c_A} \hat g^c(\mathbf{w})
\]
then $\arg\max_c~ \hat g^c(\mathbf{w} +  V^\top\pmb\delta) = c_A \,\, \forall\:\pmb\delta$ such that:
\begin{equation}\label{eq:certification}
    \sqrt{\pmb{\delta}^T \Sigma^{-1} \pmb{\delta}} \leq \frac{1}{2}\left(\Phi^{-1}(p_A) - \Phi^{-1}(p_B) \right).
\end{equation}
\end{prop}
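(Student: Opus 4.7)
The plan is to reduce Proposition~\ref{prop:semantic-perturbation-certification} to the standard randomized smoothing certification by viewing $\pmb{\delta}$ as a mean-shift of the Gaussian noise that defines $g$. Concretely, I would fix the latent code $\mathbf{w}$ and define an auxiliary base classifier $\tilde F : \mathbb{R}^N \to \mathcal{P}(\mathcal{Y})$ by $\tilde F(\mathbf{x}) = F\bigl(\mathbf{w} + V^\top \mathbf{x}\bigr)$, and correspondingly the smooth classifier $\tilde g(\mathbf{x}) = \mathbb{E}_{\pmb{\epsilon} \sim \mathcal{N}(0,\Sigma)}[\tilde F(\mathbf{x}+\pmb{\epsilon})]$. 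By construction $g(\mathbf{w},\mathbf{p}) = \tilde g(\mathbf{p})$ and, more importantly, $g(\mathbf{w},\mathbf{p}+\pmb{\delta}) = \tilde g(\mathbf{p}+\pmb{\delta})$, so certifying $g$ against a perturbation $\pmb{\delta}$ of its second argument reduces to certifying the anisotropic Gaussian-smoothed classifier $\tilde g$ against an additive input perturbation of the same magnitude.

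Next I would invoke the certification guarantee for anisotropically-smoothed classifiers (e.g., Corollary~1 of \cite{ancer}, which generalizes the isotropic Neyman--Pearson argument of \cite{cohen2019certified} to general $\Sigma$). Given the top-class probability $p_A$ and runner-up $p_B$ for $\tilde g$ at the point $\mathbf{p}$, the guarantee states that $\arg\max_c \tilde g^c(\mathbf{p}+\pmb{\delta}) = c_A$ for every $\pmb{\delta}$ lying in the Mahalanobis ball $\sqrt{\pmb{\delta}^\top \Sigma^{-1}\pmb{\delta}} \leq \tfrac12(\Phi^{-1}(p_A) - \Phi^{-1}(p_B))$. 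Translating this statement back through the identification $\tilde g(\mathbf{p}+\pmb{\delta}) = g(\mathbf{w},\mathbf{p}+\pmb{\delta})$ delivers exactly the bound in Eq.~\eqref{eq:certification}.

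Equivalently, one can derive the same result via the ``domain-smoothed'' viewpoint by instantiating Theorem~1 of \cite{deformrs} with the noise distribution $\mathcal{D} = \mathcal{N}(0,\Sigma)$ and the parametric deformation $(\mathbf{w},\mathbf{p}) \mapsto \mathbf{w} + V^\top \mathbf{p}$; the certificate produced by that theorem is already expressed in terms of the covariance of $\mathcal{D}$, so the Mahalanobis radius falls out directly. Either route ultimately rests on the standard likelihood-ratio argument for Gaussians: the worst-case classifier attaining probabilities $p_A$ and $p_B$ after an anisotropic shift is a halfspace in the whitened coordinates $\Sigma^{-1/2}\mathbf{x}$, and the required gap $\Phi^{-1}(p_A) - \Phi^{-1}(p_B)$ must exceed twice the $\ell_2$ shift in those whitened coordinates, which is precisely $2\sqrt{\pmb{\delta}^\top \Sigma^{-1}\pmb{\delta}}$.

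The main obstacle is purely bookkeeping rather than conceptual: one must be careful that the ``input'' to the smoothing operation is the auxiliary variable $\mathbf{p}$ and not the latent code $\mathbf{w}$, so that the noise $\pmb{\epsilon}$ and the perturbation $\pmb{\delta}$ live in the same $\mathbb{R}^N$ space and their covariances are directly comparable. Once this identification is made, the ingredients (Neyman--Pearson halfspace lemma, Gaussian invariance under translation, and anisotropic whitening) compose immediately to yield the stated ellipsoidal certificate, and no additional assumption on $V$ (such as full column rank) is needed because $V$ only appears inside the fixed base classifier $\tilde F$.
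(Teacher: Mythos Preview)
Your argument is written for the two-argument smoothed classifier $g(\mathbf{w},\mathbf{p})$ of Definition~\ref{def:smooth-classifier}, whereas the proposition you were asked to prove concerns the one-argument variant $\hat g(\mathbf{w})=\mathbb{E}_{\pmb{\epsilon}\sim\mathcal N(0,\Sigma)}[F(\mathbf{w}+V^\top\pmb{\epsilon})]$. The fix is immediate: with your auxiliary $\tilde g$ one has $\hat g(\mathbf{w})=\tilde g(0)$ and $\hat g(\mathbf{w}+V^\top\pmb{\delta})=\tilde g(\pmb{\delta})$, so your certificate for $\tilde g$ at the origin is exactly what is required. With that bookkeeping correction the argument is complete and correct.

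Your route, however, is not the one the paper takes for this proposition. The paper pushes the noise forward through $V^\top$, rewriting $\hat g(\mathbf{w})=\mathbb{E}_{\pmb{\epsilon}\sim\mathcal N(0,\,V^\top\Sigma V)}[F(\mathbf{w}+\pmb{\epsilon})]$, applies the anisotropic certificate of \cite{ancer} directly in $\mathbb{R}^d$ to obtain a bound on $\pmb{\eta}\in\mathbb{R}^d$ involving $(V^\top\Sigma V)^{-1}=V^{-1}\Sigma^{-1}V^{-\top}$, and finally substitutes $\pmb{\eta}=V^\top\pmb{\delta}$. That derivation needs $V$ to be square and invertible, a hypothesis the paper states explicitly. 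By contrast, you absorb $V$ into the base classifier $\tilde F$ and perform the smoothing entirely in $\mathbb{R}^N$, where $\Sigma$ is nondegenerate; as you correctly note, no assumption on $V$ is then required. Your argument is therefore both cleaner and strictly more general than the paper's for this proposition, and it is essentially the same mechanism the paper uses for the companion Proposition~\ref{prop:semantic-perturbation-certification} about $g$.
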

\begin{proof}
 Note that the smooth classifier in Definition 2 is equivalent to the following:
 \[ \hat g(\mathbf{w}) = \mathbb{E}_{\pmb{\epsilon}\sim \mathcal N(0, V^\top \Sigma V)}\left[F\left(\mathbf{w} + \pmb{\epsilon}\right) \right].
 \]
 Therefore, and based on Corollary 1 in \cite{ancer}, we have $\arg\max_c~ \hat g^c(\mathbf{w} + \pmb \eta) = c_A$ with
 \[ \sqrt{\pmb{\eta}^T V^{-1}\Sigma^{-1}V^{-\top} \pmb{\eta}} \leq \frac{1}{2}\left(\Phi^{-1}(p_A) - \Phi^{-1}(p_B) \right)
 \]
 Therefore, rewriting $\pmb \delta = V^{-\top} \pmb{\eta}$ results in $\pmb \eta = V^\top\pmb \delta$, completing the proof.
\end{proof}

\section{Ablations on PGD and FAB}
Our experiments used both PGD and FAB attacks.
The PGD attack has two hyper-parameters: \textit{(i)} the number of iterations and \textit{(ii)} the number of restarts.
The FAB attack has three hyper-parameters: \textit{(i)} the number of iterations, \textit{(ii)} the number of restarts, and \textit{(iii)} the number of target classes.
For all these hyper-parameters, we run a small grid search to showcase how an FRM's robustness behaves when these parameters vary.
The grid search consists of varying each parameter in $\{1,5,10,20\}$.
We select the most robust model we studied (\ie ArcFace, according to our experiments) as the target FRM for these experiments, conduct each attack, and record the robust accuracy of the FRM.
Following the main paper's results, these attacks are run on a dataset that has \texttt{100k} identities (\ie faces), from which \texttt{5k} identities are attacked.

For PGD, Table~\ref{tab:ablate_pgd} reports the robust accuracies we find.
We observe that both the number of iterations and the number of restarts have an impact on the robust accuracy.
In particular, increasing either parameter affects PGD's success, \ie the FRM's robust accuracy is lower.
The robust accuracies we find range from $99.2\%$ (a single restart and a single iteration) to $83.0\%$ (20 restarts and 20 iterations).
We speculate that a more extensive experimentation with optimization hyper-parameters may yield better success rates for the attack.

FAB can find adversarial examples for $100\%$ of the attacked identities.
Thus, for FAB's ablations we do not focus on the FRM's robust accuracy, but rather on the energy of the adversarial perturbation that FAB finds, \ie $\|\pmb{\delta}\|_{M, 2}$.
Table~\ref{tab:ablate_fab} reports the average energies when varying FAB's restarts and iterations hyper-parameters (where the number of target classes was set to $5$).
Analogously, Table~\ref{tab:ablate_fab_targetclasses} reports these values for FAB's restarts and target classes hyper-parameters (where the number of iterations was set to $5$).
Within these tables, we see that the average energy can range from very large ($22.1$ in Table~\ref{tab:ablate_fab_targetclasses} when using 1 restart, 1 target class and 5 iterations), to very small ($2.6$ also in Table~\ref{tab:ablate_fab_targetclasses} when using 20 restarts, 20 target classes and 5 iterations).
We also observe that increasing iterations has a rather marginal impact on reducing the average energy of the perturbation.
This phenomenon most likely implies that the steps conducted by our modified version of FAB are sub-optimal.
Hence, a reformulation of FAB that accounts for the idiosyncratic properties of FRMs may yield valuable gains for FAB's success rate.

\begin{table}[]
\centering
\caption{\textbf{Ablating PGD's hyper-parameters.} We search over the number of restarts and iterations given to PGD, and record ArcFace's robust accuracy.
\label{tab:ablate_pgd}}
\centering
\begin{tabular}{cc|cccc}
\toprule
\multicolumn{1}{l}{}        & \multicolumn{1}{l|}{} & \multicolumn{4}{c}{Restarts} \\
\multicolumn{1}{l}{}                                                    &       & 1     & 5     & 10    & 20        \\ \hline
\parbox[t]{2mm}{\multirow{4}{*}{\rotatebox[origin=c]{90}{Iterations}}}  & 1      & 99.2	& 96.8	& 95.0	& 93.2  \\
                                                                        & 5      & 95.0	& 87.8	& 85.8	& 84.3 \\
                                                                        & 10      & 94.1	& 86.1	& 84.5	& 83.4 \\
                                                                        & 20      & 93.5	& 85.2	& 83.8	& \textbf{83.0}\\\bottomrule
                                                                        
\end{tabular}
\end{table}

\begin{table}[]
\centering
\caption{\textbf{Ablating FAB's \textit{restarts} and \textit{iterations} hyper-parameters.}
We search over the number of restarts and iterations, and record the energy $\|\pmb{\delta}\|_{M, 2}$ of the adversarial perturbations found by FAB.
\label{tab:ablate_fab}}
\centering
\begin{tabular}{cc|cccc}
\toprule
\multicolumn{1}{l}{}        & \multicolumn{1}{l|}{} & \multicolumn{4}{c}{Restarts} \\
\multicolumn{1}{l}{}                                                    &       & 1     & 5     & 10    & 20        \\ \hline
\parbox[t]{2mm}{\multirow{4}{*}{\rotatebox[origin=c]{90}{Iterations}}}  & 1 & 8.9	& 4.7	& 4.0	& 3.4 \\
                                                                        & 5 & 7.7	& 4.2	& 3.6	& \textbf{3.2} \\
                                                                        & 10 & 7.7	& 4.3	& 3.6	& \textbf{3.2} \\
                                                                        & 20 & 7.7	& 4.3	& 3.6	& \textbf{3.2} \\\bottomrule
\end{tabular}
\end{table}

\begin{table}[H]
\centering
\caption{\textbf{Ablating FAB's \textit{restarts} and \textit{target classes} hyper-parameters.}
We search over the number of restarts and iterations, and record the energy $\|\pmb{\delta}\|_{M, 2}$ of the adversarial perturbations found by FAB.
\label{tab:ablate_fab_targetclasses}}
\centering
\begin{tabular}{cc|cccc}
\toprule
\multicolumn{1}{l}{}        & \multicolumn{1}{l|}{} & \multicolumn{4}{c}{Restarts} \\
\multicolumn{1}{l}{}                                                        &       & 1     & 5     & 10    & 20        \\ \hline
\parbox[t]{2mm}{\multirow{4}{*}{\rotatebox[origin=c]{90}{Targ. classes}}}  & 1 & 22.1	& 7.6	& 5.6	& 4.5\\
                                                                            & 5 & 7.7	& 4.2	& 3.6	& 3.2\\
                                                                            & 10 & 5.7	& 3.6	& 3.2	& 2.8\\
                                                                            & 20 & 4.6	& 3.2	& 2.8	& \textbf{2.6}\\\bottomrule
\end{tabular}
\end{table}

\section{PGD Qualitative Results}
We report randomly-selected qualitative samples for the adversarial examples found by PGD, when attacking ArcFace (the most robust method, according to our assessment), in Figures~\ref{fig:pgd_qual1}--\ref{fig:pgd_qual4}.
While most of these examples elucidate the identity-preserving property we aim for in our methodology, some examples are failure cases.
From such failures, we can identify two common patterns: \textit{(i)} when the GAN introduced artifacts in the original face (\eg the very first sample in Figure~\ref{fig:pgd_qual1}), and \textit{(ii)} when there are faces of children involved.

Furthermore, these samples clearly show how the eyeglasses attribute is an attribute that, upon modification, is highly effective for fooling FRMs.
In particular, \textit{many} of the samples we show here display the addition/removal of eyeglasses.

\twocolumn
\begin{figure}
    \centering
    \includegraphics[trim=0cm 0.3cm 0cm 0.7cm,clip,width=\columnwidth]{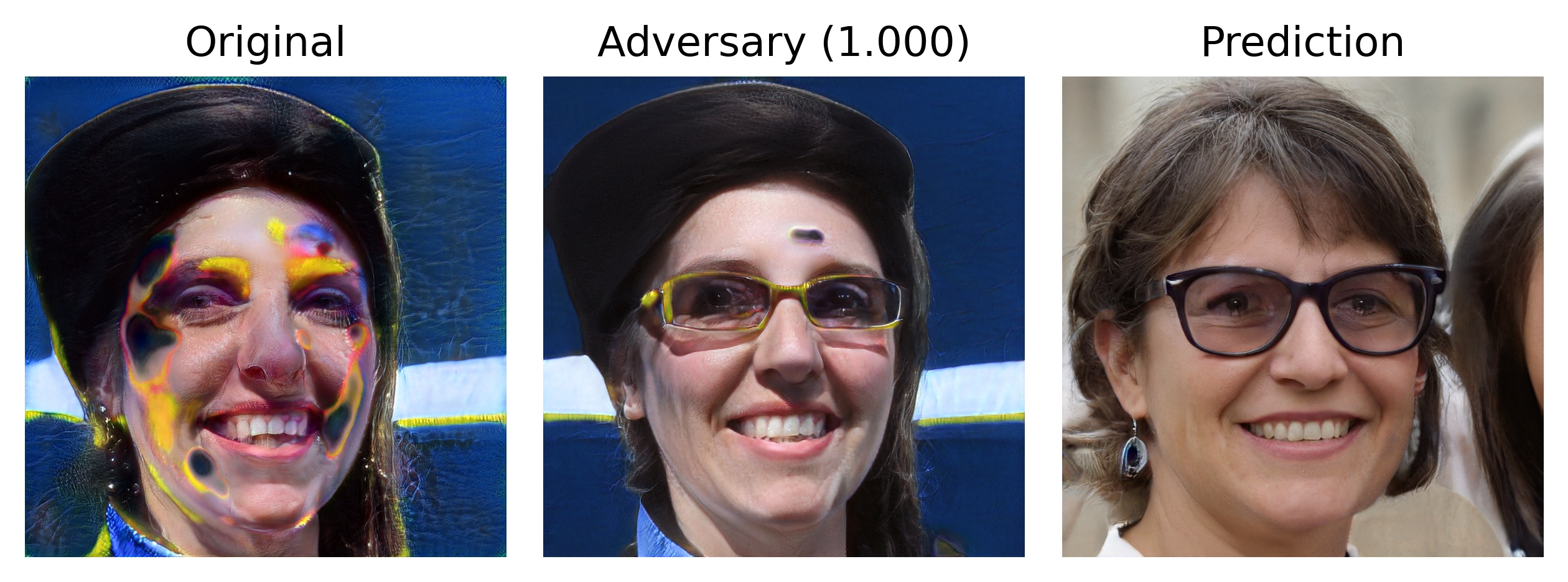}\\
    \includegraphics[trim=0cm 0.3cm 0cm 0.7cm,clip,width=\columnwidth]{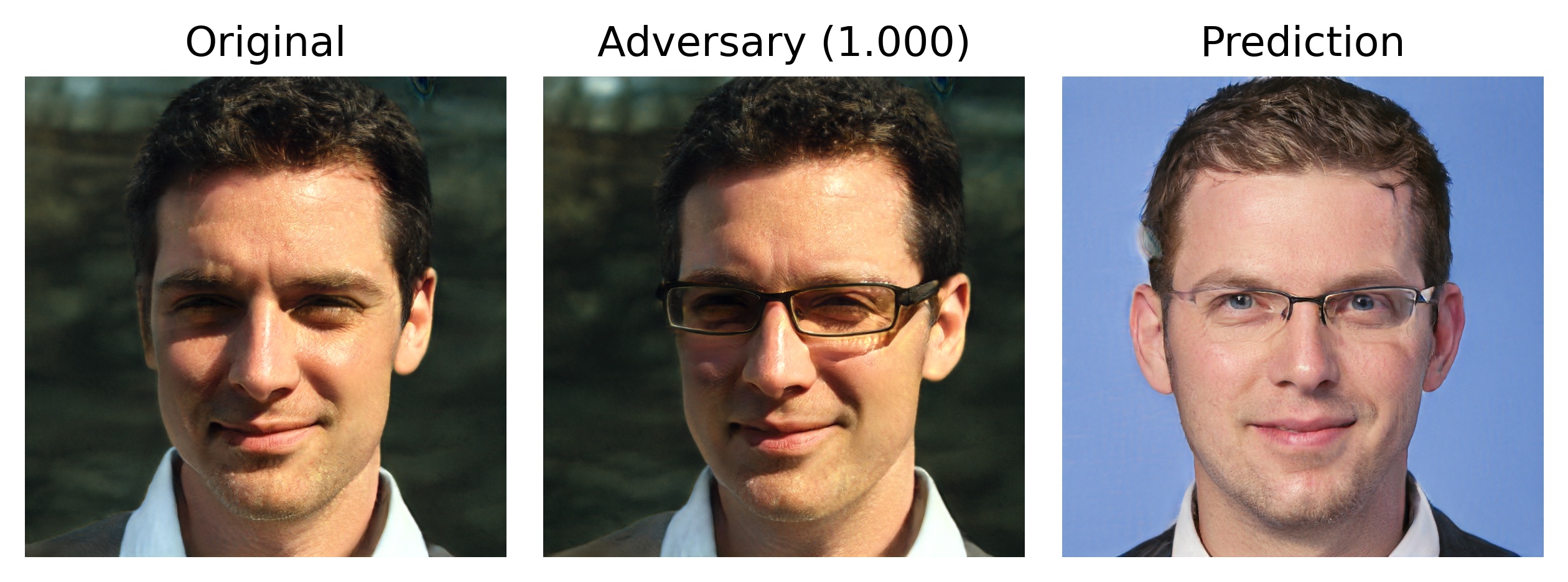}\\
    \includegraphics[trim=0cm 0.3cm 0cm 0.7cm,clip,width=\columnwidth]{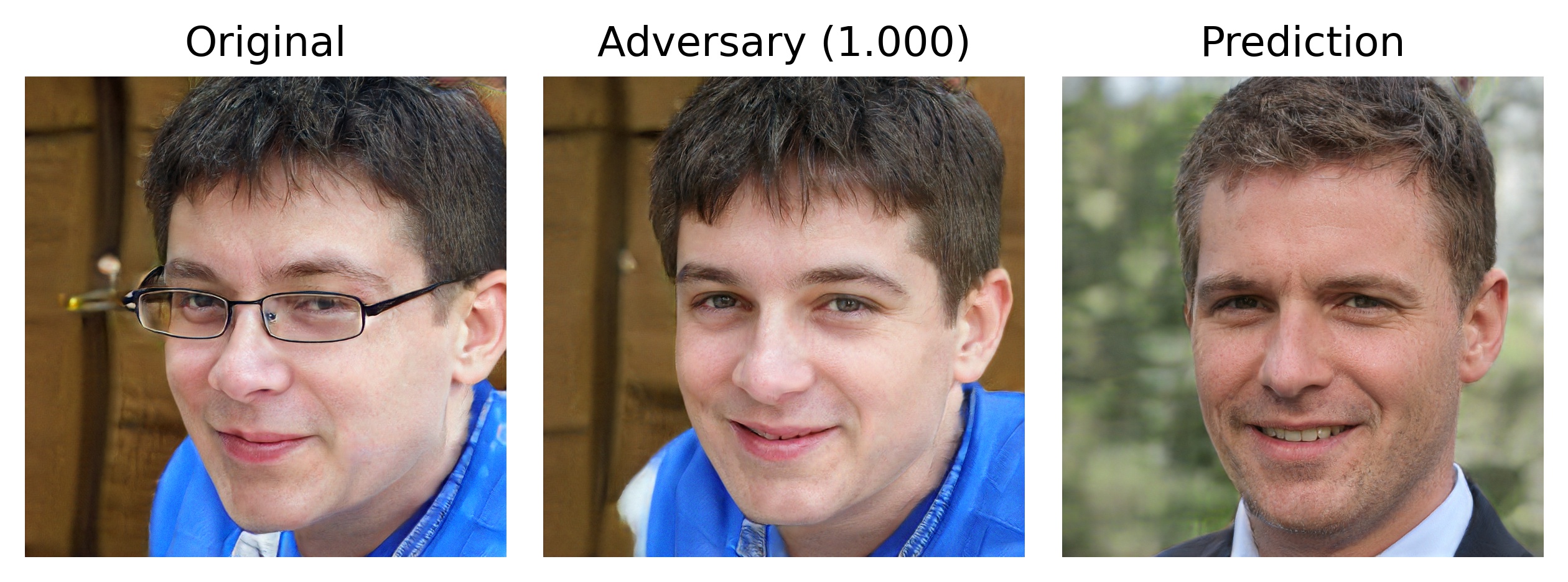}\\
    \includegraphics[trim=0cm 0.3cm 0cm 0.7cm,clip,width=\columnwidth]{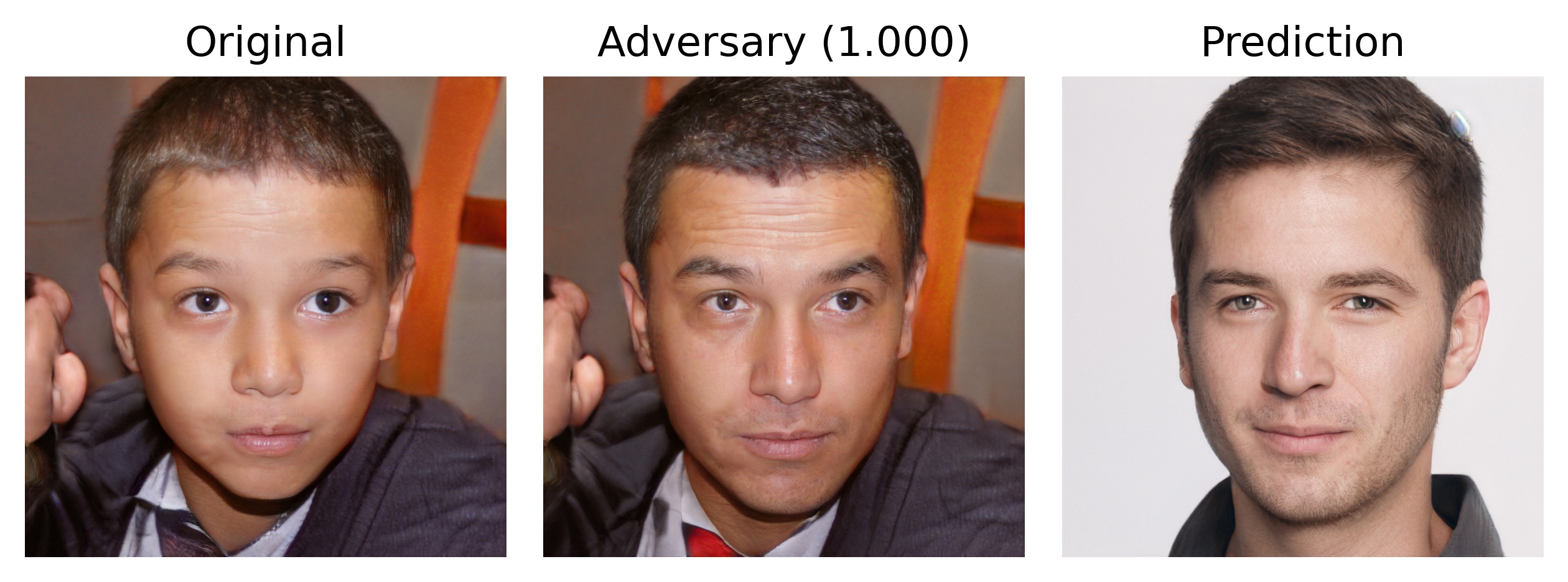}\\
    \includegraphics[trim=0cm 0.3cm 0cm 0.7cm,clip,width=\columnwidth]{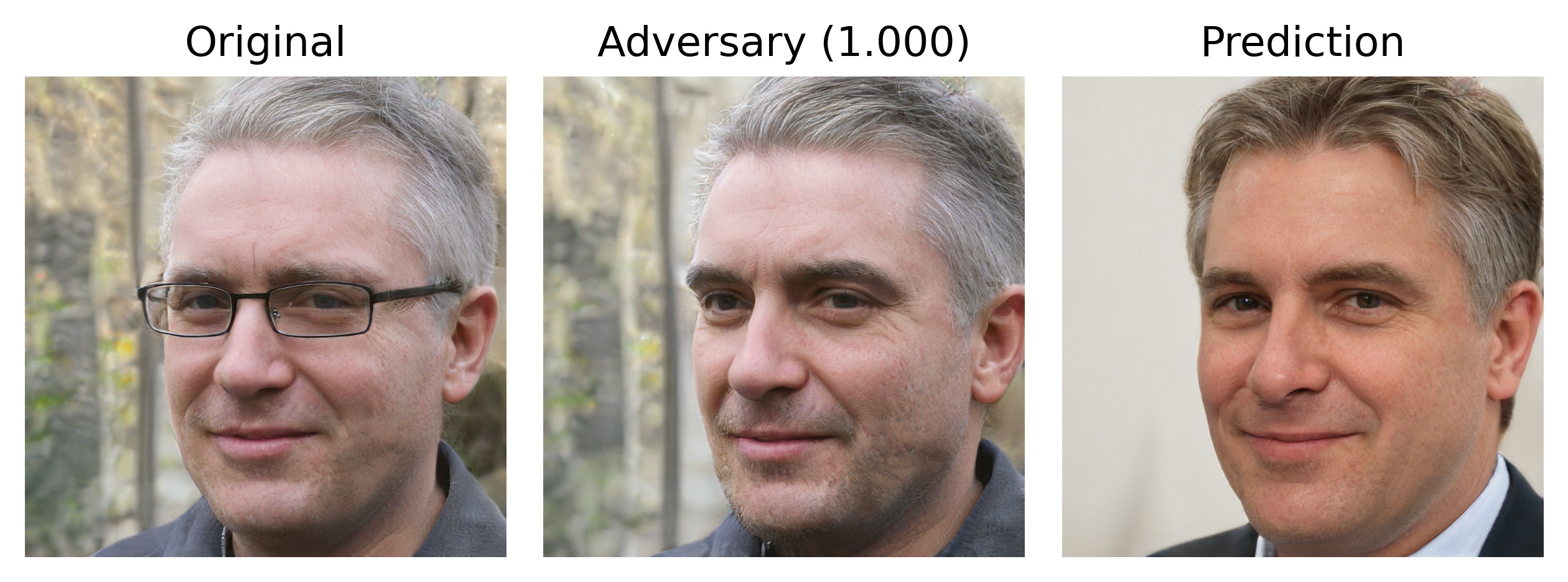}\\
    \includegraphics[trim=0cm 0.3cm 0cm 0.7cm,clip,width=\columnwidth]{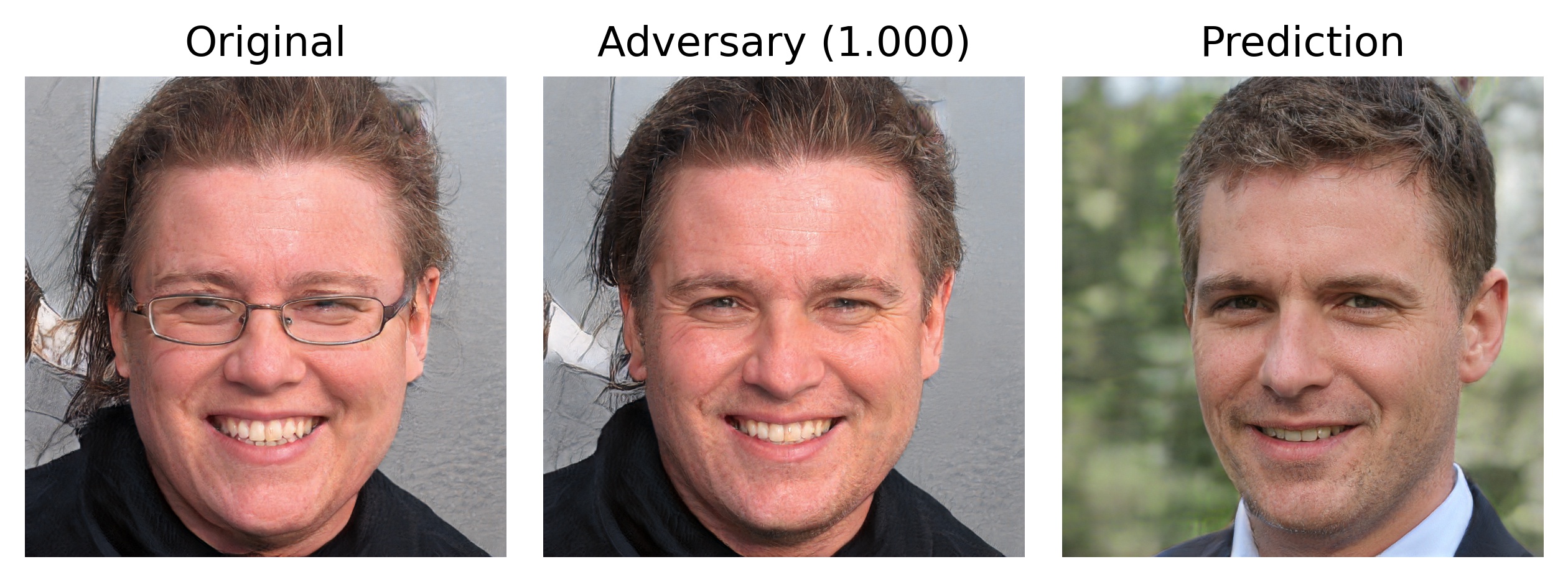}\\
    \includegraphics[trim=0cm 0.3cm 0cm 0.7cm,clip,width=\columnwidth]{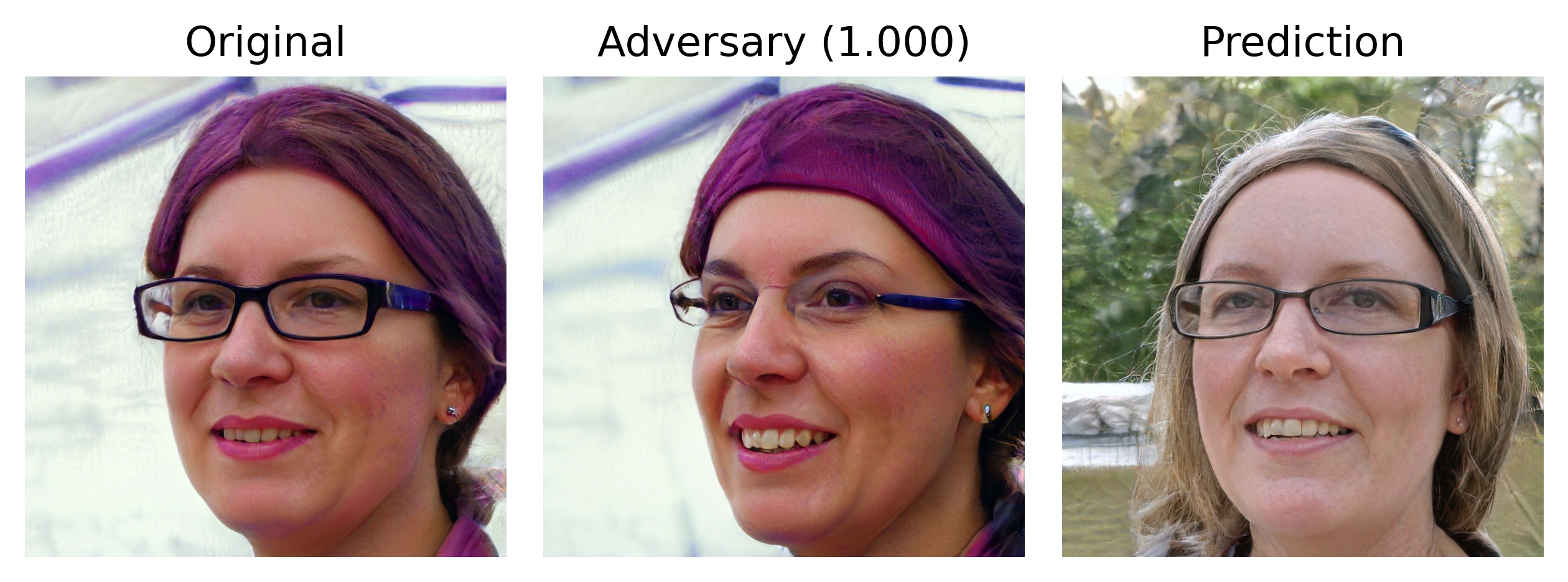}\\
    \includegraphics[trim=0cm 0.3cm 0cm 0.7cm,clip,width=\columnwidth]{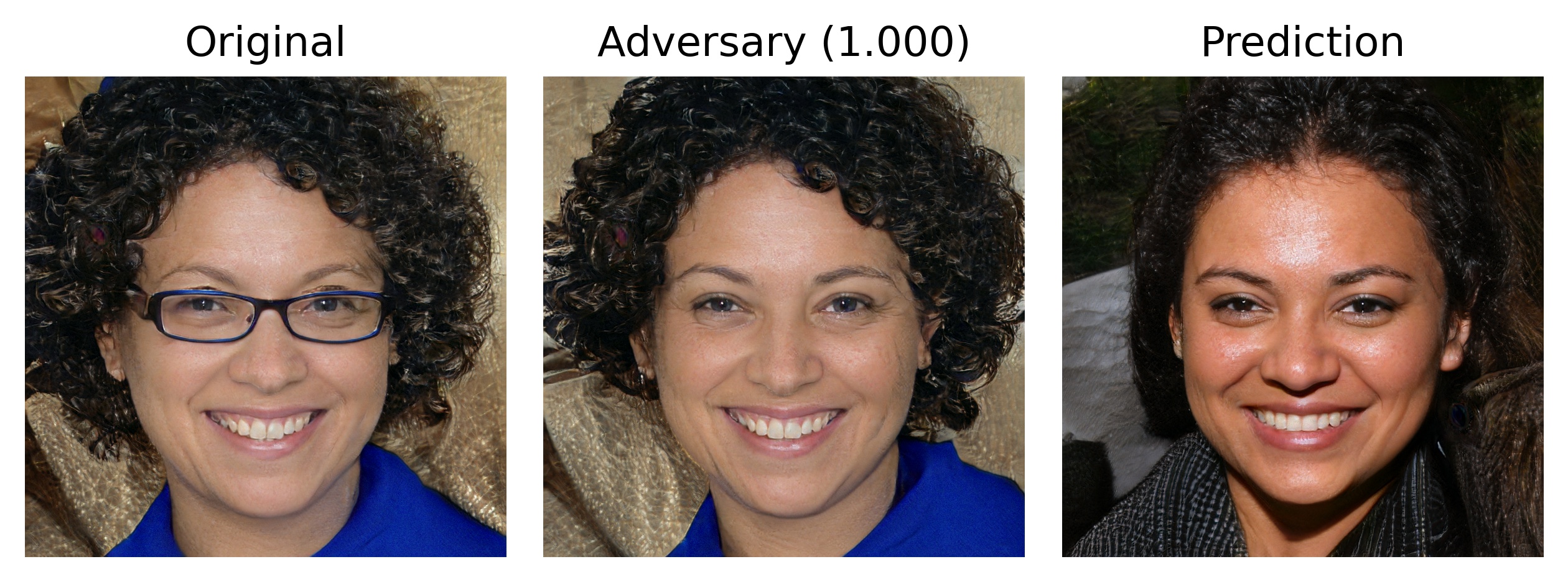}\\
    \caption{\textbf{Adversarial examples found by PGD.} Each row is a different identity. \textit{Left:} original face \textcolor{blue}{\fontfamily{cmss}\selectfont\textbf{A}}, \textit{middle:} modified face \textcolor{orange}{\fontfamily{cmss}\selectfont\textbf{A$^\star$}}, \textit{right:} match \textcolor{purple}{\fontfamily{cmss}\selectfont\textbf{B}}. The FRM prefers to match \textcolor{orange}{\fontfamily{cmss}\selectfont\textbf{A$^\star$}} with \textcolor{purple}{\fontfamily{cmss}\selectfont\textbf{B}} rather than with \textcolor{blue}{\fontfamily{cmss}\selectfont\textbf{A}}.}
    \label{fig:pgd_qual1}
\end{figure}

\begin{figure}
    \centering
    \includegraphics[trim=0cm 0.3cm 0cm 0.7cm,clip,width=\columnwidth]{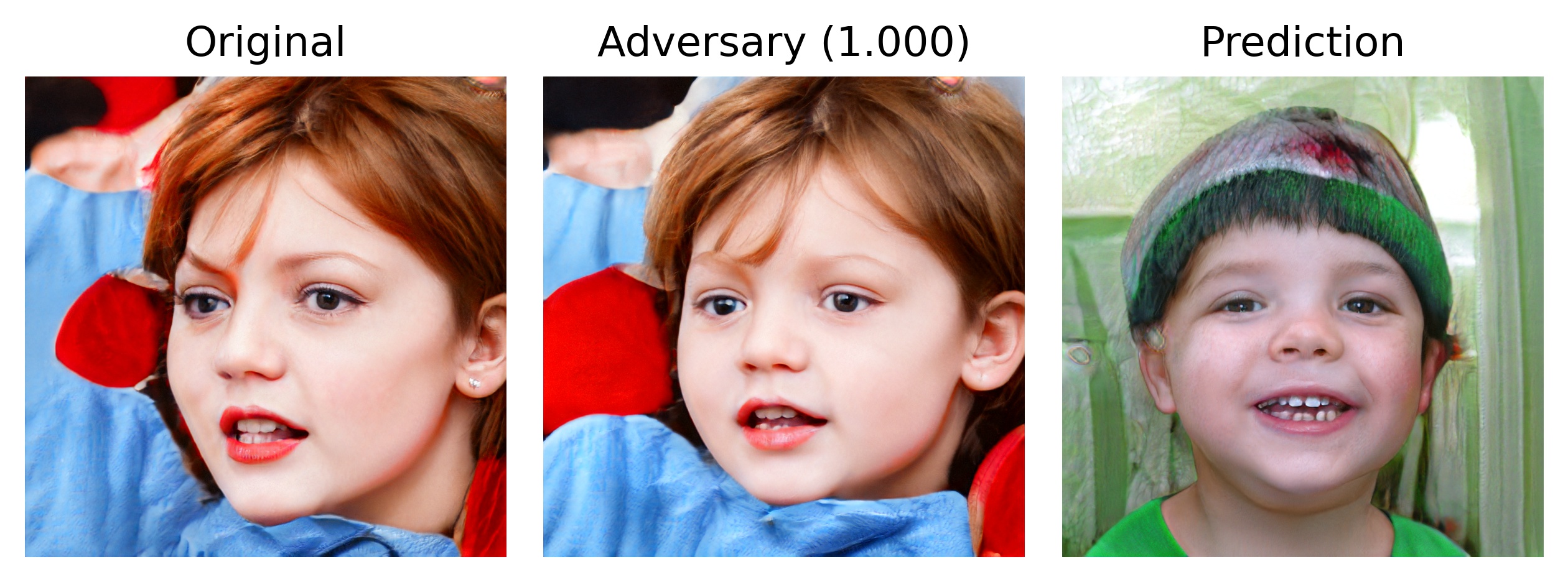}\\
    \includegraphics[trim=0cm 0.3cm 0cm 0.7cm,clip,width=\columnwidth]{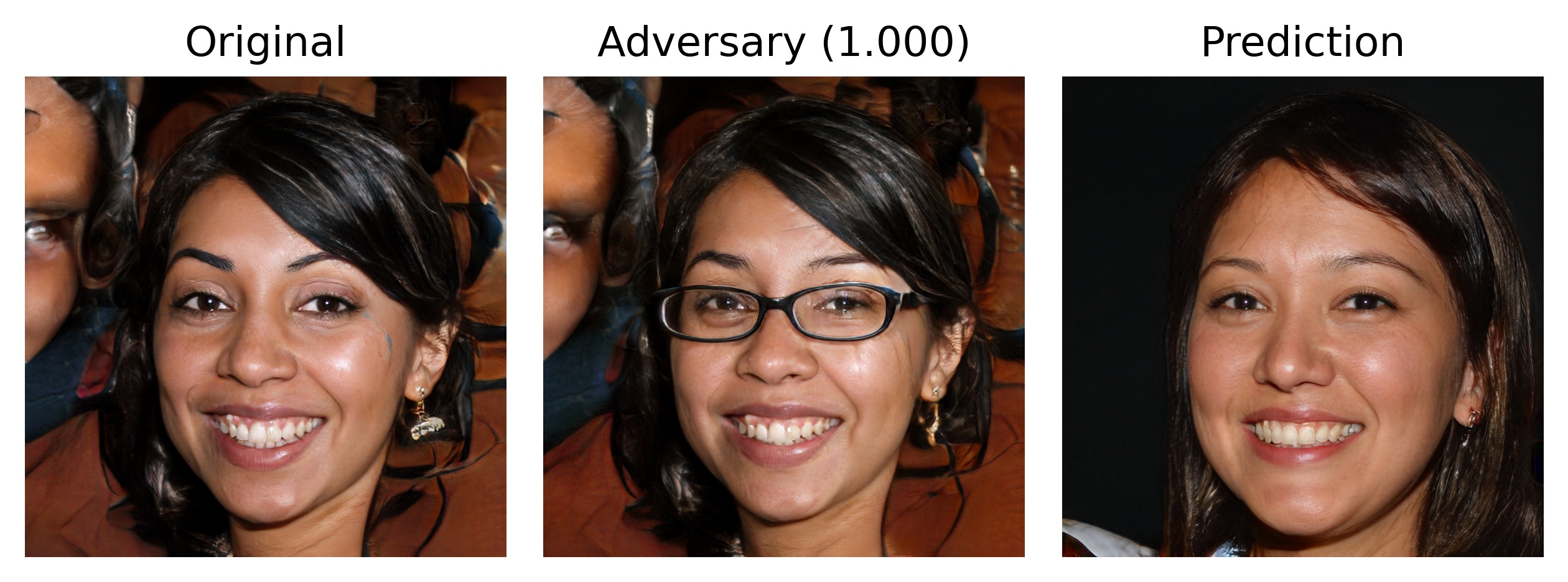}\\
    \includegraphics[trim=0cm 0.3cm 0cm 0.7cm,clip,width=\columnwidth]{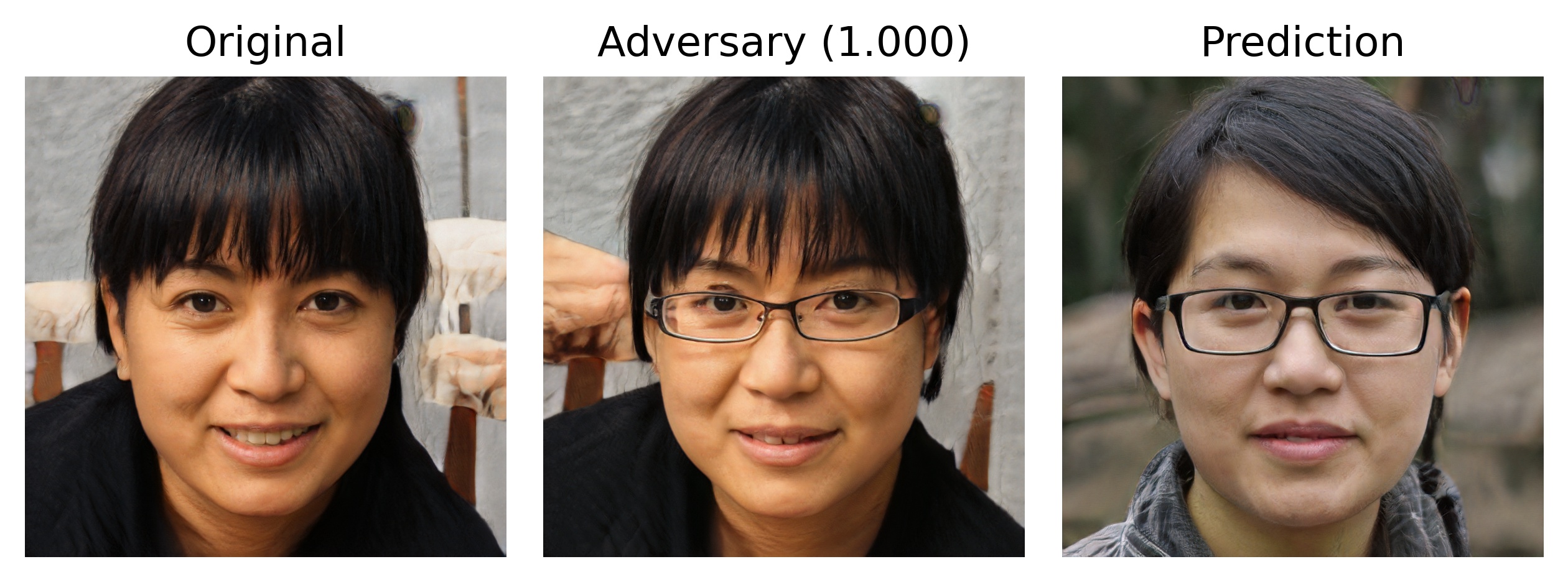}\\
    \includegraphics[trim=0cm 0.3cm 0cm 0.7cm,clip,width=\columnwidth]{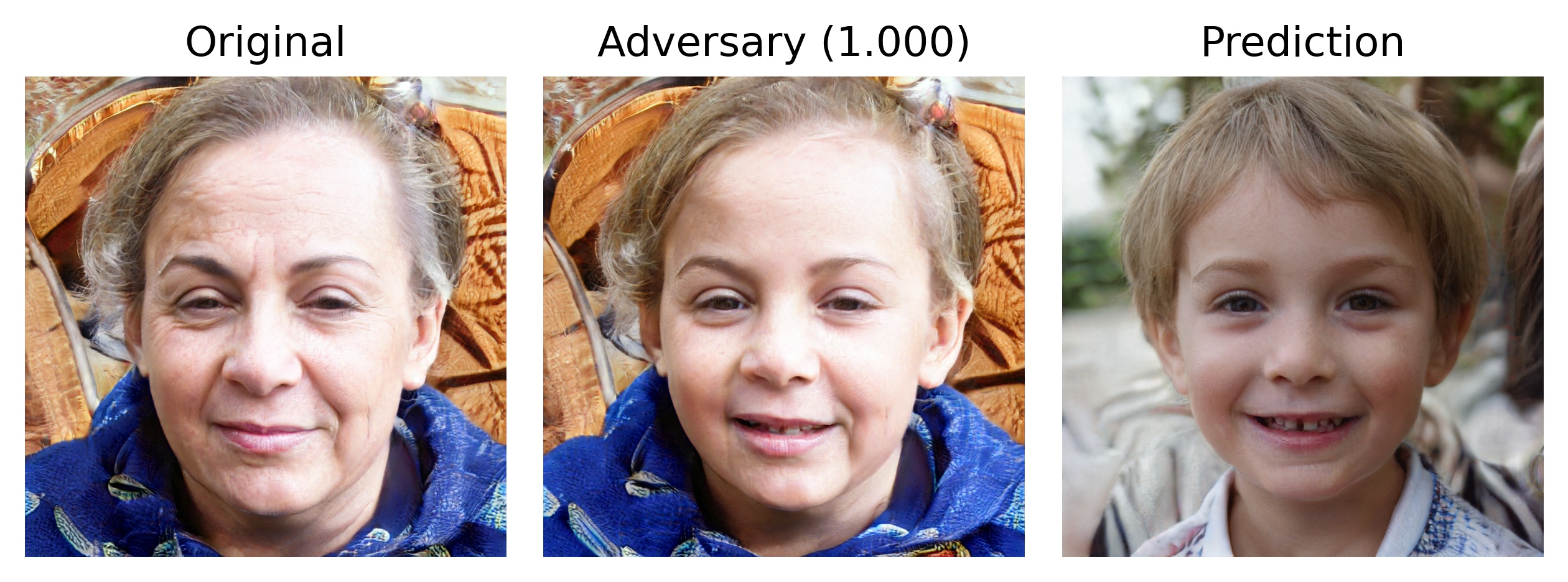}\\
    \includegraphics[trim=0cm 0.3cm 0cm 0.7cm,clip,width=\columnwidth]{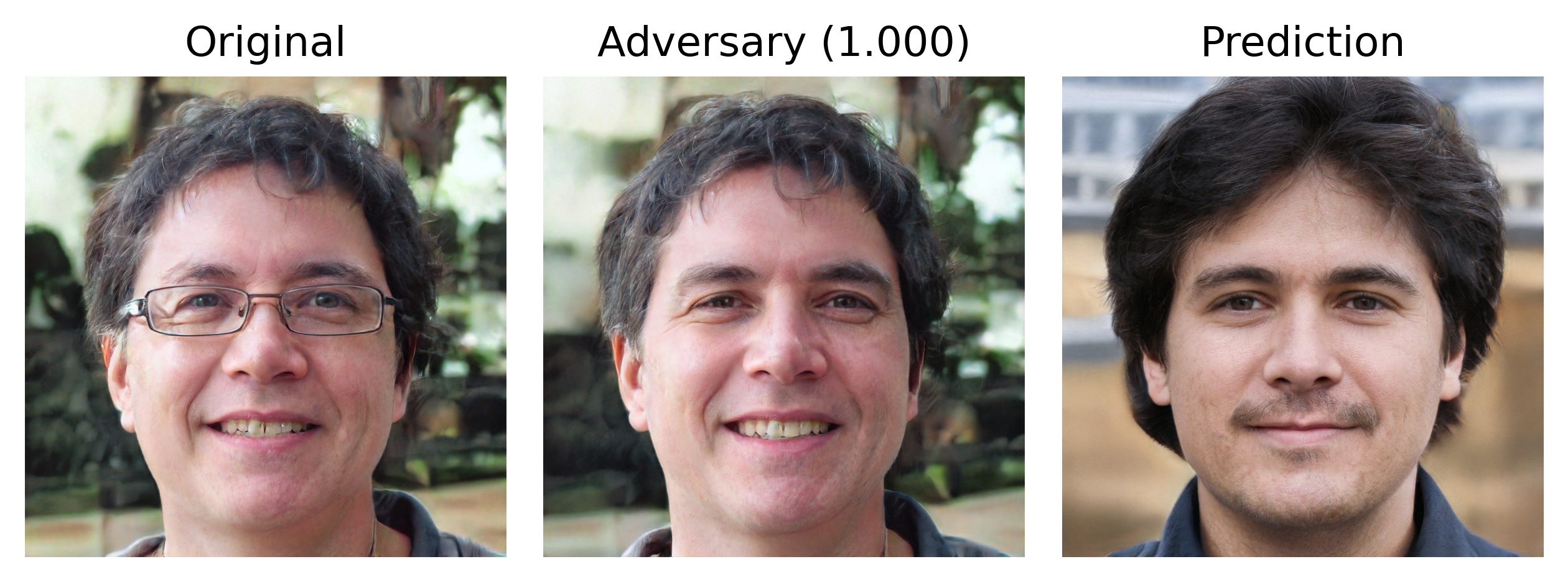}\\
    \includegraphics[trim=0cm 0.3cm 0cm 0.7cm,clip,width=\columnwidth]{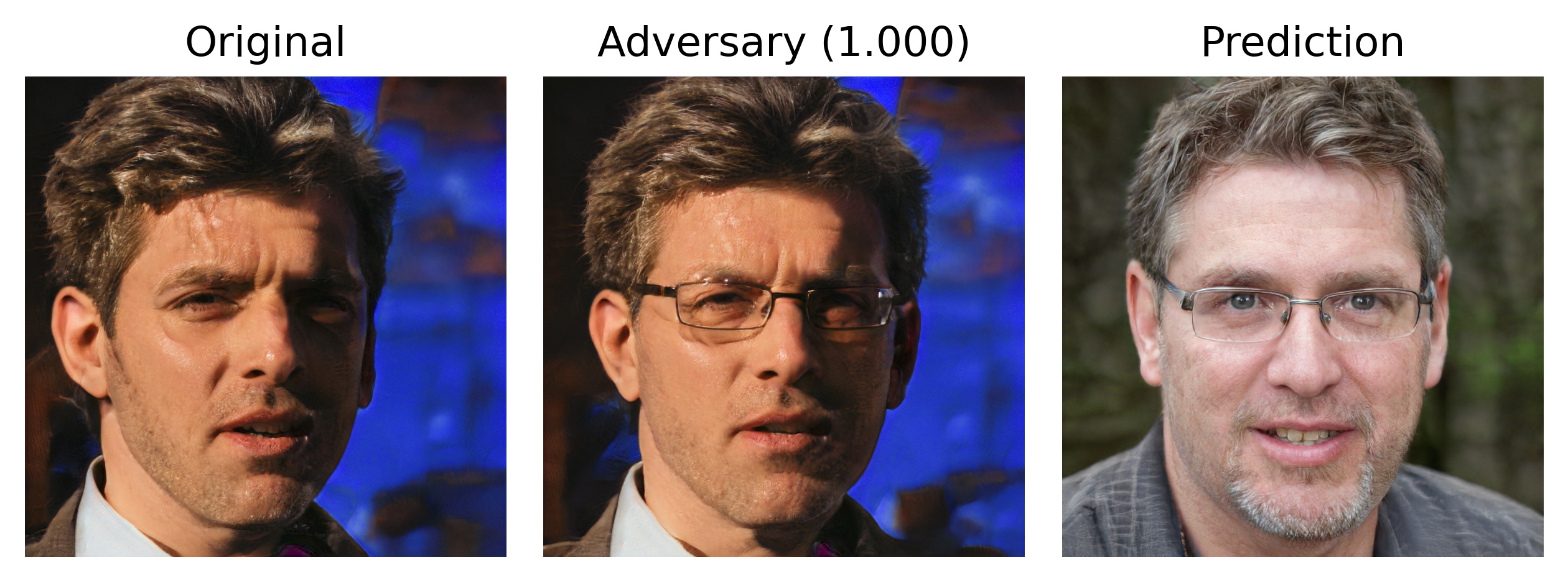}\\
    \includegraphics[trim=0cm 0.3cm 0cm 0.7cm,clip,width=\columnwidth]{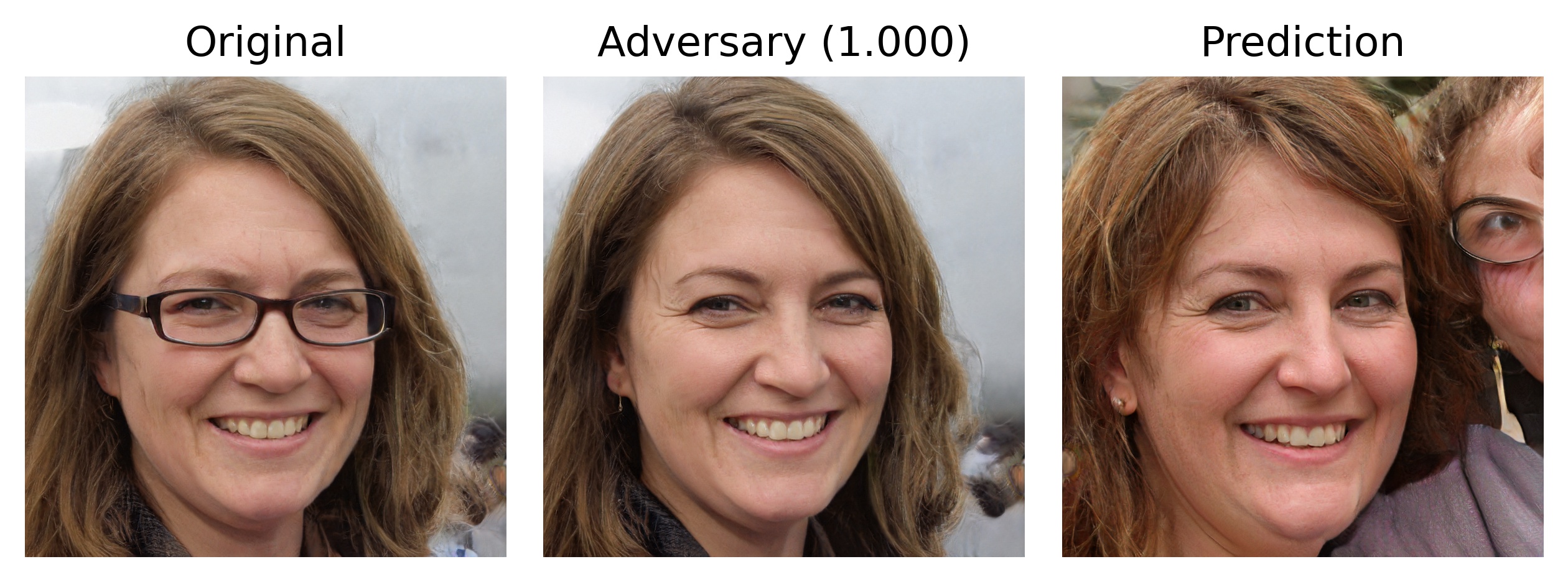}\\
    \includegraphics[trim=0cm 0.3cm 0cm 0.7cm,clip,width=\columnwidth]{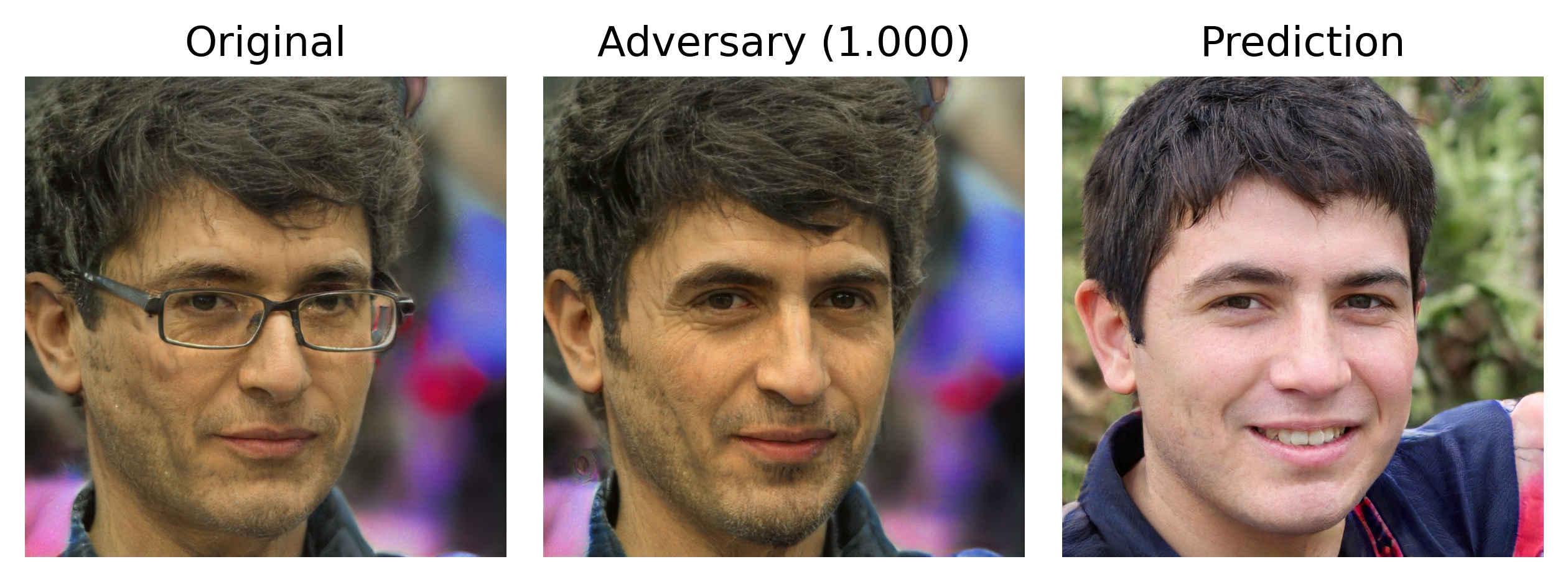}\\
    \caption{\textbf{Adversarial examples found by PGD.} Each row is a different identity. \textit{Left:} original face \textcolor{blue}{\fontfamily{cmss}\selectfont\textbf{A}}, \textit{middle:} modified face \textcolor{orange}{\fontfamily{cmss}\selectfont\textbf{A$^\star$}}, \textit{right:} match \textcolor{purple}{\fontfamily{cmss}\selectfont\textbf{B}}. The FRM prefers to match \textcolor{orange}{\fontfamily{cmss}\selectfont\textbf{A$^\star$}} with \textcolor{purple}{\fontfamily{cmss}\selectfont\textbf{B}} rather than with \textcolor{blue}{\fontfamily{cmss}\selectfont\textbf{A}}.}
    \label{fig:pgd_qual2}
\end{figure}

\begin{figure}
    \centering
    \includegraphics[trim=0cm 0.3cm 0cm 0.7cm,clip,width=\columnwidth]{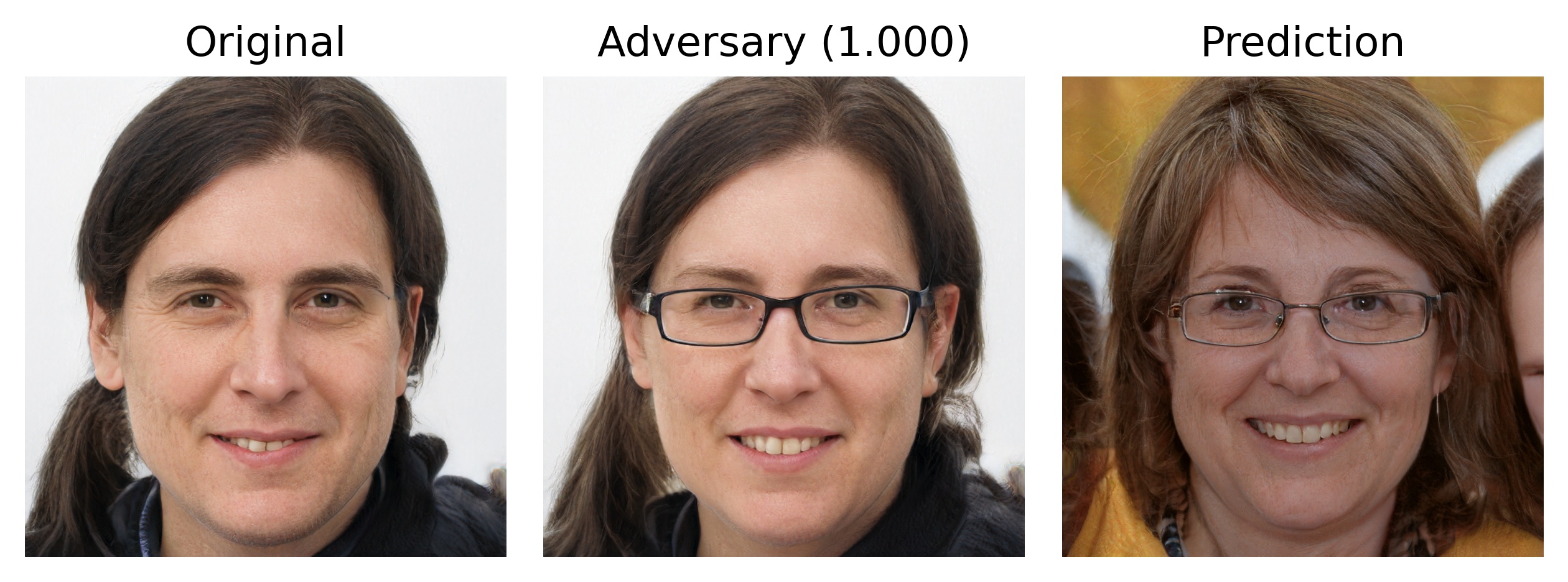}\\
    \includegraphics[trim=0cm 0.3cm 0cm 0.7cm,clip,width=\columnwidth]{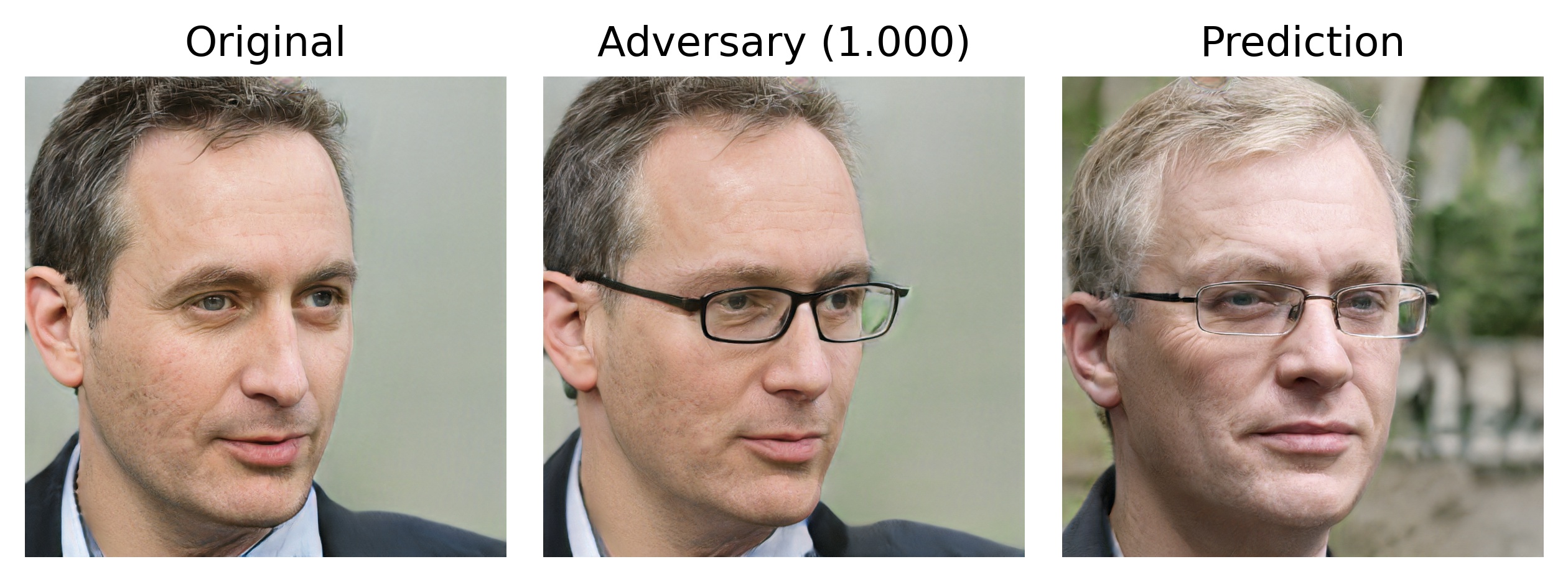}\\
    \includegraphics[trim=0cm 0.3cm 0cm 0.7cm,clip,width=\columnwidth]{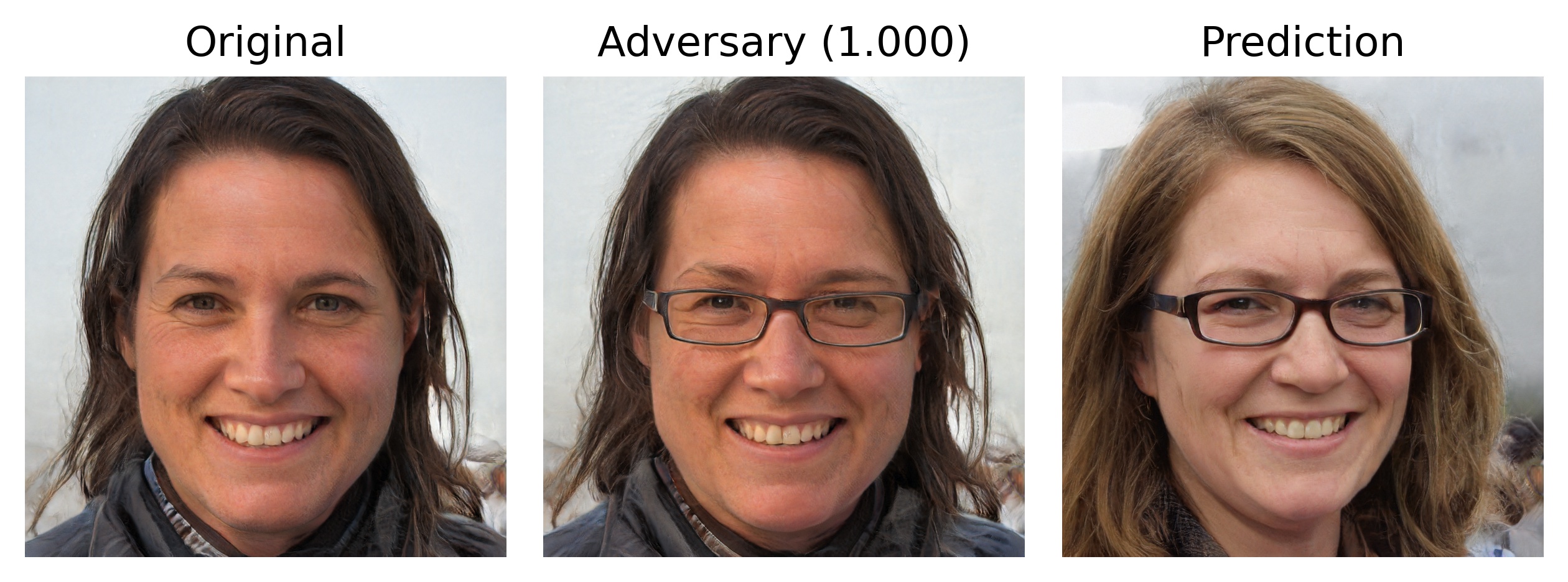}\\
    \includegraphics[trim=0cm 0.3cm 0cm 0.7cm,clip,width=\columnwidth]{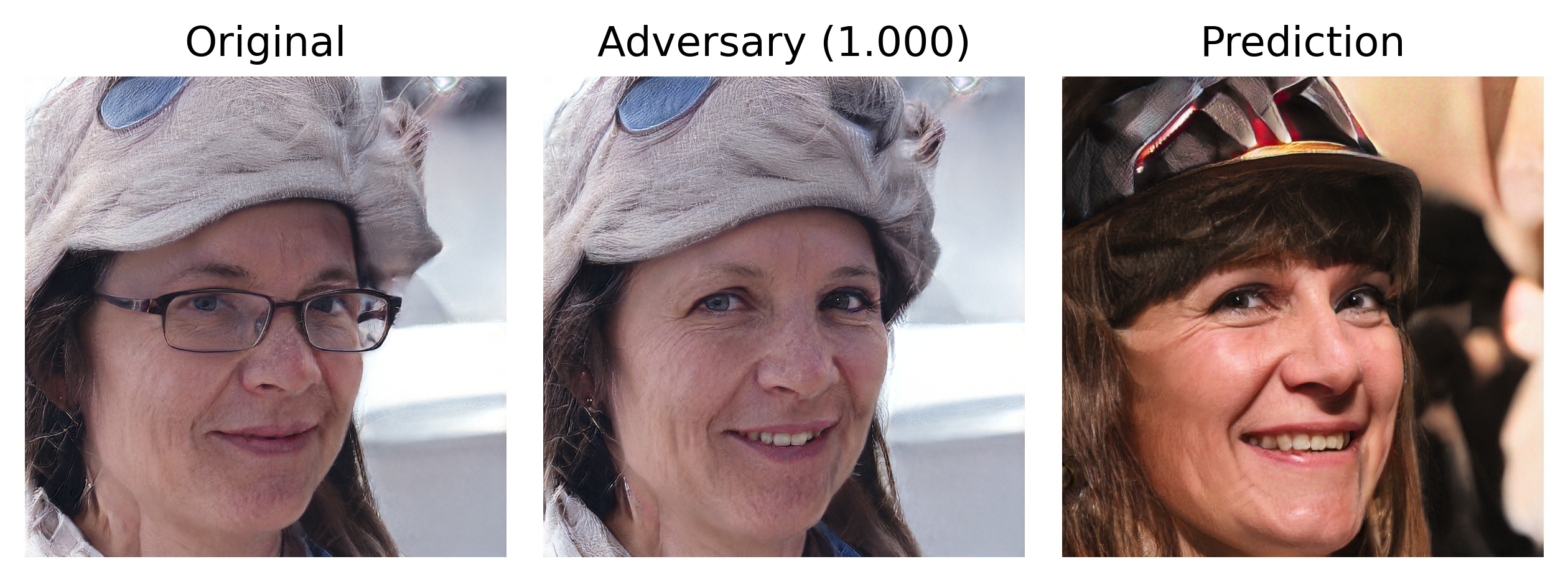}\\
    \includegraphics[trim=0cm 0.3cm 0cm 0.7cm,clip,width=\columnwidth]{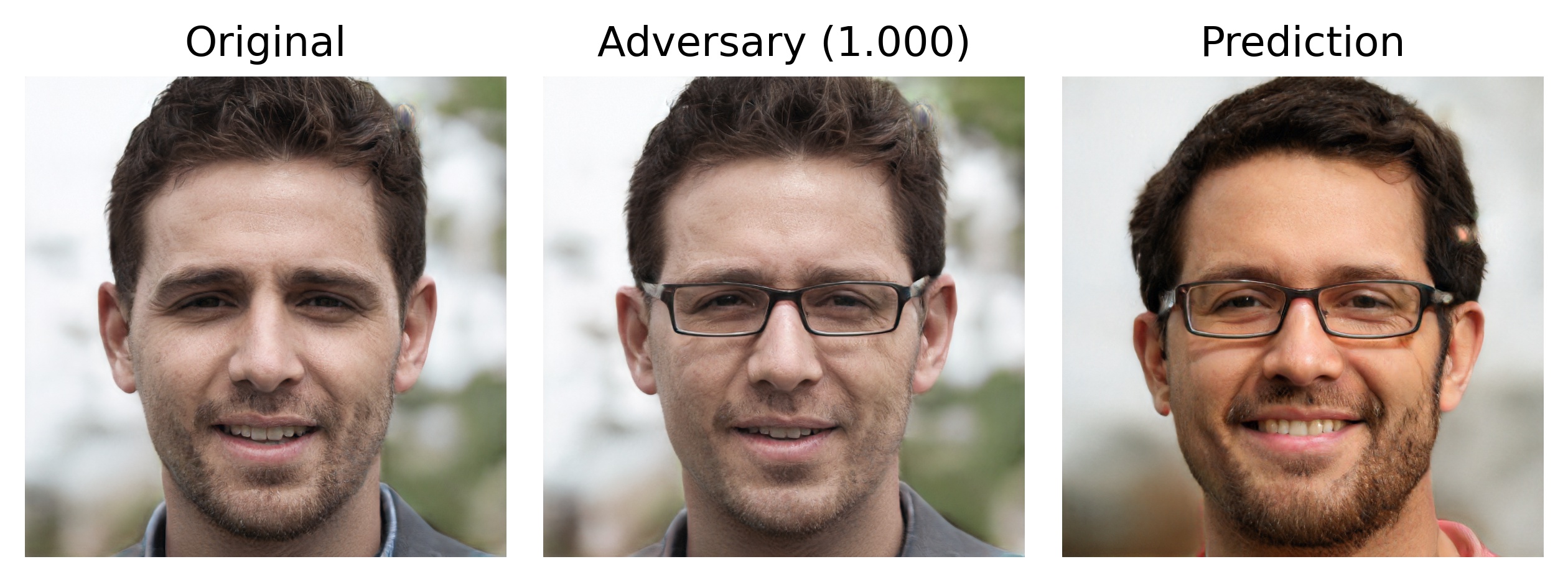}\\
    \includegraphics[trim=0cm 0.3cm 0cm 0.7cm,clip,width=\columnwidth]{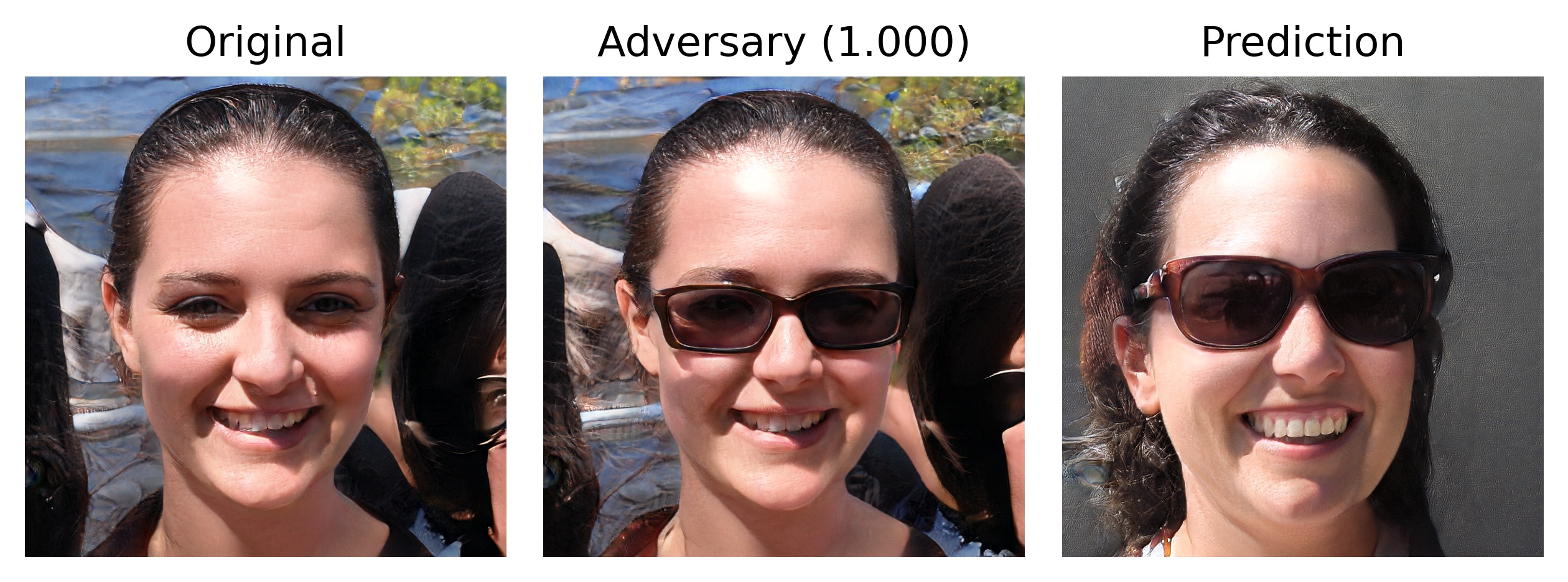}\\
    \includegraphics[trim=0cm 0.3cm 0cm 0.7cm,clip,width=\columnwidth]{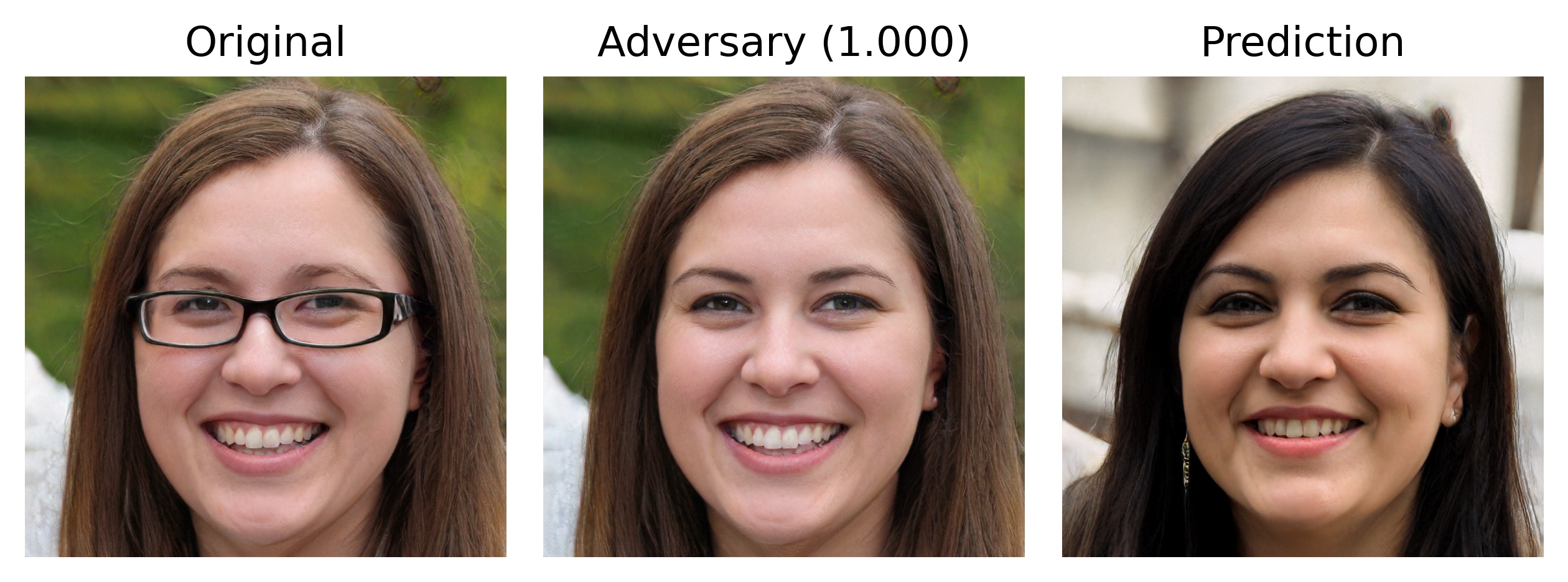}\\
    \includegraphics[trim=0cm 0.3cm 0cm 0.7cm,clip,width=\columnwidth]{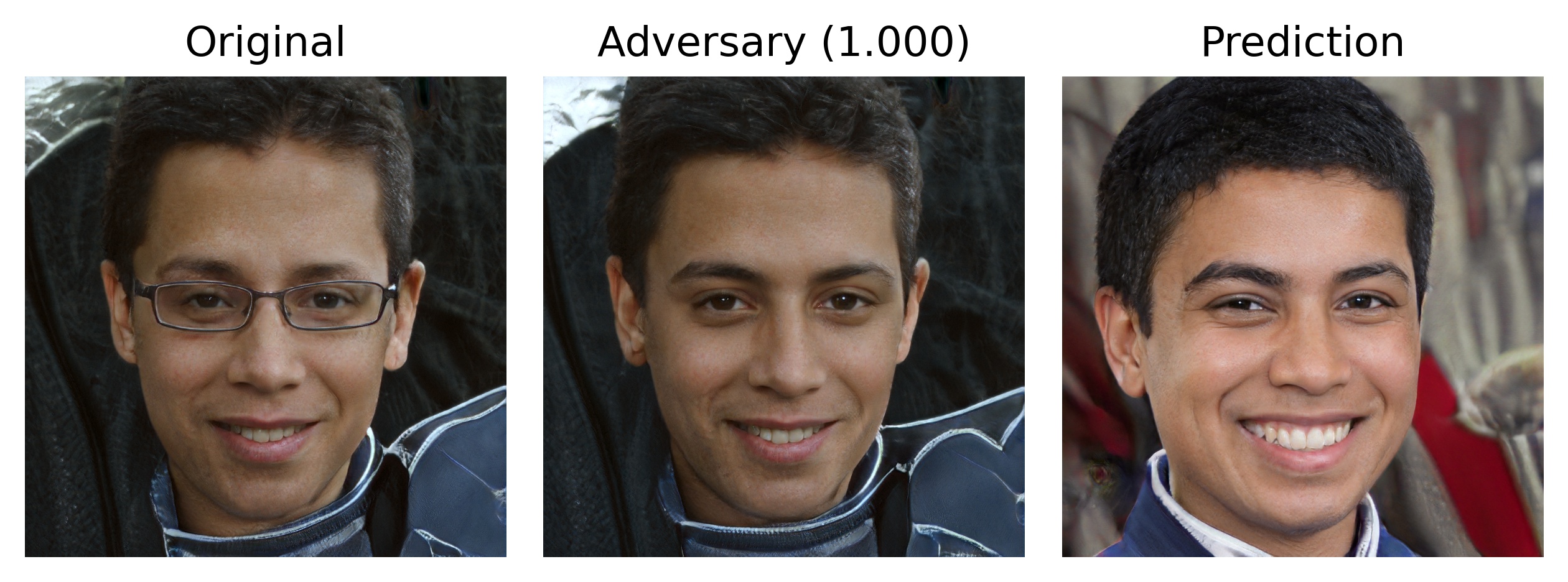}\\
    \caption{\textbf{Adversarial examples found by PGD.} Each row is a different identity. \textit{Left:} original face \textcolor{blue}{\fontfamily{cmss}\selectfont\textbf{A}}, \textit{middle:} modified face \textcolor{orange}{\fontfamily{cmss}\selectfont\textbf{A$^\star$}}, \textit{right:} match \textcolor{purple}{\fontfamily{cmss}\selectfont\textbf{B}}. The FRM prefers to match \textcolor{orange}{\fontfamily{cmss}\selectfont\textbf{A$^\star$}} with \textcolor{purple}{\fontfamily{cmss}\selectfont\textbf{B}} rather than with \textcolor{blue}{\fontfamily{cmss}\selectfont\textbf{A}}.}
    \label{fig:pgd_qual3}
\end{figure}

\begin{figure}
    \centering
    \includegraphics[trim=0cm 0.3cm 0cm 0.7cm,clip,width=\columnwidth]{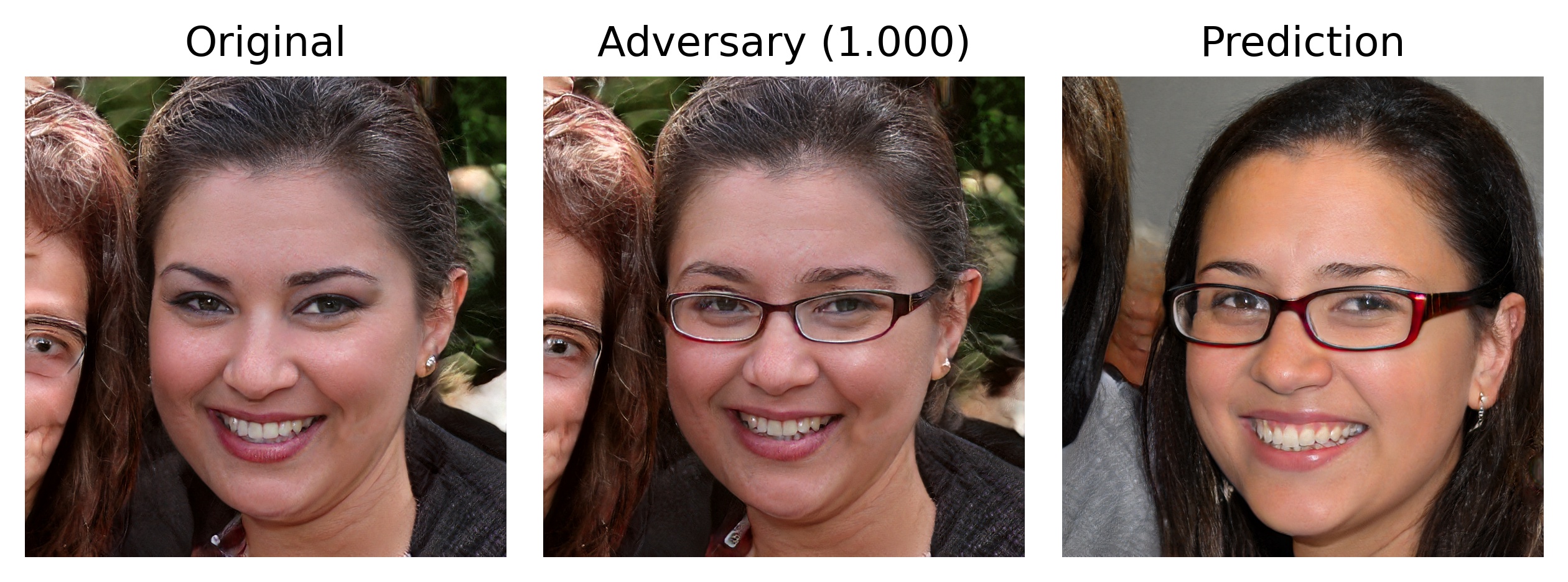}\\
    \includegraphics[trim=0cm 0.3cm 0cm 0.7cm,clip,width=\columnwidth]{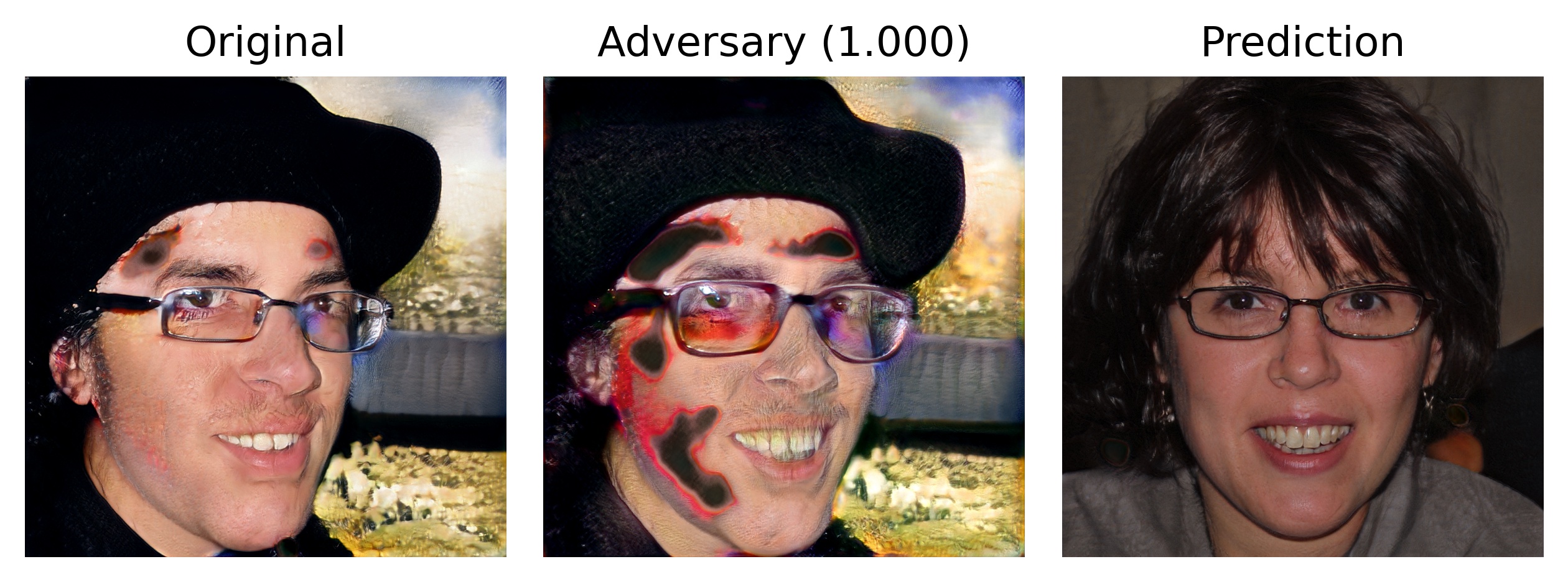}\\
    \includegraphics[trim=0cm 0.3cm 0cm 0.7cm,clip,width=\columnwidth]{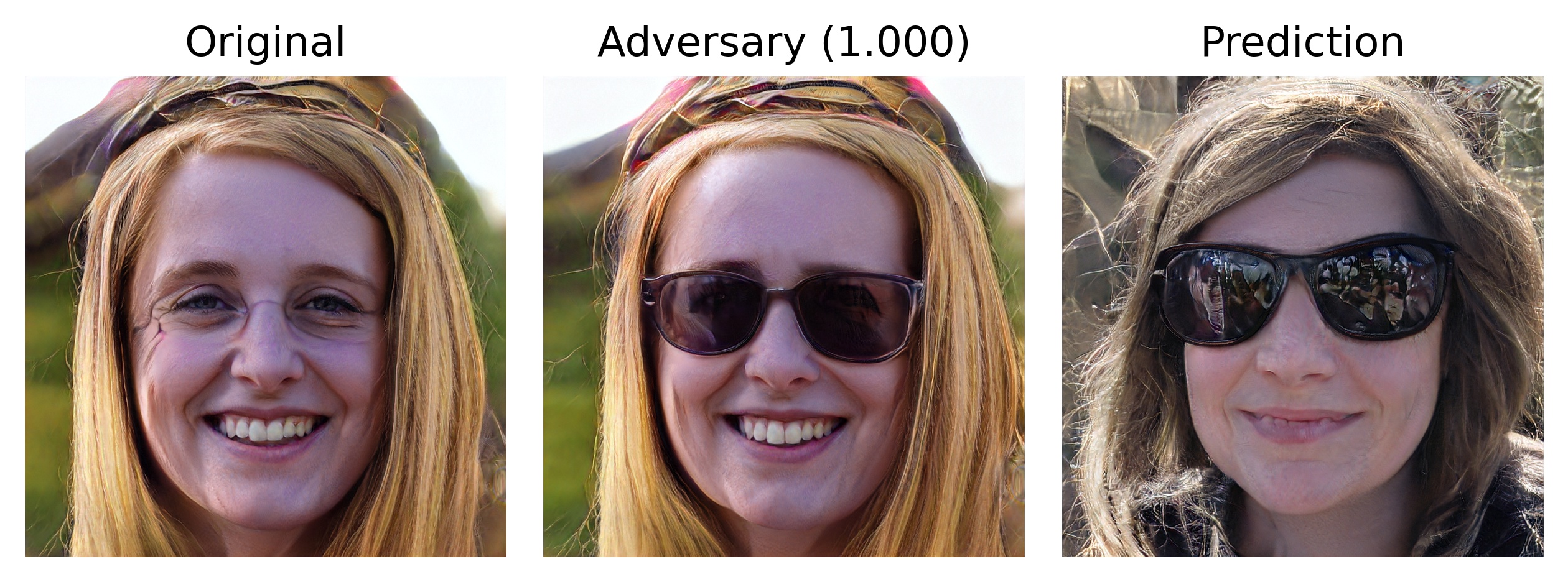}\\
    \includegraphics[trim=0cm 0.3cm 0cm 0.7cm,clip,width=\columnwidth]{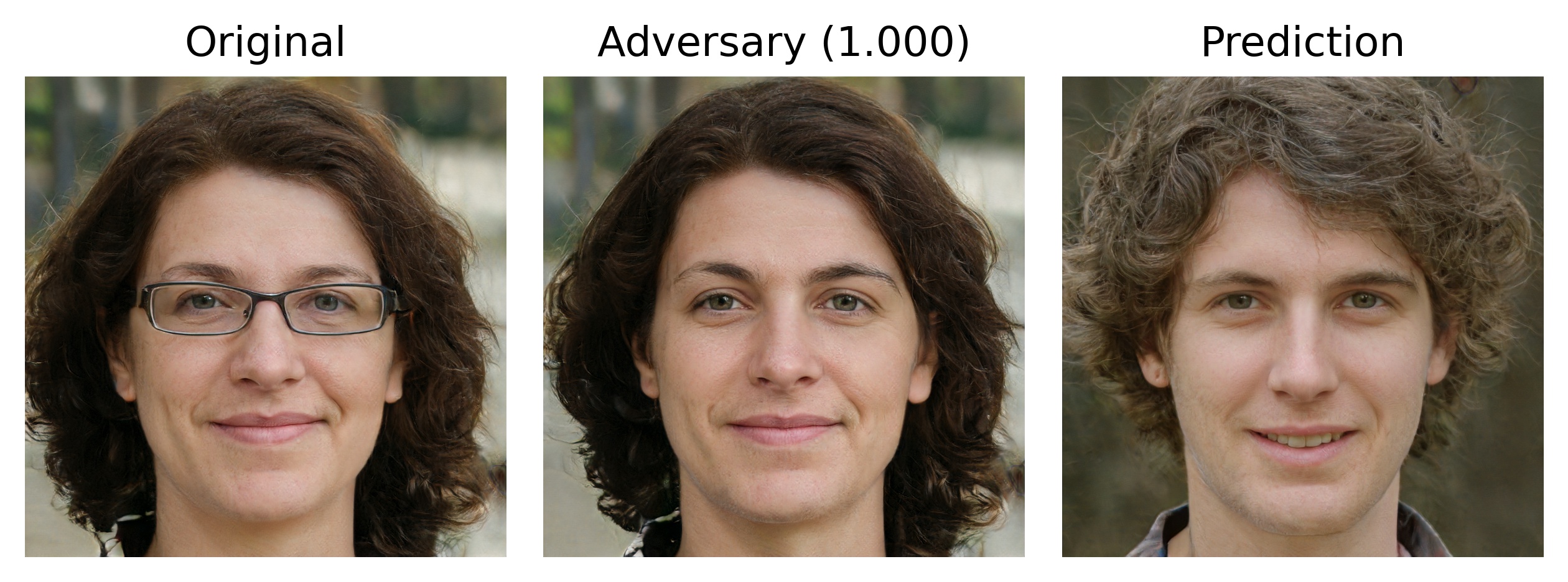}\\
    \includegraphics[trim=0cm 0.3cm 0cm 0.7cm,clip,width=\columnwidth]{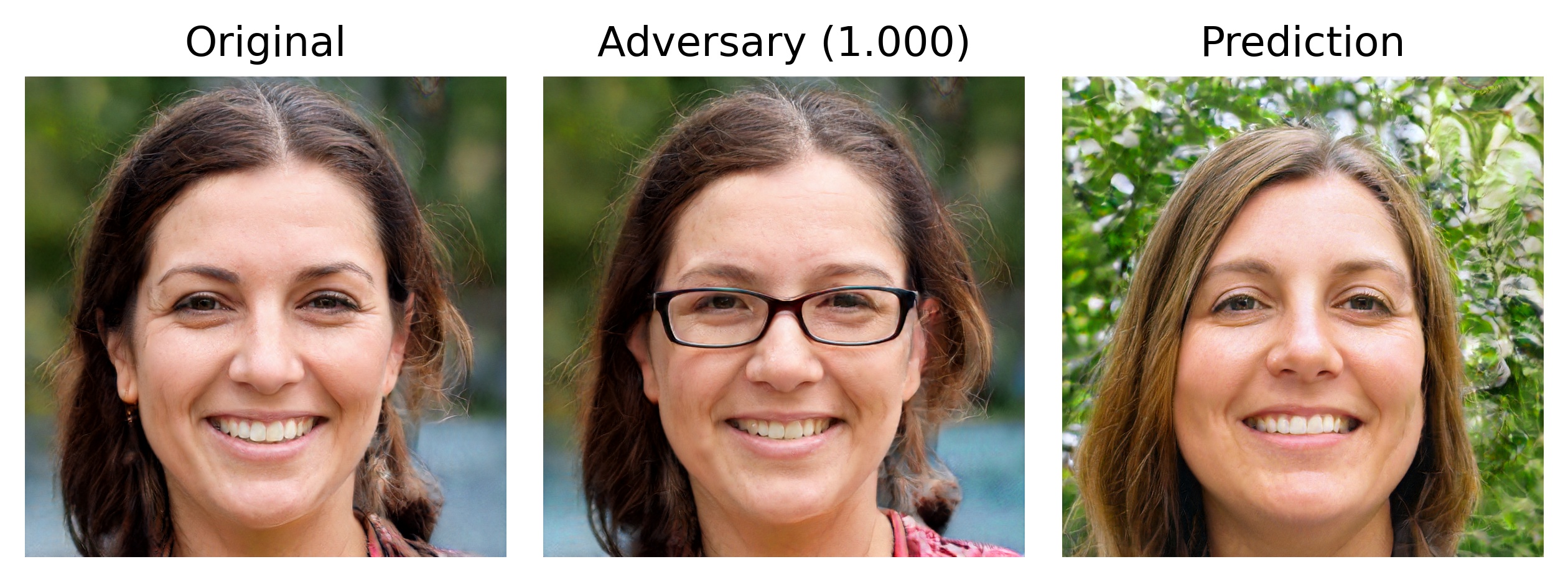}\\
    \includegraphics[trim=0cm 0.3cm 0cm 0.7cm,clip,width=\columnwidth]{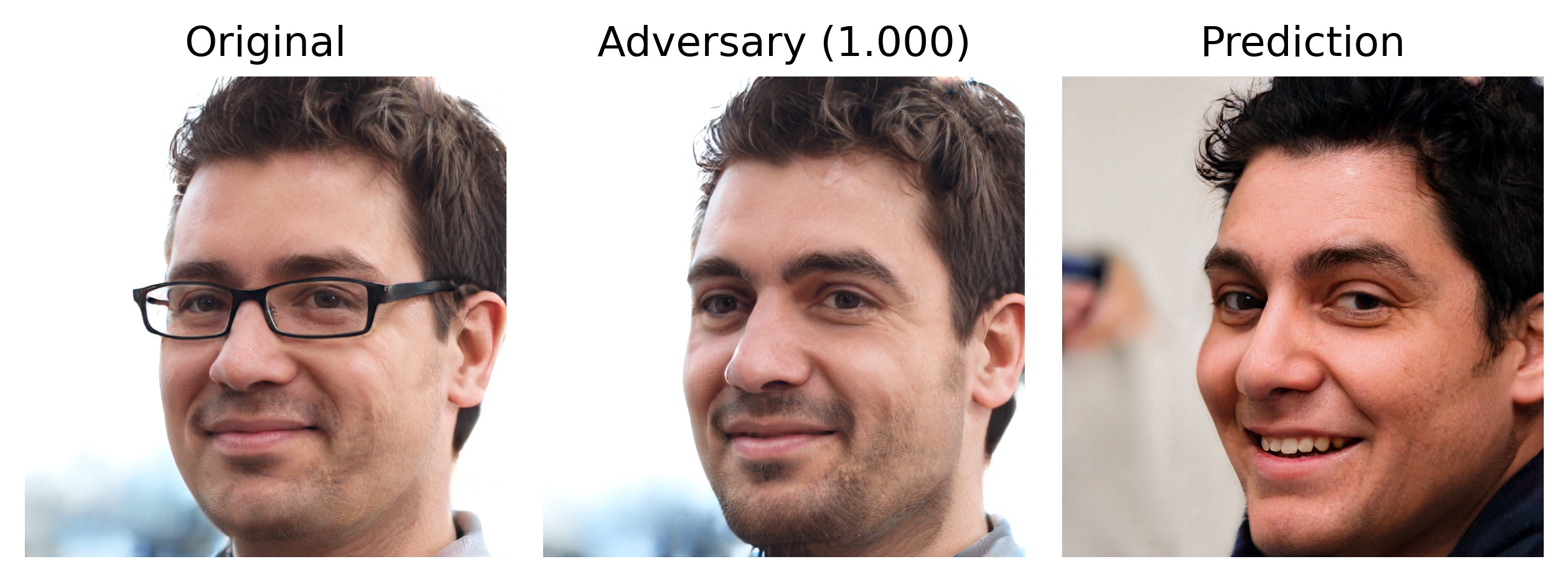}\\
    \includegraphics[trim=0cm 0.3cm 0cm 0.7cm,clip,width=\columnwidth]{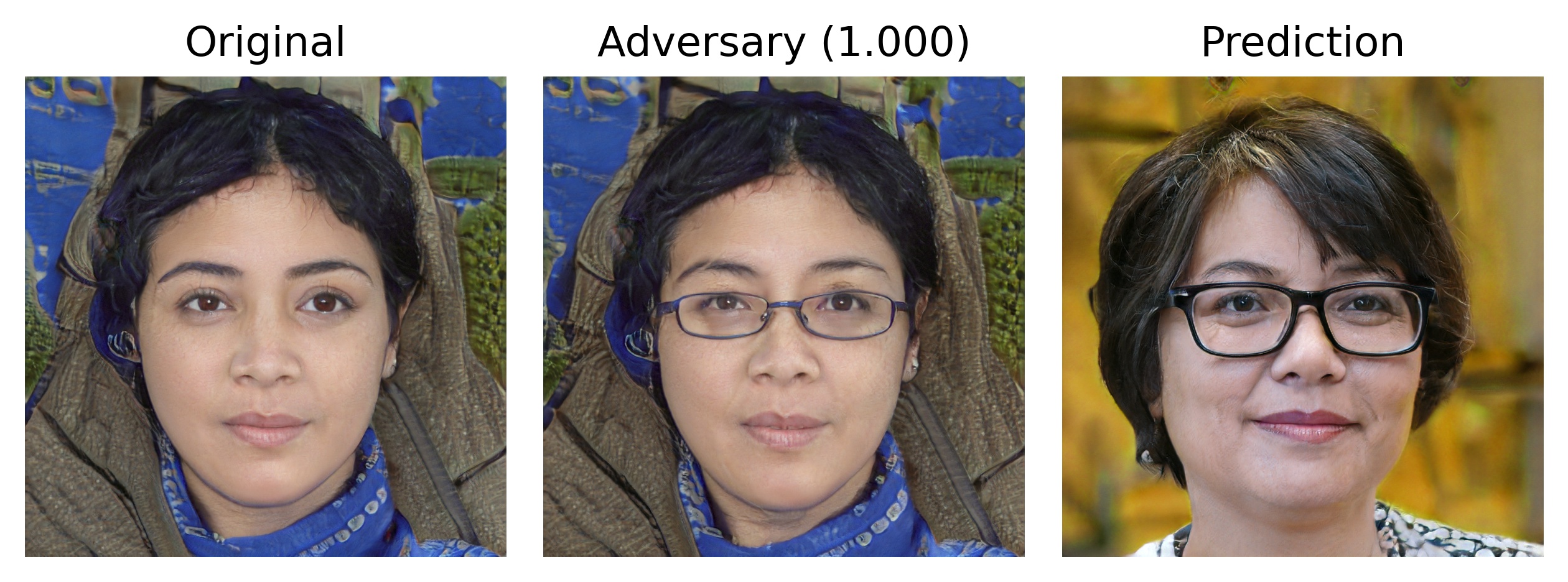}\\
    \includegraphics[trim=0cm 0.3cm 0cm 0.7cm,clip,width=\columnwidth]{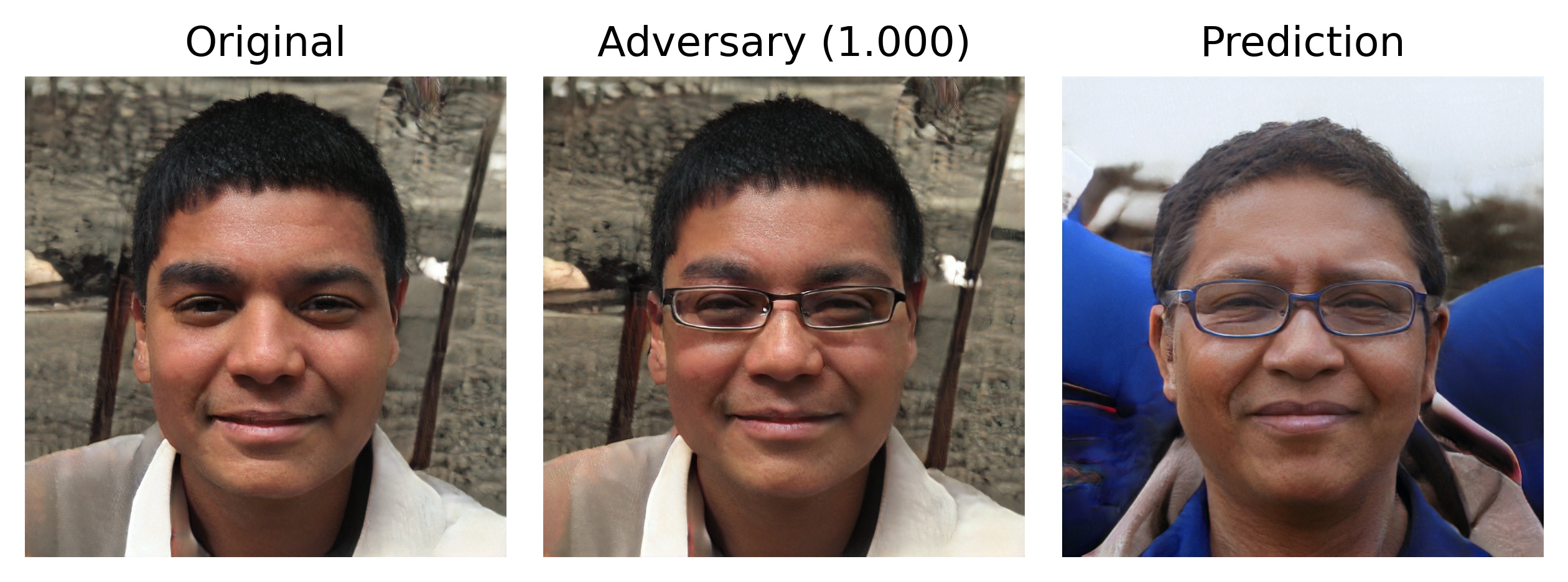}\\
    \caption{\textbf{Adversarial examples found by PGD.} Each row is a different identity. \textit{Left:} original face \textcolor{blue}{\fontfamily{cmss}\selectfont\textbf{A}}, \textit{middle:} modified face \textcolor{orange}{\fontfamily{cmss}\selectfont\textbf{A$^\star$}}, \textit{right:} match \textcolor{purple}{\fontfamily{cmss}\selectfont\textbf{B}}. The FRM prefers to match \textcolor{orange}{\fontfamily{cmss}\selectfont\textbf{A$^\star$}} with \textcolor{purple}{\fontfamily{cmss}\selectfont\textbf{B}} rather than with \textcolor{blue}{\fontfamily{cmss}\selectfont\textbf{A}}.}
    \label{fig:pgd_qual4}
\end{figure}
\onecolumn

\section{PGD Attribute interpretation}\label{sec:app_pgd_attr}
Section~\ref{sec:pgd_attacks} presented attacks with PGD.
The adversarial examples found by these attacks were then interpreted via the procedure we introduced in Section~\ref{sec:interpreting}. 
Here we provide experimental details into the analysis we conducted, a comprehensive explanation of how, we argue, these results should be understood, and, additionally, an alternative brute-force approach to assess the FRMs sensitivity to attribute-constrained modifications.

\paragraph{Quantitative results.} 
Table~\ref{tab:deltas_pvals} is an extended version of Table~\ref{tab:deltas_pgd}, reporting the statistical details of our analysis.
In particular, we show the \textit{p}-values corresponding to each pair-wise comparison in each ranking, and the number of samples on which the comparisons were run (\ie $n$, equivalent to the number of adversarial examples found by PGD).
With a significance of $\alpha = 0.05$, we found most comparisons to be statistically significant.
We mark these statistically-significant comparisons with ``\cmark'', and mark the rest with ``\xmark''.

\paragraph{Interpretation.}
Under the assumption that the induced perturbations indeed result in identity-preserving modifications, an adversarial example for person \underline{A} can be interpreted as \textit{``a variant of \underline{A}'s face that the FRM fails at recognizing as \underline{A}''}.
Since these examples were found via a constrained-perturbation attack, the relative energy spent by the attack on modifying each attribute relates to the FRM's sensitivity to such attribute.
Thus, we argue that the relative energy spent on modifying an attribute can be interpreted as related to \textit{``the FRM's disproportionate sensitivity to modifications on such attribute''}.

From this ranking, we mark three main observations that hold for all FRMs: \textit{(i)} the ``Eyeglasses'' attribute leads the ranking in 1$^{\text{st}}$ position, \textit{(ii)} the ``Pose'' and ``Age'' attributes take either the 2$^{\text{nd}}$ or 3$^{\text{rd}}$ position, and \textit{(iii)} the ``Smile'' and ``Gender'' attributes take the last two positions (4$^{\text{th}}$ and 5$^{\text{th}}$).

We next interpret these observations.
The presence/absence of eyeglasses is a strong cue on which FRMs rely on (somewhat disproportionately) for recognizing individuals.
This phenomenon is inconvenient and suggests avenues for improving FRMs.
However, we also relate the FRMs' sensitivity to eyeglasses with the difficulty humans experience when recognizing a person who recently started/stopped wearing eyeglasses.
Since humans are a strong baseline for FRMs, we argue that requesting FRMs to not rely on such cue may prove unreasonable.
Most likely, thus, the most practical solutions to this problem are to either \textit{(i)} store images of the person's face with and without eyeglasses in the database, or \textit{(ii)} always ask the person to remove eyeglasses before using an FRM.
The inconvenience introduced by either solution calls for more sophisticated ways of handling the FRMs' sensitivity to eyeglasses.

The FRMs' large sensitivity to the next two attributes (pose and age) is interesting.
For age, analogous to our previous comment on eyeglasses, humans also have trouble recognizing faces when the person's age strongly varies between the face they were originally presented with and the new face (\eg, when seeing someone's old picture or when seeing them after a large amount of time).
That is, age can introduce salient changes to faces that hinder the human capacity to recognize them.
We further found, as judged by our qualitative inspection, that while GANs provide remarkably plausible examples of how a person might look younger/older, the quality of these examples degrades significantly when the person's apparent age is close to that of a child.
This phenomenon, we argue, stems from the fact that facial cues vary drastically during the transition from child to adult.
Regarding pose, we find that indeed some FRMs can be fooled by almost exclusively modifying the face's pose.
However, we also observe that the FRMs' failure in some of these cases can be mostly attributed to a malfunction of the GAN, which introduces evident artifacts when queried with generating ``extreme'' poses.

Finally, the smile and gender attributes fall last in the ranking.
Thus, compared to other attributes, \textit{neither} smile nor gender are attributes to which FRMs are disproportionately sensitive.
This result can be understood as a satisfying result.
Under the constrained budget we allow for the attack, modifying either smile or gender is largely ineffective: altering either attribute \textit{such that} the FRM is fooled would require a magnitude that is simply not attainable given the constraint.
Changing a person's smile such that an FRM changes its prediction is impractical.
Similarly, changing a person's gender under a constrained budget is virtually unfeasible.

\begin{table*}[]
\centering
\caption{\textbf{Ranking of PGD's per-attribute energy spent.}
For each method, we report the attribute ranking we obtain.
We also report the \textit{p}-value associated with each pair-wise comparison of neighboring attributes (\ie the 1$^\text{st}$ with the 2$^\text{nd}$, the 2$^\text{nd}$ with the 3$^\text{rd}$, \textit{etc.}), and the number of samples, $n$, on which the tests were run.
\label{tab:deltas_pvals}}
\begin{tabular}{l|clllllllll|c}
\toprule
\multirow{2}{*}{Method} & \multicolumn{10}{c|}{Ranking} &  \\
                                              & \multicolumn{2}{C{2cm}}{1$^{\text{st}}$} & \multicolumn{2}{C{2cm}}{2$^{\text{nd}}$} & \multicolumn{2}{C{2cm}}{3$^{\text{rd}}$} & \multicolumn{2}{C{2cm}}{4$^{\text{th}}$} & \multicolumn{2}{C{2cm}}{5$^{\text{th}}$} & \multicolumn{1}{|c}{$n$}      \\ \hline
\multicolumn{1}{l|}{\multirow{2}{*}{ArcFace}} &  \multicolumn{1}{C{1cm}}{} & \multicolumn{2}{C{2cm}}{\cmark\tiny{$2.61\times10^{-10}$}}  & \multicolumn{2}{C{2cm}}{\cmark\tiny{$5.50\times10^{-3}$}}  & \multicolumn{2}{C{2cm}}{\cmark\tiny{$1.93\times10^{-13}$}}  & \multicolumn{2}{C{2cm}}{\cmark\tiny{$4.47\times10^{-21}$}} & \multicolumn{1}{C{1cm}}{}        & \multicolumn{1}{|c}{\multirow{2}{*}{753}} \\
\multicolumn{1}{l|}{}                         & \multicolumn{2}{c}{Eyeglasses}           & \multicolumn{2}{c}{Pose} & \multicolumn{2}{c}{Age} & \multicolumn{2}{c}{Smile} & \multicolumn{2}{c}{Gender}                                                                                                                                      & \multicolumn{1}{|c}{} \\\hline
\multicolumn{1}{l|}{\multirow{2}{*}{FaceNet$^C$}} &  \multicolumn{1}{C{1cm}}{} & \multicolumn{2}{C{2cm}}{\cmark\tiny{$3.79\times10^{-28}$}}  & \multicolumn{2}{C{2cm}}{\cmark\tiny{$1.12\times10^{-4}$}}  & \multicolumn{2}{C{2cm}}{\xmark\tiny{$1.80\times10^{-1}$}}  & \multicolumn{2}{C{2cm}}{\xmark\tiny{$2.99\times10^{-1}$}} & \multicolumn{1}{C{1cm}}{}& \multicolumn{1}{|c}{\multirow{2}{*}{1154}} \\
\multicolumn{1}{l|}{}                         & \multicolumn{2}{c}{Eyeglasses}           & \multicolumn{2}{c}{Age} & \multicolumn{2}{c}{Pose} & \multicolumn{2}{c}{Gender} & \multicolumn{2}{c}{Smile}                                                                                                                                      & \multicolumn{1}{|c}{} \\\hline
\multicolumn{1}{l|}{\multirow{2}{*}{FaceNet$^V$}} &  \multicolumn{1}{C{1cm}}{} & \multicolumn{2}{C{2cm}}{\cmark\tiny{$7.73\times10^{-23}$}}  & \multicolumn{2}{C{2cm}}{\cmark\tiny{$1.65\times10^{-3}$}}  & \multicolumn{2}{C{2cm}}{\xmark\tiny{$1.19\times10^{-1}$}}  & \multicolumn{2}{C{2cm}}{\xmark\tiny{$1.15\times10^{-1}$}} & \multicolumn{1}{C{1cm}}{}& \multicolumn{1}{|c}{\multirow{2}{*}{1449}} \\
\multicolumn{1}{l|}{}                         & \multicolumn{2}{c}{Eyeglasses}           & \multicolumn{2}{c}{Age} & \multicolumn{2}{c}{Pose} & \multicolumn{2}{c}{Gender} & \multicolumn{2}{c}{Smile}                                                                                                                                      & \multicolumn{1}{|c}{} \\\bottomrule
\end{tabular}
\end{table*}

\paragraph{Brute-force approach with PGD.}
For the purpose of assessing the importance of attributes, we can also consider a brute-force approach.
This approach consists of giving PGD access to modifying a \textit{single} attribute, \ie a single direction, conducting the attack, and recording the FRM's robust accuracy.
The resulting robust accuracy for each attribute should be thus related to how easy PGD finds adversarial examples just by manipulating such attribute.

Table~\ref{tab:pgd_sing_attr}  reports the results of this experiment for each of the FRMs we considered.
From these results we make the following remarks: \textit{(i)} the overall robust accuracies again find a ranking of FRMs by which ArcFace $>$ FaceNet$^C$ $>$ FaceNet$^V$, \textit{(ii)} robustness can go from as high as 100\% (ArcFace when attacking only gender), to as low as 86.2\% (FaceNet$^V$ when attacking eyeglasses), \textit{(iii)} if we rank attributes according to robustness (from low to high), we get the following attribute ranking for all FRMs: Eyeglasses $>$ Age $>$ Pose $>$ Smile $>$ Gender.

The ranking we find via this brute-force procedure shares similarities with the rankings from Table~\ref{tab:deltas_pgd} (and equivalently Table~\ref{tab:deltas_pvals}).
In particular, we highlight how the eyeglasses attribute is in the first place, while the smile and gender attributes are in the last two positions.
Thus, the results from this brute-force approach support our findings from conducting the procedure we presented in Section~\ref{sec:interpreting} and, further, to the ranking's interpretation we provided in Section~\ref{sec:pgd_attacks}.

\begin{table}[]
\centering
\caption{\textbf{Attribute-restricted search for adversaries through PGD.} 
We restrict PGD's attribute-perturbing capacity to each single attribute, and report each FRM's robustness.
\label{tab:pgd_sing_attr}}
\centering
\begin{tabular}{c|ccccc}
\toprule
\multirow{2}{*}{Method} & \multicolumn{5}{c}{Robustness when only attacking} \\
                        & Pose      & Age       & Gender    & Smile     & Eyeglasses    \\ \hline
ArcFace                 & 99.4      & 98.9      & 100       & 99.9      & \textbf{95.6}          \\
FaceNet$^C$      & 99.1      & 95.9      & 99.3      & 99.2      & \textbf{88.7}           \\
FaceNet$^V$      & 98.4      & 93.8      & 98.7      & 98.6      & \textbf{86.2}          \\\bottomrule
\end{tabular}
\end{table}

\section{FAB Qualitative Results}
We report randomly-selected qualitative samples for the adversarial examples found by FAB, when attacking ArcFace (the most robust method, according to our assessment), in Figures~\ref{fig:fab_qual1}--\ref{fig:fab_qual4}.
While the adversarial examples found by FAB are no longer required to preserve identity, we argue there are interesting observations that can be made from these qualitative results.
We next enumerate some of these observations.

\underline{First}, we again see the presence of some artifacts that are introduced by the GAN, either in the original face or in its modified version.
This introduction of artifacts seems to be pervasive, and there is still no standard procedure to remove such artifacts while preserving important semantic content in the image.
\underline{Second}, artifacts associated with children-like faces also surface in this setup.
The age attribute seems to be rather difficult to control without introducing several (and severe) changes in other face attributes.
\underline{Third}, the addition/removal of eyeglasses is also present in many of the adversarial examples, again insisting in how FRMs rely (somewhat disproportionately) on this characteristic.
\underline{Fourth}, most modified images largely preserve the background of the original image; however, this fact does not seem to be taken into account by the FRM.
\underline{Fifth}, there are modified faces that share a remarkable amount of features with the original face, yet have a large perturbation energy; on the other hand, there are modified faces that do not share many features with the original face, yet have small perturbation energy.
These two facts combined suggest that, while our approach to quantifying identity (dis)similarity in StyleGAN's latent space is promising, it may be sub-optimal.
\underline{Sixth}, many of these examples show the intrinsic difficulty of defining what is expected from FRMs: arguably, humans may make the same ``mistakes'' of the FRM, or even make mistakes the FRM would not make.
That is to say, defining when an FRM should change its prediction is a hard task.
\underline{Seventh}, the results shown here also demonstrate that minimum-perturbation attacks, like FAB, may be a viable alternative for diagnosing systems: we can indeed find images that embody unexpected behaviors in FRMs.


\twocolumn
\begin{figure}
    \centering
    \raisebox{0.35in}{\rotatebox[origin=t]{90}{3.13}}\includegraphics[trim=0cm 0.3cm 0cm 0.7cm,clip,width=0.96\columnwidth]{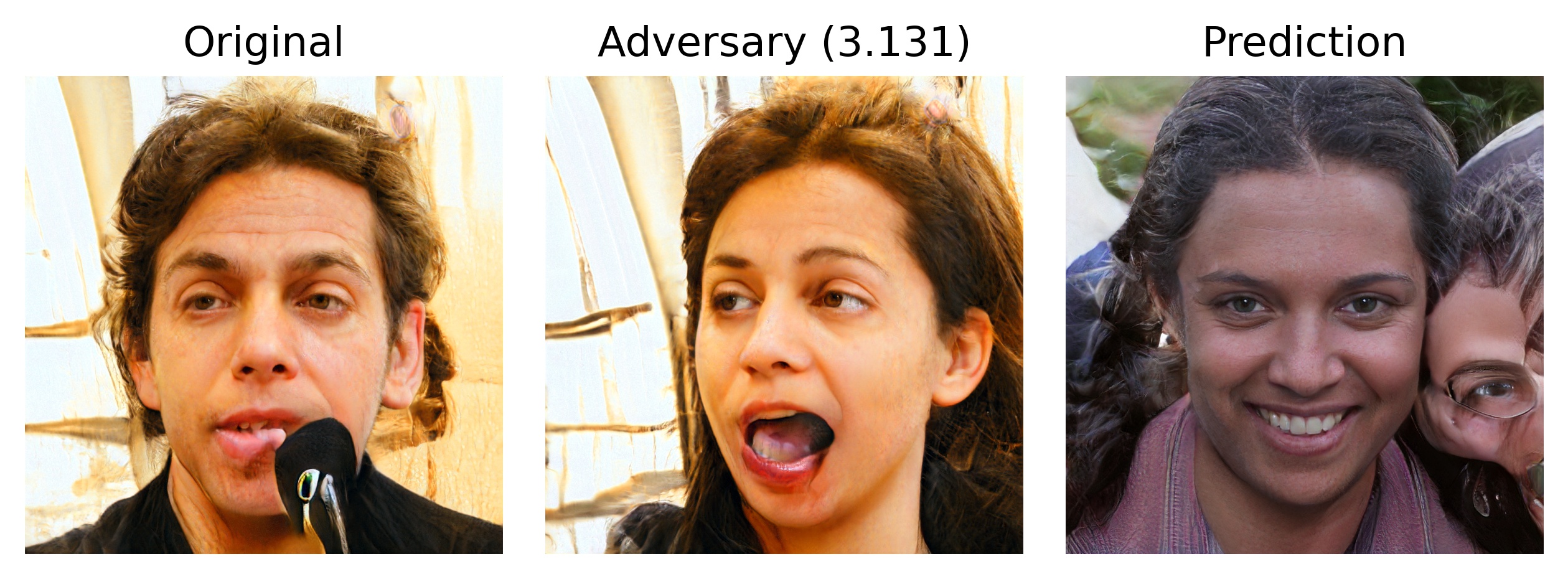}\\
    \raisebox{0.35in}{\rotatebox[origin=t]{90}{4.14}}\includegraphics[trim=0cm 0.3cm 0cm 0.7cm,clip,width=0.96\columnwidth]{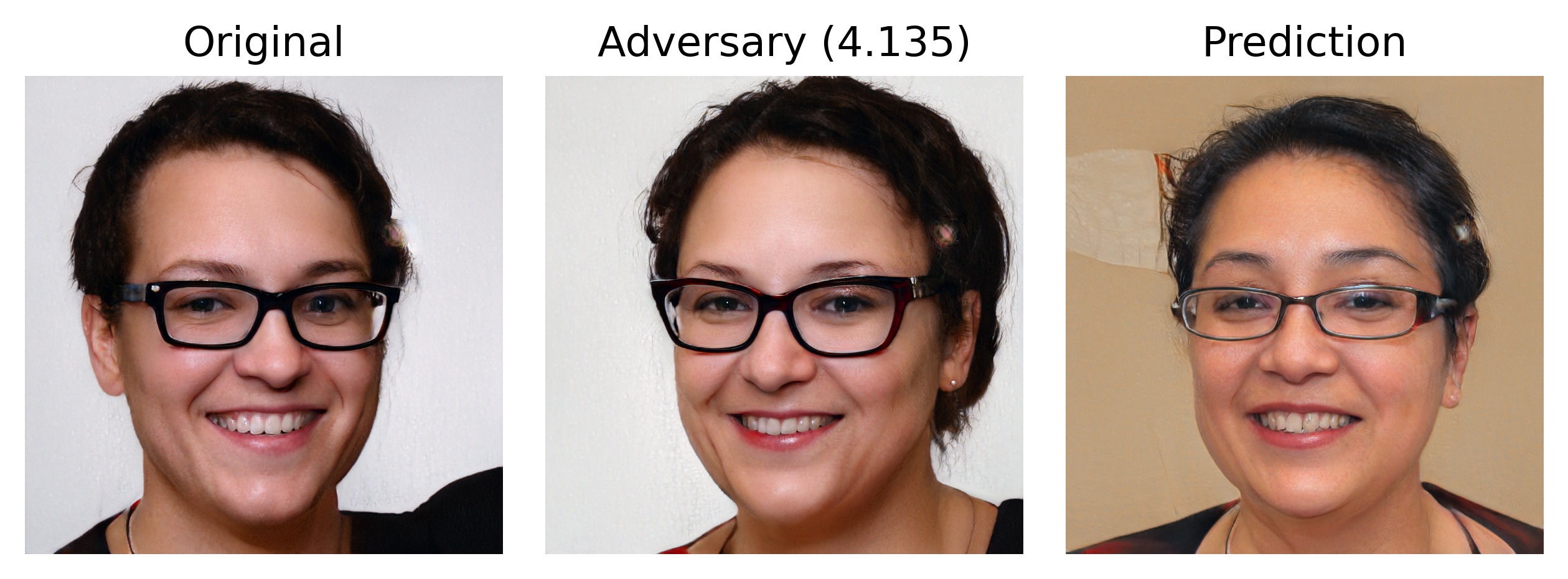}\\
    \raisebox{0.35in}{\rotatebox[origin=t]{90}{2.12}}\includegraphics[trim=0cm 0.3cm 0cm 0.7cm,clip,width=0.96\columnwidth]{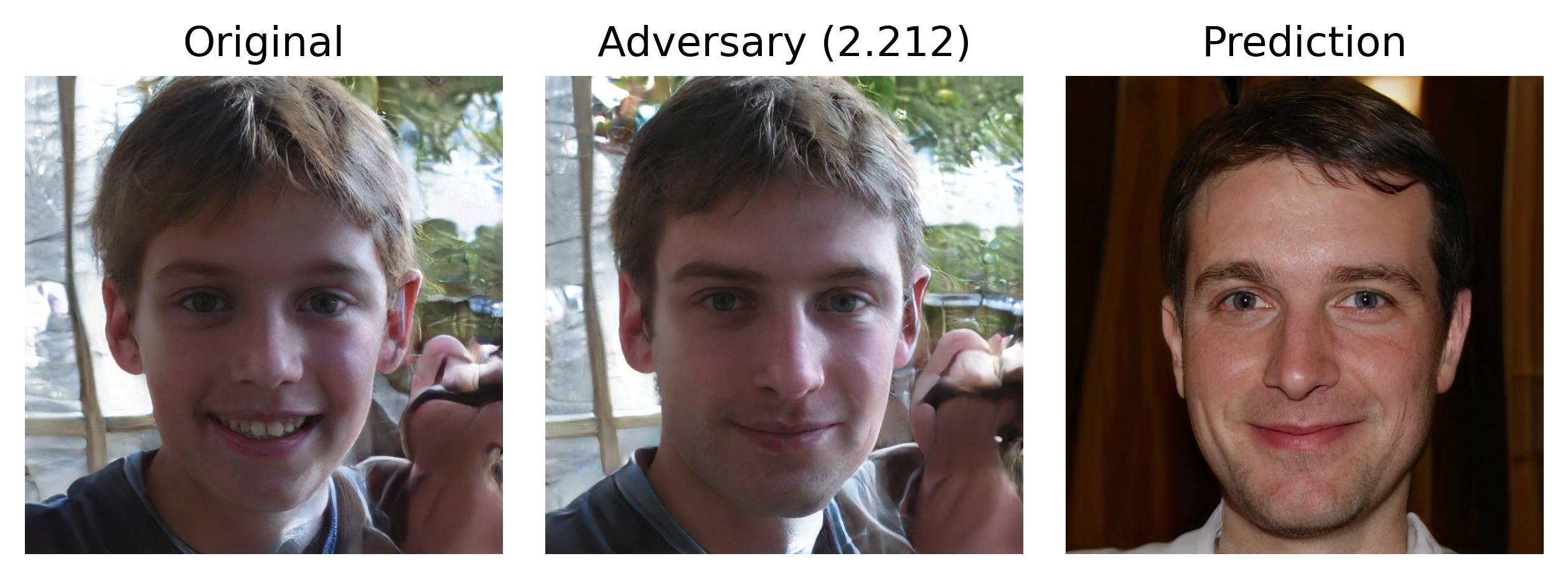}\\
    \raisebox{0.35in}{\rotatebox[origin=t]{90}{5.01}}\includegraphics[trim=0cm 0.3cm 0cm 0.7cm,clip,width=0.96\columnwidth]{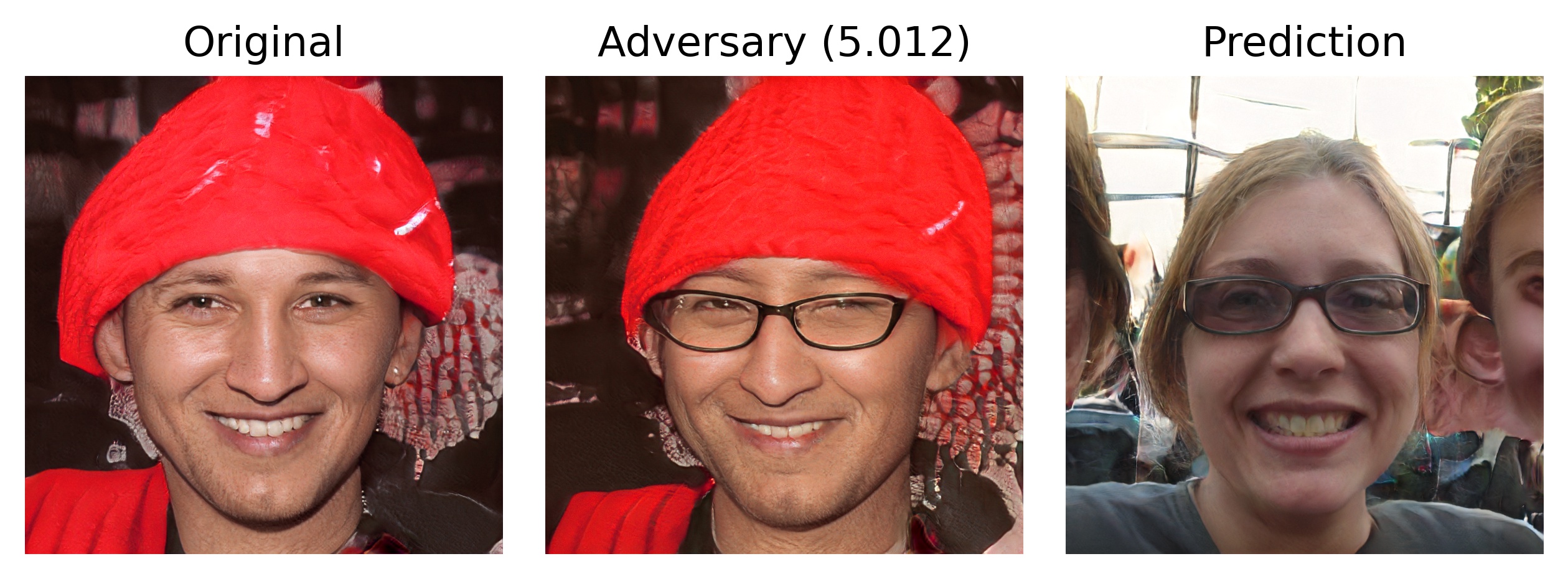}\\
    \raisebox{0.35in}{\rotatebox[origin=t]{90}{4.94}}\includegraphics[trim=0cm 0.3cm 0cm 0.7cm,clip,width=0.96\columnwidth]{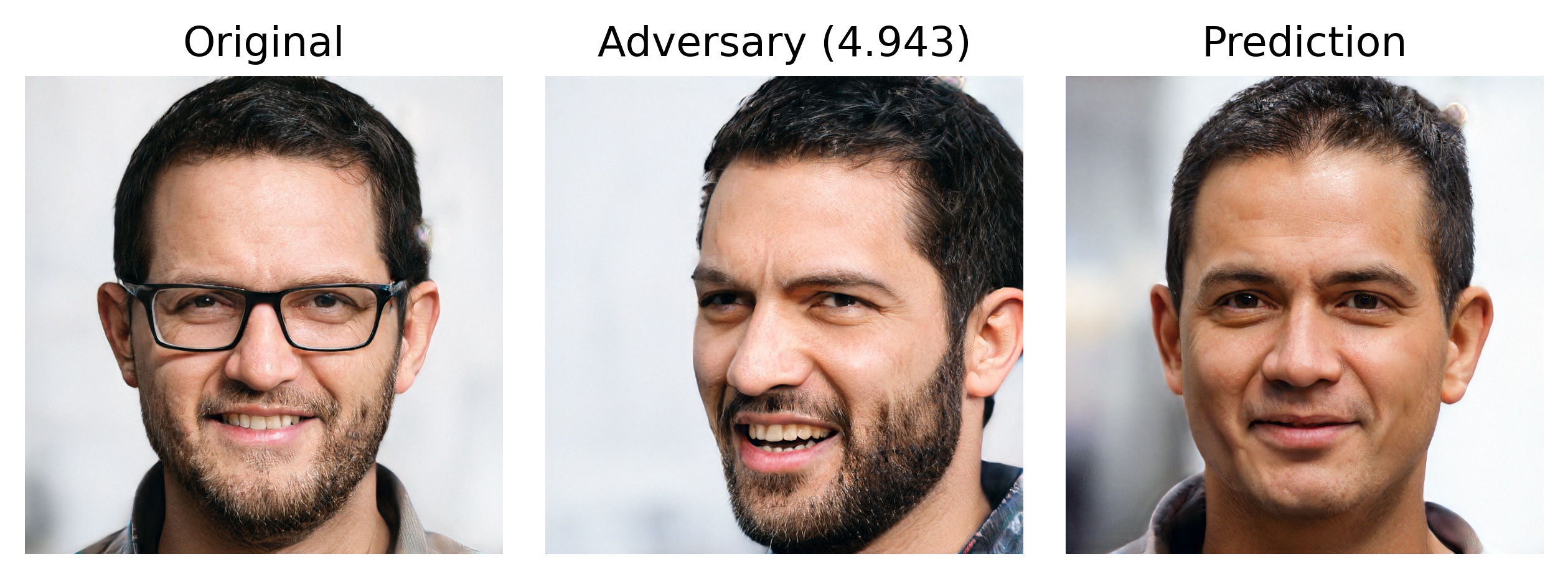}\\
    \raisebox{0.35in}{\rotatebox[origin=t]{90}{2.31}}\includegraphics[trim=0cm 0.3cm 0cm 0.7cm,clip,width=0.96\columnwidth]{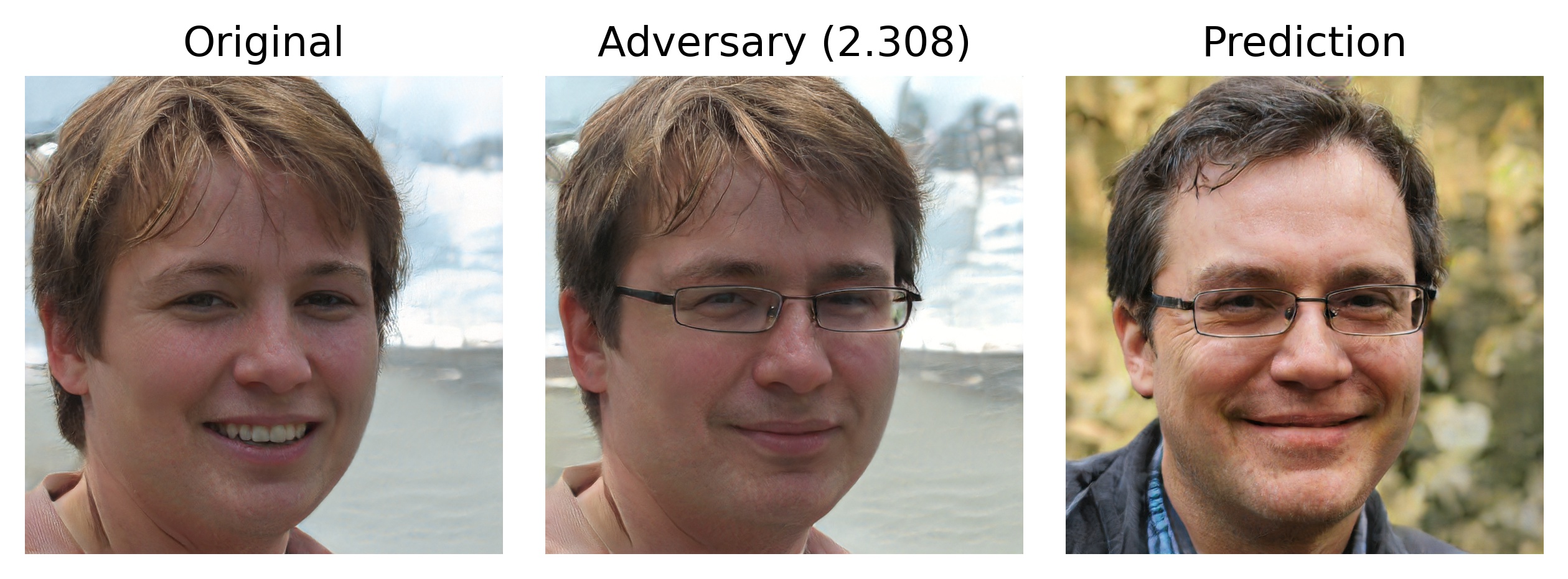}\\
    \raisebox{0.35in}{\rotatebox[origin=t]{90}{3.27}}\includegraphics[trim=0cm 0.3cm 0cm 0.7cm,clip,width=0.96\columnwidth]{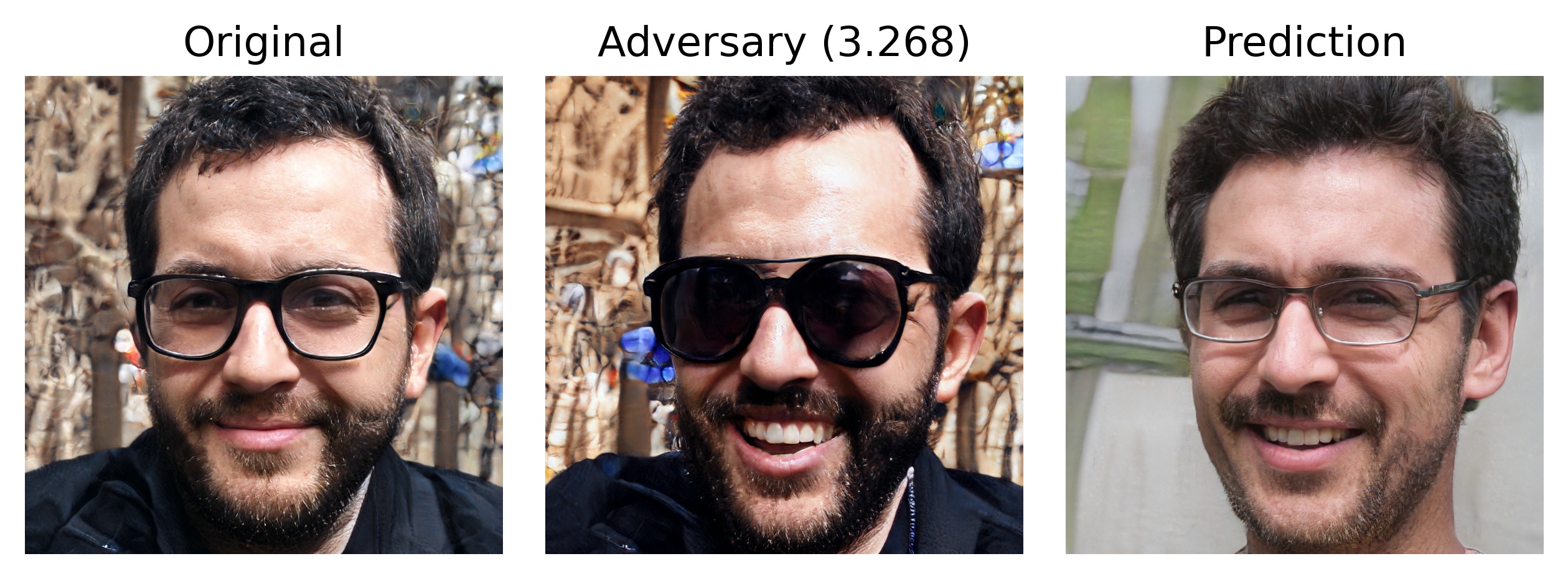}\\
    \raisebox{0.35in}{\rotatebox[origin=t]{90}{4.13}}\includegraphics[trim=0cm 0.3cm 0cm 0.7cm,clip,width=0.96\columnwidth]{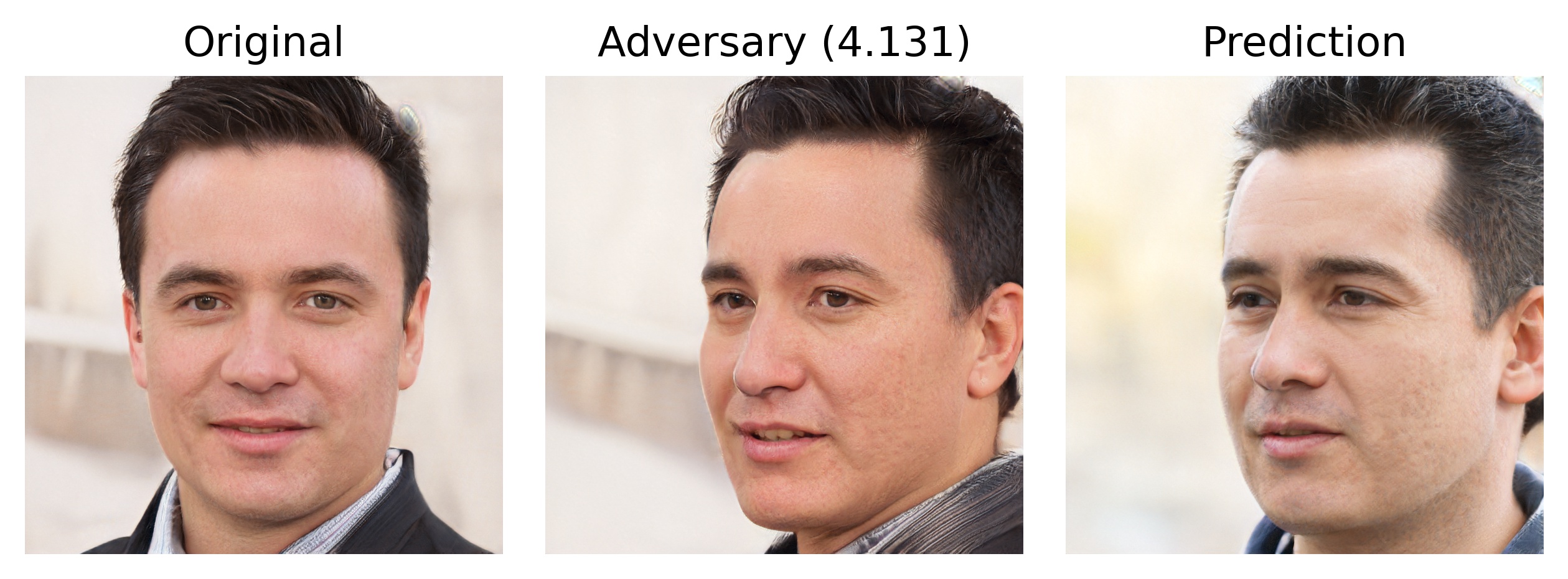}\\
    \caption{\textbf{Adversarial examples found by FAB.} 
    Each row is a different identity. 
    We report each perturbation's energy, $\|\pmb{\delta}\|_{M, 2}$, at the far left. 
    \textit{Left:} original face \textcolor{blue}{\fontfamily{cmss}\selectfont\textbf{A}}, \textit{middle:} modified face \textcolor{orange}{\fontfamily{cmss}\selectfont\textbf{A$^\star$}}, \textit{right:} match \textcolor{purple}{\fontfamily{cmss}\selectfont\textbf{B}}. The FRM prefers to match \textcolor{orange}{\fontfamily{cmss}\selectfont\textbf{A$^\star$}} with \textcolor{purple}{\fontfamily{cmss}\selectfont\textbf{B}} rather than with \textcolor{blue}{\fontfamily{cmss}\selectfont\textbf{A}}.}
    \label{fig:fab_qual1}
\end{figure}

\begin{figure}
    \centering
    \raisebox{0.35in}{\rotatebox[origin=t]{90}{2.45}}\includegraphics[trim=0cm 0.3cm 0cm 0.7cm,clip,width=0.96\columnwidth]{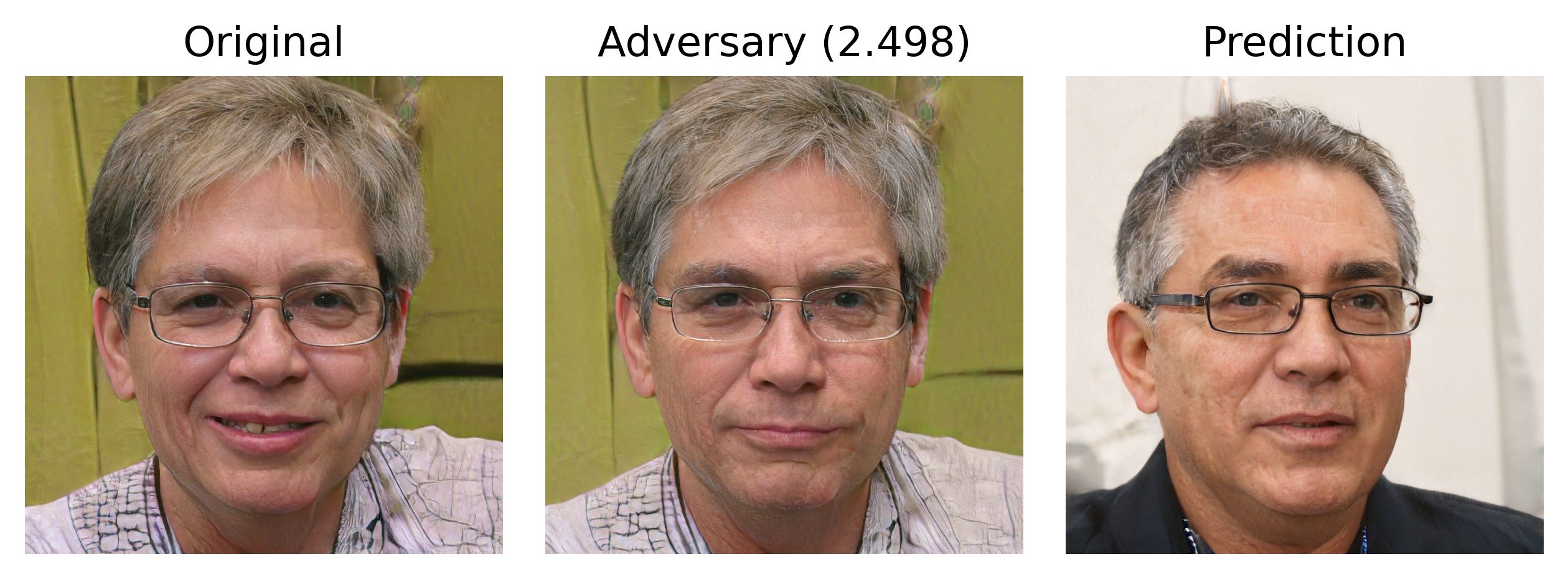}\\
    \raisebox{0.35in}{\rotatebox[origin=t]{90}{2.01}}\includegraphics[trim=0cm 0.3cm 0cm 0.7cm,clip,width=0.96\columnwidth]{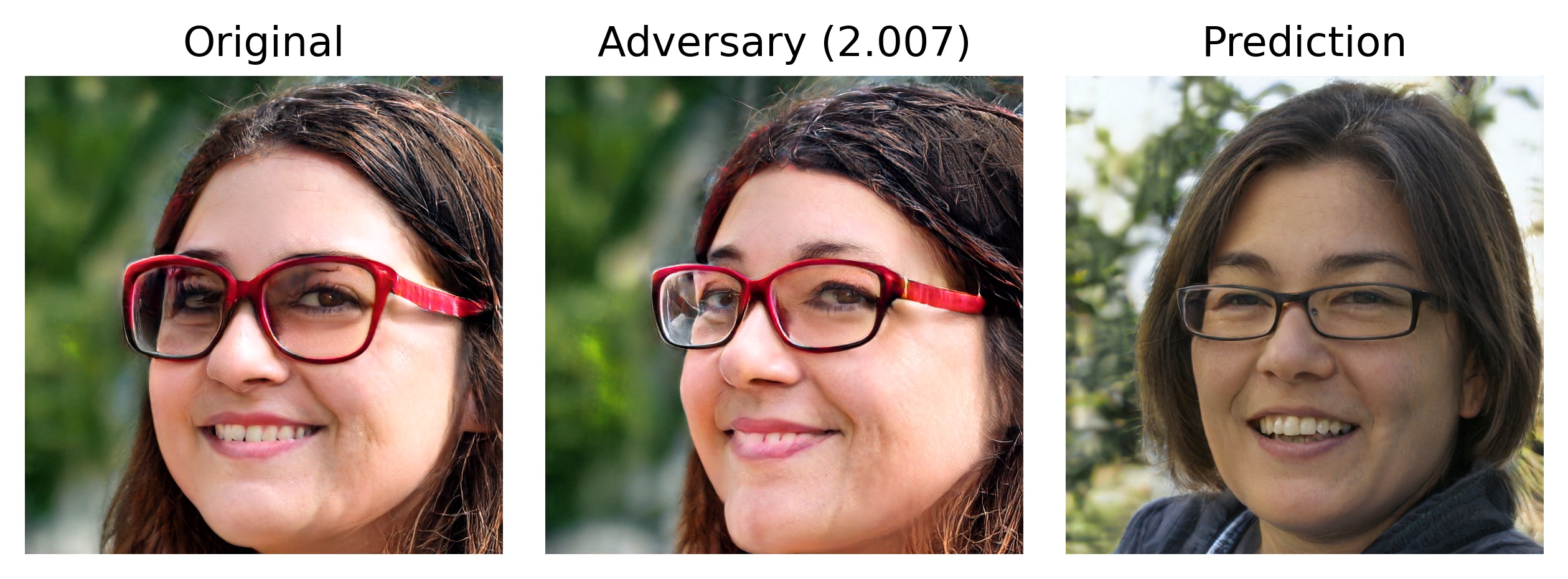}\\
    \raisebox{0.35in}{\rotatebox[origin=t]{90}{1.91}}\includegraphics[trim=0cm 0.3cm 0cm 0.7cm,clip,width=0.96\columnwidth]{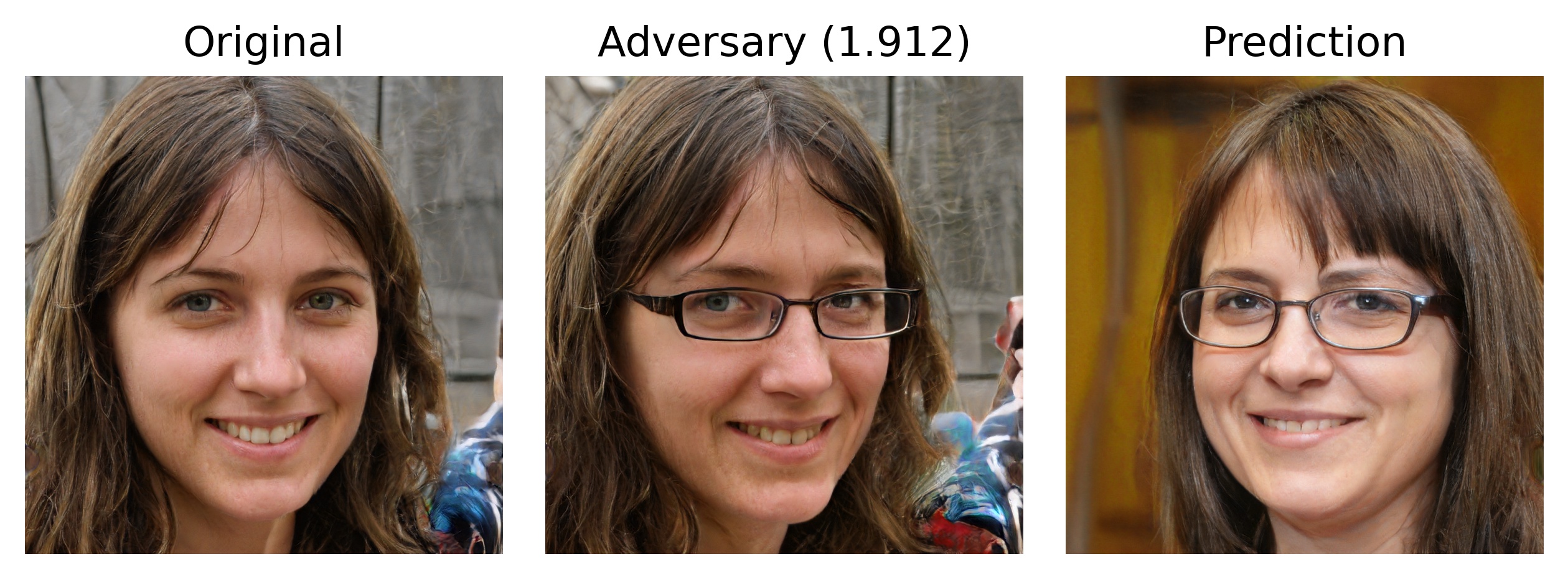}\\
    \raisebox{0.35in}{\rotatebox[origin=t]{90}{2.13}}\includegraphics[trim=0cm 0.3cm 0cm 0.7cm,clip,width=0.96\columnwidth]{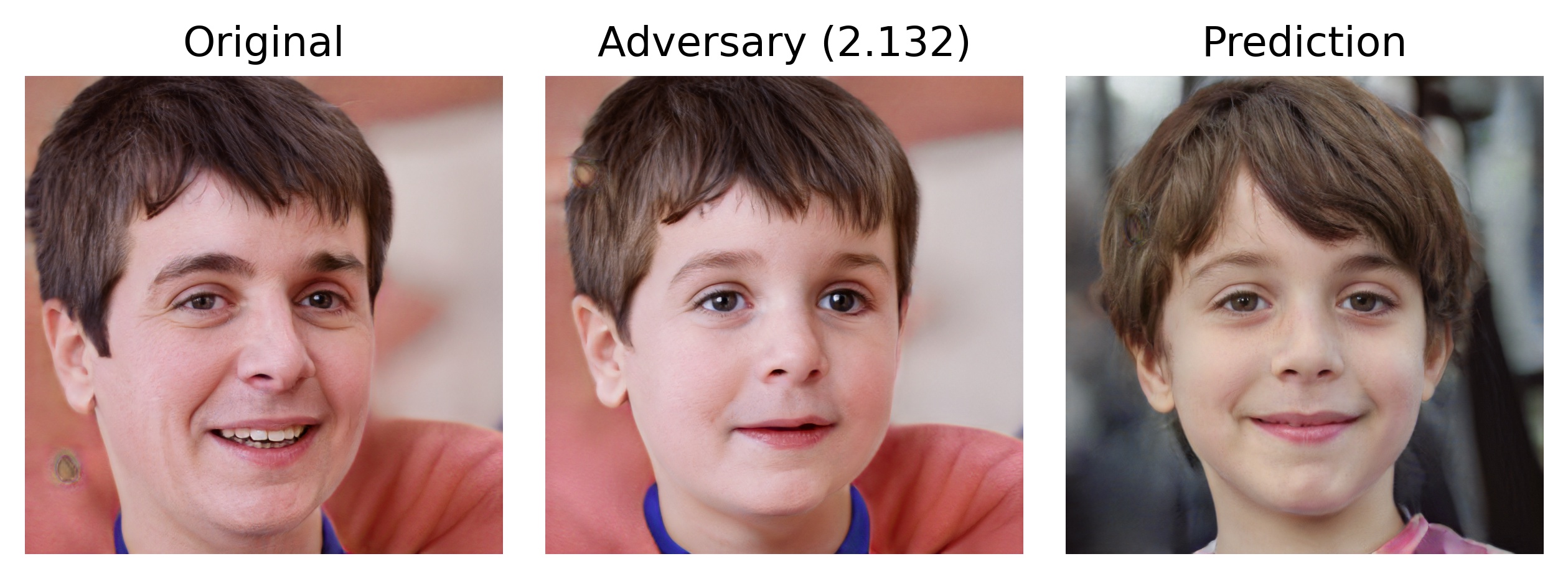}\\
    \raisebox{0.35in}{\rotatebox[origin=t]{90}{2.64}}\includegraphics[trim=0cm 0.3cm 0cm 0.7cm,clip,width=0.96\columnwidth]{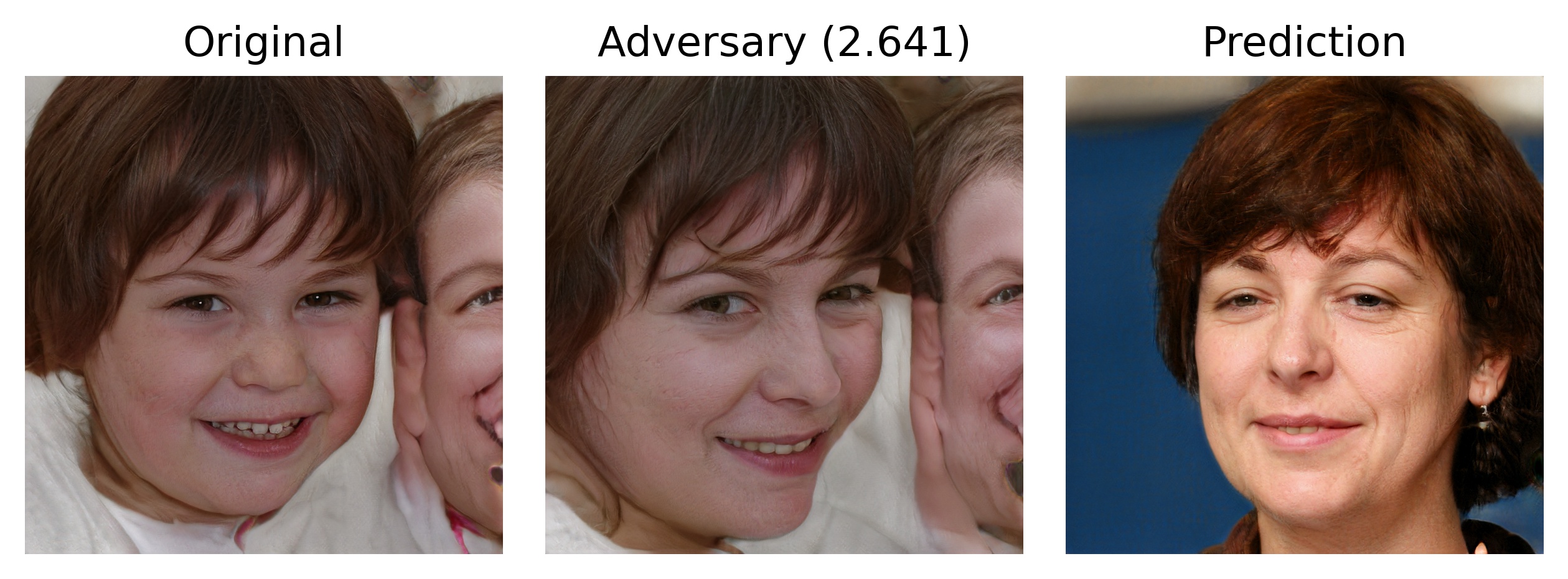}\\
    \raisebox{0.35in}{\rotatebox[origin=t]{90}{3.22}}\includegraphics[trim=0cm 0.3cm 0cm 0.7cm,clip,width=0.96\columnwidth]{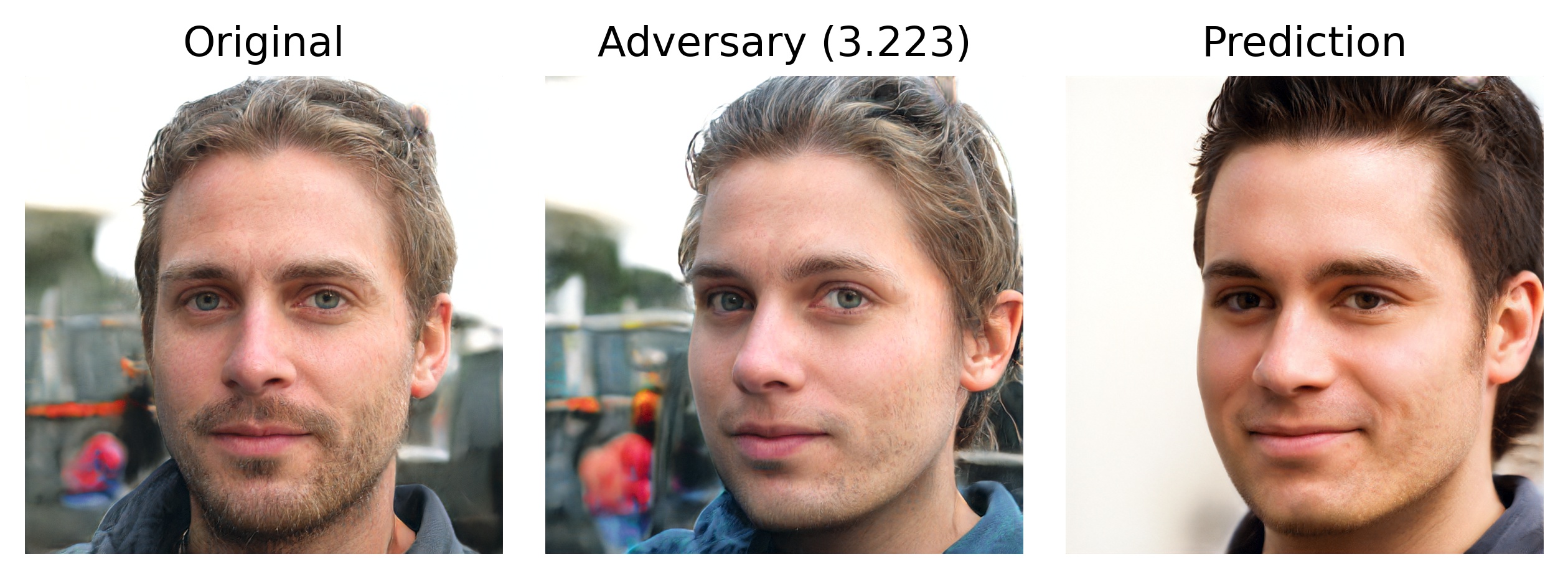}\\
    \raisebox{0.35in}{\rotatebox[origin=t]{90}{3.62}}\includegraphics[trim=0cm 0.3cm 0cm 0.7cm,clip,width=0.96\columnwidth]{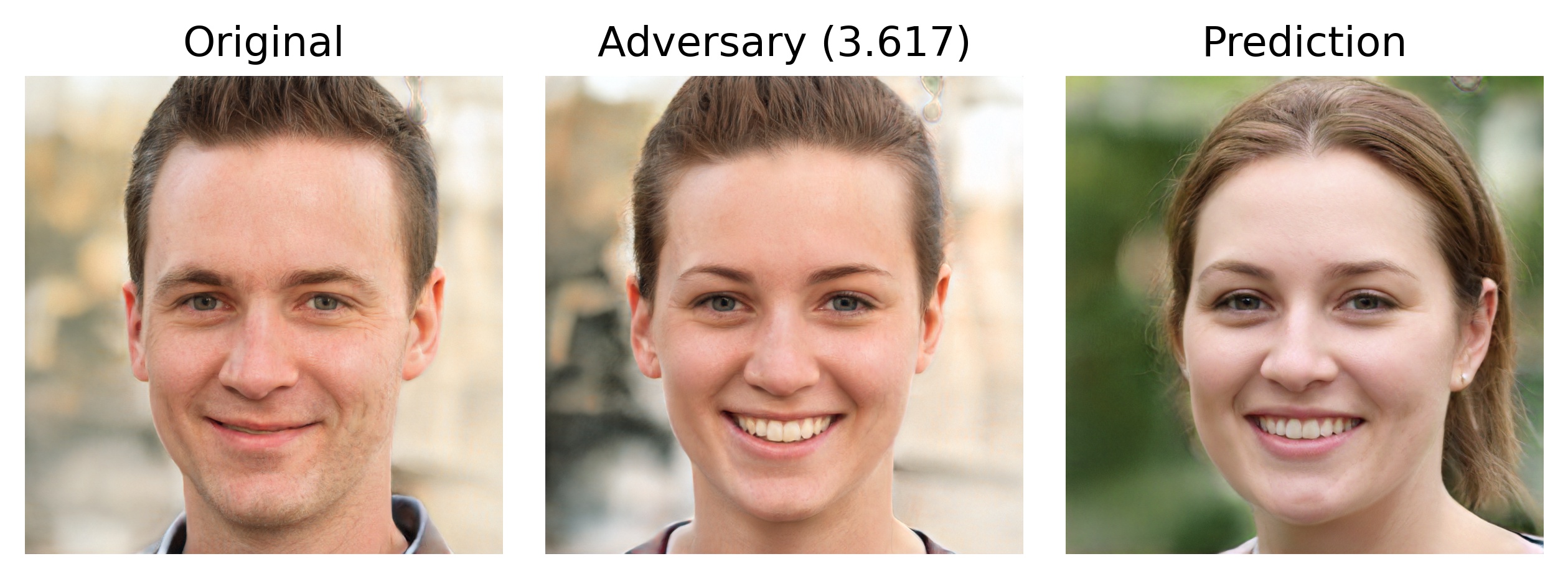}\\
    \raisebox{0.35in}{\rotatebox[origin=t]{90}{3.03}}\includegraphics[trim=0cm 0.3cm 0cm 0.7cm,clip,width=0.96\columnwidth]{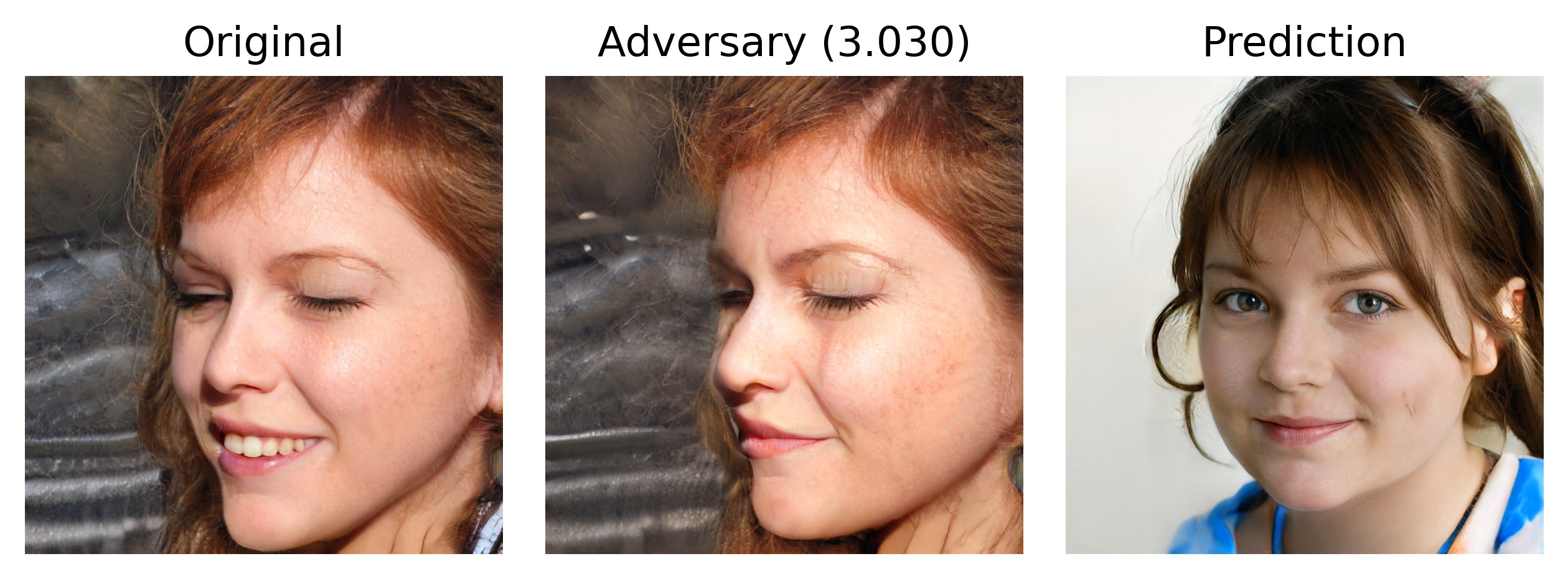}\\
    \caption{\textbf{Adversarial examples found by FAB.} 
    Each row is a different identity. 
    We report each perturbation's energy, $\|\pmb{\delta}\|_{M, 2}$, at the far left. 
    \textit{Left:} original face \textcolor{blue}{\fontfamily{cmss}\selectfont\textbf{A}}, \textit{middle:} modified face \textcolor{orange}{\fontfamily{cmss}\selectfont\textbf{A$^\star$}}, \textit{right:} match \textcolor{purple}{\fontfamily{cmss}\selectfont\textbf{B}}. The FRM prefers to match \textcolor{orange}{\fontfamily{cmss}\selectfont\textbf{A$^\star$}} with \textcolor{purple}{\fontfamily{cmss}\selectfont\textbf{B}} rather than with \textcolor{blue}{\fontfamily{cmss}\selectfont\textbf{A}}.}
    \label{fig:fab_qual2}
\end{figure}

\begin{figure}
    \centering
    \raisebox{0.35in}{\rotatebox[origin=t]{90}{2.00}}\includegraphics[trim=0cm 0.3cm 0cm 0.7cm,clip,width=0.96\columnwidth]{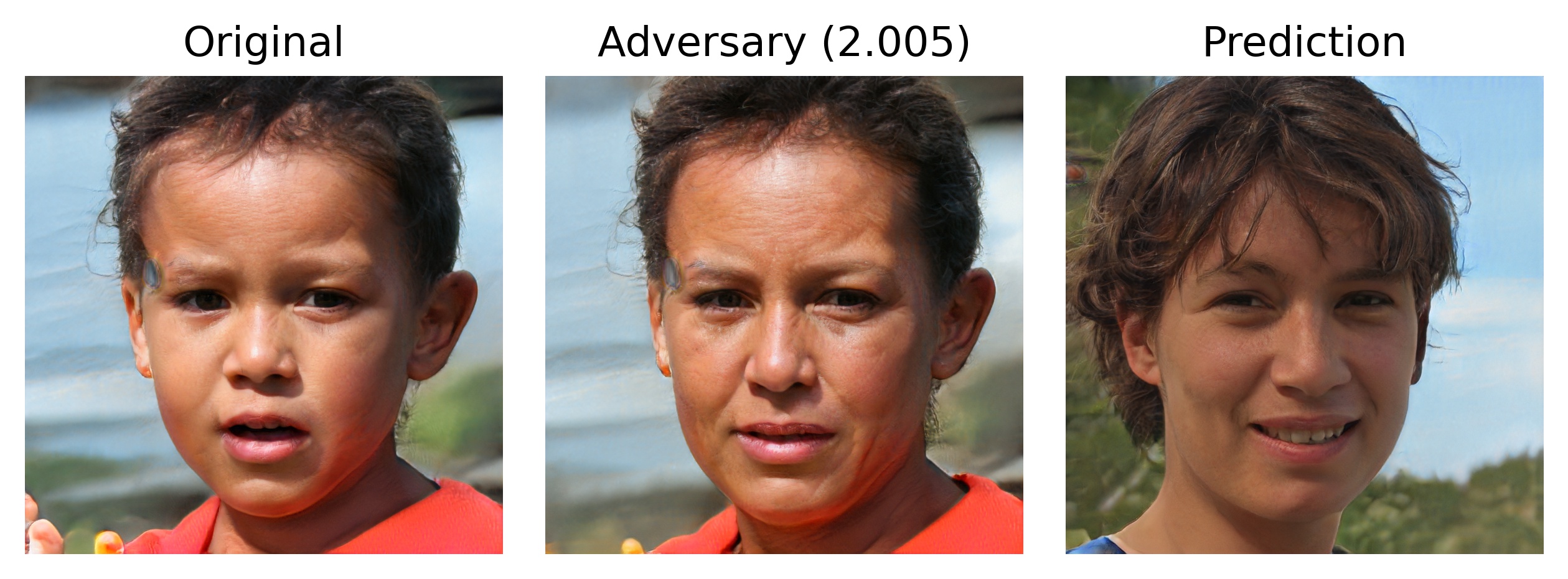}\\
    \raisebox{0.35in}{\rotatebox[origin=t]{90}{4.20}}\includegraphics[trim=0cm 0.3cm 0cm 0.7cm,clip,width=0.96\columnwidth]{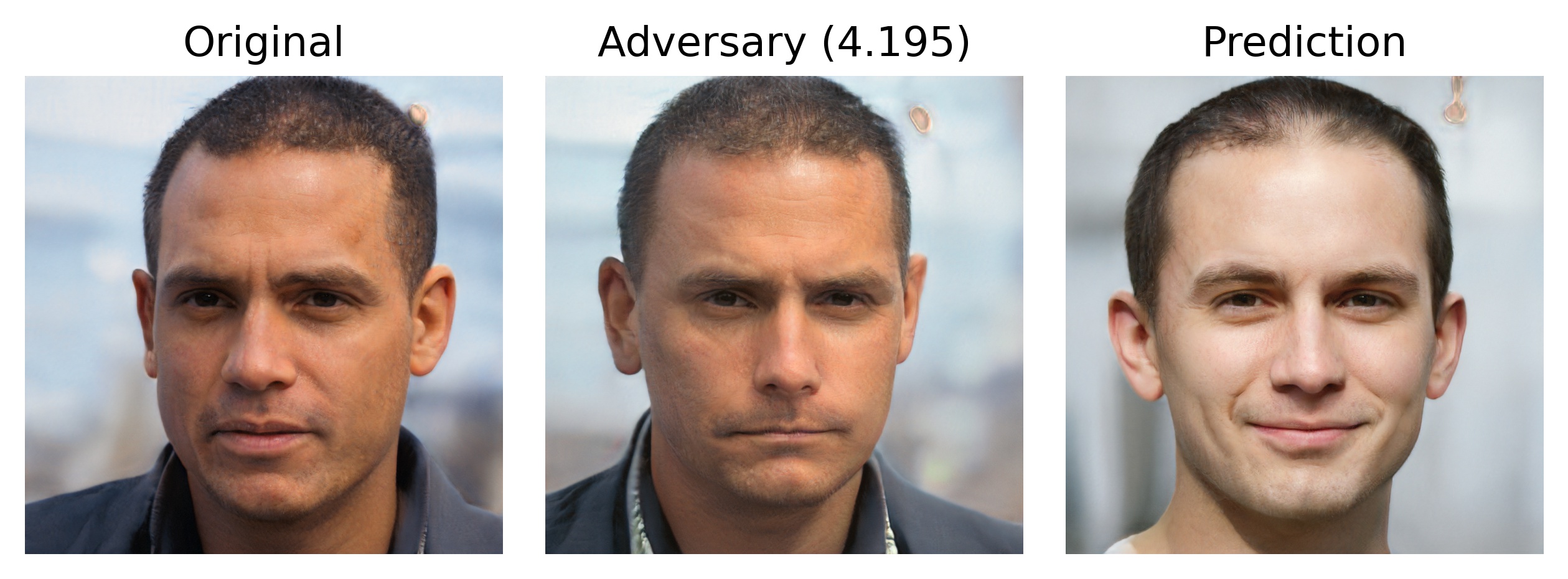}\\
    \raisebox{0.35in}{\rotatebox[origin=t]{90}{4.76}}\includegraphics[trim=0cm 0.3cm 0cm 0.7cm,clip,width=0.96\columnwidth]{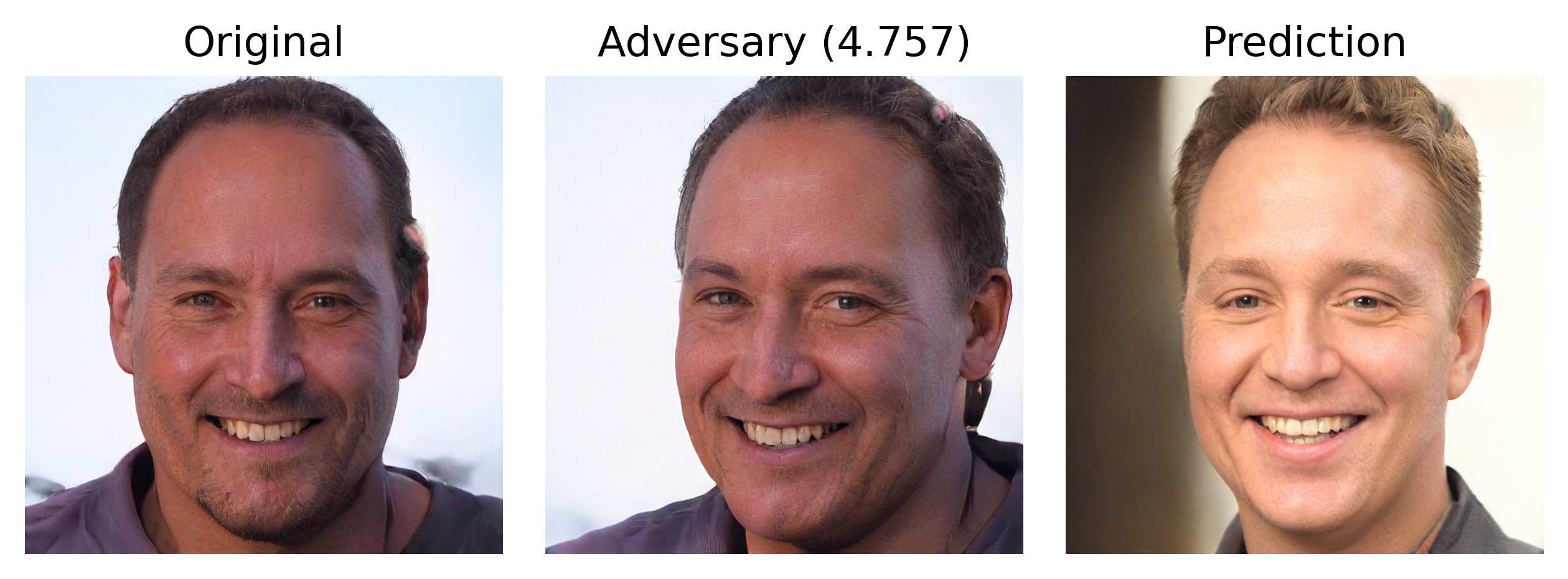}\\
    \raisebox{0.35in}{\rotatebox[origin=t]{90}{3.69}}\includegraphics[trim=0cm 0.3cm 0cm 0.7cm,clip,width=0.96\columnwidth]{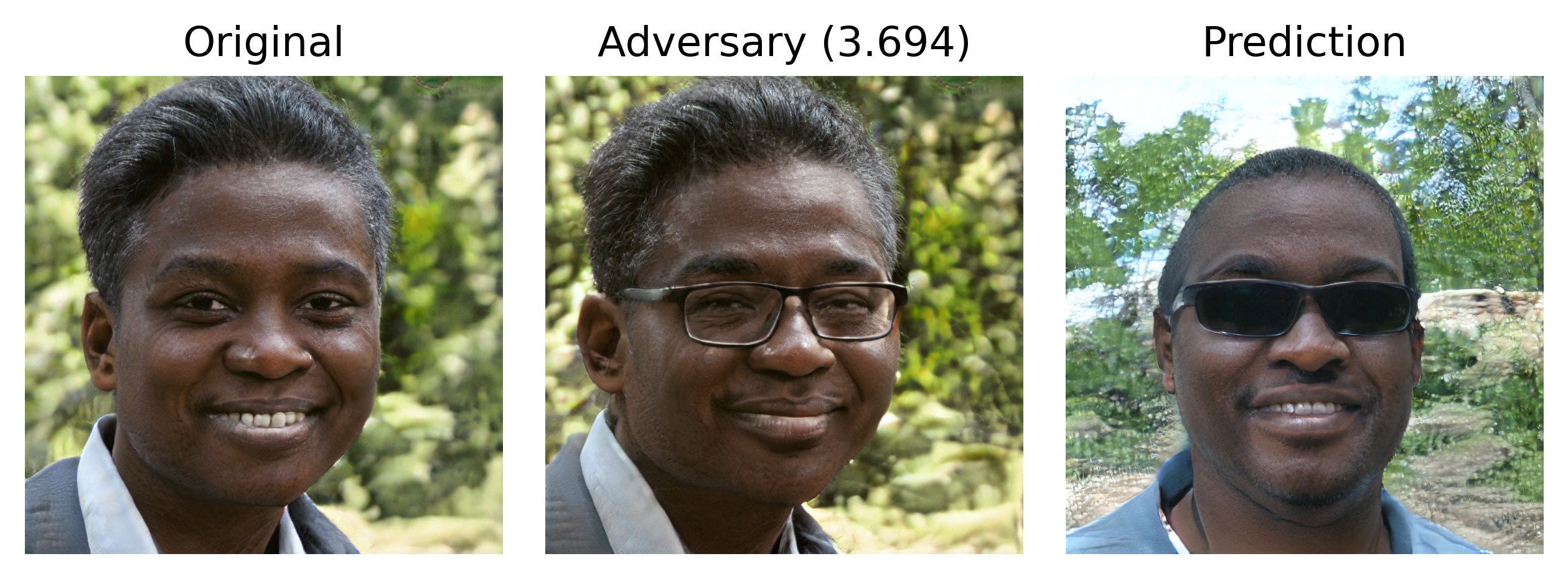}\\
    \raisebox{0.35in}{\rotatebox[origin=t]{90}{2.64}}\includegraphics[trim=0cm 0.3cm 0cm 0.7cm,clip,width=0.96\columnwidth]{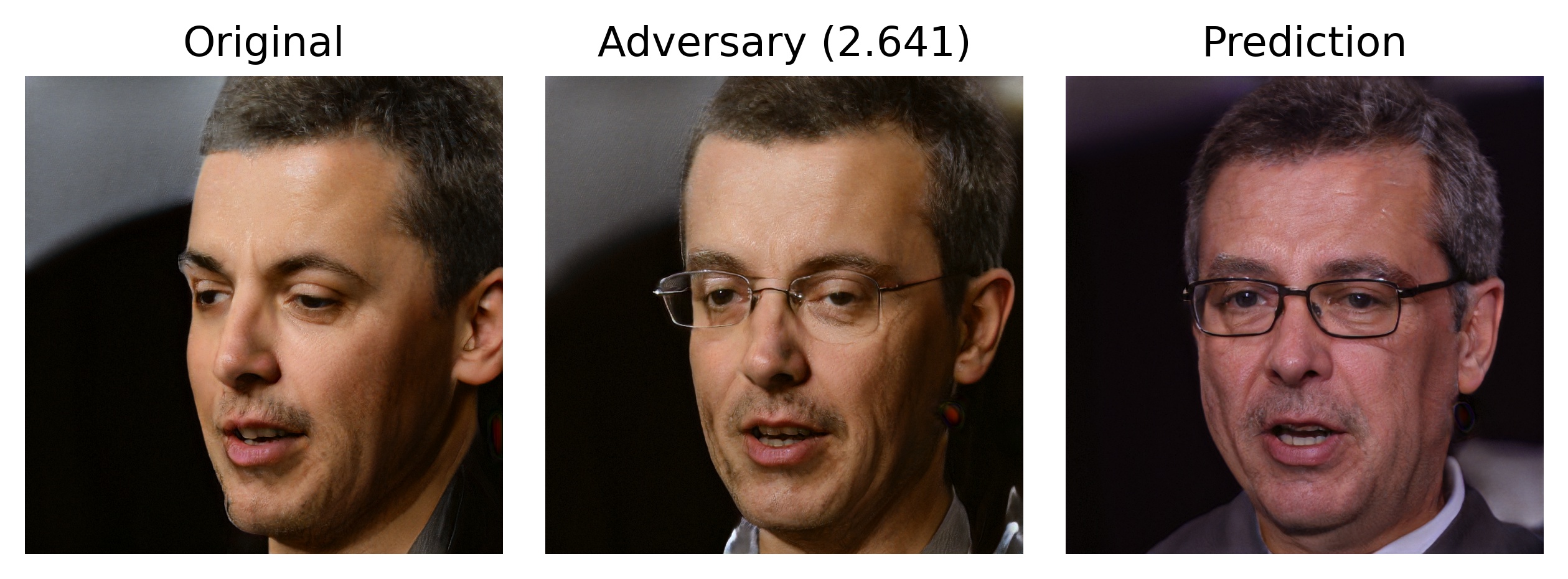}\\
    \raisebox{0.35in}{\rotatebox[origin=t]{90}{4.06}}\includegraphics[trim=0cm 0.3cm 0cm 0.7cm,clip,width=0.96\columnwidth]{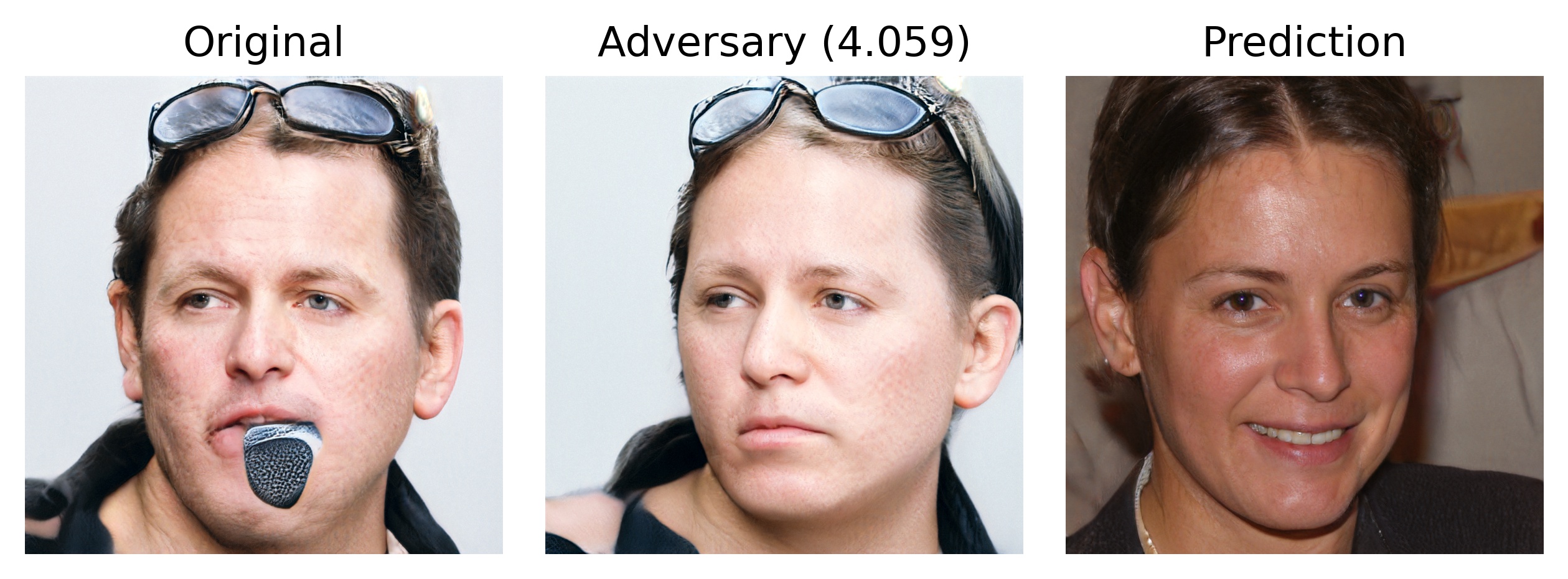}\\
    \raisebox{0.35in}{\rotatebox[origin=t]{90}{1.89}}\includegraphics[trim=0cm 0.3cm 0cm 0.7cm,clip,width=0.96\columnwidth]{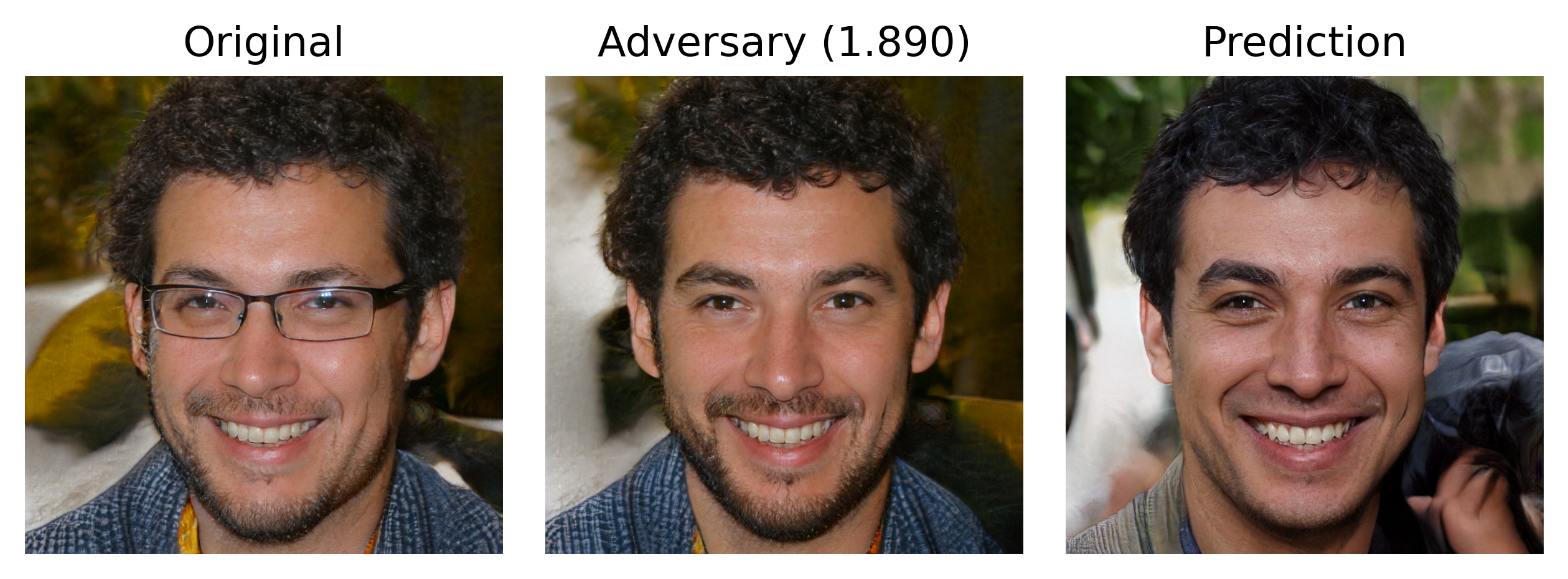}\\
    \raisebox{0.35in}{\rotatebox[origin=t]{90}{4.06}}\includegraphics[trim=0cm 0.3cm 0cm 0.7cm,clip,width=0.96\columnwidth]{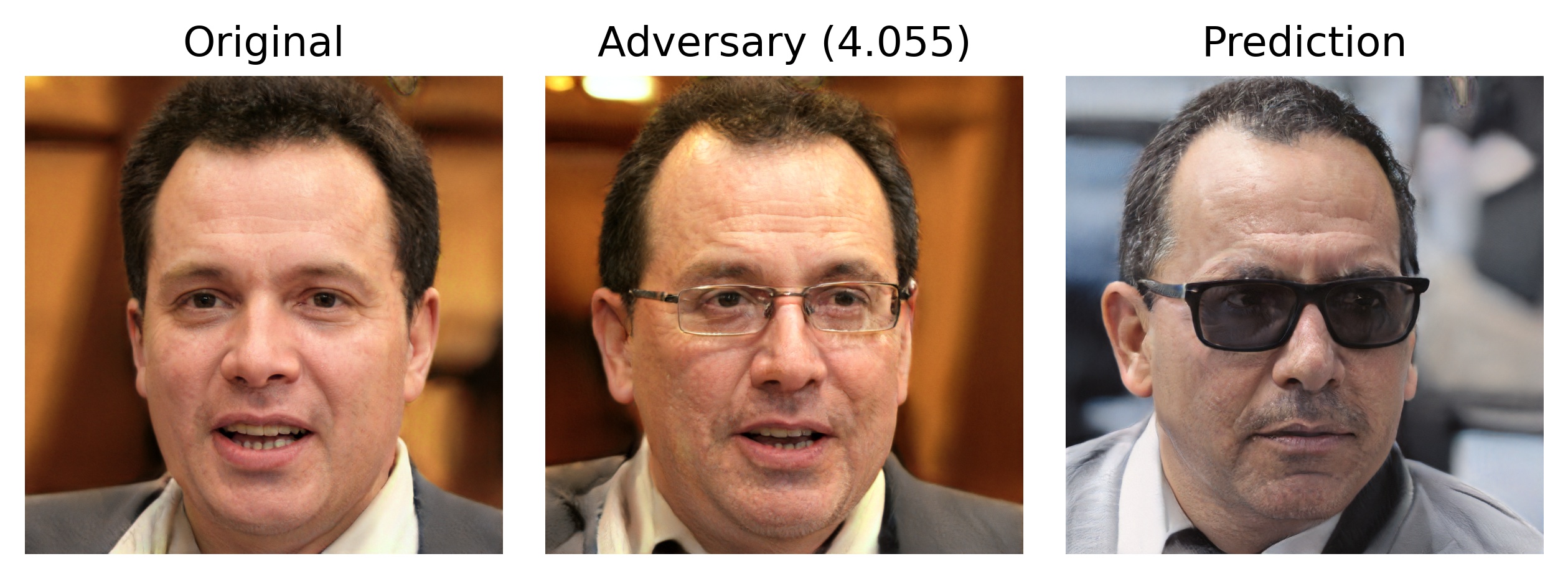}\\
    \caption{\textbf{Adversarial examples found by FAB.} 
    Each row is a different identity. 
    We report each perturbation's energy, $\|\pmb{\delta}\|_{M, 2}$, at the far left. 
    \textit{Left:} original face \textcolor{blue}{\fontfamily{cmss}\selectfont\textbf{A}}, \textit{middle:} modified face \textcolor{orange}{\fontfamily{cmss}\selectfont\textbf{A$^\star$}}, \textit{right:} match \textcolor{purple}{\fontfamily{cmss}\selectfont\textbf{B}}. The FRM prefers to match \textcolor{orange}{\fontfamily{cmss}\selectfont\textbf{A$^\star$}} with \textcolor{purple}{\fontfamily{cmss}\selectfont\textbf{B}} rather than with \textcolor{blue}{\fontfamily{cmss}\selectfont\textbf{A}}.}
    \label{fig:fab_qual3}
\end{figure}

\begin{figure}
    \centering
    \raisebox{0.35in}{\rotatebox[origin=t]{90}{2.43}}\includegraphics[trim=0cm 0.3cm 0cm 0.7cm,clip,width=0.96\columnwidth]{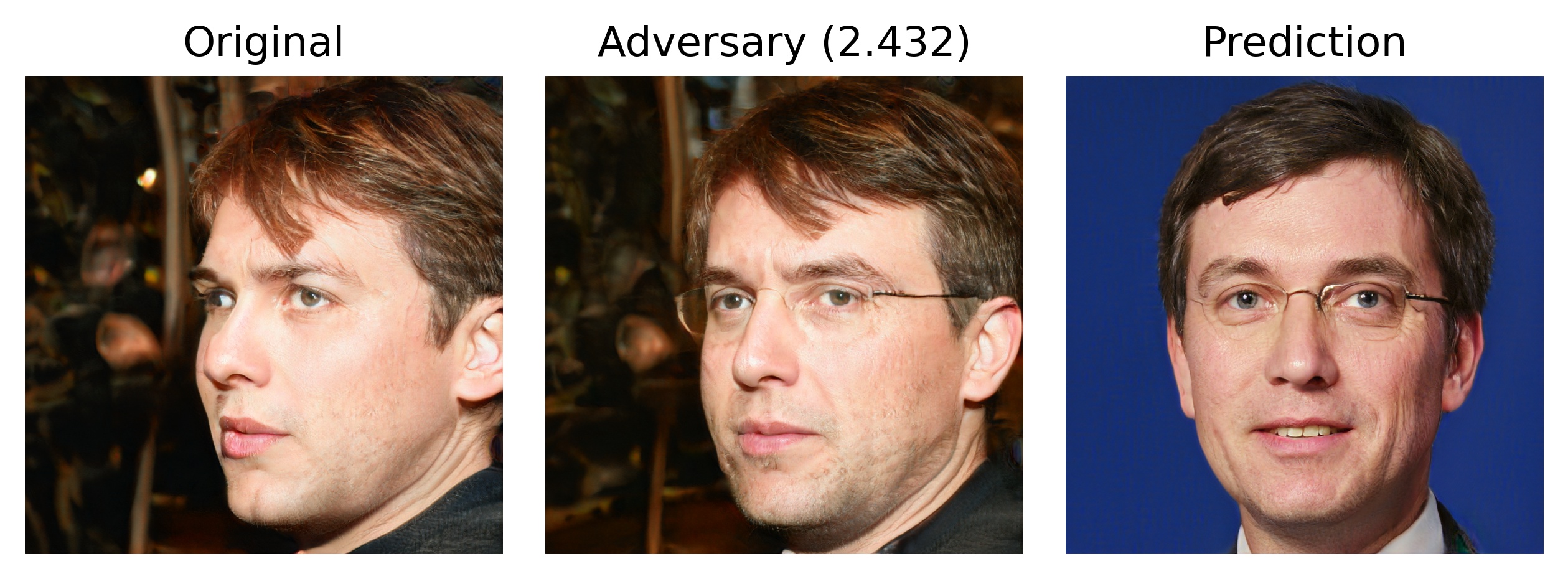}\\
    \raisebox{0.35in}{\rotatebox[origin=t]{90}{4.49}}\includegraphics[trim=0cm 0.3cm 0cm 0.7cm,clip,width=0.96\columnwidth]{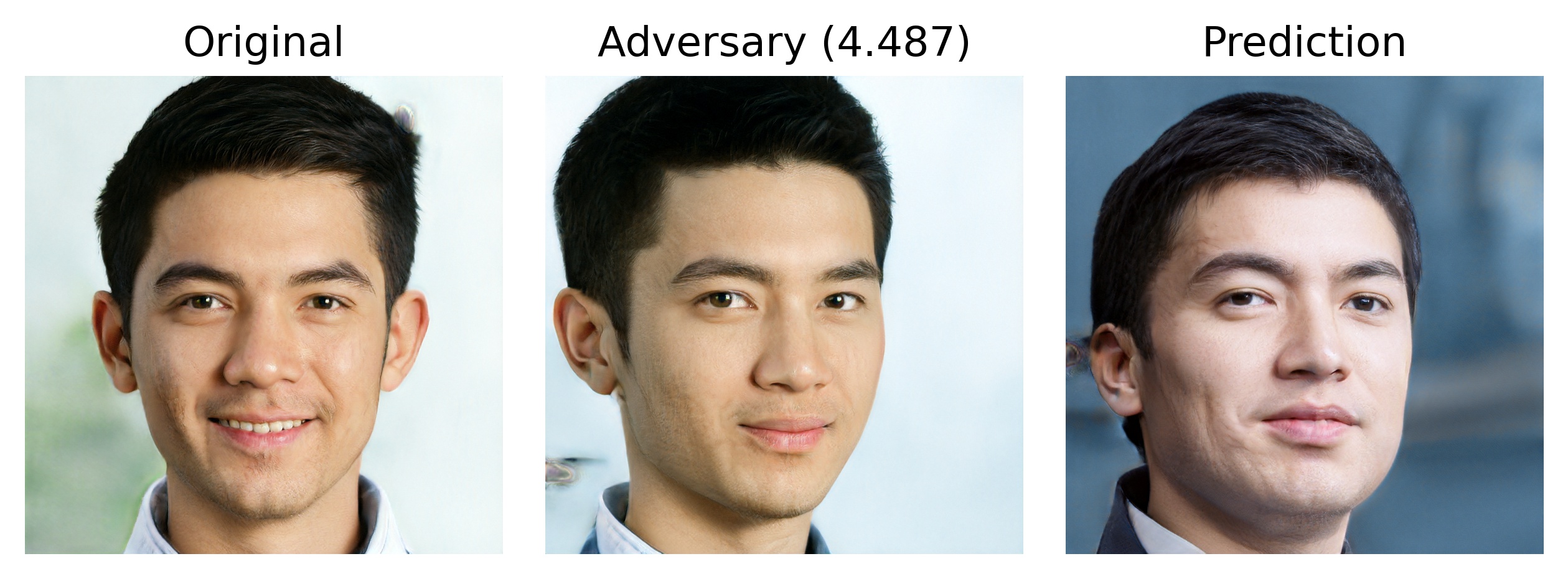}\\
    \raisebox{0.35in}{\rotatebox[origin=t]{90}{2.98}}\includegraphics[trim=0cm 0.3cm 0cm 0.7cm,clip,width=0.96\columnwidth]{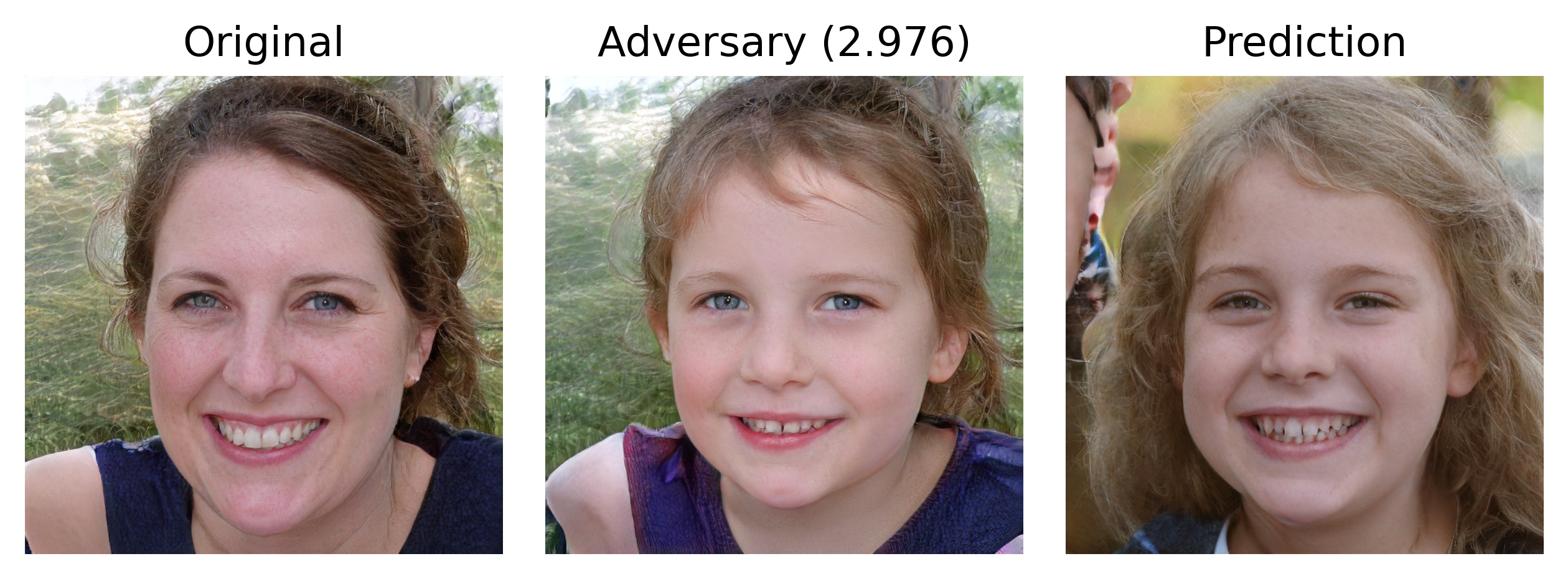}\\
    \raisebox{0.35in}{\rotatebox[origin=t]{90}{3.12}}\includegraphics[trim=0cm 0.3cm 0cm 0.7cm,clip,width=0.96\columnwidth]{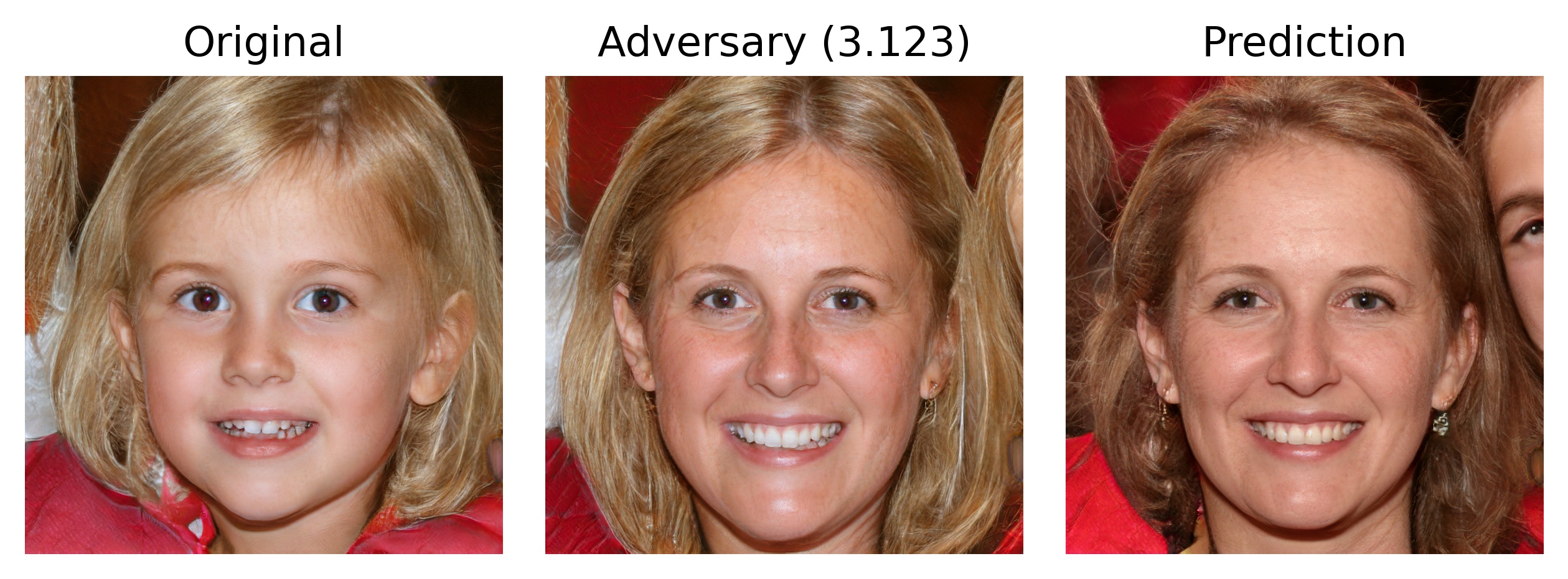}\\
    \raisebox{0.35in}{\rotatebox[origin=t]{90}{2.70}}\includegraphics[trim=0cm 0.3cm 0cm 0.7cm,clip,width=0.96\columnwidth]{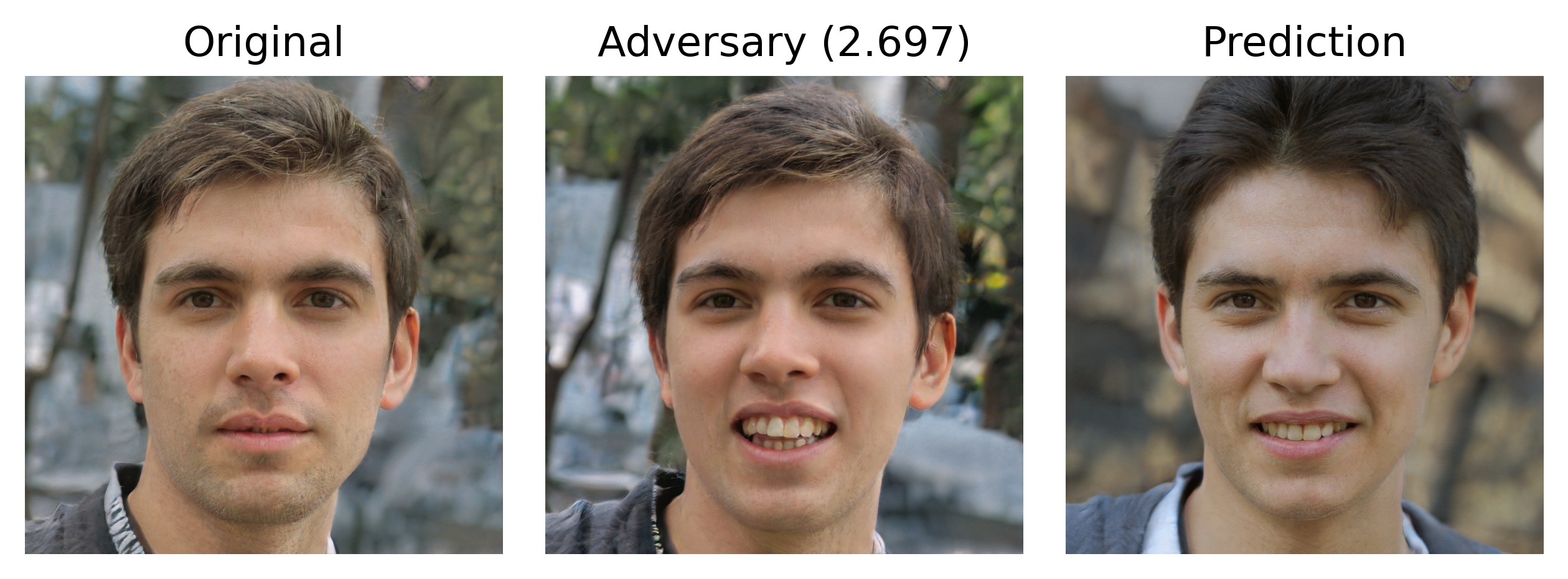}\\
    \raisebox{0.35in}{\rotatebox[origin=t]{90}{2.35}}\includegraphics[trim=0cm 0.3cm 0cm 0.7cm,clip,width=0.96\columnwidth]{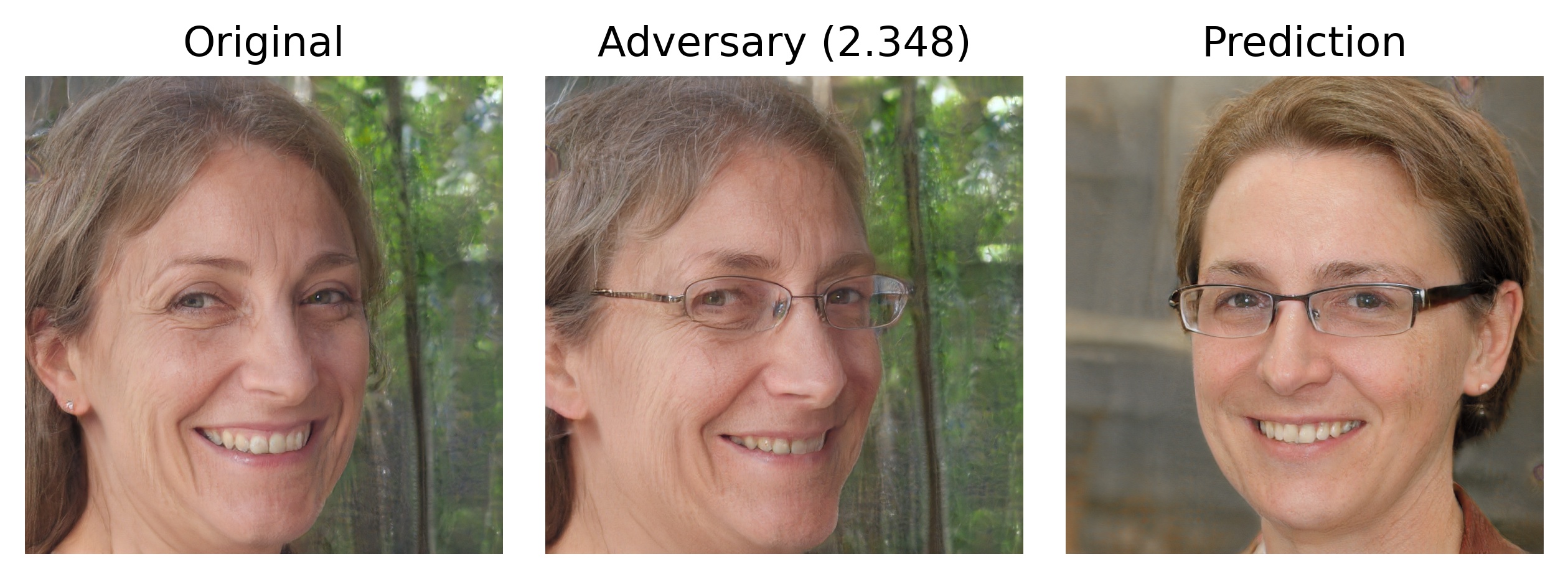}\\
    \raisebox{0.35in}{\rotatebox[origin=t]{90}{1.84}}\includegraphics[trim=0cm 0.3cm 0cm 0.7cm,clip,width=0.96\columnwidth]{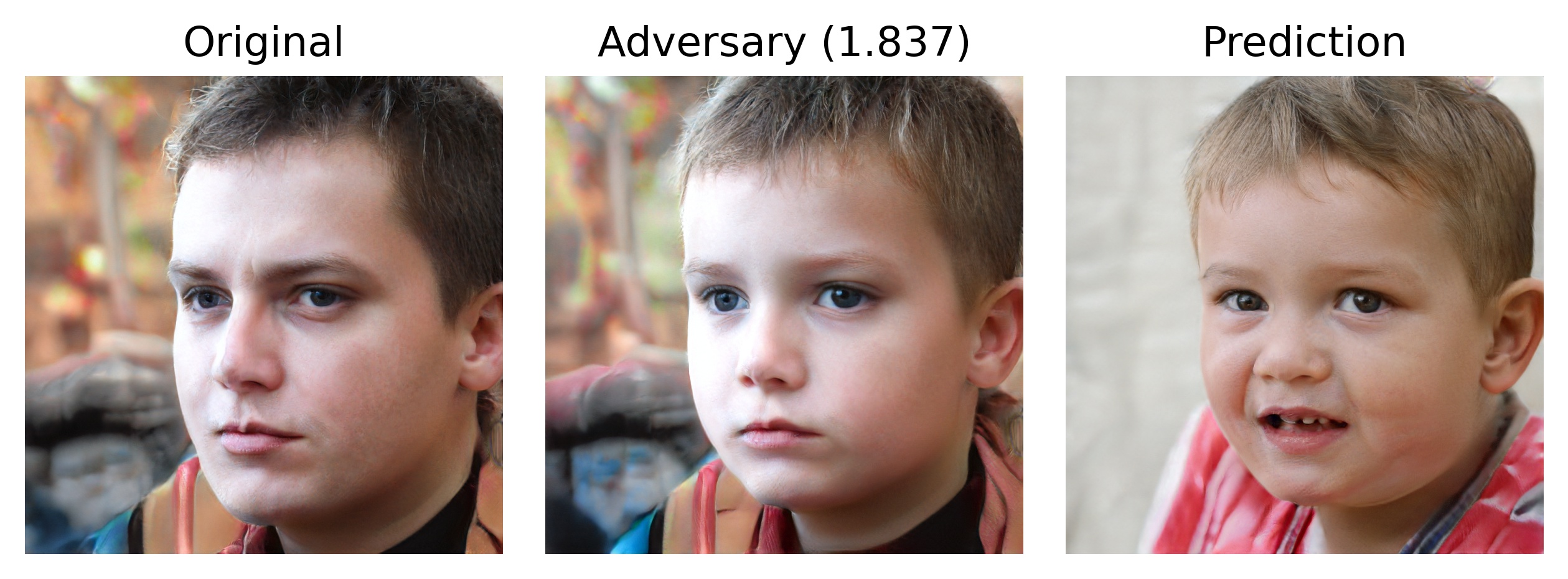}\\
    \raisebox{0.35in}{\rotatebox[origin=t]{90}{1.51}}\includegraphics[trim=0cm 0.3cm 0cm 0.7cm,clip,width=0.96\columnwidth]{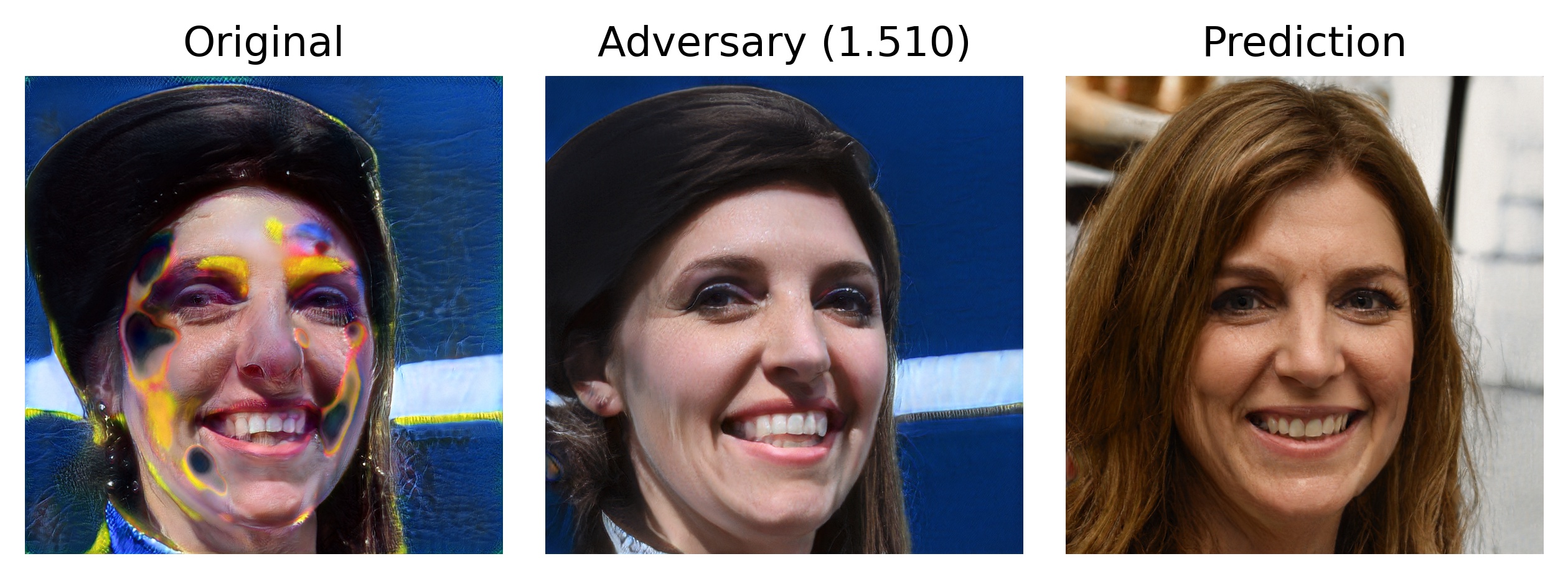}\\
    \caption{\textbf{Adversarial examples found by FAB.} 
    Each row is a different identity. 
    We report each perturbation's energy, $\|\pmb{\delta}\|_{M, 2}$, at the far left. 
    \textit{Left:} original face \textcolor{blue}{\fontfamily{cmss}\selectfont\textbf{A}}, \textit{middle:} modified face \textcolor{orange}{\fontfamily{cmss}\selectfont\textbf{A$^\star$}}, \textit{right:} match \textcolor{purple}{\fontfamily{cmss}\selectfont\textbf{B}}. The FRM prefers to match \textcolor{orange}{\fontfamily{cmss}\selectfont\textbf{A$^\star$}} with \textcolor{purple}{\fontfamily{cmss}\selectfont\textbf{B}} rather than with \textcolor{blue}{\fontfamily{cmss}\selectfont\textbf{A}}.}
    \label{fig:fab_qual4}
\end{figure}
\onecolumn

\section{FAB Attribute Interpretation}
We characterize semantic robustness by analyzing the adversarial examples found by FAB and obtaining a ranking of attributes.

\paragraph{Quantitative results.} 
Table~\ref{tab:deltas_fab_pvals} reports the statistical details of our analysis.
In particular, we show the \textit{p}-values corresponding to each pair-wise comparison in each ranking, and the number of samples on which the comparisons were run (\ie $n$, equivalent to the number of adversarial examples found by FAB).
Note that $n$, for FAB, is equivalent to the entire dataset size, as FAB is able to find adversarial examples for every instance it attacks.
With a significance of $\alpha = 0.05$, we found most comparisons to be statistically significant.
We mark these statistically-significant comparisons with ``\cmark'', and mark the rest with ``\xmark''.

\paragraph{Interpretation.}
Note that, while the procedure we conduct here shares spirit with the one for PGD-based adversarial examples (from Section~\ref{sec:pgd_attacks} and Appendix~\ref{sec:app_pgd_attr}), the resulting ranking \textit{cannot} be interpreted in the same way as the one we obtained when using PGD.
FAB is tasked with finding adversarial examples for \textit{every} identity while minimizing the perturbation induced in the identity.
Consequently, the adversarial examples found by FAB need not belong to the identity of the person the attack was targeting.
That is, an adversarial example for person \underline{A} can be interpreted as \textit{``a face of a person that shares facial attributes with \underline{A} (to some extent) yet is classified as different from \underline{A} by the FRM''}.
Thus, we argue that the relative energy spent on modifying an attribute can be interpreted as related to \textit{``the FRM's reliance on such attribute''}.

\begin{table*}[]
\centering
\caption{\textbf{Ranking of FAB's per-attribute energy spent.}
For each method, we report the attribute ranking we obtain.
We also report the \textit{p}-value associated with each pair-wise comparison of neighboring attributes (\ie the 1$^\text{st}$ with the 2$^\text{nd}$, the 2$^\text{nd}$ with the 3$^\text{rd}$, \textit{etc.}), and the number of samples, $n$, on which the tests were run.
\label{tab:deltas_fab_pvals}}
\begin{tabular}{l|clllllllll|c}
\toprule
\multirow{2}{*}{Method} & \multicolumn{10}{c|}{Ranking} &  \\
                                              & \multicolumn{2}{C{2cm}}{1$^{\text{st}}$} & \multicolumn{2}{C{2cm}}{2$^{\text{nd}}$} & \multicolumn{2}{C{2cm}}{3$^{\text{rd}}$} & \multicolumn{2}{C{2cm}}{4$^{\text{th}}$} & \multicolumn{2}{C{2cm}}{5$^{\text{th}}$} & \multicolumn{1}{|c}{$n$}      \\ \hline
\multicolumn{1}{l|}{\multirow{2}{*}{ArcFace}} &  \multicolumn{1}{C{1cm}}{} & \multicolumn{2}{C{2cm}}{\cmark\tiny{$6.35\times10^{-13}$}}  & \multicolumn{2}{C{2cm}}{\cmark\tiny{$3.00\times10^{-15}$}}  & \multicolumn{2}{C{2cm}}{\cmark\tiny{$3.52\times10^{-9}$}}  & \multicolumn{2}{C{2cm}}{\cmark\tiny{$9.48\times10^{-59}$}} & \multicolumn{1}{C{1cm}}{}        & \multicolumn{1}{|c}{\multirow{2}{*}{5000}} \\
\multicolumn{1}{l|}{}                         & \multicolumn{2}{c}{Gender}           & \multicolumn{2}{c}{Pose} & \multicolumn{2}{c}{Eyeglasses} & \multicolumn{2}{c}{Age} & \multicolumn{2}{c}{Smile}                                                                                                                                      & \multicolumn{1}{|c}{} \\\hline
\multicolumn{1}{l|}{\multirow{2}{*}{FaceNet$^C$}} &  \multicolumn{1}{C{1cm}}{} & \multicolumn{2}{C{2cm}}{\cmark\tiny{$3.32\times10^{-2}$}}  & \multicolumn{2}{C{2cm}}{\xmark\tiny{$2.27\times10^{-1}$}}  & \multicolumn{2}{C{2cm}}{\cmark\tiny{$2.98\times10^{-7}$}}  & \multicolumn{2}{C{2cm}}{\cmark\tiny{$3.34\times10^{-34}$}} & \multicolumn{1}{C{1cm}}{}& \multicolumn{1}{|c}{\multirow{2}{*}{5000}} \\
\multicolumn{1}{l|}{}                         & \multicolumn{2}{c}{Gender}           & \multicolumn{2}{c}{Eyeglasses} & \multicolumn{2}{c}{Pose} & \multicolumn{2}{c}{Age} & \multicolumn{2}{c}{Smile}                                                                                                                                      & \multicolumn{1}{|c}{} \\\hline
\multicolumn{1}{l|}{\multirow{2}{*}{FaceNet$^V$}} &  \multicolumn{1}{C{1cm}}{} & \multicolumn{2}{C{2cm}}{\xmark\tiny{$4.19\times10^{-1}$}}  & \multicolumn{2}{C{2cm}}{\cmark\tiny{$3.78\times10^{-2}$}}  & \multicolumn{2}{C{2cm}}{\cmark\tiny{$1.58\times10^{-2}$}}  & \multicolumn{2}{C{2cm}}{\cmark\tiny{$7.38\times10^{-9}$}} & \multicolumn{1}{C{1cm}}{}& \multicolumn{1}{|c}{\multirow{2}{*}{5000}} \\
\multicolumn{1}{l|}{}                         & \multicolumn{2}{c}{Eyeglasses}           & \multicolumn{2}{c}{Pose} & \multicolumn{2}{c}{Gender} & \multicolumn{2}{c}{Age} & \multicolumn{2}{c}{Smile}                                                                                                                                      & \multicolumn{1}{|c}{} \\\bottomrule
\end{tabular}
\end{table*}

From the ranking we find (Table~\ref{tab:deltas_fab_pvals}), we mark two main observations that hold for all FRMs: \textit{(i)} the attributes ``Gender'', ``Pose'' and ``Eyeglasses'' always come up in the 1$^{\text{st}}$ to 3$^{\text{rd}}$ positions, and \textit{(ii)} the 4$^{\text{th}}$ and 5$^{\text{th}}$ positions are occupied, respectively, by ``Age'' and ``Smile''.

We interpret the first observation as follows.
First, the fact that ``Gender'' occupies early positions in the ranking is reasonable, as human's most likely also rely heavily on the attributes they customarily associate with gender.
That is, it follows intuitive sense that an FRM would also rely on this informative cue to recognize human faces.
Similarly, seeing ``Eyeglasses'' prompt in the first positions also agrees with our previous observations from Section~\ref{sec:pgd_attacks}, in which we notice the FRMs' striking difficulty for recognizing faces when glasses are added/removed.
While the reliance on either ``Gender'' or ``Eyeglasses'' points to a direction for improving FRMs, we argue it is difficult to question how FRMs learn dependence on these cues, as humans also do.
In contrast, the early position that the ``Pose'' occupies is surprising.
We are unable to give a reasonable explanation to why FRMs would rely on the signal conveyed by ``Pose''.
Thus, we attribute this case of failure of our methodology to the artifacts that StyleGAN introduces in faces when handling extreme poses.

We interpret the second observation as follows.
The fact that ``Smile'' ranks last is satisfactory.
Indeed, FRMs should not be expected to rely much on the ``degree'' of smile to recognize a human face.
However, the fact that ``Age'' ranks relatively low (4$^{\text{th}}$ on a ranking of five attributes), requires careful interpretation.
In particular, we make two remarks.
First, a ranking is, inherently, a \textit{relative} comparison of importance.
That is to say, ``Age'' ranking 4$^{\text{th}}$ does not mean that FRMs \textit{do not} rely on a person's age to recognize them, but rather that they rely less on the person's age than on the other attributes we considered.
Additionally, qualitative inspection of the age-manipulated faces suggests that modifying age is rather unstable.
We argue that modeling age with a single direction vector (independent of the original face) can result in unpredictable outcomes; for instance, varying a child's age \textit{rapidly} changes several attributes, while varying an adult's face by the same amount changes few attributes.
Ultimately, we think that some of the cases of failure we observe can be, for the most part, attributed to the instability of the GAN's output \textit{and} the problems associated with adequately modeling disentangled manipulation of attributes.

\paragraph{Brute-force approach with FAB.}
Analogous to our experiments with PGD, one can also consider a brute-force approach for characterizing semantic robustness against modifications of individual attributes.
This approach consists of giving FAB access to modifying a single attribute, \ie a single direction, conducting the attack, and recording the average energy required to fool each FRM.

\begin{table}[]
\centering
\caption{\textbf{Attribute-restricted search for adversaries through FAB.} 
We restrict FAB's attribute-perturbing capacity for each single attribute, and report the average perturbation required to fool each method.
\label{tab:fab_energy}}
\centering
\begin{tabular}{c|ccccc}
\toprule
\multirow{2}{*}{Method} & \multicolumn{5}{c}{Required energy} \\
                        & Pose      & Age       & Gender    & Smile     & Eyeglasses    \\ \hline
ArcFace                 & \textbf{3.4}	    & 5.5	    & 28.9	    & 4.3	   & 5.1       \\
FaceNet$^C$      & \textbf{3.2}	    & 5.2	    & 13.0	    & 3.4	   & 4.5       \\
FaceNet$^V$      & 3.1	    & 4.5	    & 10.8	    & \textbf{3.0}	   & 4.2       \\\bottomrule

\end{tabular}
\end{table}

Table~\ref{tab:fab_energy} reports the results of this experiment for each of the FRMs we considered.
From these results we make the following remarks: \textit{(i)} the overall average energy required to fool each method again finds a ranking of FRMs by which ArcFace $>$ FaceNet$^C$ $>$ FaceNet$^V$, \textit{(ii)} the required energy per attribute can go from as high as 28.9 (ArcFace when attacking only gender), to as low as 3.0 (FaceNet$^V$ when attacking smile), \textit{(iii)} if we rank attributes according to the energy for modification (from high to low), we get an overall ranking of: Gender $>$ Age $>$ Eyeglasses $>$ Smile $>$ Pose (except for FaceNet$^V$, for which the ranking of Smile and Pose is reversed).

Figures~\ref{fig:fab_onlyPose_qual} to~\ref{fig:fab_onlyEye_qual} report, for each of these ablations (five in total, one for each attribute), qualitative examples that show the results of following this brute-force approach with FAB.
From these results, we make various observations.
\underline{First}, indeed we find a strong correlation between the attribute being manipulated and the modification made to the face, \ie manipulating the pose attribute in fact modifies pose (and analogous observations can be made for the other attributes).
\underline{Second}, the energies required to modify the FRMs' prediction are, overall, large (as reported in Table~\ref{tab:fab_energy}); consequently, the visual changes induced in the images are also large.
This fact emphasizes how the adversarial examples find by FAB are not required to have an identity-preserving property.
\underline{Third}, the entanglement between attributes is visible in these qualitative samples.
While the entanglement was also present in the adversarial examples found by PGD, we argue that it occurred to a much lesser extent.
This fact can be immediately attributed to the larger energies associated to these samples.
Furthermore, we also argue this phenomenon can be related to a manifold-like nature, by which following directions related to attributes may be reasonable \textit{locally} and not globally.
\underline{Fourth}, while many identity-modifying changes are being introduced in the images, there certainly are several attributes that are maintained, such as the image's background, the hair color, skin color, lighting, and, arguably, overall appearance.
\underline{Fifth}, modifications involving children-like faces still display unreliable results.
\underline{Sixth}, fooling the FRM just by manipulating the smile attribute apparently requires extreme changes: the person's mouth is modified to be extremely open or remarkably closed.
\underline{Seventh}, and finally, manipulating the gender attribute seems to be unreliable in these cases.
As Figure~\ref{fig:fab_onlyGender_qual} displays, the gender-modified versions of faces are remarkably different from the corresponding original faces.
These changes are notoriously related to the large perturbations that the figure also reports (the perturbations' energy is around $\sim$ 28 for all examples).
The reason why FAB is unable to find examples with smaller perturbations (\ie closer to the original input image) is unclear.
However, regardless of the reason behind this phenomenon, manipulating the gender attribute seems to be unreliable, specially when the perturbation is of such large magnitudes.

We argue that these observations, overall, suggest that the brute-force approach to using FAB is \textit{not} a reliable source for analyzing the semantic robustness of FRMs.
Indeed, we find that the ranking from this brute-force procedure shares few similarities with the rankings from Table~\ref{tab:deltas_fab_pvals}.

\twocolumn
\begin{figure}
    \centering
    \raisebox{0.35in}{\rotatebox[origin=t]{90}{3.69}}\includegraphics[trim=0cm 0.3cm 0cm 0.7cm,clip,width=0.96\columnwidth]{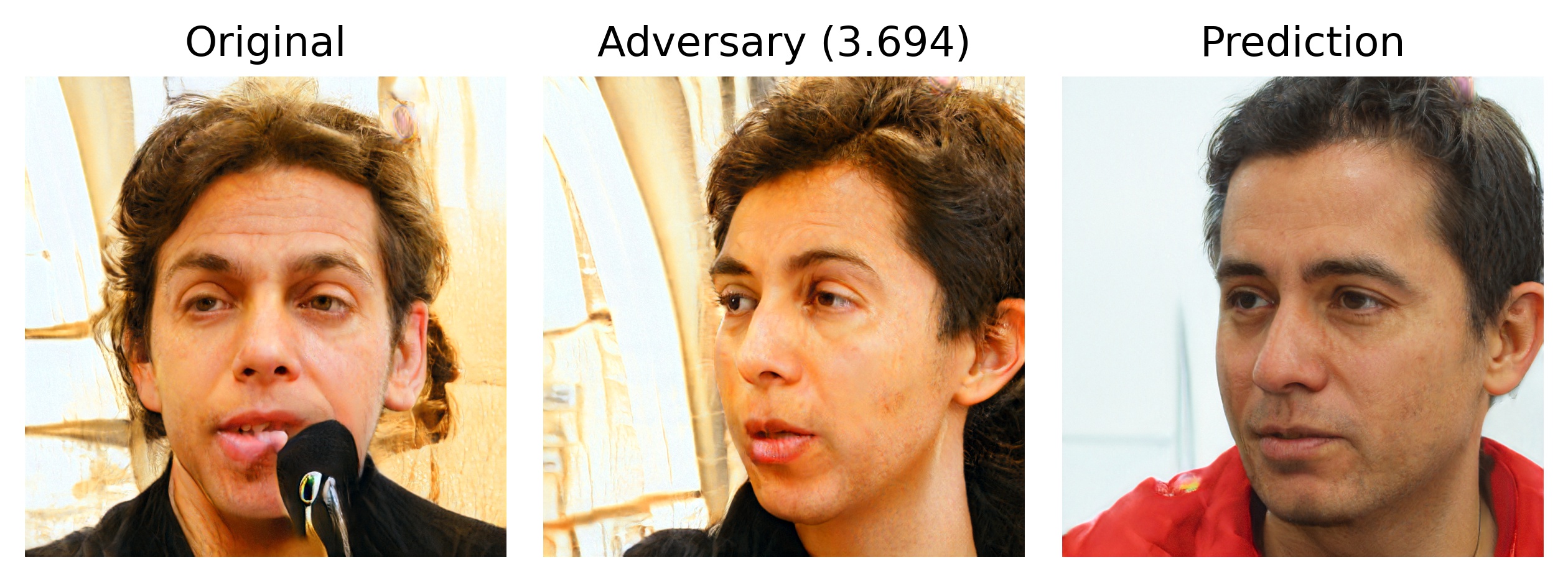}\\
    \raisebox{0.35in}{\rotatebox[origin=t]{90}{2.64}}\includegraphics[trim=0cm 0.3cm 0cm 0.7cm,clip,width=0.96\columnwidth]{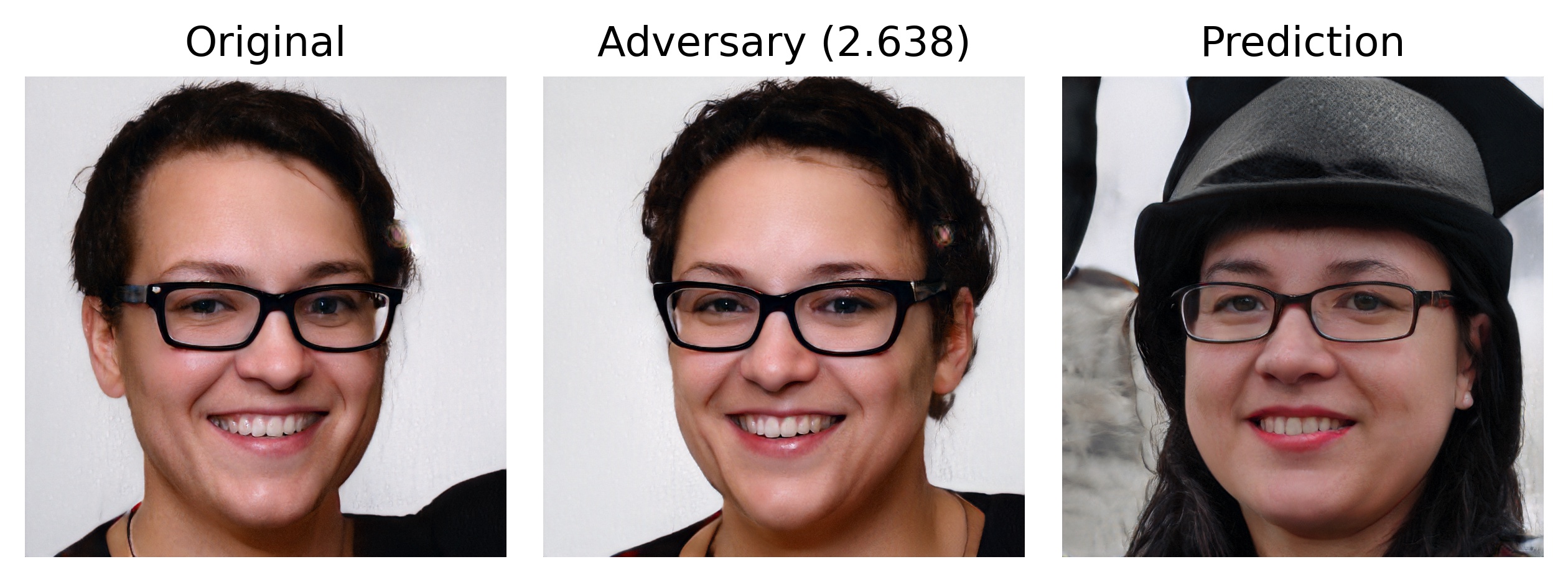}\\
    \raisebox{0.35in}{\rotatebox[origin=t]{90}{3.97}}\includegraphics[trim=0cm 0.3cm 0cm 0.7cm,clip,width=0.96\columnwidth]{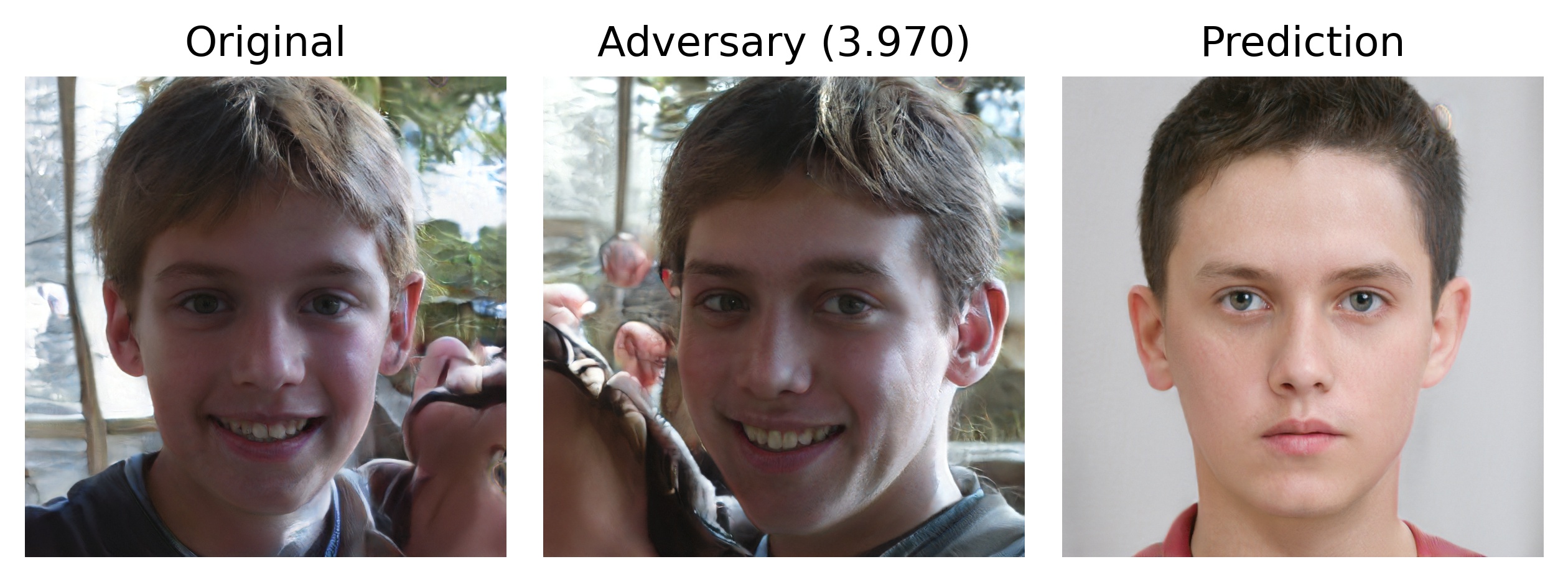}\\
    \raisebox{0.35in}{\rotatebox[origin=t]{90}{5.03}}\includegraphics[trim=0cm 0.3cm 0cm 0.7cm,clip,width=0.96\columnwidth]{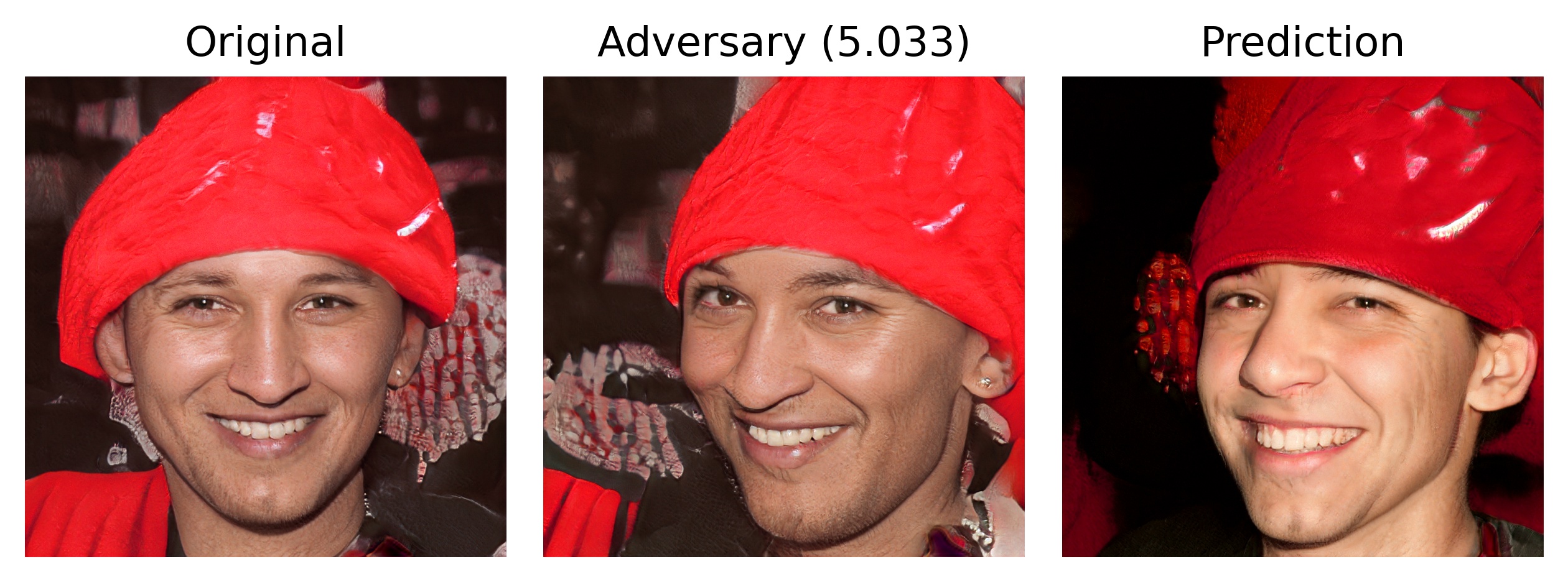}\\
    \raisebox{0.35in}{\rotatebox[origin=t]{90}{4.08}}\includegraphics[trim=0cm 0.3cm 0cm 0.7cm,clip,width=0.96\columnwidth]{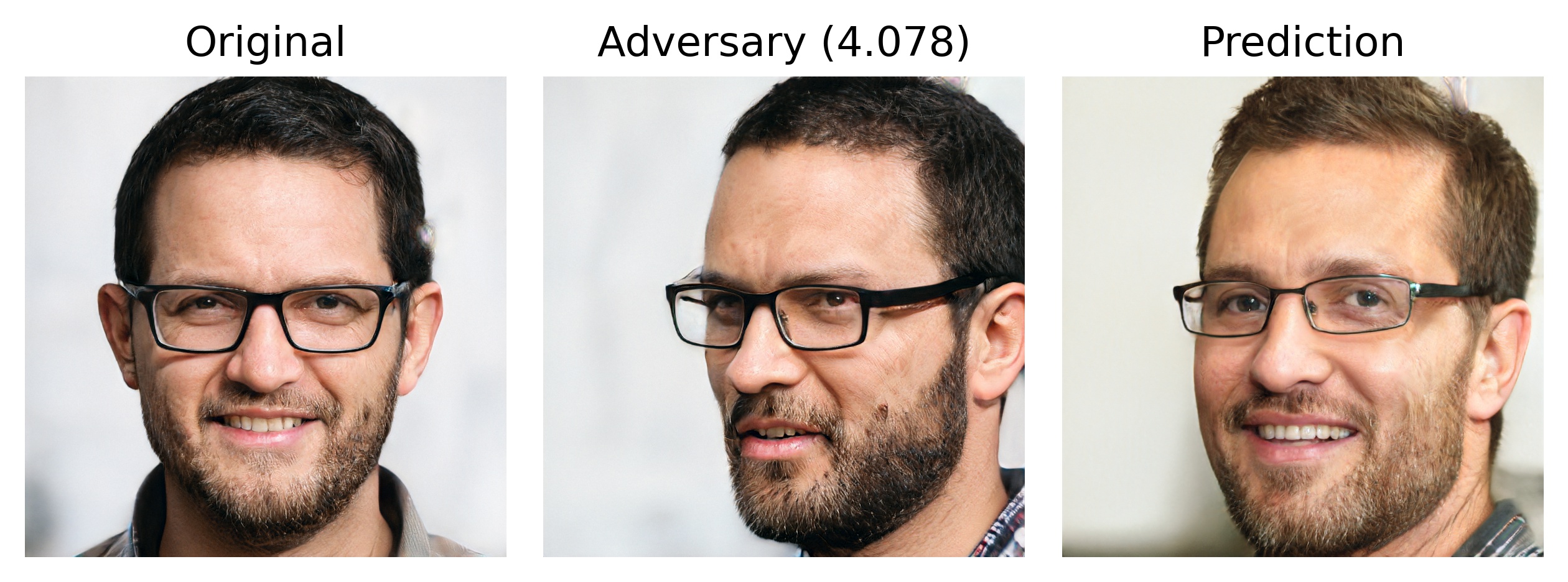}\\
    \raisebox{0.35in}{\rotatebox[origin=t]{90}{4.09}}\includegraphics[trim=0cm 0.3cm 0cm 0.7cm,clip,width=0.96\columnwidth]{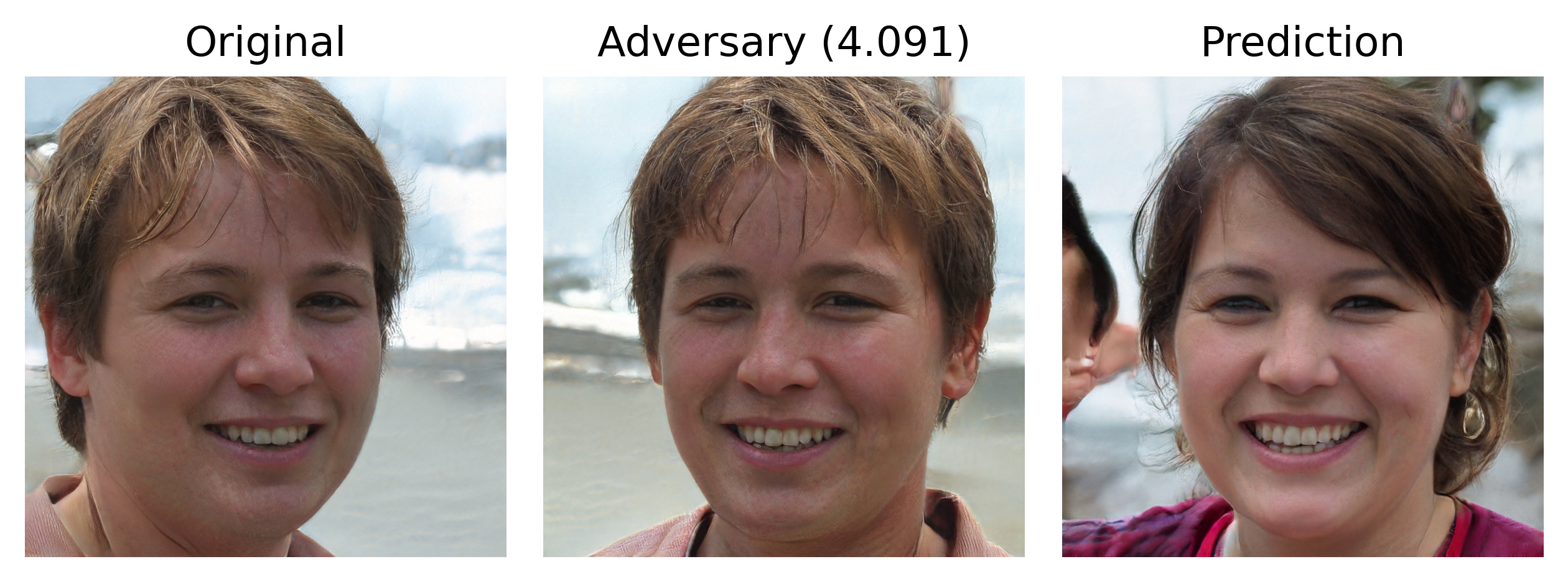}\\
    \raisebox{0.35in}{\rotatebox[origin=t]{90}{3.78}}\includegraphics[trim=0cm 0.3cm 0cm 0.7cm,clip,width=0.96\columnwidth]{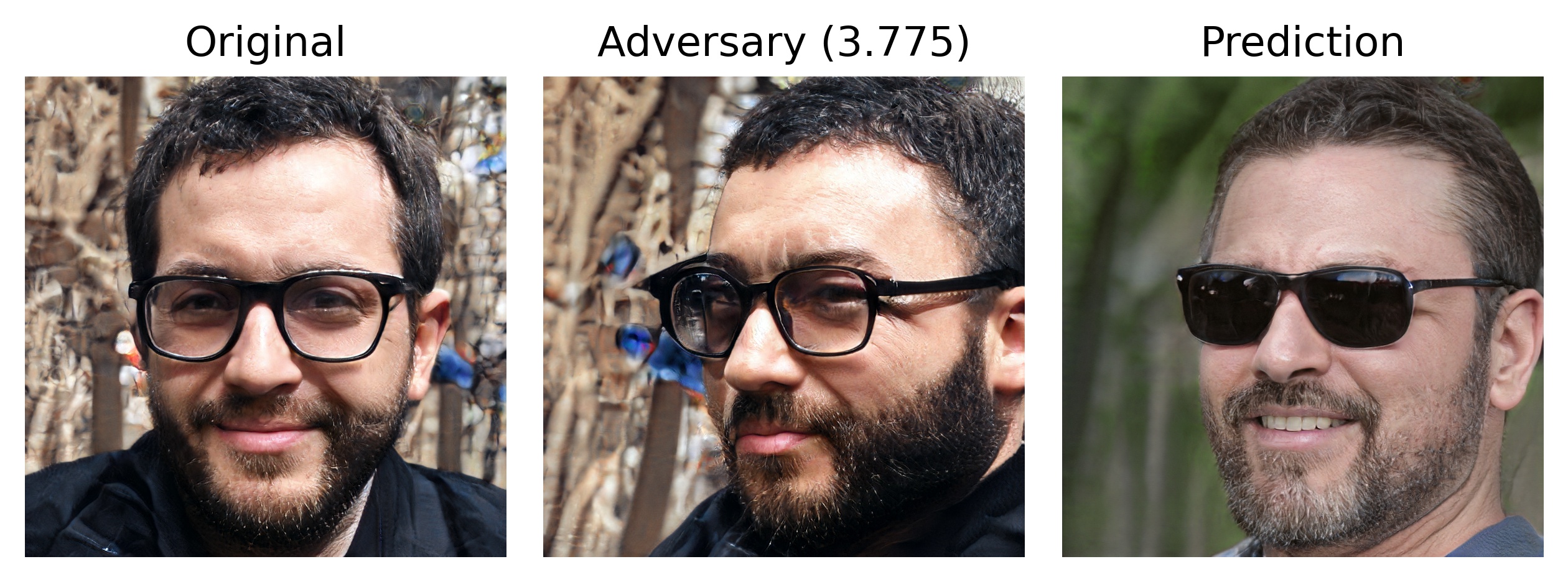}\\
    \raisebox{0.35in}{\rotatebox[origin=t]{90}{3.50}}\includegraphics[trim=0cm 0.3cm 0cm 0.7cm,clip,width=0.96\columnwidth]{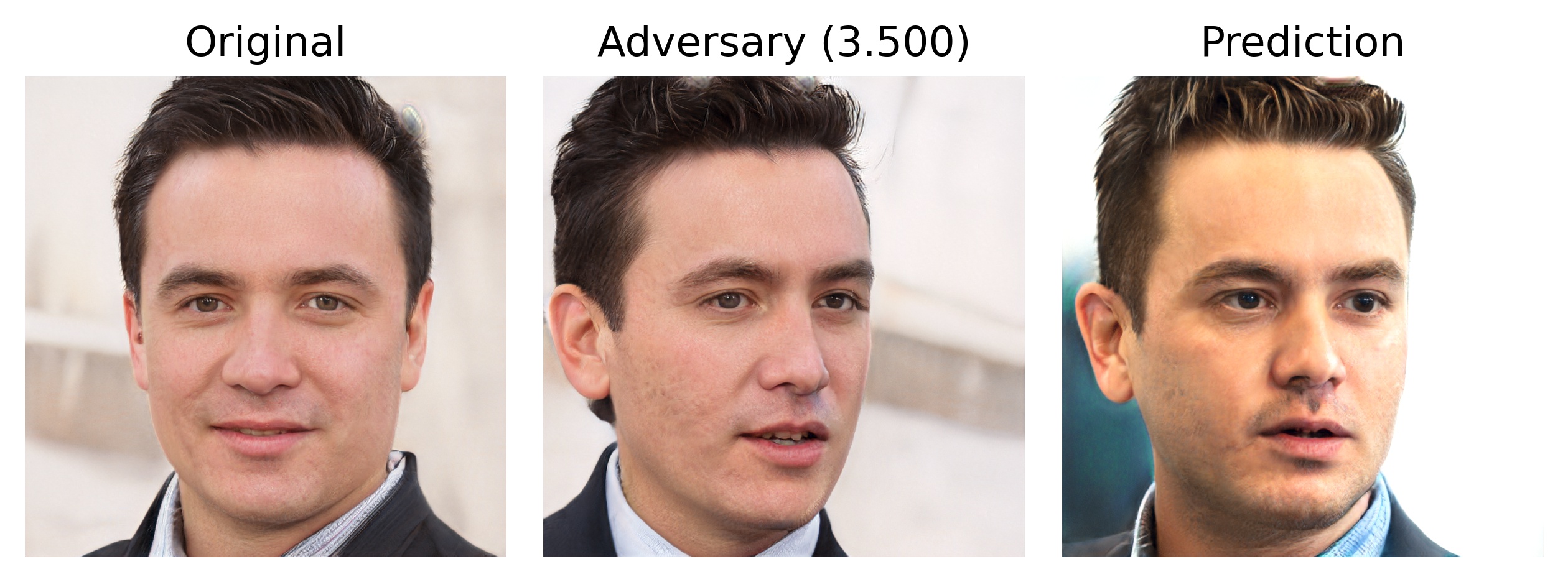}\\
    \caption{\textbf{Adversarial examples found by FAB when only attacking the \underline{Pose} attribute.} 
    Each row is a different identity. 
    We report each perturbation's energy, $\|\pmb{\delta}\|_{M, 2}$, at the far left. 
    \textit{Left:} original face \textcolor{blue}{\fontfamily{cmss}\selectfont\textbf{A}}, \textit{middle:} modified face \textcolor{orange}{\fontfamily{cmss}\selectfont\textbf{A$^\star$}}, \textit{right:} match \textcolor{purple}{\fontfamily{cmss}\selectfont\textbf{B}}. The FRM prefers to match \textcolor{orange}{\fontfamily{cmss}\selectfont\textbf{A$^\star$}} with \textcolor{purple}{\fontfamily{cmss}\selectfont\textbf{B}} rather than with \textcolor{blue}{\fontfamily{cmss}\selectfont\textbf{A}}.}
    \label{fig:fab_onlyPose_qual}
\end{figure}

\begin{figure}
    \centering
    \raisebox{0.35in}{\rotatebox[origin=t]{90}{6.99}}\includegraphics[trim=0cm 0.3cm 0cm 0.7cm,clip,width=0.96\columnwidth]{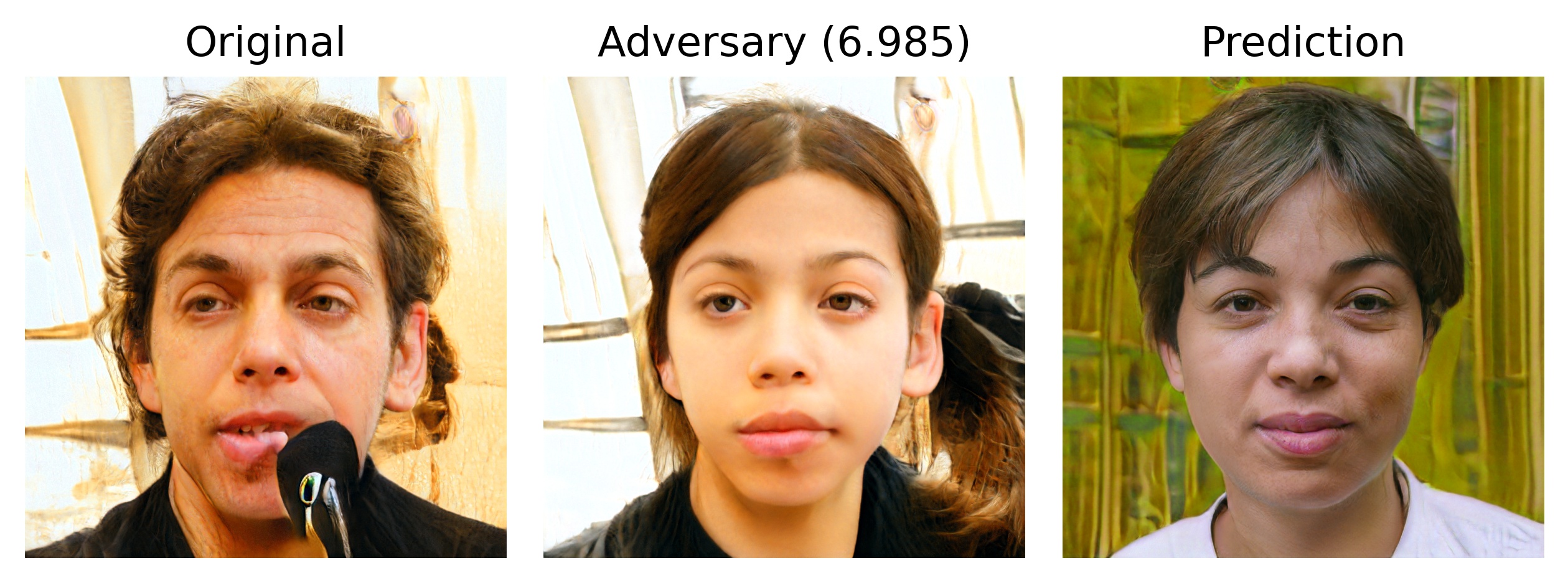}\\
    \raisebox{0.35in}{\rotatebox[origin=t]{90}{7.35}}\includegraphics[trim=0cm 0.3cm 0cm 0.7cm,clip,width=0.96\columnwidth]{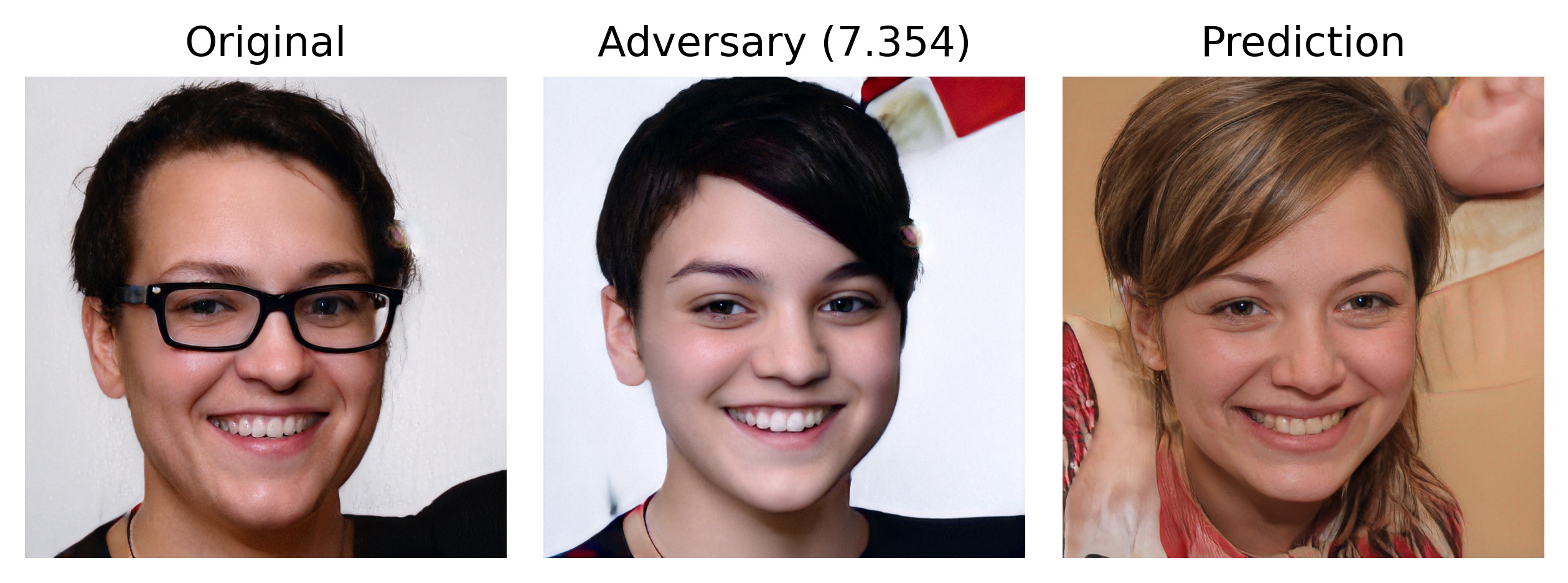}\\
    \raisebox{0.35in}{\rotatebox[origin=t]{90}{2.13}}\includegraphics[trim=0cm 0.3cm 0cm 0.7cm,clip,width=0.96\columnwidth]{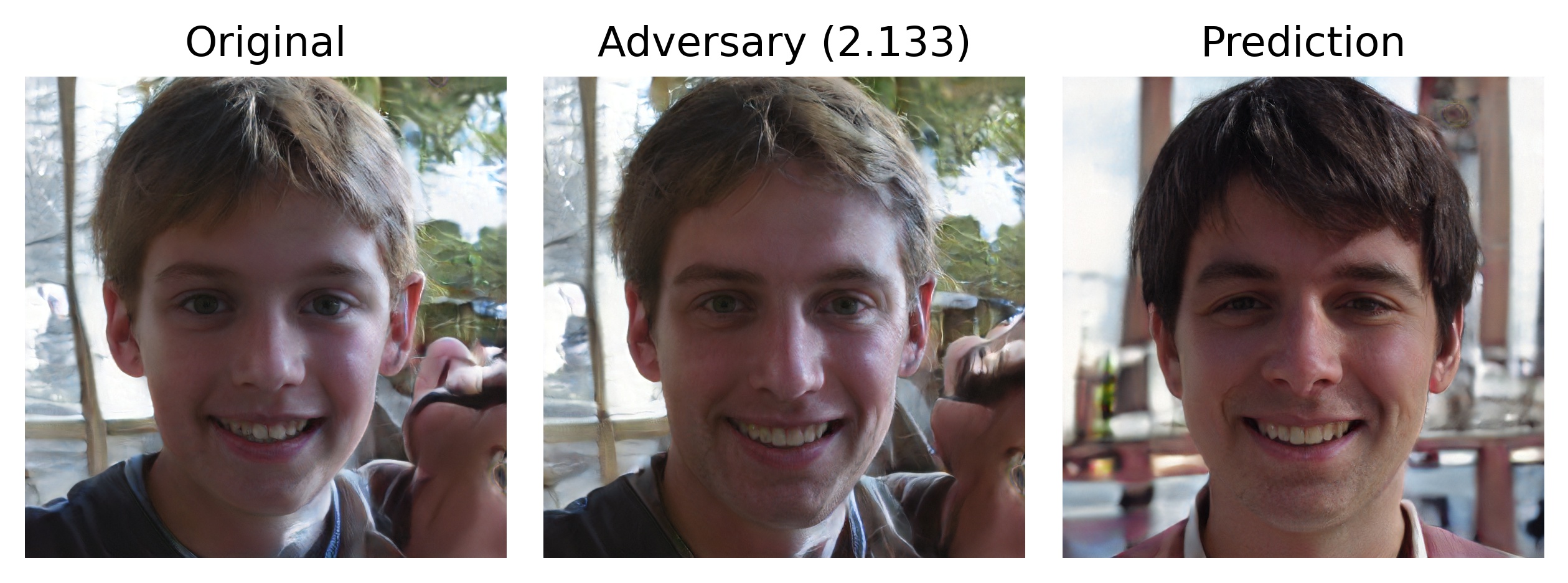}\\
    \raisebox{0.35in}{\rotatebox[origin=t]{90}{6.73}}\includegraphics[trim=0cm 0.3cm 0cm 0.7cm,clip,width=0.96\columnwidth]{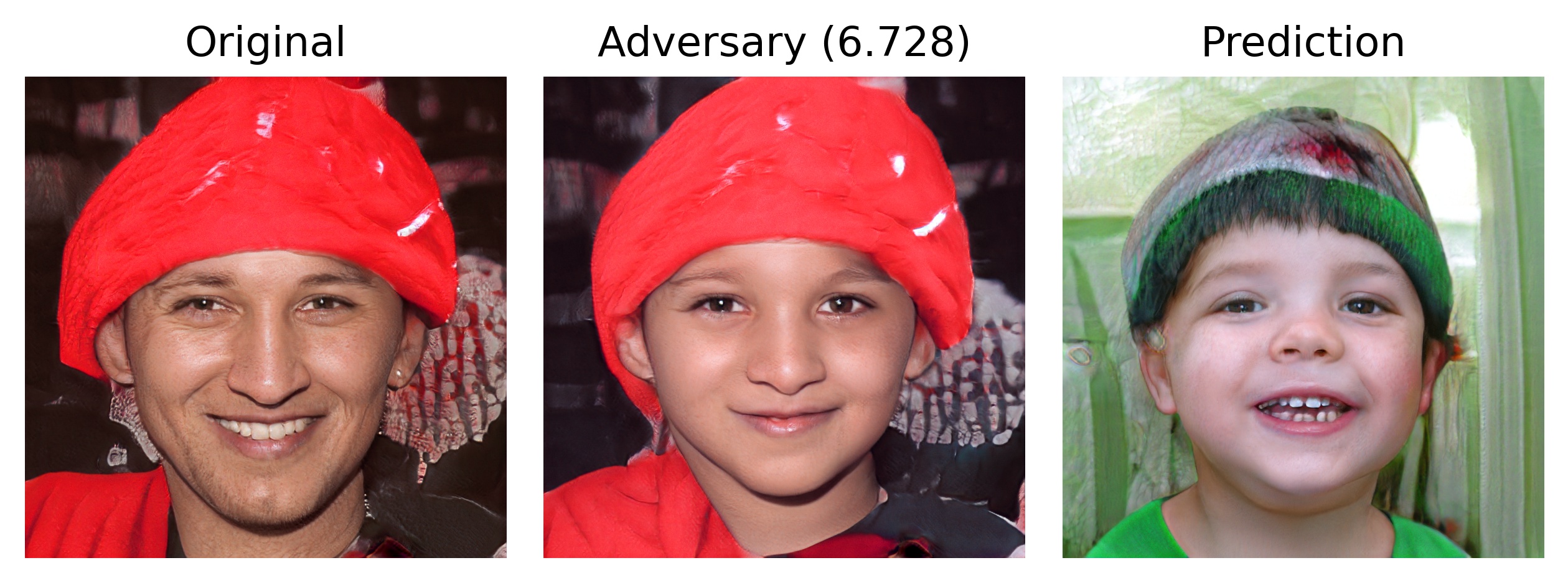}\\
    \raisebox{0.35in}{\rotatebox[origin=t]{90}{6.27}}\includegraphics[trim=0cm 0.3cm 0cm 0.7cm,clip,width=0.96\columnwidth]{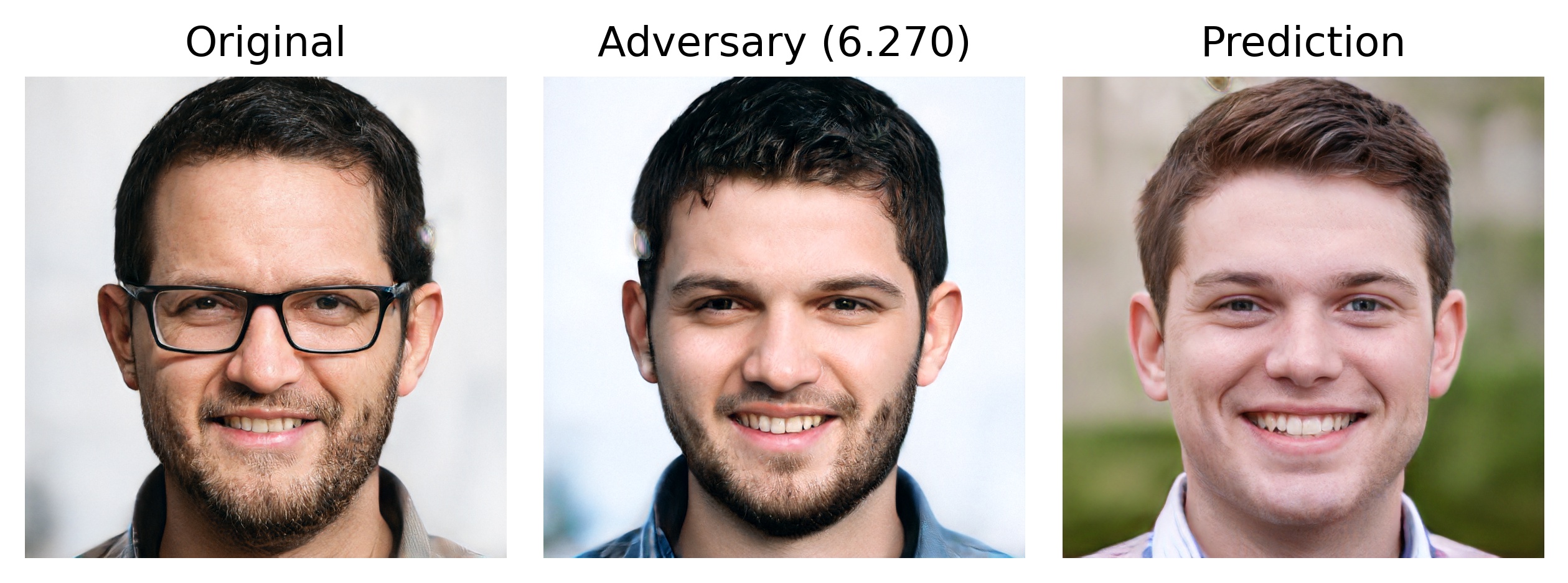}\\
    \raisebox{0.35in}{\rotatebox[origin=t]{90}{5.63}}\includegraphics[trim=0cm 0.3cm 0cm 0.7cm,clip,width=0.96\columnwidth]{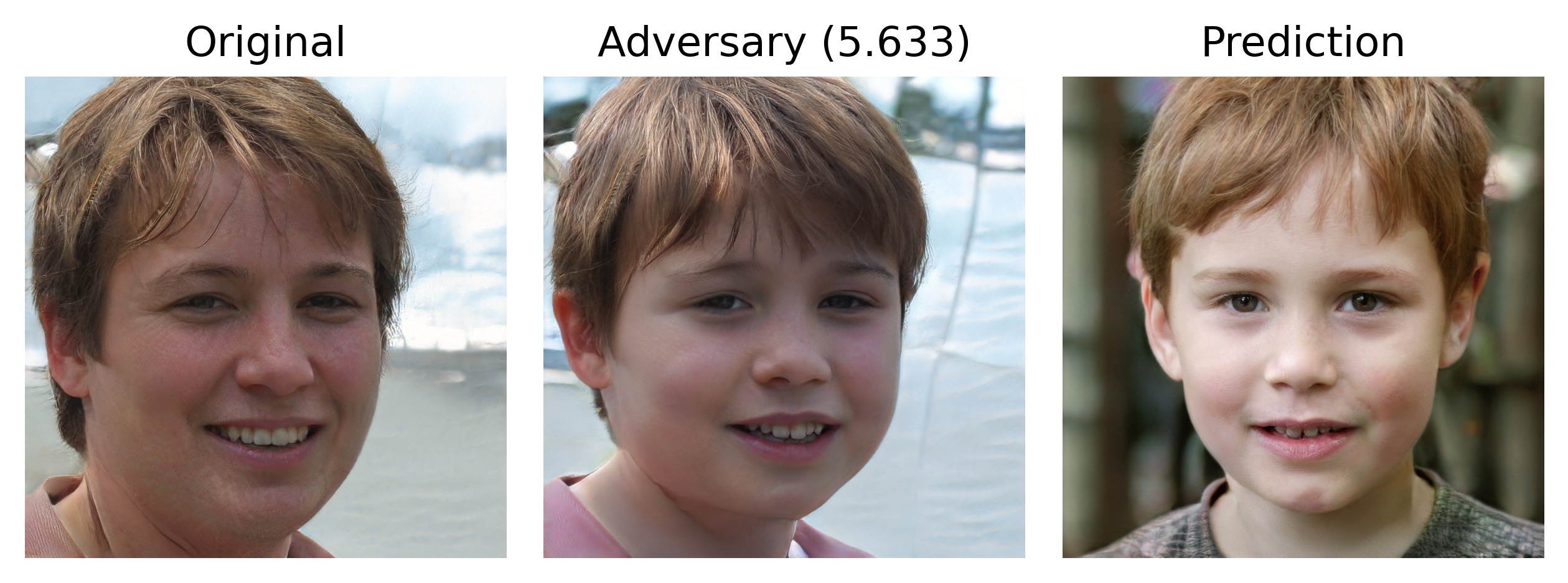}\\
    \raisebox{0.35in}{\rotatebox[origin=t]{90}{4.73}}\includegraphics[trim=0cm 0.3cm 0cm 0.7cm,clip,width=0.96\columnwidth]{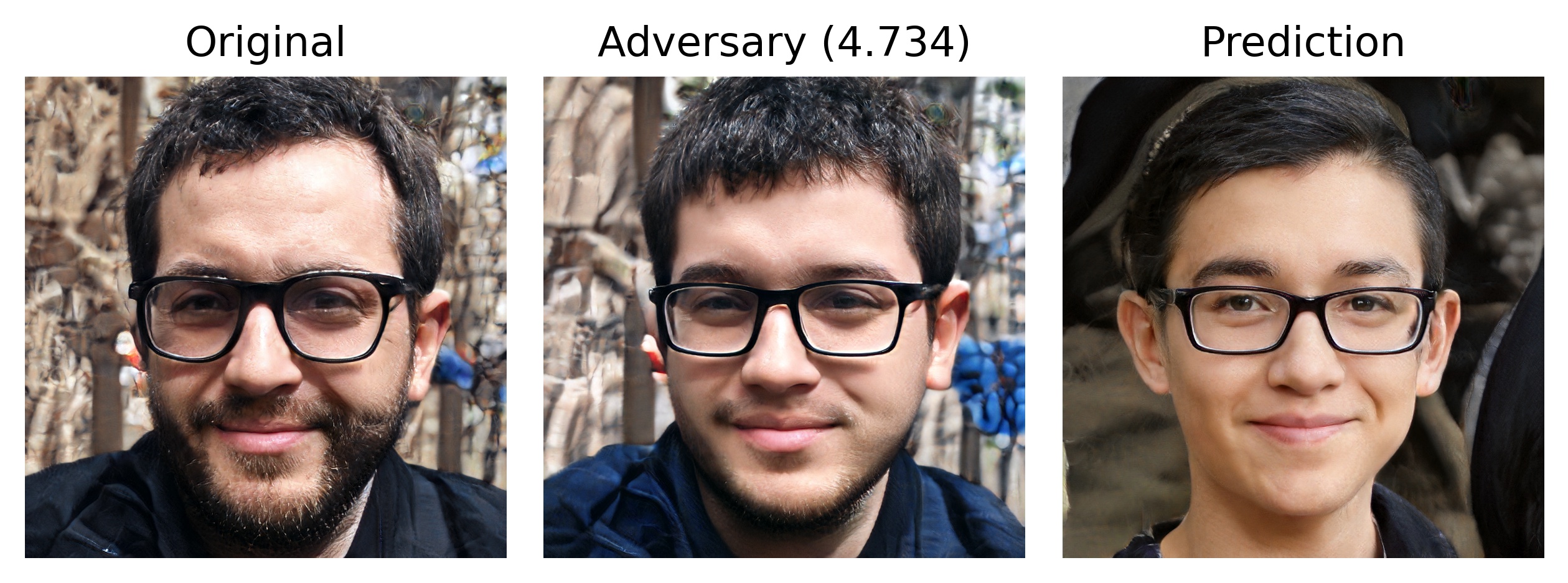}\\
    \raisebox{0.35in}{\rotatebox[origin=t]{90}{5.83}}\includegraphics[trim=0cm 0.3cm 0cm 0.7cm,clip,width=0.96\columnwidth]{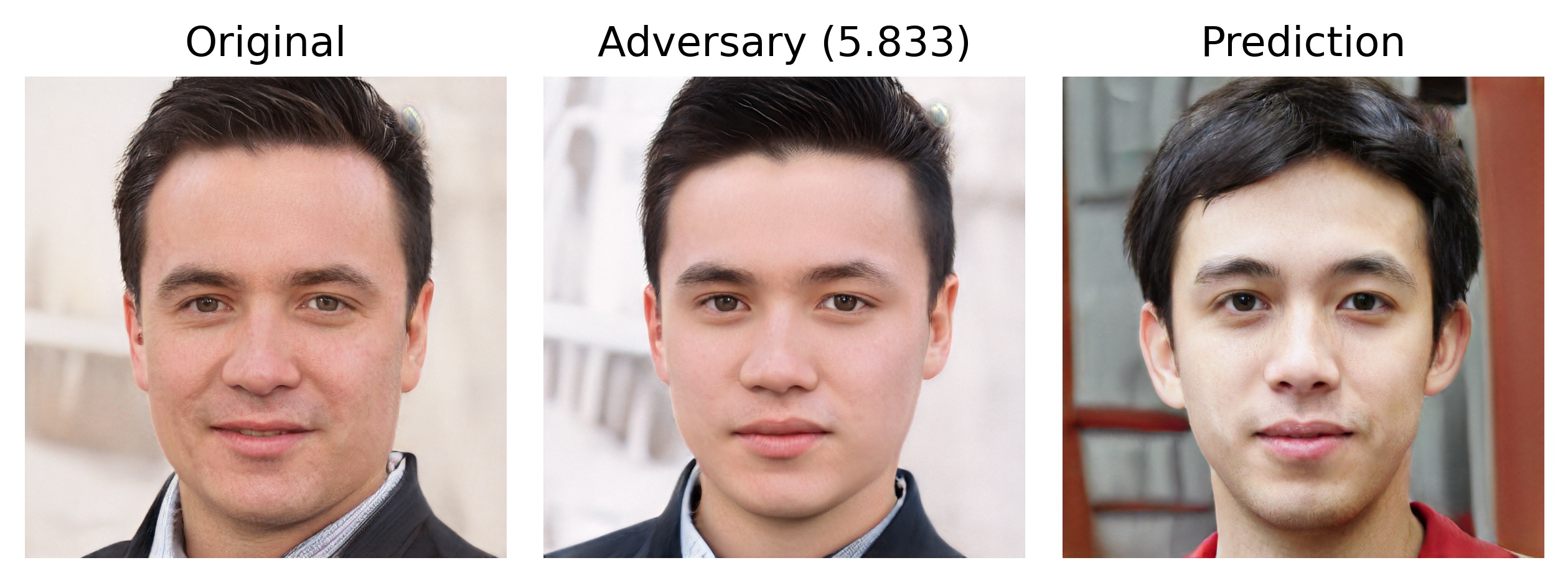}\\
    \caption{\textbf{Adversarial examples found by FAB when only attacking the \underline{Age} attribute.} 
    Each row is a different identity. 
    We report each perturbation's energy, $\|\pmb{\delta}\|_{M, 2}$, at the far left. 
    \textit{Left:} original face \textcolor{blue}{\fontfamily{cmss}\selectfont\textbf{A}}, \textit{middle:} modified face \textcolor{orange}{\fontfamily{cmss}\selectfont\textbf{A$^\star$}}, \textit{right:} match \textcolor{purple}{\fontfamily{cmss}\selectfont\textbf{B}}. The FRM prefers to match \textcolor{orange}{\fontfamily{cmss}\selectfont\textbf{A$^\star$}} with \textcolor{purple}{\fontfamily{cmss}\selectfont\textbf{B}} rather than with \textcolor{blue}{\fontfamily{cmss}\selectfont\textbf{A}}.}
    \label{fig:fab_onlyAge_qual}
\end{figure}

\begin{figure}
    \centering
    \raisebox{0.35in}{\rotatebox[origin=t]{90}{26.85}}\includegraphics[trim=0cm 0.3cm 0cm 0.7cm,clip,width=0.96\columnwidth]{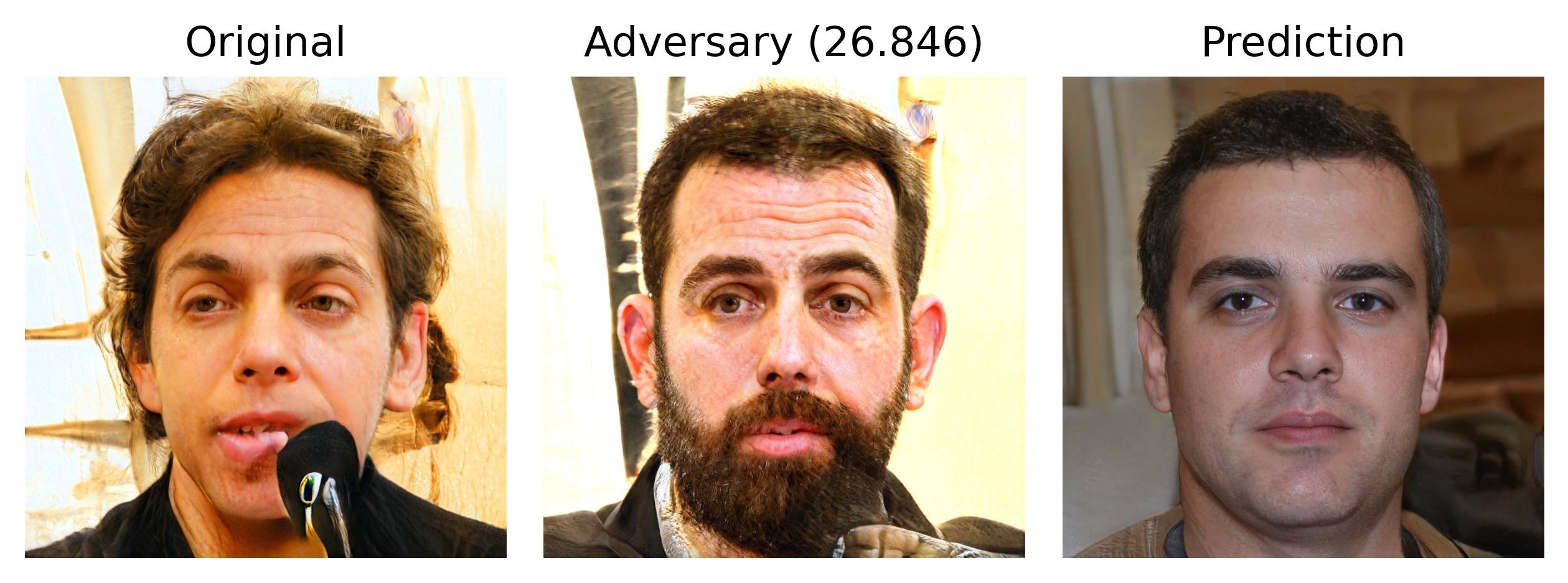}\\
    \raisebox{0.35in}{\rotatebox[origin=t]{90}{28.45}}\includegraphics[trim=0cm 0.3cm 0cm 0.7cm,clip,width=0.96\columnwidth]{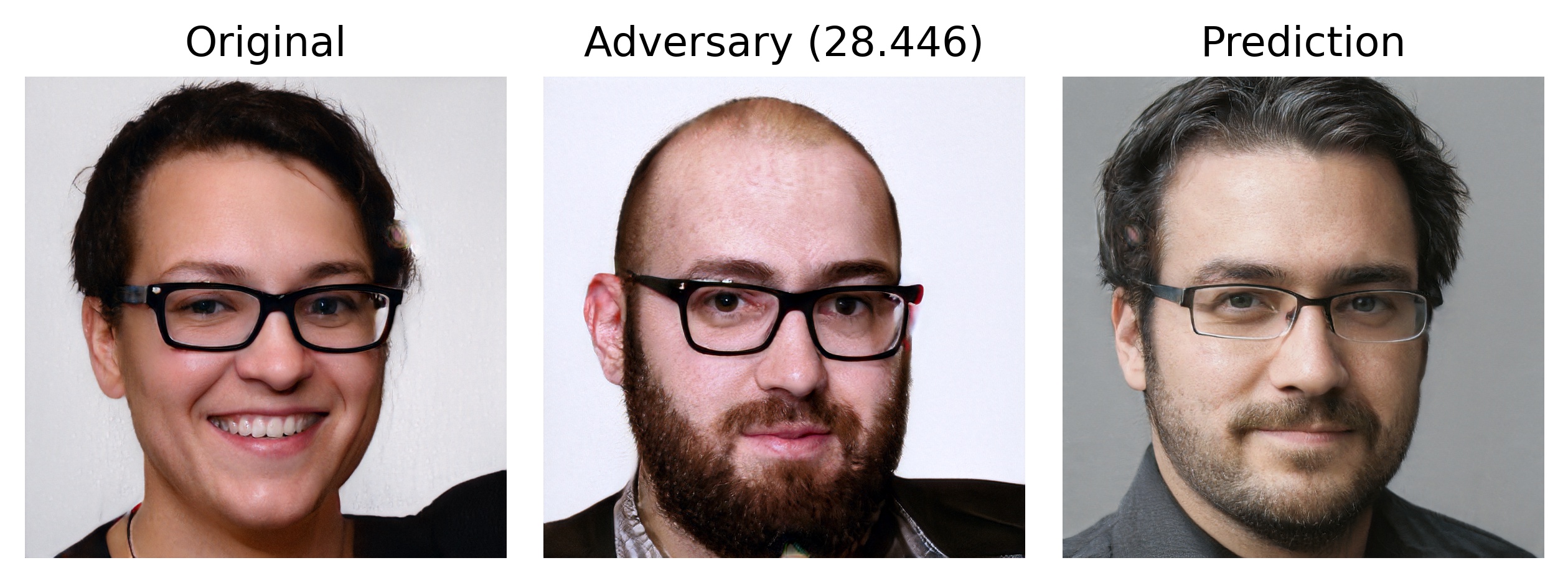}\\
    \raisebox{0.35in}{\rotatebox[origin=t]{90}{28.18}}\includegraphics[trim=0cm 0.3cm 0cm 0.7cm,clip,width=0.96\columnwidth]{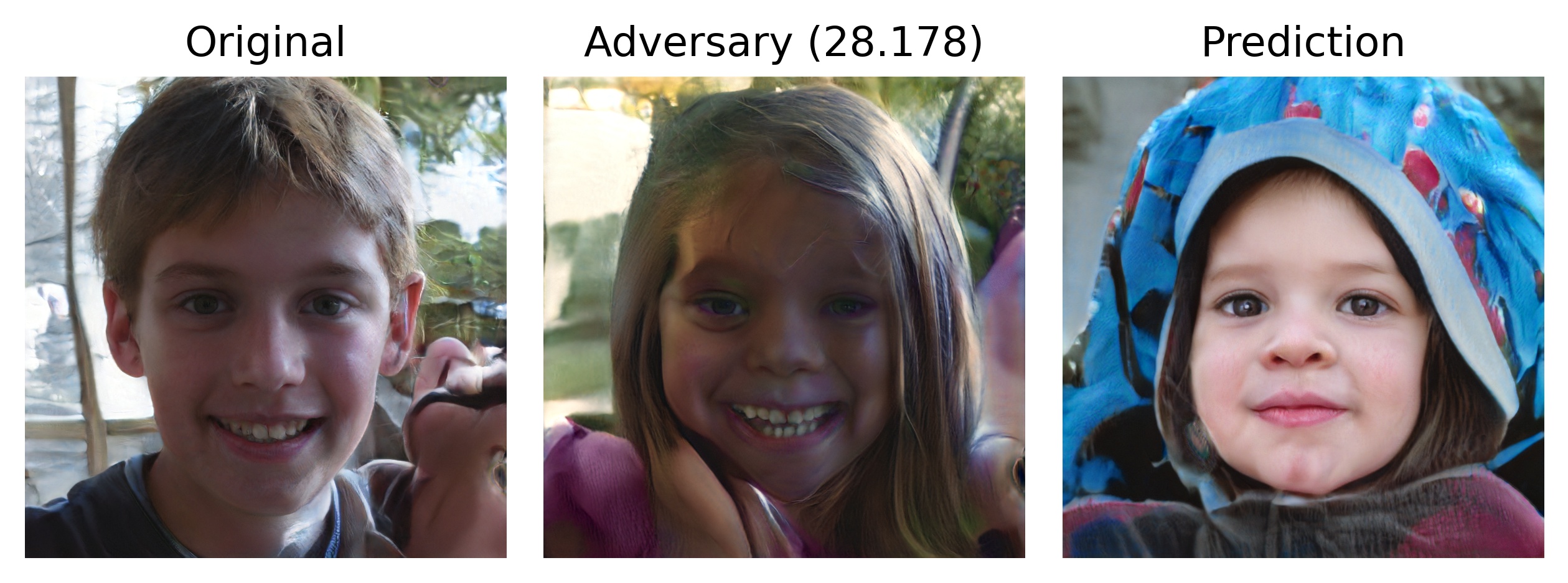}\\
    \raisebox{0.35in}{\rotatebox[origin=t]{90}{28.69}}\includegraphics[trim=0cm 0.3cm 0cm 0.7cm,clip,width=0.96\columnwidth]{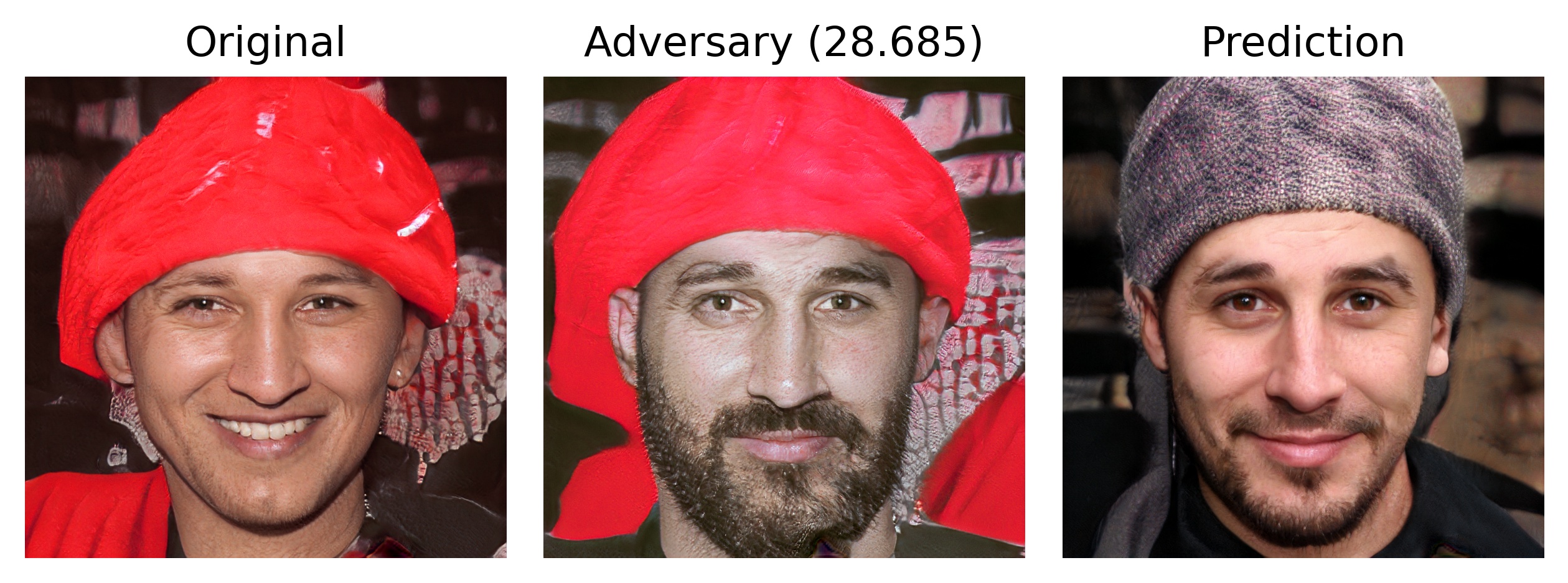}\\
    \raisebox{0.35in}{\rotatebox[origin=t]{90}{29.63}}\includegraphics[trim=0cm 0.3cm 0cm 0.7cm,clip,width=0.96\columnwidth]{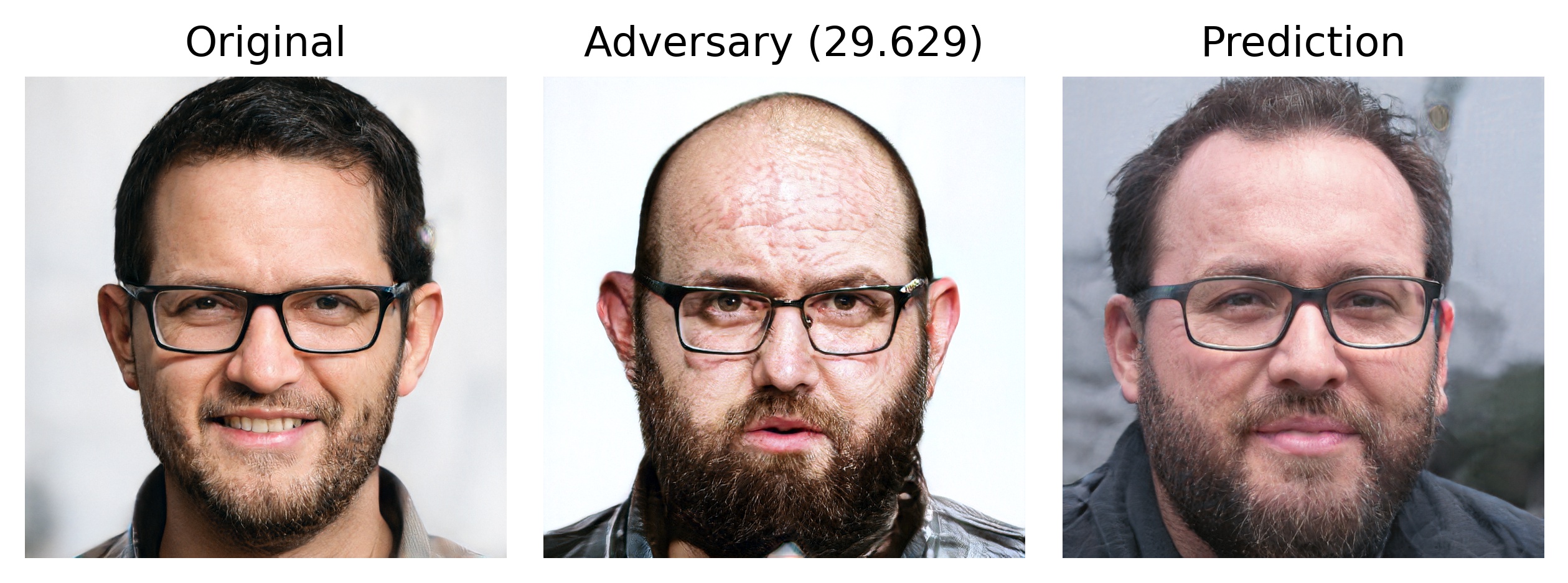}\\
    \raisebox{0.35in}{\rotatebox[origin=t]{90}{26.90}}\includegraphics[trim=0cm 0.3cm 0cm 0.7cm,clip,width=0.96\columnwidth]{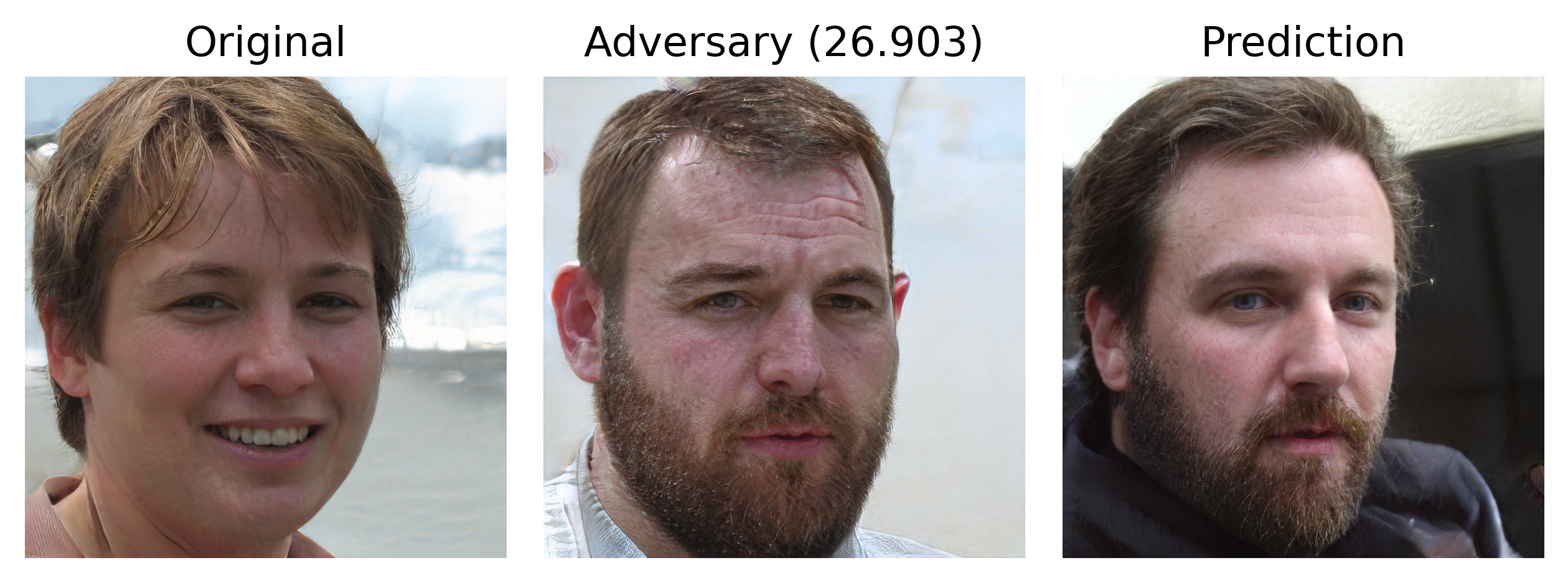}\\
    \raisebox{0.35in}{\rotatebox[origin=t]{90}{29.20}}\includegraphics[trim=0cm 0.3cm 0cm 0.7cm,clip,width=0.96\columnwidth]{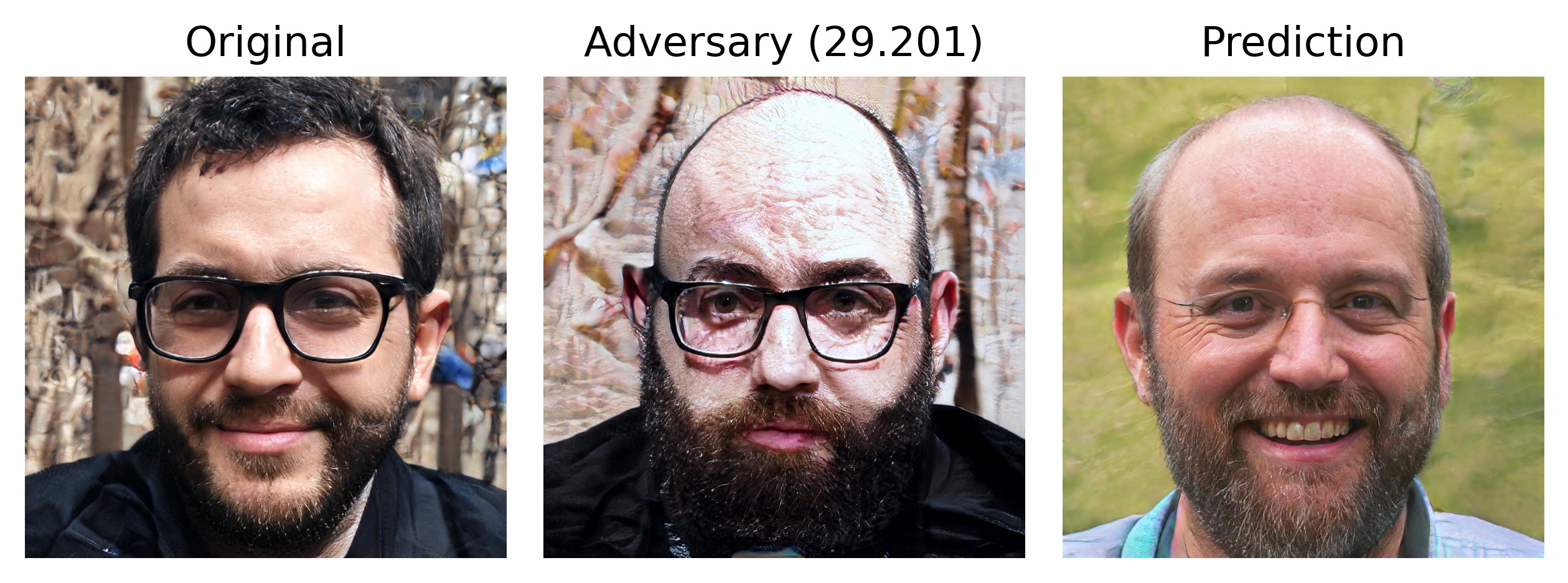}\\
    \raisebox{0.35in}{\rotatebox[origin=t]{90}{28.47}}\includegraphics[trim=0cm 0.3cm 0cm 0.7cm,clip,width=0.96\columnwidth]{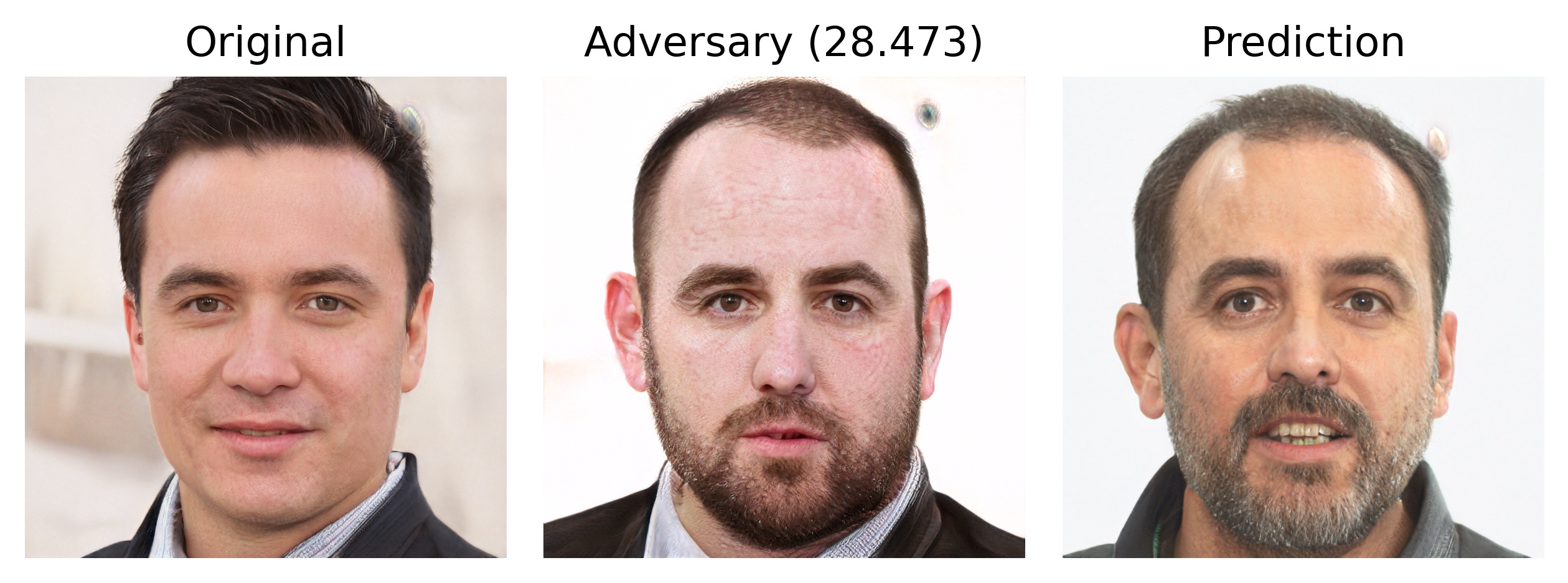}\\
    \caption{\textbf{Adversarial examples found by FAB when only attacking the \underline{Gender} attribute.} 
    Each row is a different identity. 
    We report each perturbation's energy, $\|\pmb{\delta}\|_{M, 2}$, at the far left. 
    \textit{Left:} original face \textcolor{blue}{\fontfamily{cmss}\selectfont\textbf{A}}, \textit{middle:} modified face \textcolor{orange}{\fontfamily{cmss}\selectfont\textbf{A$^\star$}}, \textit{right:} match \textcolor{purple}{\fontfamily{cmss}\selectfont\textbf{B}}. The FRM prefers to match \textcolor{orange}{\fontfamily{cmss}\selectfont\textbf{A$^\star$}} with \textcolor{purple}{\fontfamily{cmss}\selectfont\textbf{B}} rather than with \textcolor{blue}{\fontfamily{cmss}\selectfont\textbf{A}}.}
    \label{fig:fab_onlyGender_qual}
\end{figure}

\begin{figure}
    \centering
    \raisebox{0.35in}{\rotatebox[origin=t]{90}{2.61}}\includegraphics[trim=0cm 0.3cm 0cm 0.7cm,clip,width=0.96\columnwidth]{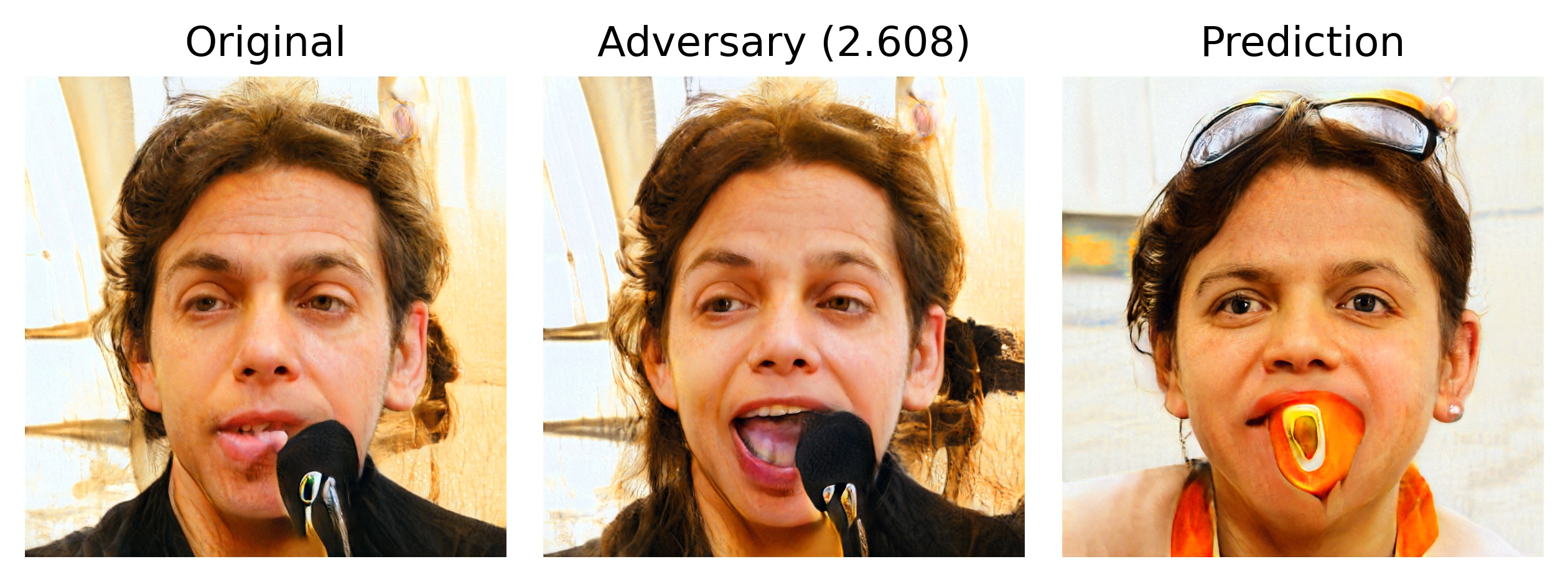}\\
    \raisebox{0.35in}{\rotatebox[origin=t]{90}{4.08}}\includegraphics[trim=0cm 0.3cm 0cm 0.7cm,clip,width=0.96\columnwidth]{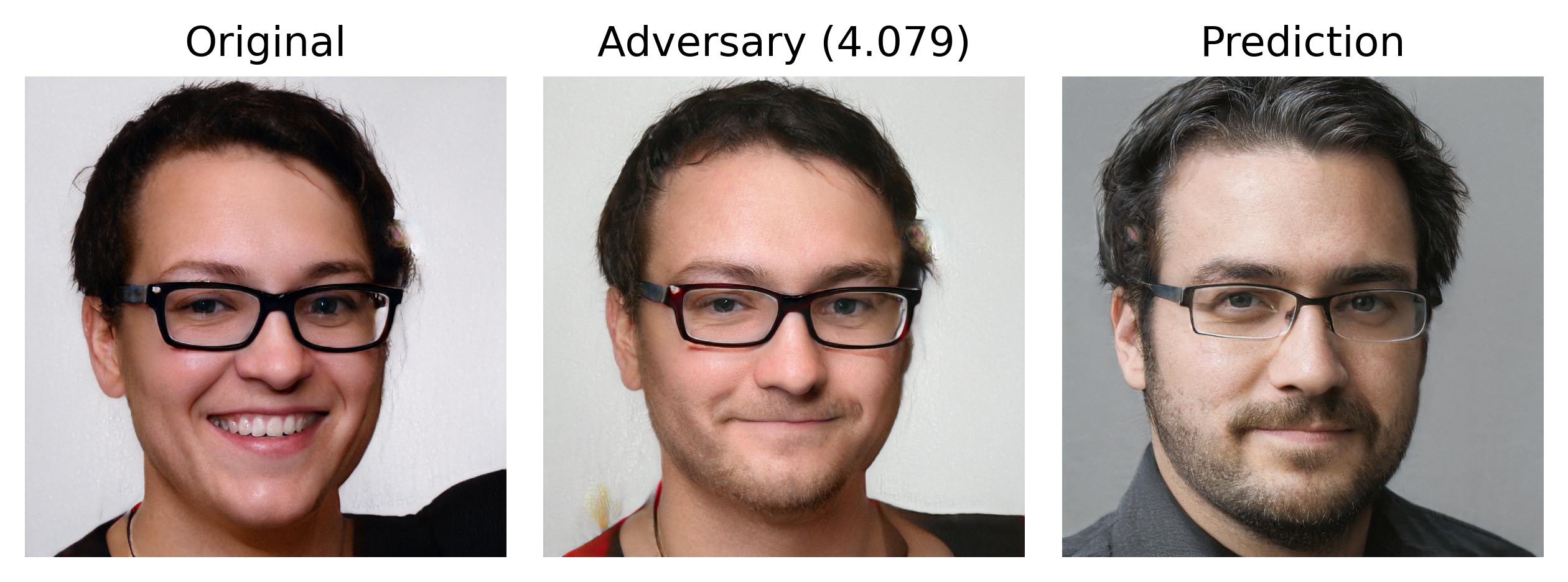}\\
    \raisebox{0.35in}{\rotatebox[origin=t]{90}{4.76}}\includegraphics[trim=0cm 0.3cm 0cm 0.7cm,clip,width=0.96\columnwidth]{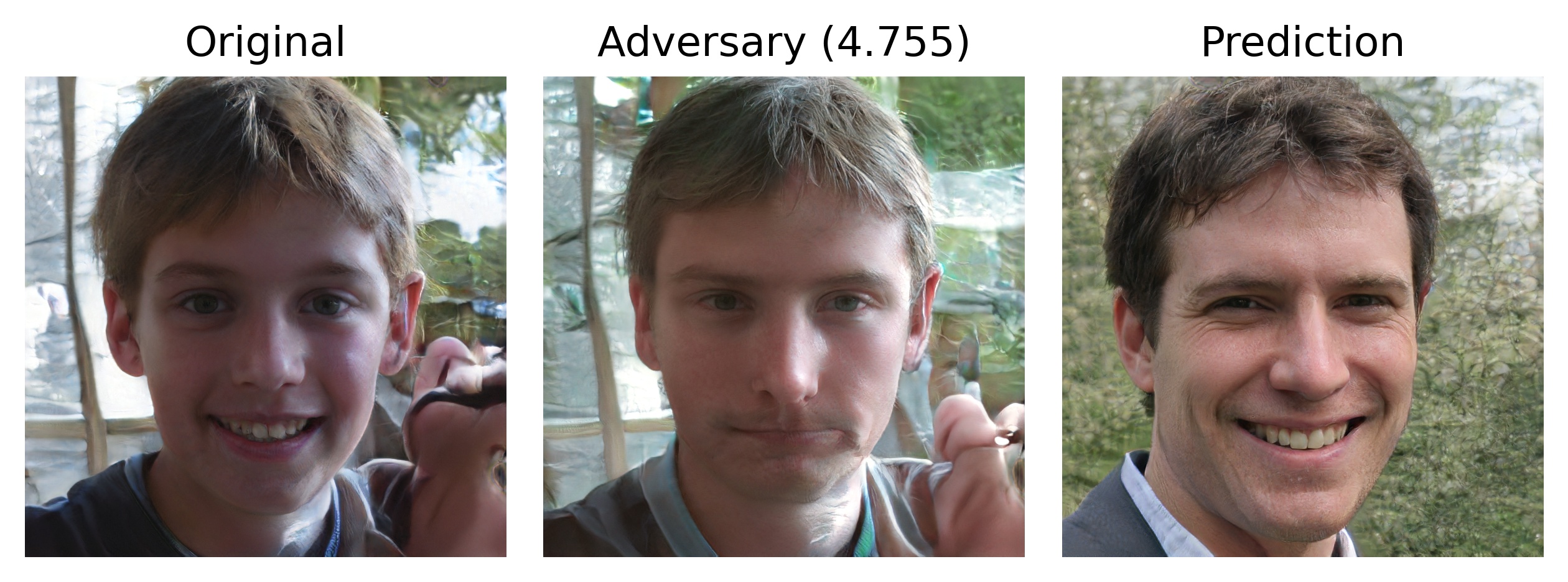}\\
    \raisebox{0.35in}{\rotatebox[origin=t]{90}{7.65}}\includegraphics[trim=0cm 0.3cm 0cm 0.7cm,clip,width=0.96\columnwidth]{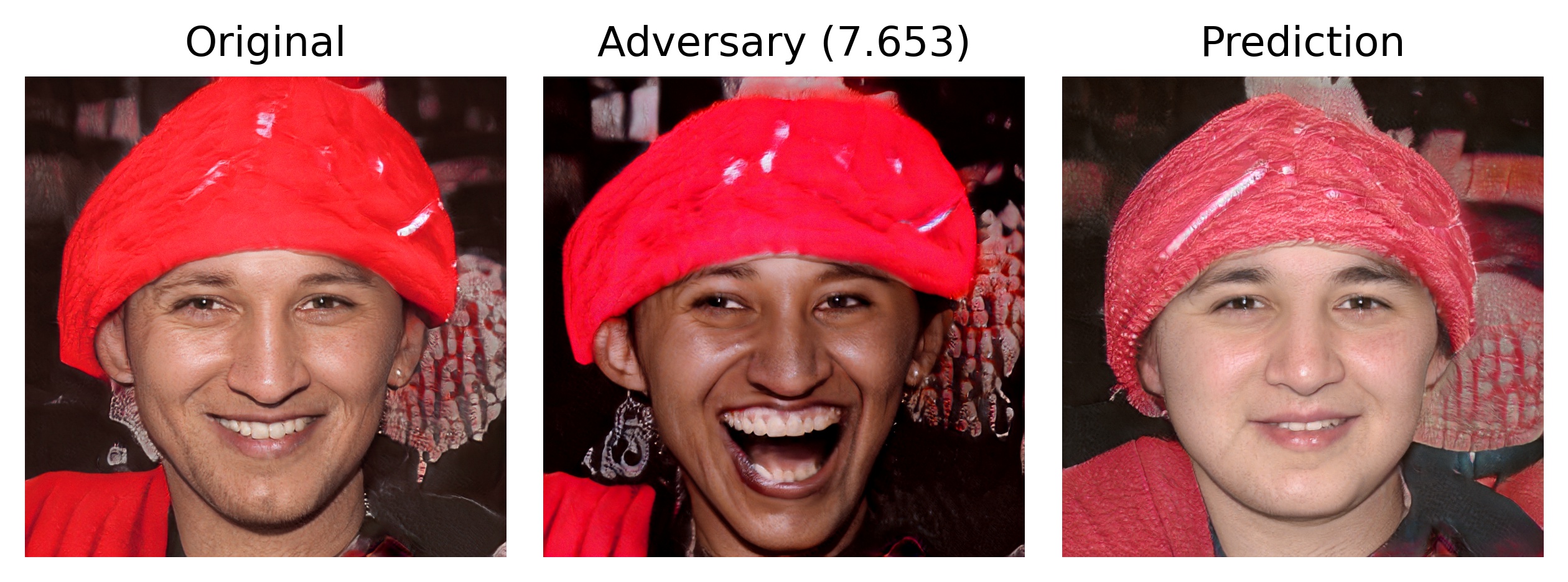}\\
    \raisebox{0.35in}{\rotatebox[origin=t]{90}{6.58}}\includegraphics[trim=0cm 0.3cm 0cm 0.7cm,clip,width=0.96\columnwidth]{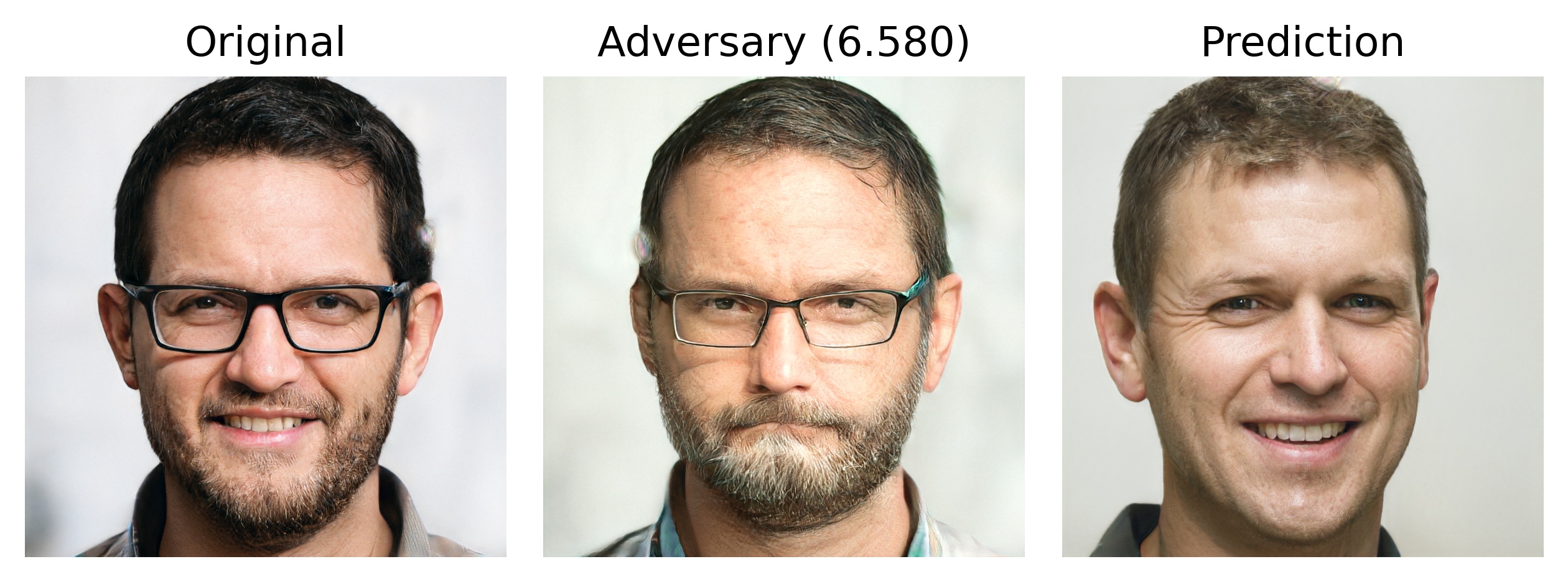}\\
    \raisebox{0.35in}{\rotatebox[origin=t]{90}{2.55}}\includegraphics[trim=0cm 0.3cm 0cm 0.7cm,clip,width=0.96\columnwidth]{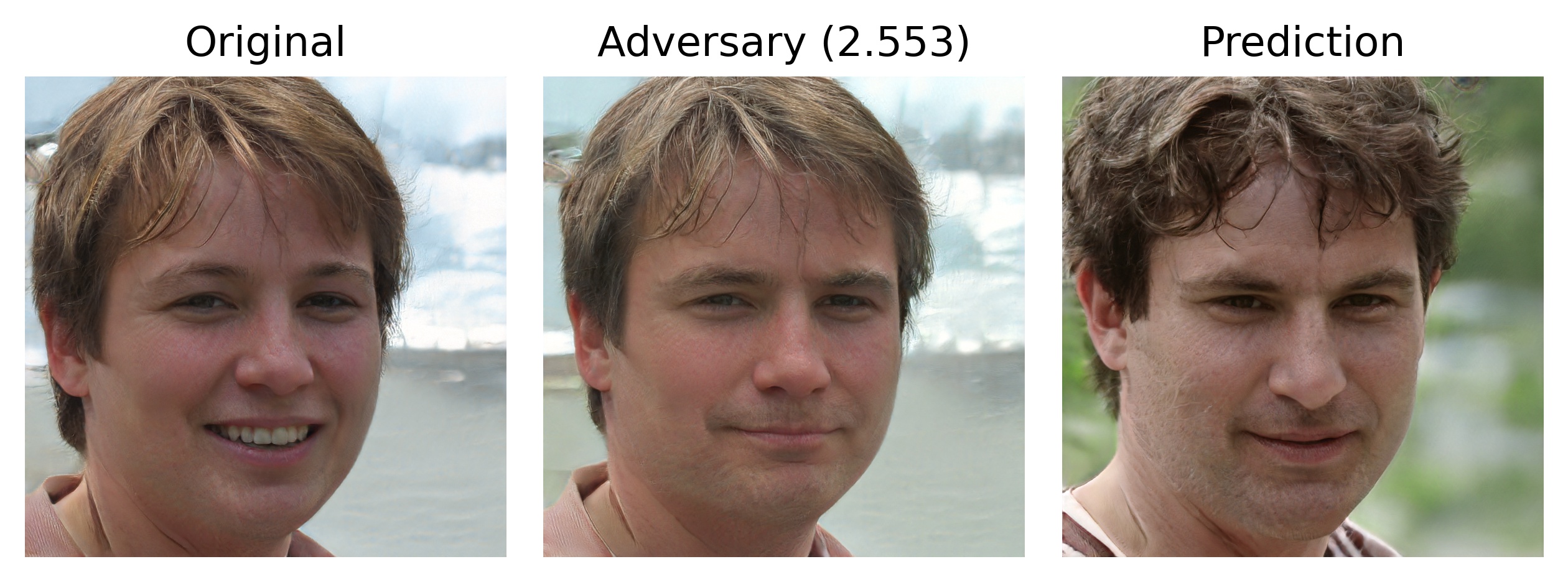}\\
    \raisebox{0.35in}{\rotatebox[origin=t]{90}{3.49}}\includegraphics[trim=0cm 0.3cm 0cm 0.7cm,clip,width=0.96\columnwidth]{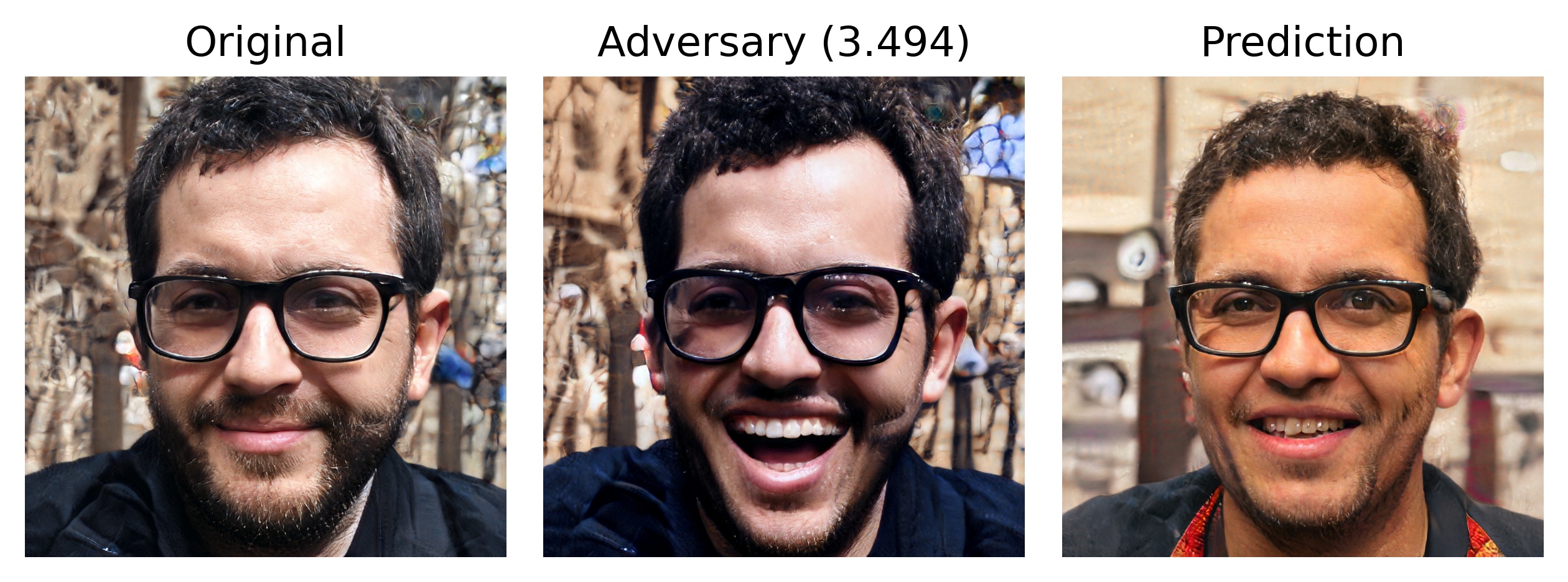}\\
    \raisebox{0.35in}{\rotatebox[origin=t]{90}{3.60}}\includegraphics[trim=0cm 0.3cm 0cm 0.7cm,clip,width=0.96\columnwidth]{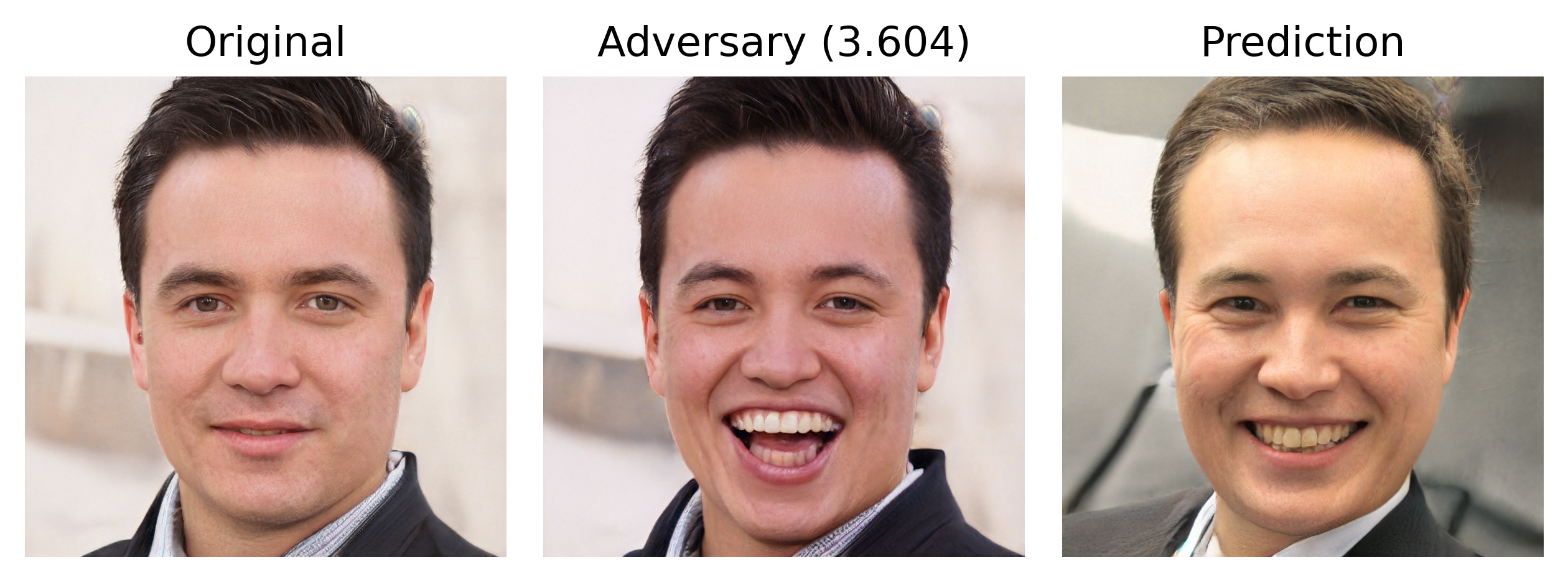}\\
    \caption{\textbf{Adversarial examples found by FAB when only attacking the \underline{Smile} attribute.} 
    Each row is a different identity. 
    We report each perturbation's energy, $\|\pmb{\delta}\|_{M, 2}$, at the far left. 
    \textit{Left:} original face \textcolor{blue}{\fontfamily{cmss}\selectfont\textbf{A}}, \textit{middle:} modified face \textcolor{orange}{\fontfamily{cmss}\selectfont\textbf{A$^\star$}}, \textit{right:} match \textcolor{purple}{\fontfamily{cmss}\selectfont\textbf{B}}. The FRM prefers to match \textcolor{orange}{\fontfamily{cmss}\selectfont\textbf{A$^\star$}} with \textcolor{purple}{\fontfamily{cmss}\selectfont\textbf{B}} rather than with \textcolor{blue}{\fontfamily{cmss}\selectfont\textbf{A}}.}
    \label{fig:fab_onlySmile_qual}
\end{figure}

\begin{figure}
    \centering
    \raisebox{0.35in}{\rotatebox[origin=t]{90}{5.08}}\includegraphics[trim=0cm 0.3cm 0cm 0.7cm,clip,width=0.96\columnwidth]{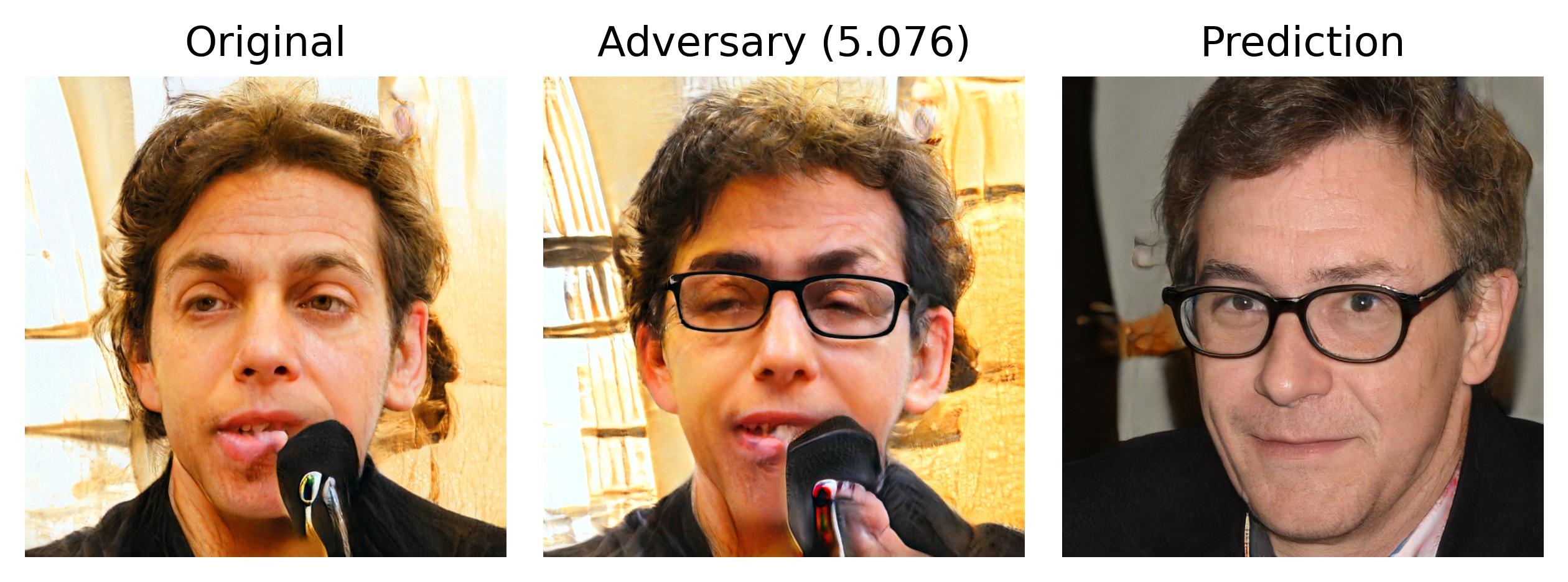}\\
    \raisebox{0.35in}{\rotatebox[origin=t]{90}{6.53}}\includegraphics[trim=0cm 0.3cm 0cm 0.7cm,clip,width=0.96\columnwidth]{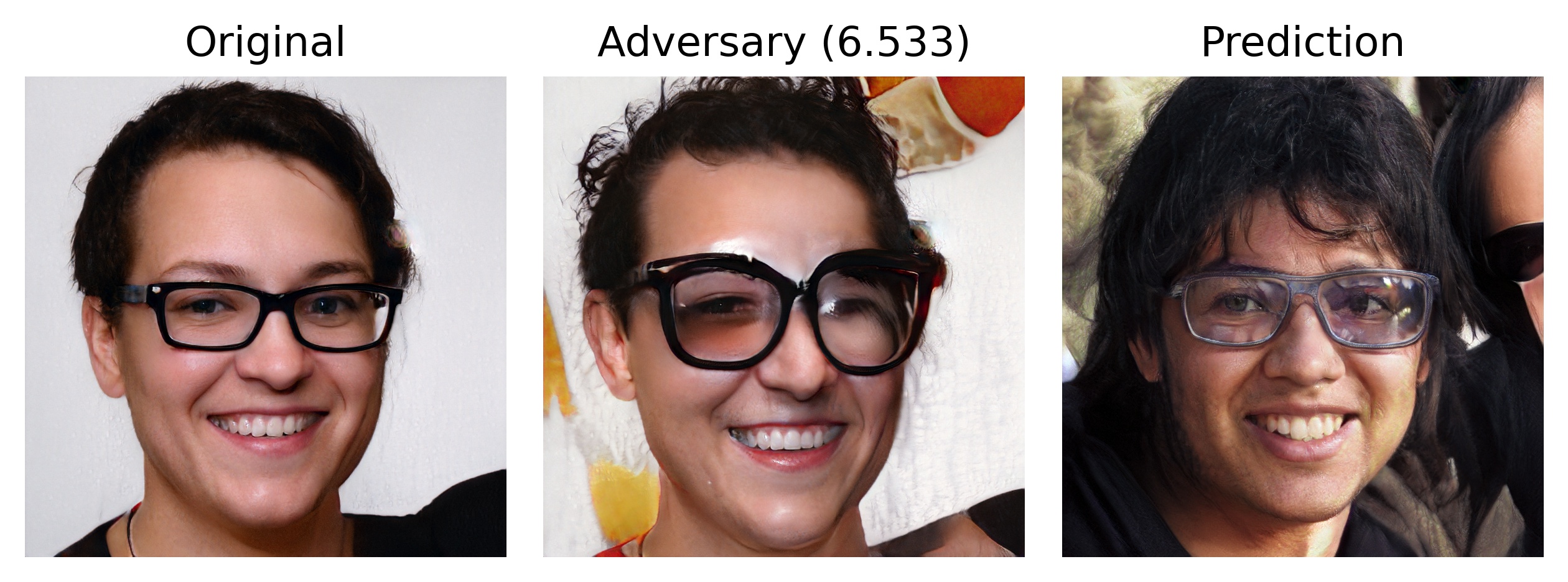}\\
    \raisebox{0.35in}{\rotatebox[origin=t]{90}{3.78}}\includegraphics[trim=0cm 0.3cm 0cm 0.7cm,clip,width=0.96\columnwidth]{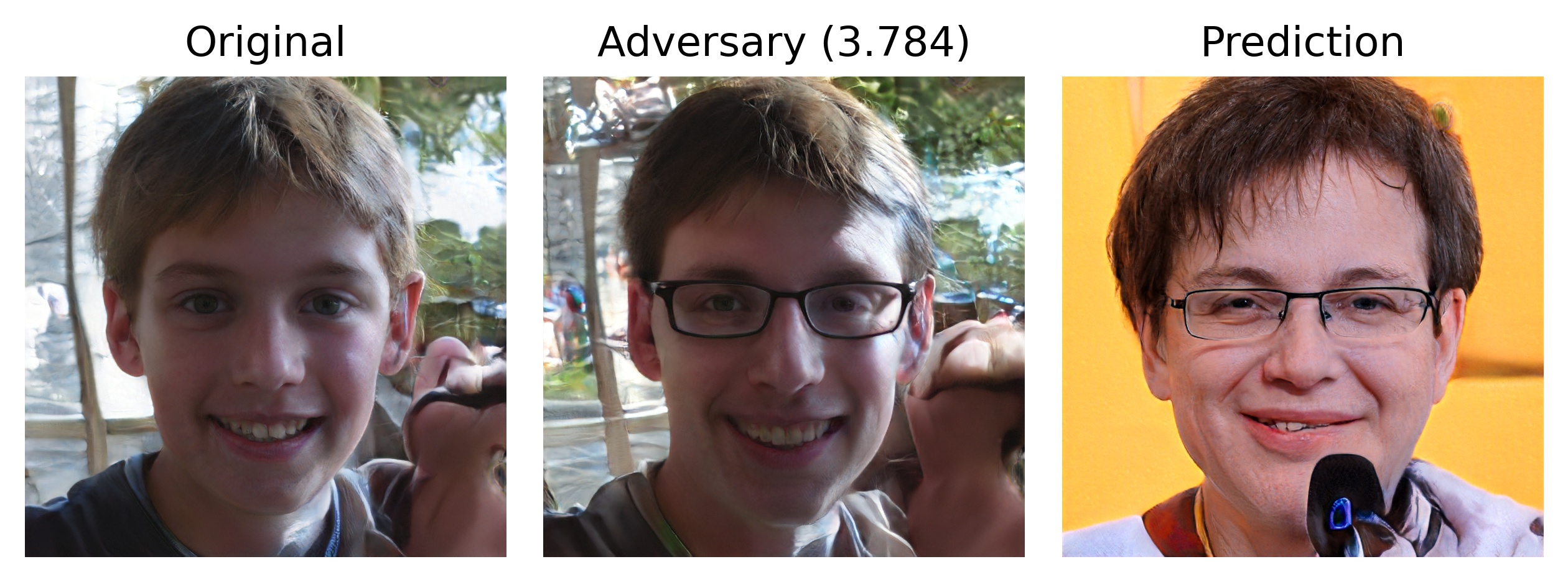}\\
    \raisebox{0.35in}{\rotatebox[origin=t]{90}{6.06}}\includegraphics[trim=0cm 0.3cm 0cm 0.7cm,clip,width=0.96\columnwidth]{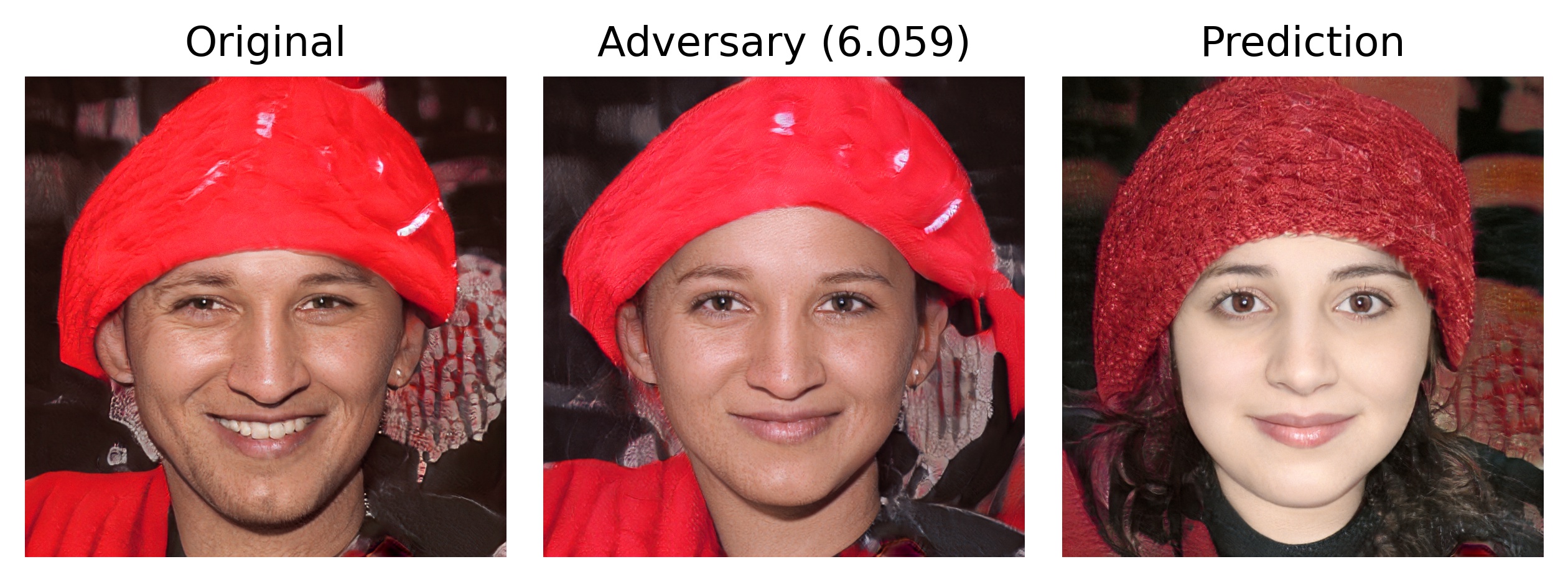}\\
    \raisebox{0.35in}{\rotatebox[origin=t]{90}{6.32}}\includegraphics[trim=0cm 0.3cm 0cm 0.7cm,clip,width=0.96\columnwidth]{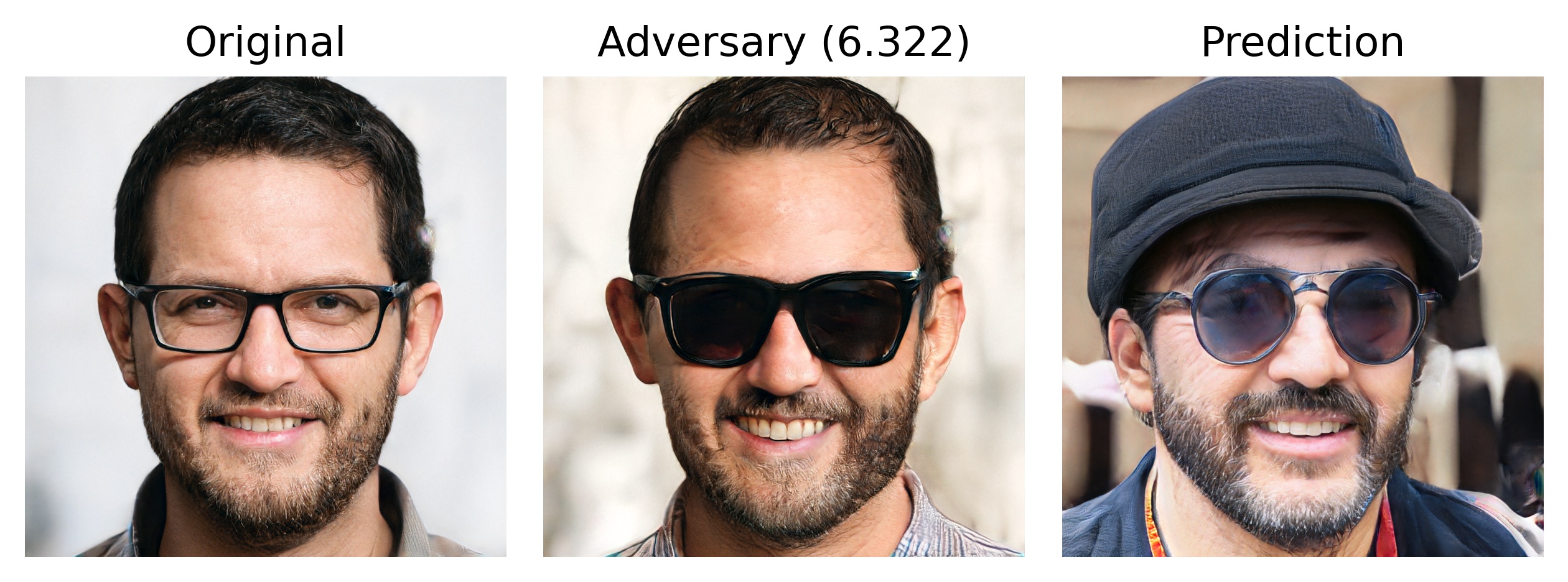}\\
    \raisebox{0.35in}{\rotatebox[origin=t]{90}{5.49}}\includegraphics[trim=0cm 0.3cm 0cm 0.7cm,clip,width=0.96\columnwidth]{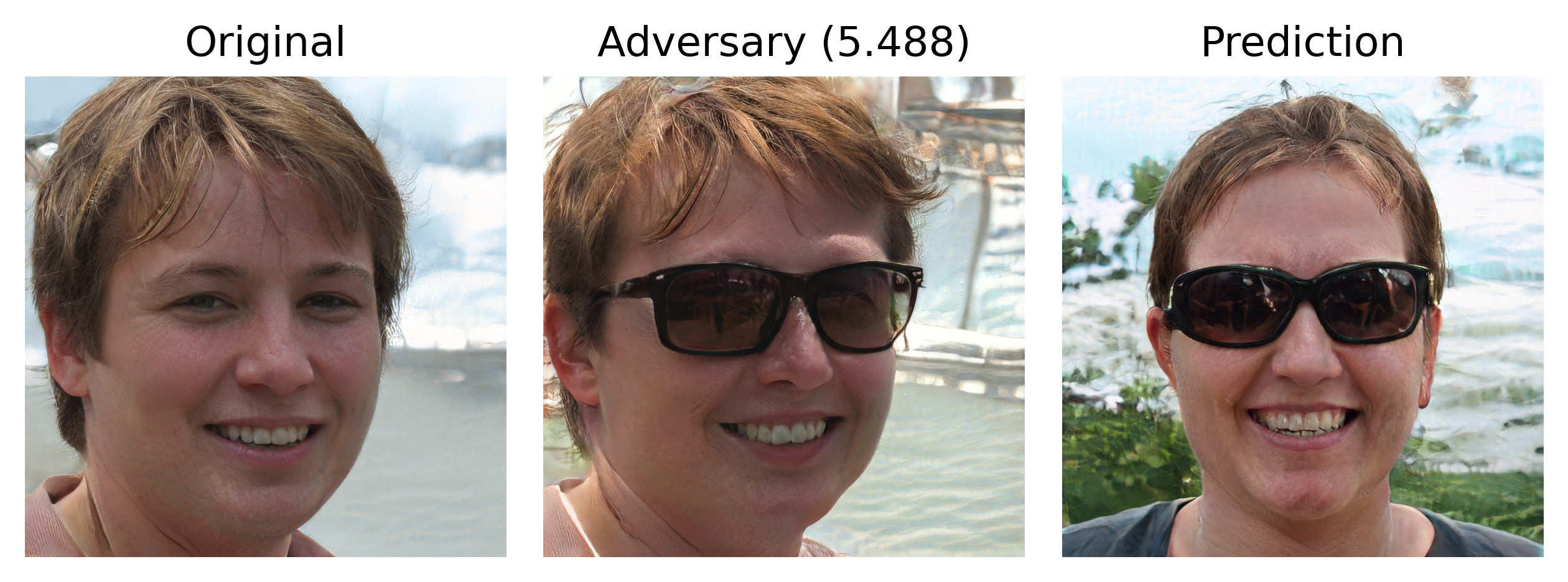}\\
    \raisebox{0.35in}{\rotatebox[origin=t]{90}{3.98}}\includegraphics[trim=0cm 0.3cm 0cm 0.7cm,clip,width=0.96\columnwidth]{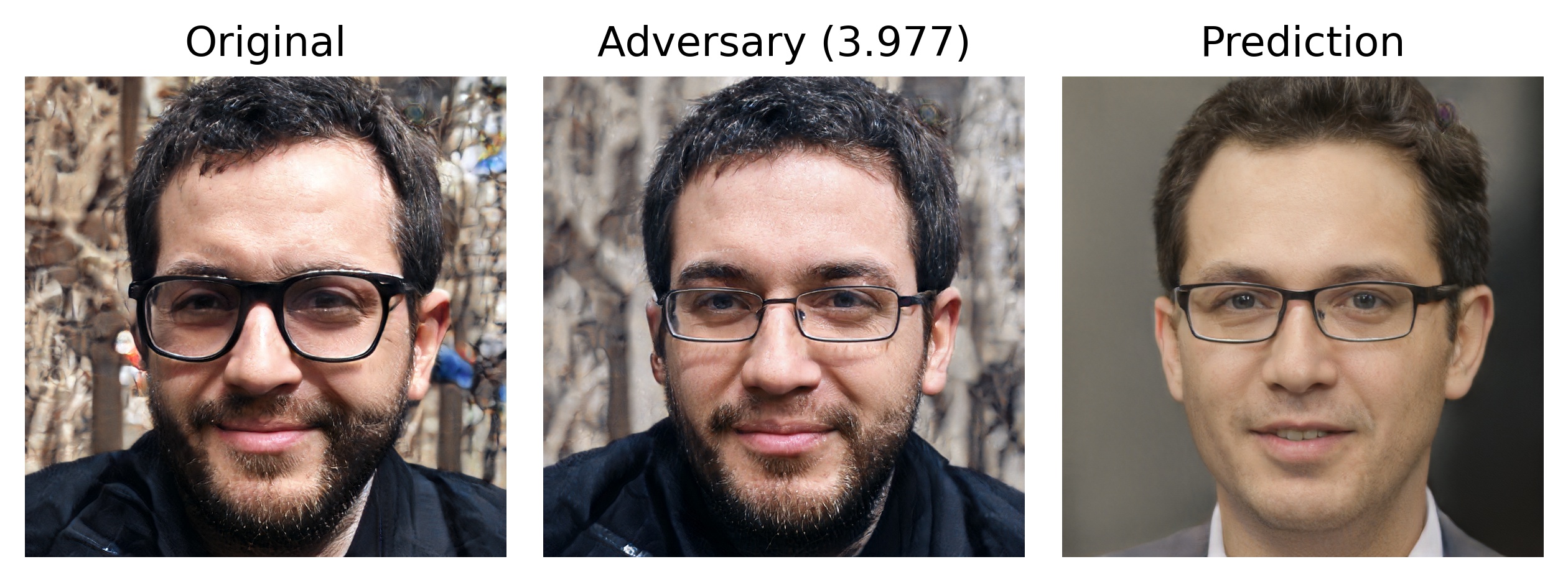}\\
    \raisebox{0.35in}{\rotatebox[origin=t]{90}{6.11}}\includegraphics[trim=0cm 0.3cm 0cm 0.7cm,clip,width=0.96\columnwidth]{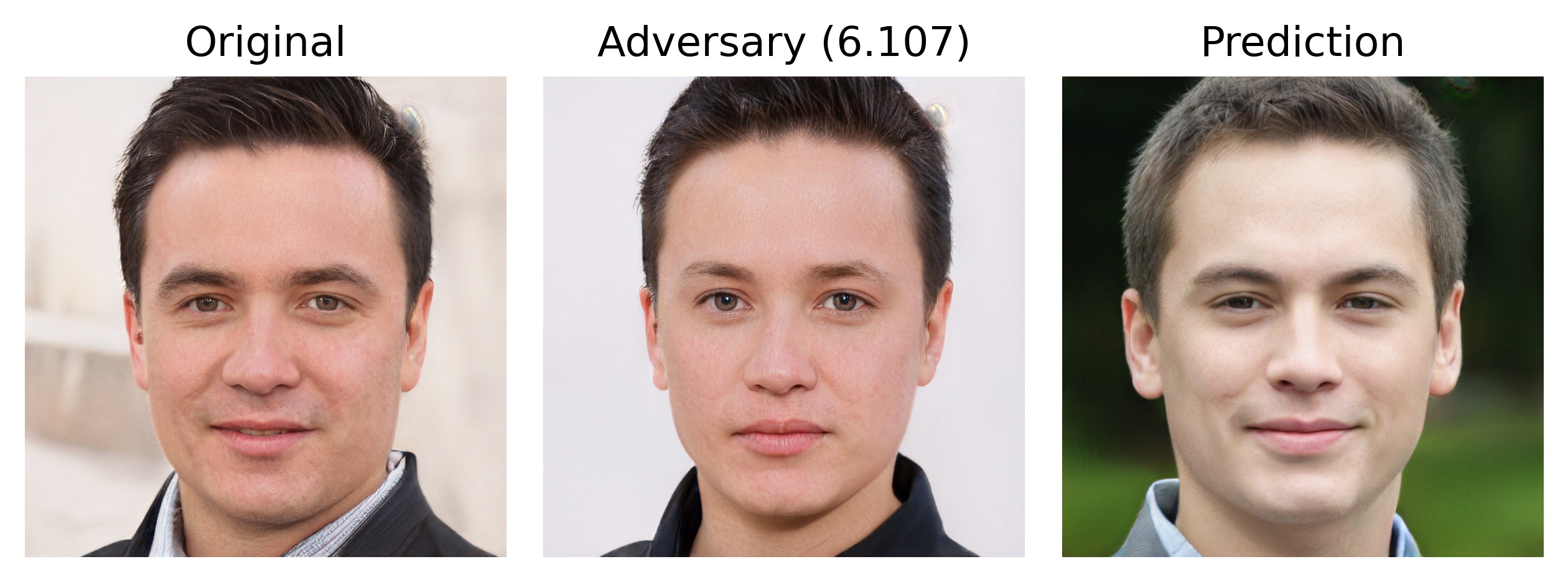}\\
    \caption{\textbf{Adversarial examples found by FAB when only attacking the \underline{Eyeglasses} attribute.} 
    Each row is a different identity. 
    We report each perturbation's energy, $\|\pmb{\delta}\|_{M, 2}$, at the far left. 
    \textit{Left:} original face \textcolor{blue}{\fontfamily{cmss}\selectfont\textbf{A}}, \textit{middle:} modified face \textcolor{orange}{\fontfamily{cmss}\selectfont\textbf{A$^\star$}}, \textit{right:} match \textcolor{purple}{\fontfamily{cmss}\selectfont\textbf{B}}. The FRM prefers to match \textcolor{orange}{\fontfamily{cmss}\selectfont\textbf{A$^\star$}} with \textcolor{purple}{\fontfamily{cmss}\selectfont\textbf{B}} rather than with \textcolor{blue}{\fontfamily{cmss}\selectfont\textbf{A}}.}
    \label{fig:fab_onlyEye_qual}
\end{figure}

\onecolumn

\end{document}